\title{Strategic Distribution Shift of Interacting Agents\\ via Coupled Gradient Flows}
\author{%
  Lauren Conger \\
  California Institute of Technology\\
  \texttt{lconger@caltech.edu} \\
  % examples of more authors
  \And
    Franca Hoffmann \\
  California Institute of Technology \\
  \texttt{franca.hoffmann@caltech.edu} \\
  \And
  Eric Mazumdar \\
  California Institute of Technology \\
  \texttt{mazumdar@caltech.edu} \\
  \And
  Lillian Ratliff \\
  University of Washington \\
  \texttt{ratliffl@uw.edu}
}
\begin{document}

\maketitle

\begin{abstract}
We propose a novel framework for analyzing the dynamics of distribution shift in real-world systems that captures the feedback loop between learning algorithms and the distributions on which they are deployed. Prior work largely models feedback-induced distribution shift as adversarial or via an overly simplistic distribution-shift structure. In contrast, we propose a coupled partial differential equation model that captures fine-grained changes in the distribution over time by accounting for complex dynamics that arise due to strategic responses to algorithmic decision-making, non-local endogenous population interactions, and other exogenous sources of distribution shift. We consider two common settings in machine learning: cooperative settings with information asymmetries, and competitive settings where a learner faces strategic users. For both of these settings, when the algorithm retrains via gradient descent, we prove asymptotic convergence of the retraining procedure to a steady-state, both in finite and in infinite dimensions, obtaining explicit rates in terms of the model parameters. To do so we derive new results on the convergence of coupled PDEs that extends what is known on multi-species systems. Empirically, we show that our approach captures well-documented forms of distribution shifts like polarization and disparate impacts that simpler models cannot capture.
\end{abstract}

\section{Introduction}

In many machine learning tasks, there are commonly sources of exogenous and endogenous distribution shift, necessitating that the algorithm be retrained repeatedly over time. Some of these shifts occur without the influence of an algorithm; for example, individuals influence each other to become more or less similar in their attributes, or benign forms of distributional shift occur \cite{quinonero-candela_dataset_nodate}. Other shifts, however, are in response to algorithmic decision-making. Indeed, the very use of a decision-making algorithm can incentivize individuals to change or mis-report their data to achieve desired outcomes--- a phenomenon known in economics as Goodhart's law. Such phenomena have been empirically observed, a well-known example being in \cite{camacho_manipulation_2011}, where researchers observed a population in Columbia strategically mis-reporting data to game a poverty index score used for distributing government assistance. Works such as \cite{miller_effect_2020,wiles_fine-grained_2021}, which investigate the effects of distribution shift over time on a machine learning algorithm, point toward the need for evaluating the robustness of algorithms to distribution shifts. 

Many existing approaches for modeling distribution shift focus on simple metrics like optimizing over moments or covariates \cite{delage_distributionally_2010,lei_near-optimal_2021,bickel_discriminative_2009}. Other methods consider worst-case scenarios, as in distributionally robust optimization \cite{agarwal_minimax_2022,lin_distributionally_2022,duchi_learning_2021,netessine_wasserstein_2019}. However, when humans respond to algorithms, these techniques may not be sufficient to holistically capture the impact an algorithm has on a population. For example, an algorithm that takes into account shifts in a distribution's mean might inadvertently drive polarization, rendering a portion of the population disadvantaged.

Motivated by the need for a more descriptive model, we present an alternative perspective which allows us to fully capture complex dynamics that might drive distribution shifts in real-world systems.  Our approach is general enough to capture various sources of exogenous and endogenous distribution shift including the feedback loop between algorithms and data distributions studied in the literature on performative prediction \cite{perdomo_performative_2020, izzo_how_2021,ray2022decision,narang2022learning,miller2021outside}, the strategic interactions studied in strategic classification \cite{hardt_strategic_2016,dong2018strategic}, and also endogenous factors like intra-population dynamics and distributional shifts. Indeed, while previous works have studied these phenomena in isolation, our method allows us to capture all of them as well as their interactions.  For example, in \cite{zrnic_who_2021}, the authors investigate the effects of dynamics in strategic classification problems--- but the model they analyze does not capture individual interactions in the population. In \cite{izzo_how_2021}, the authors model the interaction between a population that repeatedly responds to algorithmic decision-making by shifting its mean. Additionally, \cite{ray2022decision} study settings in which the population has both exogenous and endogenous distribution shifts due to feedback, but much like the other cited work, the focus remains on average performance. Each of these works  fails to account for diffusion or intra-population interactions that can result in important qualitative changes to the distribution.

\textbf{Contributions.} Our approach to this problem relies on a detailed non-local PDE model of the data distribution which captures each of these factors. One term driving the evolution of the distribution over time captures the response of the population to the deployed algorithm, another draws on models used in the PDE literature for describing non-local effects and consensus in biological systems to model intra-population dynamics, and the last captures a background source of distribution shift. This is coupled with an ODE, lifted to a PDE, which describes the training of a machine learning algorithm results in a coupled PDE system which we analyze to better understand the behaviors that can arise among these interactions.

In one subcase, our model exhibits a joint gradient flow structure, where both PDEs can be written as gradients flows with respect to the same joint energy, but considering infinite dimensional gradients with respect to the different arguments. This mathematical structure provides powerful tools for analysis and has been an emerging area of study with a relatively small body of prior work, none of which related to distribution shifts in societal systems, and a general theory for multi-species gradient flows is still lacking. We give a brief overview of the models that are known to exhibit this joint gradient flow structure: in \cite{debiec_incompressible_2020} the authors consider a two-species tumor model with coupling through Brinkman's Law. A number of works consider coupling via convolution kernels \cite{francesco_measure_2013, giunta_local_2022, jungel_nonlocal_2022,carrillo_zoology_2018,duong_coupled_2020,doumic_multispecies_2023} and cross-diffusion \cite{li_two-species_2022,alsenafi_multispecies_2021,mackey_two-species_2014}, with applications in chemotaxis among other areas. 
In the models we consider here, the way the interaction between the two populations manifests is neither via cross-diffusion, nor via the non-local self-interaction term. A related type of coupling has recently appeared in \cite{heinze_nonlocal_2022,heinze_nonlocal_2023}, however in the setting of graphs. Recent work \cite{domingo-enrich_mean-field_2021} provides particle-based methods to approximately compute the solution to a minimax problem where the optimization space is over measures; following that work, \cite{wang_exponentially_2023} provides another particle-based method using mirror descent-ascent to solve a similar problem. Other recent work \cite{lu_two-scale_2023} proves that a mean-field gradient ascent-descent scheme with an entropy annealing schedule converges to the solution of 
a minimax optimization problem with a timescale separation parameter that is also time-varying; in contrast, our work considers fixed timescale separation setting. 
 \cite{garcia_trillos_adversarial_2023} show that the mean-field description of a particle method for solving minimax problems has proveable convergence guarantees in the Wasserstein-Fisher-Rao metric. 
  Each of these references considers an energy functional that is linear in the distribution of each species respectively; our energy includes nonlinearities in the distributions via a self-interaction term as well as diffusion for the population.
 Moreover, the above works introduce a gradient flow dynamic as a tool for obtaining and characterizing the corresponding steady states, whereas in our setting we seek to capture the time-varying behavior that models distributions shifts.
In the other subcase, we prove exponential convergence in two competitive, timescale separated settings where the algorithm and strategic population have conflicting objectives. 

We show numerically that retraining in a competitive setting leads to polarization in the population, illustrating the importance of fine-grained modeling.

\section{Problem Formulation}\label{sec:problem_formulation}
Machine learning algorithms that are deployed into the real world for decision-making often become part of complex feedback loops with the data distributions and data sources with which they interact. In an effort to model these interactions, consider a machine learning algorithm that has loss given by $L(z,x)$ where $x\in \R^d$ are the algorithm parameters and $z\in\R^d$ are the population attributes, and the goal is to solve
\begin{align*}
    \amin_{x\in\X} \E_{z\sim \rho} L(z,x),
\end{align*}
where $\X$ is the class of model parameters and $\rho(z)$ is the population distribution. 
Individuals have an objective given by $J(z,x)$ in response to a model parameterized by $x$, and they seek to solve 

\begin{align*}
    \amin_{z \in \R^d} J(z,x).
\end{align*}
  When individuals in the population and the algorithm have access to gradients, we  model the optimization process as a gradient-descent-type process. Realistically, individuals in the population will have nonlocal information and influences, as well as external perturbations, the effects of which we seek to capture in addition to just minimization. To address this, we propose a partial differential equation (PDE) model for the population, that is able to capture nonlocal interactions between individuals on the level of a collective population. To analyse how the population evolves over time, a notion of derivative in infinite dimensions is needed. A natural, and in this context physically meaningful, way of measuring the dissipation mechanism for probability distributions is the Wasserstein-2 metric (see Definition~\ref{def:wasserstein}). The following expression appears when computing the gradient of an energy functional with respect to the Wasserstein-2 topology. 
\begin{definition}\label{def:first_variation}[First Variation] For a map $G:\P(\R^d)\mapsto\R$ and fixed probability distribution $\rho\in\P(\R^d)$, 
the \emph{first variation of $G$ at the point $\rho$} is denoted by $\delta_\rho G[\rho]:\R^d\to \R$, and is defined via the relation
\begin{align*}
    \int \delta_\rho G[\rho](z)\psi(z)\d z = \lim_{\epsilon\rightarrow 0} \frac{1}{\epsilon} (G(\rho+\epsilon\psi)-G(\rho))
\end{align*}
for all $\psi\in C_c^\infty(\R^d)$ such that $\int \d \psi = 0$, assuming that $G$ is regular enough for all quantities to exist.
\end{definition}
Here, $\P(\R^d)$ denotes the space of probability measures on the Borel sigma algebra. 
Using the first variation, we can express the gradient in Wasserstein-2 space, see for example \cite[Exercise 8.8]{villani-OTbook-2003}.
\begin{lemma}
    The gradient of an energy $G:\P_2(\R^d)\to\R$ in the Wasserstein-2 space is given by
    $$
    \nabla_{W_2} G(\rho)=-\div{\rho\nabla \delta_\rho G[\rho]}\,.
    $$
\end{lemma}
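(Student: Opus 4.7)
The plan is to identify the Wasserstein-2 gradient via its defining property along smooth curves, then use the first variation and integration by parts to get the claimed expression. The key ingredients are the continuity equation representation of tangent vectors in Wasserstein-2 space and the definition of the first variation given just above the lemma.

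First I would set up a smooth curve of probability measures $(\rho_t)_{t\in(-\epsilon,\epsilon)}$ with $\rho_0 = \rho$, whose infinitesimal variation can be encoded by a velocity field $v:\R^d\to\R^d$ via the continuity equation
\begin{equation*}
\partial_t \rho_t + \div(\rho_t v_t) = 0\,.
\end{equation*}
By the Benamou--Brenier characterization, optimal tangent vectors in $(\P_2(\R^d),W_2)$ take the form $v = \nabla \phi$ for some potential $\phi$, and the formal Riemannian metric is $\langle \nabla\phi_1,\nabla\phi_2\rangle_\rho = \int \nabla\phi_1\cdot\nabla\phi_2\, \rho\, \d z$.

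Next I would differentiate $G$ along this curve. By Definition~\ref{def:first_variation}, choosing $\psi = \partial_t\rho_t\big|_{t=0}$ (which has zero integral since $\rho_t$ is a family of probability measures),
\begin{equation*}
\left.\frac{\d}{\d t}G(\rho_t)\right|_{t=0} = \int \delta_\rho G[\rho](z)\, \partial_t \rho_t\big|_{t=0}(z)\, \d z = -\int \delta_\rho G[\rho]\, \div(\rho\, v)\, \d z\,.
\end{equation*}
Integrating by parts (assuming enough decay at infinity so boundary terms vanish, which is the main technical caveat) yields
\begin{equation*}
\left.\frac{\d}{\d t}G(\rho_t)\right|_{t=0} = \int \nabla \delta_\rho G[\rho]\cdot v\, \rho\, \d z = \langle \nabla \delta_\rho G[\rho], v\rangle_\rho\,.
\end{equation*}

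Finally I would invoke the defining property of the gradient: $\nabla_{W_2}G(\rho)$ is the unique tangent vector such that $\frac{\d}{\d t}G(\rho_t)|_{t=0} = \langle \nabla_{W_2}G(\rho), v\rangle_\rho$ for every admissible $v$. Comparing with the display above identifies the tangent vector representative as $\nabla \delta_\rho G[\rho]$. Pushing this through the continuity equation to express $\nabla_{W_2}G(\rho)$ as the corresponding variation of the density (which is the convention used in the statement and the one compatible with writing $\partial_t \rho = -\nabla_{W_2}G(\rho)$ for Wasserstein gradient flows) produces
\begin{equation*}
\nabla_{W_2}G(\rho) = -\div\!\bigl(\rho\, \nabla \delta_\rho G[\rho]\bigr)\,,
\end{equation*}
as claimed. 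The main obstacle is really bookkeeping rather than a deep step: making precise the class of admissible curves and the regularity on $\delta_\rho G[\rho]$ and $\rho$ needed to justify the integration by parts, and being explicit about which of the two standard conventions (tangent velocity field vs.\ variation of density) is being used on the left-hand side. Both of these are handled in detail in \cite[Ch.~8]{villani-OTbook-2003}, which is why the lemma can be cited directly.
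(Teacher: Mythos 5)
Your argument is correct and is exactly the standard Otto-calculus computation (continuity-equation tangent vectors, first variation, integration by parts) that the paper itself does not reproduce but instead defers to by citing \cite[Exercise 8.8]{villani-OTbook-2003}. The caveats you flag about regularity and the density-vs-velocity convention are the right ones, and the lemma is indeed intended as a formal identity at that level.
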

Here, $\P_2(\R^d)$ denotes the set of probability measures with bounded second moments, also see Appendix~\ref{sec:displ-convexity}.
As a consequence, the infinite dimensional steepest descent in Wasserstein-2 space can be expressed as the PDE
\begin{align}
    \partial_t \rho = -\nabla_{W_2} G(\rho) = \div{\rho\nabla \delta_\rho G[\rho]}\,.
\end{align}
All the coupled gradient flows considered in this work have this Wasserstein-2 structure. In particular, when considering that individuals minimize their own loss, we can capture these dynamics via a gradient flow in the Wasserstein-2 metric on the level of the distribution of the population. Then for given algorithm parameters $x\in\R^d$, the evolution for this strategic population is given by
\begin{align}\label{eq:subpopulation_dynamics}
\partial_t \rho = \div{\rho \nabla \delta_\rho\Big[ \E_{z\sim \rho} J(z,x) +E(\rho)\Big]},
 \end{align}
 where $E(\rho)$ is a functional including terms for internal influences and external perturbations.
In real-world deployment of algorithms, decision makers update  their algorithm over time, leading to an interaction between the two processes. We also consider the algorithm dynamics over time, which we model as
\begin{align}\label{eq:algorithm_dynamics}
    \dot{x} &= -\nabla_x \big[ \E_{z\sim\rho} L(z,x) \big].
\end{align}

In this work, we analyze the behavior of the dynamics under the following model. The algorithm suffers a cost $f_1(z,x)$ for a data point $z$ under model parameters $x$ in the strategic population, and a cost $f_2(z,x)$ for a data point in a fixed, non-strategic population. The strategic population is denoted by $\rho\in\P$, and the non-strategic population by $\rhob\in \P$. The algorithm aims to minimize
\begin{align*} \E_{z\sim\rho}L(z,x)=\int f_1(z,x) \d \rho(z) + \int f_2(z,x) \d \rhob(z)  + \frac{\beta}{2}\norm{x-x_0}^2\,,
\end{align*}
where the norm is the vector inner product $\norm{x}^2=\langle x,x\rangle$ and $\beta>0$ weights the cost of updating the model parameters from its initial condition. 

We consider two settings: $(i)$ aligned objectives, and $(ii)$ competing objectives. Case $(i)$ captures the setting in which the strategic population minimization improves the performance of the algorithm, subject to a cost for deviating from a reference distribution $\rhot\in\P$. This cost stems from effort required to manipulate features, such as a loan applicant adding or closing credit cards. On the other hand, Case $(ii)$ captures the setting in which the strategic population minimization worsens the performance of the algorithm, again incurring cost from distributional changes.

\subsection{Case (i): Aligned Objectives}
In this setting, we consider the case where the strategic population and the algorithm have aligned objectives. This occurs in examples such as recommendation systems, where users and algorithm designers both seek to develop accurate recommendations for the users. This corresponds to the population cost
\begin{align*}
\E_{z\sim\rho,x\sim\mu} J(z,x) = \iint f_1(z,x) \d \rho(z) \d \mu(x) + \alpha KL(\rho\,|\,\rhot),
\end{align*}
where $KL(\cdot\,|\,\cdot)$ denotes the Kullback-Leibler divergence. Note that the KL divergence introduces diffusion to the dynamics for $\rho$. The weight $\alpha>0$ parameterizes the cost of distribution shift to the population. To account for nonlocal information and influence among members of the population, we include a kernel term $E(\rho)=\frac{1}{2}\int \rho W \ast \rho \, \d z$, where $(W\ast\rho)(z) = \int W(z-\bar z)\d\rho(\bar z)$ is a convolution integral and $W$ is a suitable interaction potential.

\subsection{Case (ii): Competing Objectives}
In settings such as online internet forums, where algorithms and users have used manipulative strategies for marketing \cite{dellarocas_strategic_2006}, the strategic population may be incentivized to modify or mis-report their attributes. The algorithm has a competitive objective, in that it aims to maintain performance against a population whose dynamics cause the algorithm performance to suffer. When the strategic population seeks an outcome contrary to the algorithm, we model strategic population cost as
\begin{align*}\E_{z\sim\rho,x\sim\mu} J(z,x) = -\iint f_1(z,x) \d \rho(z) \d \mu(x) + \alpha KL(\rho\,|\,\rhot).
\end{align*}
A significant factor in the dynamics for the strategic population is the timescale separation between the two "species"---i.e., the population and the algorithm. In our analysis, we will consider two cases: one, where the population responds much faster than the algorithm, and two, where the algorithm responds much faster than the population. We illustrate the intermediate case in a simulation example.

\section{Results}\label{sec:results}
We are interested in characterizing the long-time asymptotic behavior of the population distribution, as it depends on the decision-makers action over time. 
The structure of the population distribution gives us insights about how the decision-makers actions influences the entire population of users. For instance, as noted in the preceding sections, different behaviors such as bimodal distributions or large tails or variance might emerge, and such effects are not captured in simply looking at average performance.  
To understand this intricate interplay, one would like to characterize the behavior of both the population and the algorithm over large times. Our main contribution towards this goal is a novel analytical framework as well as analysis of the long-time asymptotics. 

A key observation is that the dynamics in \eqref{eq:subpopulation_dynamics} and \eqref{eq:algorithm_dynamics} can be re-formulated as a gradient flow; we lift $x$ to a probability distribution $\mu$ by representing it as a Dirac delta $\mu$ sitting at the point $x$. As a result, the evolution of $\mu$ will be governed by a PDE, and combined with the PDE for the population, we obtain a system of coupled PDEs,
\begin{align*}
    \partial_t \rho &= \div{\rho \nabla_z \delta_\rho\big[ \E_{z\sim \rho,x\sim\mu} J(z,x) +E(\rho)\big]} \\
    \partial_t \mu &= \div{\mu \nabla_x \delta_\mu \big[ \E_{z\sim\rho,x\sim\mu} L(z,x) \big]},
\end{align*}
where $\delta_\rho$ and $\delta_\mu$ are first variations with respect to $\rho$ and $\mu$ according to Definition~\ref{def:first_variation}.
The natural candidates for the asymptotic profiles of this coupled system are its steady states, which - thanks to the gradient flow structure - can be characterized as ground states of the corresponding energy functionals. In this work, we show existence and uniqueness of minimizers (maximizers) for the functionals under suitable conditions on the dynamics. We also provide criteria for convergence and explicit convergence rates. We begin with the case where the interests of the population and algorithm are aligned, and follow with analogous results in the competitive setting. We show convergence in energy, which in turn ensures convergence in a product Wasserstein metric. For convergence in energy, we use the notion of relative energy and prove that the relative energy converges to zero as time increases.
\begin{definition}[Relative Energy]\label{def:relative_energy}
    The relative energy of a functional $G$ is given by $G(\gamma|\gamma_\infty)=G(\gamma)-G(\gamma_\infty)$, where $G(\gamma_\infty)$ is the energy at the steady state.
\end{definition}
Since we consider the joint evolution of two probability distributions, we define a distance metric $\Wbar$ on the product space of probability measures with bounded second moment. 

\begin{definition}[Joint Wasserstein Metric]
    The metric over $\P_2(\R^d)\times\P_2(\R^d)$ is called $\Wbar$ and is given by
\begin{align*}
    \Wbar((\rho,\mu),(\tilde \rho,\tilde\mu))^2 = W_2(\rho,\tilde\rho)^2 + W_2(\mu,\tilde\mu)^2
\end{align*}
for all pairs $(\rho,\mu),(\tilde \rho,\tilde\mu) \in \P_2(\R^d)\times\P_2(\R^d)$,
and where $W_2$ denotes the Wasserstein-2 metric (see Definition~\ref{def:wasserstein}). We denote by $\cWbar(\R^d):=(\P_2(\R^d)\times\P_2(\R^d),\Wbar)$ the corresponding metric space.
\end{definition}

\subsection{Gradient Flow Structure}

In the case where the objectives of the algorithm and population are \emph{aligned}, we can write the dynamics as a gradient flow by using the same energy functional for both species. Let $G_a(\rho,\mu):\P(\R^d) \times \P(\R^d)\mapsto [0,\infty]$ be the energy functional given by
\begin{align*}
    G_a(\rho,\mu) &=  \iint f_1(z,x) \d \rho(z) \d \mu(x) +\iint f_2(z,x) \d \rhob (z) \d \mu(x)  + \alpha KL(\rho|\rhot)+  \frac{1}{2} \int \rho W \ast \rho  \\ &\quad + \frac{\beta}{2} \int \norm{x-x_0}^2 \d \mu(x). 
\end{align*}\label{eq:energy_functional_aligned}
This expression is well-defined as the relative entropy $KL(\rho\,|\,\rhot)$ can  be extended to the full set $\P(\R^d)$ by setting $G_a(\rho,\mu)=+\infty$ in case $\rho$ is not absolutely continuous with respect to $\rhot$.

In the \emph{competitive} case we define $G_c(\rho,x):\P(\R^d) \times \R^d \mapsto [-\infty,\infty]$ by
\begin{align*}
    G_c(\rho,x) &= \int f_1(z,x) \d \rho(z) + \int f_2(x,z') \d \rhob(z')  - \alpha KL(\rho|\rhot) - \frac{1}{2}\int \rho W \ast \rho + \frac{\beta}{2} \norm{x-x_0}^2.
\end{align*}\label{eq:energy_functional_zero_sum}
In settings like recommender systems, the population and algorithm have aligned objectives; they seek to minimize the same cost but are subject to different dynamic constraints and influences, modeled by the regularizer and convolution terms. In the case where the objectives are aligned, the dynamics are given by
\begin{equation}
\begin{aligned}
    \partial_t \rho &= \div{\rho \nabla_z \delta_\rho G_a[\rho,\mu]}  \\
    \partial_t \mu &= \div{\mu \nabla_x \delta_\mu G_a[\rho,\mu]}.
\end{aligned} \label{eq:dynamics_aligned}
\end{equation}
Note that \eqref{eq:dynamics_aligned} is a joint gradient flow, because the dynamics can be written in the form 
$$\partial_t \gamma =\div{\gamma \nabla \delta_\gamma G_a(\gamma)}\,,$$
where $\gamma=(\rho,\mu)$ and where the gradient and divergence are taken in both variables $(z,x)$. We discuss the structure of the dynamics \eqref{eq:dynamics_aligned} as well as the meaning of the different terms appearing in the energy functional $G_a$ in Appendix~\ref{sec:structure}. 

In other settings, such as credit score reporting, the objectives of the population are competitive with respect to the algorithm. Here we consider two scenarios; one, where the algorithm responds quickly relative to the population, and two, where the population responds quickly relative to the algorithm. In the case where the algorithm can immediately adjust optimally (best-respond) to the distribution, the dynamics are given by
\begin{equation}\begin{aligned}\label{eq:dynamics_competitive_x_fast}
    \partial_t \rho &= -\div{\rho \left(\nabla_z \delta_\rho G_c[\rho,x]\right)|_{x=\bx(\rho)}}\,, \\
   & \bx(\rho) \coloneqq \amin_{\bar{x}} G_c(\rho,\bar{x})\,.
\end{aligned}\end{equation}
Next we can consider the population immediately responding to the algorithm, which has dynamics
\begin{equation}\begin{aligned}\label{eq:dynamics_competitive_rho_fast}
    \ddt x &= -\nabla_x G_c(\rho,x)|_{\rho=\br(x)}\,, \\
    &\br(x) \coloneqq \amin_{\hat{\rho}\in\P} -G_c(\hat{\rho},x)\,.
\end{aligned}\end{equation}
In this time-scale separated setting, model \eqref{eq:dynamics_competitive_x_fast} represents a dyamic maximization of $G_c$ with respect to $\rho$ in Wasserstein-2 space, and an instantaneous minimization of $G_c$ with respect to the algorithm parameters $x$. Model~\eqref{eq:dynamics_competitive_rho_fast} represents an instantaneous maximization of $G_c$ with respect to $\rho$ and a dynamic minimization of $G_c$ with respect to the algorithm parameters $x$.
The key results on existence and uniqueness of a ground state as well as the convergence behavior of solutions depend on convexity (concavity) of $G_a$ and $G_c$. The notion of convexity that we will employ for energy functionals in the Wasserstein-2 geometry is \textit{(uniform) displacement convexity}, which is analogous to (strong) convexity in Euclidean spaces. One can think of displacement convexity for an energy functional defined on $\P_2$ as convexity along the shortest path in the Wasserstein-2 metric (linear interpolation in the Wasserstein-2 space) between any two given probability distributions. For a detailed definition of (uniform) displacement convexity and concavity, see Section~\ref{sec:displ-convexity}. In fact, suitable convexity properties of the input functions $f_1, f_2, W$ and $\rhot$ will ensure (uniform) displacement convexity of the resulting energy functionals appearing in the gradient flow structure, see for instance  \cite[Chapter 5.2]{villani-OTbook-2003}. 

We make the following assumptions in both the competitive case and aligned interest cases. Here, $\Id_d$ denotes the $d\times d$ identity matrix, $\Hess{f}$ denotes the Hessian of $f$ in all variables, while $\nabla^2_{x}{f}$ denotes the Hessian of $f$ in the variable $x$ only. 

\begin{assumption}[Convexity of $f_1$ and $f_2$]\label{assump:f}
The functions $f_1,f_2 \in C^2(\R^d\times\R^d;[0,\infty))$ satisfy for all $(z,x)\in\R^d\times \R^d$ the following:
\begin{itemize}[topsep=0ex]
\setlength\itemsep{0.01em}
\item There exists constants $\lambda_1,\lambda_2\ge 0$ such that
$\Hess{f_1}\succeq \lambda_1 \Id_{2d}$ and $\nabla^2_{x}{f_2}\succeq \lambda_2 \Id_d$;
\item There exist constants $a_i>0$ such that $x \cdot \nabla_x f_i(z,x) \geq -a_i $ for $i=1,2$;
    % \item There exists a constant $\sigma \ge 0$ such that $\norm{\nabla_x \nabla_z f_1(z,x)}\le \sigma$ .
\end{itemize}
\end{assumption}
\vspace{-0.2cm}
\begin{assumption}[Reference Distribution Shape]\label{assumption:convexity_of_ref_dist}
The reference distribution 
$\rhot\in\P(\R^d) \cap L^1(\R^d)$ satisfies $\log\rhot\in C^2(\R^d)$ and  $\nabla^2_{z} \log \rhot(z) \preceq -\tilde\lambda \Id_d$ for some $\tilde\lambda > 0$.
\end{assumption}
\vspace{-0.2cm}
\begin{assumption}[Convex Interaction Kernel]\label{assumption:W_convex}
The interaction kernel $W\in C^2(\R^d;[0,\infty))$ is convex, symmetric $W(-z)=W(z)$, and for some $D>0$ satisfies
\begin{align*}
    z\cdot\nabla_z W(z) \geq -D,\quad |\nabla_z W(z)|\leq D(1+|z|)\quad \forall\, z\in\R^d\,.
\end{align*}
\end{assumption}
\vspace{-0.2cm}
We make the following observations regarding the assumptions above:
\begin{itemize}[topsep=0ex]
    \setlength\itemsep{0.05em}
    \item The convexity in Assumption \ref{assumption:W_convex} can be relaxed and without affecting the results outlined below by following a more detailed analysis analogous to the approach in \cite{carrillo_kinetic_2003}. 
    \item If $f_1$ and $f_2$ are strongly convex, the proveable convergence rate increases, but without strict or strong convexity of $f_1$ and $f_2$, the regularizers $KL(\rho|\rhot)$ and $\int \norm{x-x_0}_2^2 \d x$ provide the convexity guarantees necessary for convergence.
\end{itemize}

For concreteness, one can consider the following classical choices of input functions to the evolution:
\begin{itemize}[topsep=0ex]
    \setlength\itemsep{0.05em}
    \item Using the log-loss function for $f_1$ and $f_2$ satisfies Assumption~\ref{assump:f}.
    \item  Taking the reference measure $\rhot$ to be the normal distribution satisfies Assumption~\ref{assumption:convexity_of_ref_dist}, which ensures the distribution is not too flat.
    \item Taking quadratic interactions $W(z)=\tfrac{1}{2}|z|^2$ satisfies Assumption~\ref{assumption:W_convex}.
\end{itemize}

\begin{remark}[Cauchy-Problem]
To complete the arguments on convergence to equilibrium, we require sufficient regularity of solutions to the PDEs under consideration. 
In fact, it is sufficient if we can show that equations \eqref{eq:dynamics_aligned}, \eqref{eq:dynamics_competitive_x_fast}, and \eqref{eq:dynamics_competitive_rho_fast} can be approximated by equations with smooth solutions. Albeit tedious, these are standard techniques in the regularity theory for partial differential equations, see for example \cite[Proposition 2.1 and Appendix A]{carrillo_kinetic_2003}, \cite{otto_generalization_2000}, \cite[Chapter 9]{villani-OTbook-2003}, and the references therein. Similar arguments as in \cite{desvillettes_spatially_2000} are expected to apply to the coupled gradient flows considered here, guaranteeing existence of smooth solutions with fast enough decay at infinity, and we leave a detailed proof for future work.  
\end{remark}

\subsection{Analysis of Case (i): Aligned Objectives}\label{subsec:identical}
The primary technical contribution of this setting consists of lifting the algorithm dynamics from an ODE to a PDE, which allows us to model the system as a joint gradient flow on the product space of probability measures. The coupling occurs in the potential function, rather than as cross-diffusion or non-local interaction as more commonly seen in the literature for multi-species systems.

\begin{theorem}\label{thm:PDE_aligned}
   % Let $\alpha,\beta>0$. 
   Suppose that Assumptions~\ref{assump:f}-\ref{assumption:W_convex} are satisfied and let $\lambda_a:=\lambda_1+\min(\lambda_2+\beta, \alpha\tilde\lambda)>0$. Consider solutions $\gamma_t:=(\rho_t,\mu_t)$ to the dynamics \eqref{eq:dynamics_aligned} with initial conditions satisfying $\gamma_0\in\P_2(\R^d)\times \P_2(\R^d)$ and $G_a(\gamma_0)<\infty.$
   Then the following hold:
    \begin{enumerate}[topsep=0ex]
        \setlength\itemsep{0.01em}
        \item[(a)] There exists a unique minimizer $\gamma_\infty =(\rho_\infty,\mu_\infty)$ of $G_a$, which is also a steady state for equation \eqref{eq:dynamics_aligned}. Moreover, $\rho_\infty\in L^1(\R^d)$, has the same support as $\rhot$, and its density is continuous.
        \item[(b)] The solution $\gamma_t$ converges exponentially fast in $G_a (\cdot \,|\, \gamma_\infty)$ and $\Wbar$,
        \begin{equation*}
    G_a(\gamma_t\,|\,\gamma_\infty)\le e^{-2\lambda_a t} G_a(\gamma_0\,|\,\gamma_\infty)\, \quad \text{ and }\quad
    \Wbar(\gamma_t,\gamma_\infty) \le c e^{-\lambda_a t}
    \quad \text{ for all } t\ge 0\,,
\end{equation*}
where $c>0$ is a constant only depending on $\gamma_0$, $\gamma_\infty$ and the parameter $\lambda_a$.
    \end{enumerate}
\end{theorem}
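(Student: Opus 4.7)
The strategy is to establish uniform $\lambda_a$-displacement convexity of $G_a$ on the product metric space $\cWbar(\R^d)$ and then feed this convexity into the standard entropy--entropy-production machinery for $\lambda$-convex gradient flows. A geodesic in $\cWbar(\R^d)$ is simply the product of independent Wasserstein-2 displacement interpolations $\rho_t=(T^\rho_t)_\#\rho_0$, $\mu_t=(T^\mu_t)_\#\mu_0$, so the bilinear potential $\iint f_1(z,x)\,\d\rho\,\d\mu$ inherits joint $\lambda_1$-convexity in $\Wbar$ by integrating the pointwise convexity bound coming from $\Hess f_1\succeq\lambda_1\Id_{2d}$ against $\d\rho_0\,\d\mu_0$. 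The mixed term $\iint f_2\,\d\rhob\,\d\mu$ combined with $\tfrac{\beta}{2}\|x-x_0\|^2$ yields $(\lambda_2+\beta)$-convexity along $W_2$-geodesics in $\mu$; Assumption~\ref{assumption:convexity_of_ref_dist} provides $\alpha\tilde\lambda$-convexity of $\alpha KL(\rho\,|\,\rhot)$ along $W_2$-geodesics in $\rho$ via the classical Bakry--\'Emery computation; and $\tfrac12\int\rho W\ast\rho$ is displacement convex by Assumption~\ref{assumption:W_convex}. Summing gives
\begin{align*}
G_a(\gamma_t)\;\le\;(1-t)G_a(\gamma_0)+tG_a(\gamma_1)-\tfrac{t(1-t)}{2}\bigl[(\lambda_1+\alpha\tilde\lambda)W_2^2(\rho_0,\rho_1)+(\lambda_1+\lambda_2+\beta)W_2^2(\mu_0,\mu_1)\bigr]\,,
\end{align*}
from which the joint convexity modulus with respect to $\Wbar$ is $\lambda_a=\lambda_1+\min(\lambda_2+\beta,\alpha\tilde\lambda)>0$.

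For part (a), uniform convexity together with lower semicontinuity and the coercivity provided by the quadratic and entropic confinements yields existence and uniqueness of a minimizer $\gamma_\infty=(\rho_\infty,\mu_\infty)$. The first-variation optimality conditions $\nabla_z\delta_\rho G_a[\gamma_\infty]=0$ on $\mathrm{supp}(\rho_\infty)$ and $\nabla_x\delta_\mu G_a[\gamma_\infty]=0$ on $\mathrm{supp}(\mu_\infty)$ identify $\gamma_\infty$ as a stationary solution of \eqref{eq:dynamics_aligned}. Solving the former explicitly gives the Gibbs representation
\begin{align*}
\rho_\infty(z)=\rhot(z)\exp\!\bigl(-\tfrac{1}{\alpha}\bigl(\textstyle\int f_1(z,x)\,\d\mu_\infty(x)+(W\ast\rho_\infty)(z)-c\bigr)\bigr)\,,
\end{align*}
from which absolute continuity with respect to $\rhot$, equality of supports, continuity of the density, and $L^1$-integrability all follow using the regularity and growth imposed by Assumptions~\ref{assump:f}--\ref{assumption:W_convex}.

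For part (b), I would exploit the dissipation identity
\begin{align*}
\frac{\d}{\d t}G_a(\gamma_t)=-\int\rho_t|\nabla_z\delta_\rho G_a|^2\,\d z-\int\mu_t|\nabla_x\delta_\mu G_a|^2\,\d x=:-D(\gamma_t)\,,
\end{align*}
combined with the generalized HWI/log-Sobolev inequality $G_a(\gamma\,|\,\gamma_\infty)\le\tfrac{1}{2\lambda_a}D(\gamma)$ on the product space, which is a direct consequence of $\lambda_a$-displacement convexity. Gronwall produces the exponential decay $G_a(\gamma_t\,|\,\gamma_\infty)\le e^{-2\lambda_a t}G_a(\gamma_0\,|\,\gamma_\infty)$, and the matching Talagrand inequality $\Wbar(\gamma,\gamma_\infty)^2\le\tfrac{2}{\lambda_a}G_a(\gamma\,|\,\gamma_\infty)$ - again a standard consequence of $\lambda_a$-displacement convexity with unique minimizer - converts this into $\Wbar$-convergence at rate $\lambda_a$.

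The main obstacle I foresee is transferring the classical HWI / entropy-dissipation / Talagrand triangle, which is well established for single-species gradient flows in $W_2$, to the coupled two-component setting $(\cWbar(\R^d),\Wbar)$, since the bilinear coupling through $f_1$ prevents analysing the two species independently. The structural fact that rescues the argument is precisely that product geodesics in $\cWbar$ factor into independent $W_2$-geodesics, so the abstract theory of $\lambda$-convex gradient flows in the EVI sense carries over intact and the coupling only affects the modulus rather than the underlying geometry. Making the formal dissipation computation rigorous further requires the smoothness alluded to in the Cauchy-problem remark, which I would treat by a standard regularize-and-pass-to-the-limit scheme in the spirit of \cite{carrillo_kinetic_2003}.
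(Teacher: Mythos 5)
Your proposal is correct and follows essentially the same route as the paper: the same term-by-term displacement-convexity decomposition yielding the modulus $\lambda_a$, the direct method plus Euler--Lagrange/Gibbs representation for part (a), and the HWI $\Rightarrow$ log-Sobolev/Talagrand $\Rightarrow$ Gronwall chain for part (b). The only cosmetic difference is that the paper derives continuity of $\rho_\infty$ by an elliptic-regularity bootstrap rather than reading it off the Gibbs formula, but both arguments work.
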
 
\vspace{-0.2cm}
\begin{proof}
(Sketch) For existence and uniqueness, we leverage classical techniques in the calculus of variations. To obtain convergence to equilibrium in energy, our key result is a new HWI-type inequality, providing as a consequence generalizations of the log-Sobolev inequality and the Talagrand inequality. Together, these inequalities relate the energy (classically denoted by $H$ in the case of the Boltzmann entropy), the metric (classically denoted by $W$ in the case of the Wasserstein-2 metric) and the energy dissipation (classically denoted by $I$ in the case of the Fisher information)\footnote{Hence the name HWI inequalities.}. Combining these inequalities with Gronwall's inequality allows us to deduce convergence both in energy and in the metric $\Wbar$.
\end{proof}
\vspace{-0.2cm}

\subsection{Analysis of Case (ii): Competing Objectives}\label{subsec:competitive}
In this setting, we consider the case where the algorithm and the strategic population have goals in opposition to each other; specifically, the population benefits from being classified incorrectly. First, we will show that when the algorithm instantly best-responds to the population, then the distribution of the population converges exponentially in energy and in $W_2$. Then we will show a similar result for the case where the population instantly best-responds to the algorithm. 

In both cases, we begin by proving two Danskin-type results (see \cite{danskin_theory_1967, bertsekas_control_1971}) which will be used for the main convergence theorem, including convexity (concavity) results. To this end, we make the following assumption ensuring that the regularizing component in the evolution of $\rho$ is able to control the concavity introduced by $f_1$ and $f_2$.
\begin{assumption}[Upper bounds for $f_1$ and $f_2$]\label{assumption:large_regularizer_rho}
    There exists a constant $\Lambda_1>0$ such that
    \begin{align*}
       \nabla_z^2 f_1(z,x)  \preceq \Lambda_1 I_d
        \qquad \text{ for all } (z,x)\in\R^d\times \R^d\,,
    \end{align*}
    and for any $R>0$ there exists a constant $c_2=c_2(R)\in\R$ such that
    \begin{equation*}
        \sup_{x\in B_R(0)} \int f_2(z,x) \d\rhob(z) < c_2\,.
    \end{equation*}
    % Additionally, $\tilde\lambda-\Lambda_1-\frac{\sigma}{\beta+\Lambda_1}>0$.
\end{assumption}
\vspace{-0.2cm}

Equipped with Assumption~\ref{assumption:large_regularizer_rho}, we state the result for a best-responding algorithm.
\begin{theorem}\label{thm:convergence_fast_mu}
    Suppose Assumptions~\ref{assump:f}-\ref{assumption:large_regularizer_rho} are satisfied with 
    $\alpha\tilde\lambda > \Lambda_1$. 
    Let $\lambda_b\coloneqq \alpha\tilde\lambda - \Lambda_1$. Define $G_b(\rho) \coloneqq G_c (\rho,\bx(\rho))$.
    Consider a solution $\rho_t$ to the dynamics \eqref{eq:dynamics_competitive_x_fast} with initial condition $\rho_0\in\P_2(\R^d)$ such that $G_b(\rho_0)<\infty$. Then the following hold:
    \begin{enumerate}[topsep=0ex]
    \setlength\itemsep{0.01em}
        \item[(a)] There exists a unique maximizer $\rho_\infty$ of $G_b(\rho)$, which is also a steady state for equation \eqref{eq:dynamics_competitive_x_fast}. Moreover, $\rho_\infty\in L^1(\R^d)$, has the same support as $\rhot$, and its density is continuous.
        \item[(b)] The solution $\rho_t$ converges exponentially fast to $\rho_\infty$ with rate $\lambda_b$ in $G_b(\cdot\,|\,\rho_\infty)$ and $W_2$,
                \begin{equation*}
    G_b(\rho_t\,|\,\rho_\infty)\le e^{-2\lambda_b t} G_a(\rho_0\,|\,\rho_\infty)\, \quad \text{ and }\quad
    W_2(\rho_t,\rho_\infty) \le c e^{-\lambda_b t}
    \quad \text{ for all } t\ge 0\,,
    \end{equation*}
    where $c>0$ is a constant only depending on $\rho_0$, $\rho_\infty$ and the parameter $\lambda_b$.
    \end{enumerate}
\end{theorem}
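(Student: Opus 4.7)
The plan is to reduce the coupled system~\eqref{eq:dynamics_competitive_x_fast} to a single Wasserstein-2 gradient \emph{ascent} for the reduced functional $G_b(\rho):=G_c(\rho,\bx(\rho))$, establish uniform displacement concavity of $G_b$ with rate $\lambda_b$, and then run the HWI / log-Sobolev / Talagrand strategy used in the sketch of Theorem~\ref{thm:PDE_aligned}. The two Danskin-type results announced in the paragraph preceding the theorem are the ingredients that make this reduction rigorous.

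\emph{Danskin steps.} First, since $\beta>0$ and $f_1,f_2$ are convex in $x$ by Assumption~\ref{assump:f}, the map $x\mapsto G_c(\rho,x)$ is strongly convex with rate $\lambda_1+\lambda_2+\beta$; combined with the growth bound $x\cdot\nabla_x f_i\ge -a_i$ this gives existence, uniqueness, and $W_2$-continuity of $\bx(\rho)$. An envelope argument then yields $\delta_\rho G_b[\rho]=\delta_\rho G_c[\rho,x]|_{x=\bx(\rho)}$, since the optimality condition $\nabla_x G_c(\rho,\bx(\rho))=0$ kills the chain-rule term involving $\partial_\rho\bx(\rho)$; in particular \eqref{eq:dynamics_competitive_x_fast} rewrites as $\partial_t\rho=-\div{\rho\,\nabla_z\delta_\rho G_b[\rho]}$, giving the dissipation identity $\tfrac{d}{dt}G_b(\rho_t)=\int\rho_t\,|\nabla_z\delta_\rho G_b[\rho_t]|^2\,\d z\ge 0$. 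Second, $\rho\mapsto G_c(\rho,x)$ is $\lambda_b$-displacement concave for each fixed $x$: $-\alpha\,KL(\cdot\,|\,\rhot)$ contributes $\alpha\tilde\lambda$-concavity by Assumption~\ref{assumption:convexity_of_ref_dist}, $\int f_1(z,x)\,\d\rho$ is at worst $\Lambda_1$-displacement convex by Assumption~\ref{assumption:large_regularizer_rho} (costing at most $\Lambda_1$ in the rate), $-\tfrac12\int\rho W\ast\rho$ is concave with nonnegative rate by convexity of $W$, and the remaining terms are $\rho$-independent. This concavity passes to $G_b=\min_x G_c(\cdot,x)$ via a min-envelope estimate on a displacement geodesic $\{\rho_s\}_{s\in[0,1]}$: bound $G_b(\rho_s)\ge sG_c(\rho_1,\bx(\rho_s))+(1-s)G_c(\rho_0,\bx(\rho_s))+\tfrac{\lambda_b}{2}s(1-s)W_2(\rho_0,\rho_1)^2$ and then use $G_c(\rho_i,\bx(\rho_s))\ge G_c(\rho_i,\bx(\rho_i))=G_b(\rho_i)$.

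\emph{Existence, uniqueness, and exponential convergence.} Uniform strict concavity of $G_b$, tightness of maximizing sequences (supplied by the KL term via Assumption~\ref{assumption:convexity_of_ref_dist}, which makes $-\log\rhot$ coercive), and lower semicontinuity of all relevant functionals give a unique maximizer $\rho_\infty$ via the direct method. The Euler--Lagrange condition $\delta_\rho G_b[\rho_\infty]=\text{const on supp}\,\rho_\infty$ yields $\rho_\infty\propto\rhot\exp\bigl(\alpha^{-1}(f_1(\cdot,\bx(\rho_\infty))-W\ast\rho_\infty)\bigr)$ on the support of $\rhot$, hence $\rho_\infty\in L^1$, its density is continuous, and its support coincides with that of $\rhot$. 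The uniform $\lambda_b$-concavity of $G_b$ then upgrades, as in Theorem~\ref{thm:PDE_aligned}, to a log-Sobolev-type inequality $G_b(\rho_\infty)-G_b(\rho)\le\tfrac{1}{2\lambda_b}\int\rho\,|\nabla_z\delta_\rho G_b[\rho]|^2\,\d z$ and a Talagrand-type inequality $\tfrac{\lambda_b}{2}W_2(\rho,\rho_\infty)^2\le G_b(\rho_\infty)-G_b(\rho)$. Combining log-Sobolev with the dissipation identity and applying Gronwall gives $G_b(\rho_\infty)-G_b(\rho_t)\le e^{-2\lambda_b t}(G_b(\rho_\infty)-G_b(\rho_0))$, and substituting into Talagrand produces the claimed $W_2$ decay at rate $\lambda_b$.

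The main obstacle is making the two Danskin-type identities rigorous in the Wasserstein setting. The envelope formula $\delta_\rho G_b=\delta_\rho G_c|_{x=\bx(\rho)}$ requires at least directional Wasserstein differentiability of $\rho\mapsto\bx(\rho)$; since $\bx(\rho)$ is defined implicitly through a finite-dimensional minimization, its stability under $W_2$-perturbations of $\rho$ has to be quantified using the $(\lambda_1+\lambda_2+\beta)$-strong convexity of $G_c(\rho,\cdot)$ together with Lipschitz dependence of $\nabla_x G_c$ on $\rho$. The concavity-transfer step is equally delicate because $\bx(\rho_s)$ varies along the geodesic, forcing one to first establish \emph{uniform-in-$x$} $\lambda_b$-concavity of $G_c(\cdot,x)$; it is precisely here that Assumption~\ref{assumption:large_regularizer_rho} enters, requiring the quantitative gap $\alpha\tilde\lambda-\Lambda_1>0$ between the KL regularization strength and the worst-case $z$-convexity of $f_1$.
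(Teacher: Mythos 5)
Your proposal is correct and reaches the theorem through the same overall architecture as the paper (envelope/Danskin identity for $\delta_\rho G_b$, uniform displacement concavity of $G_b$ with constant $\lambda_b=\alpha\tilde\lambda-\Lambda_1$, direct method for the ground state, then HWI $\Rightarrow$ log-Sobolev $\Rightarrow$ Gronwall $\Rightarrow$ Talagrand), but your treatment of the key concavity step is genuinely different and cleaner. The paper proves Proposition~\ref{prop:Gb_concave} by differentiating $s\mapsto G_b(\rho_s)$ twice along a geodesic; because $\bx(\rho_s)$ moves with $s$, this produces a cross term $L_2(\rho_s)=\int \dds\bx(\rho_s)\cdot\nabla_x\nabla_z f_1\cdot v_s\,\d\rho_s$ whose sign must be controlled, which in turn forces the explicit computation of the first variation $\delta_\rho\bx[\rho]=-Q(\rho)^{-1}\nabla_x f_1(\cdot,\bx(\rho))$ (Lemma~\ref{lem:first_variation_b}) and the observation that $L_2=-\langle u,Q^{-1}u\rangle\le 0$ by positive definiteness of $Q$. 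Your min-envelope argument --- fix $x=\bx(\rho_s)$, use the $x$-uniform $\lambda_b$-displacement concavity of $\rho\mapsto G_c(\rho,x)$ along the geodesic, then relax $G_c(\rho_i,\bx(\rho_s))\ge G_b(\rho_i)$ --- is exactly the statement that an infimum of uniformly concave functions is uniformly concave, and it bypasses the differentiability of $\rho\mapsto\bx(\rho)$ entirely for this step. What the paper's heavier computation buys in exchange is the sharper rate of Remark~\ref{rem:f1_f2_upper_bound}, where the strict negativity of $L_2$ improves $\lambda_b$ to $\alpha\tilde\lambda+\sigma^2/(\beta+\Lambda_1+\Lambda_2)-\Lambda_1$; your envelope bound discards that gain. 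Two small points to tighten: for a maximization problem you need \emph{upper} semi-continuity of $G_b$ (the paper gets it from Berge's maximum theorem plus the uniform bound $\|\bx(\rho)\|^2\le\|x_0\|^2+2(a_1+a_2)/\beta$, and your claimed $W_2$-continuity of $\bx$ would serve the same purpose); and tightness of the maximizing sequence is not supplied by coercivity of $-\log\rhot$ alone, since $\int f_1(z,\bx(\rho))\,\d\rho$ enters $G_b$ with a positive sign and may grow like $\tfrac{\Lambda_1}{2}|z|^2$ --- it is the combined coercivity of $-[f_1(z,x)+\alpha\log\rhot(z)]\gtrsim\tfrac{\alpha\tilde\lambda-\Lambda_1}{4}|z|^2$ that controls the second moments, which is consistent with, but more precise than, what you wrote.
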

\vspace{-0.2cm}
\begin{proof}
    (Sketch) The key addition in this setting as compared with Theorem~\ref{thm:PDE_aligned} is proving that $G_b(\rho)$ is bounded below, uniformly displacement concave and guaranteeing its smoothness via Berge's Maximum Theorem. This is non-trivial as it uses the properties of the best response $\bx(\rho)$.
    A central observation for our arguments to work is that
    $\delta_\rho G_b [\rho] = \left(\delta_\rho G_c [\rho,x]\right)|_{x=\bx(\rho)}$. We can then conclude using the direct method in the calculus of variations and the HWI method.
\end{proof}
\vspace{-0.2cm}
Here, the condition that $\alpha\tilde\lambda$ must be large enough corresponds to the statement that the system must be subjected to a strong enough regularizing effect.

In the opposite case, where $\rho$ instantly best-responds to the algorithm, we show Danskin-like results for derivatives through the best response function and convexity of the resulting energy in $x$ which allows to deduce convergence.

\begin{theorem}\label{thm:convergence_fast_rho}
    Suppose Assumptions~\ref{assump:f}-\ref{assumption:large_regularizer_rho} are satisfied with $\alpha\lambdat>\Lambda_1$, and that $\br(x)$ is differentiable (as shown by example conditions in Lemmas \ref{lem:br-diffcondition} and \ref{lem:differentiable_r}).
    Define $G_d(x)\coloneqq G_c(\br(x),x)$. Then it holds:
    \begin{enumerate}[topsep=0ex]
    \setlength\itemsep{0.01em}
        \item[(a)] There exists a unique minimizer $x_\infty$ of $G_d(x)$ which is also a steady state for \eqref{eq:dynamics_competitive_rho_fast}.
        \item[(b)] The vector $x(t)$ solving the dynamics \eqref{eq:dynamics_competitive_rho_fast} with initial condition $x(0)\in\R^d$ converges exponentially fast to $x_\infty$ with rate $\lambda_d:=\lambda_1+\lambda_2+\beta>0$ in $G_d$ and in the Euclidean norm:
        \begin{align*}
            \|x(t)-x_\infty\|&\le e^{-\lambda_d t}  \|x(0)-x_\infty\|\,,\\
            G_d(x(t))-G_d(x_\infty)&\le e^{-2\lambda_d t}\left(G_d(x(0))-G_d(x_\infty)\right)
        \end{align*}
        for all $t\ge 0$.
    \end{enumerate}
\end{theorem}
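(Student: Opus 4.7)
The strategy is to reduce the coupled dynamics to a Euclidean gradient flow on the reduced energy $G_d(x) = G_c(\br(x), x)$ by establishing a Danskin-type envelope identity $\nabla G_d(x) = \nabla_x G_c(\rho,x)|_{\rho=\br(x)}$, and then to show that $G_d$ is $\lambda_d$-strongly convex so that classical theory for gradient descent on strongly convex functions applies. The proof parallels Theorem~\ref{thm:convergence_fast_mu} but with the roles of $\rho$ and $x$ exchanged: the infinite-dimensional direct method is replaced by its finite-dimensional counterpart, while the variational analysis is pushed onto the best-response map $\br$.

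First I would verify that $\br(x)$ is well-defined for every $x\in\R^d$. For fixed $x$, the functional $-G_c(\cdot,x)$ on $\P_2(\R^d)$ is uniformly displacement convex with constant $\lambda_b = \alpha\tilde\lambda - \Lambda_1 > 0$: the $\alpha KL$ term contributes $\alpha\tilde\lambda$ by Assumption~\ref{assumption:convexity_of_ref_dist}, the $W$-convolution term contributes non-negatively by Assumption~\ref{assumption:W_convex}, and $-\int f_1(z,x)\,\d\rho(z)$ contributes at least $-\Lambda_1$ via the Hessian upper bound in Assumption~\ref{assumption:large_regularizer_rho}. Combined with standard lower semicontinuity and coercivity (from the $KL$ term), the direct method yields a unique minimizer $\br(x)$, exactly as in Theorem~\ref{thm:convergence_fast_mu}(a).

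Next I would prove strong convexity of $G_d$. For each fixed $\rho\in\P_2(\R^d)$ the map $x \mapsto G_c(\rho,x)$ is $\lambda_d$-strongly convex: Assumption~\ref{assump:f} gives $\nabla_x^2 f_1 \succeq \lambda_1 I_d$ (taking the $xx$-block of the full Hessian bound) and $\nabla_x^2 f_2 \succeq \lambda_2 I_d$, the regularizer contributes $\beta I_d$, and the $\alpha KL$ and $W$-convolution terms are independent of $x$. Since pointwise supremum preserves the strong-convexity constant and $G_d(x) = \sup_\rho G_c(\rho,x)$ (because $\br(x)$ maximizes $G_c(\cdot,x)$), the function $G_d$ is $\lambda_d$-strongly convex. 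Strong convexity together with coercivity of $G_d$ (driven by $\tfrac{\beta}{2}\|x-x_0\|^2$ and the non-negativity of $f_1, f_2$) produces a unique minimizer $x_\infty$. Using the assumed differentiability of $\br$ from Lemmas~\ref{lem:br-diffcondition} and \ref{lem:differentiable_r} and the first-order optimality condition $\nabla_z \delta_\rho G_c[\br(x),x] \equiv 0$ on the support of $\br(x)$, a Danskin-type envelope computation gives
\begin{equation*}
    \nabla G_d(x) = \nabla_x G_c(\rho,x)\big|_{\rho=\br(x)}\,,
\end{equation*}
so $x_\infty$ satisfies $\nabla_x G_c(\br(x_\infty),x_\infty) = 0$ and is a steady state of \eqref{eq:dynamics_competitive_rho_fast}, yielding (a).

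For (b), the envelope identity reduces \eqref{eq:dynamics_competitive_rho_fast} to $\dot x(t) = -\nabla G_d(x(t))$. Strong convexity and $\nabla G_d(x_\infty)=0$ give $(x-x_\infty)\cdot \nabla G_d(x) \ge \lambda_d \|x-x_\infty\|^2$, hence $\tfrac{d}{dt}\|x(t)-x_\infty\|^2 \le -2\lambda_d \|x(t)-x_\infty\|^2$, and Gronwall's inequality gives the first bound. For the energy decay, the Polyak-Lojasiewicz inequality $\|\nabla G_d(x)\|^2 \ge 2\lambda_d (G_d(x)-G_d(x_\infty))$ implied by $\lambda_d$-strong convexity, combined with $\tfrac{d}{dt} G_d(x(t)) = -\|\nabla G_d(x(t))\|^2$ and another application of Gronwall, gives exponential decay at rate $2\lambda_d$ for $G_d(x(t))-G_d(x_\infty)$. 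The main obstacle is making the Danskin envelope identity rigorous: because $\br$ maps $\R^d$ into a space of probability measures, the usual finite-dimensional envelope arguments do not apply directly and require careful regularity of $\br$, which is precisely what the supplementary differentiability lemmas are designed to provide.
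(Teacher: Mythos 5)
Your proposal is correct and follows essentially the same route as the paper: well-posedness of $\br(x)$ via uniform displacement concavity of $G_c(\cdot,x)$ and the direct method, the envelope identity $\nabla G_d(x)=\nabla_x G_c(\rho,x)|_{\rho=\br(x)}$ (which in the paper hinges on $\delta_\rho G_c[\br(x),x]$ being constant on $\supp(\br(x))$ together with $\int\br(x)\,\d z\equiv 1$), $\lambda_d$-strong convexity of $G_d$, and then Gronwall plus the Polyak--{\L}ojasiewicz inequality. The only cosmetic difference is that you derive strong convexity of $G_d$ from the fact that a pointwise supremum of $\lambda_d$-strongly convex functions is $\lambda_d$-strongly convex, whereas the paper writes out the equivalent first-order inequality using $G_c(\br(x'),x)\le G_d(x)$; both are valid.
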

\vspace{-0.2cm}
These two theorems illustrate that, under sufficient convexity conditions on the cost functions, we expect the distribution $\rho$ and the algorithm $x$ to converge to a steady state. In practice, when the distributions are close enough to the steady state there is no need to retrain the algorithm.

% transition to examples
While we have proven results for the extreme timescale cases, we anticipate convergence to the same equilibrium in the intermediate cases. Indeed, it is well known \cite{borkar2009stochastic} (especially for systems in Euclidean space) that for two-timescale stochastic approximations of dynamical systems, with appropriate stepsize choices, converge asymptotically, and finite-time high probability concentration bounds can also be obtained. These results have been leveraged in strategic classification \cite{zrnic_who_2021} and Stackelberg games \cite{fiez2020implicit,fiez2021local,fiez2021global}. 
We leave this intricate analysis to future work.

In the following section we show numerical results in the case of a best-responding $x$, best-responding $\rho$, and in between where $x$ and $\rho$ evolve on a similar timescale. Note that in these settings, the dynamics do not have a gradient flow structure due to a sign difference in the energies, requiring conditions to ensure that one species does not dominate the other.

\section{Numerical Examples}\label{sec:examples}
We illustrate numerical results for the case of a classifier, which are used in scenarios such as loan or government aid applications \cite{camacho_manipulation_2011}, school admissions \cite{pathak_school_2013}, residency match \cite{rees-jones_suboptimal_2018}, and recommendation algorithms \cite{lang_social_2010}, all of which have some population which is incentivized to submit data that will result in a desirable classification. For all examples, we select classifiers of the form $x\in \R$, so that a data point $z\in \R$ is assigned a label of $1$ with probability $q(z,x) = (1+\exp{(-b^\top z + x)})^{-1}$ where $b>0$. Let $f_1$ and $f_2$ be given by
\begin{align*}
    f_1(z,x) &= -\log (1-q(z,x)) \,,\qquad
    f_2(z,x) = -\log q(z,x).
\end{align*}
Note that $\Hess{f_1}\succeq 0$ and $\nabla_x^2{f_2}\succeq 0$, so $\lambda_1=\lambda_2=0$. Here, the strictness of the convexity of the functional is coming from the regularizers, not the cost functions, with $\rhot$ a scaled normal distribution. We show numerical results for two scenarios with additional settings in the appendix. First we illustrate competitive interests under three different timescale settings. Then we simulate the classifier taking an even more na{\"i}ve strategy than gradient descent and discuss the results. The PDEs were implemented based on the finite volume method from \cite{carrillo_finite-volume_2015}.
\subsection{Competitive Objectives}\label{sec:competitive_objectives}
In the setting with competitive objectives, we utilize $G_c(\rho,x)$ with $W=0$, $f_1$ and $f_2$ as defined above with $b=3$ fixed as it only changes the steepness of the classifier for $d=1$, and $\alpha=0.1$ and $\beta=0.05$. In Figure~\ref{fig:fast_x_rho}, we simulate two extremes of the timescale setting; first when $\rho$ is nearly best-responding and then when $x$ is best-responding. The simulations have the same initial conditions and end with the same distribution shape; however, the behavior of the strategic population differs in the intermediate stages.
\begin{figure}[!h]
    % \centering
    \includegraphics[scale=0.12]{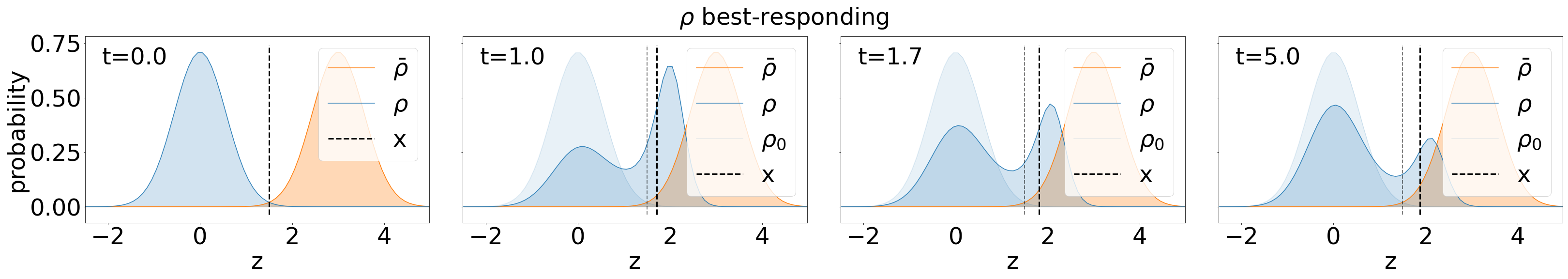}
    \includegraphics[scale=0.12]{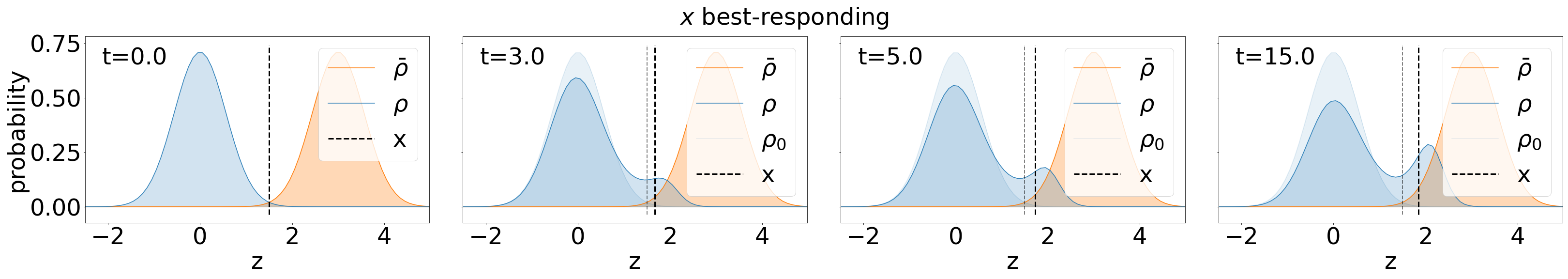}
    \caption{When $x$ versus $\rho$ best-responds, we observe the same final state but different intermediate states. Modes appear in the strategic population which simpler models cannot capture.}
    \label{fig:fast_x_rho}
\end{figure}
When $\rho$ is nearly best-responding, we see that the distribution quickly shifts mass over the classifier threshold. Then the classifier shifts right, correcting for the shift in $\rho$, which then incentivizes $\rho$ to shift more mass back to the original mode. In contrast, when $x$ best-responds, the right-hand mode slowly increases in size until the system converges. 

Figure~\ref{fig:intermediate_speeds} shows simulation results from the setting where $\rho$ and $x$ evolve on the same timescale. We observe that the distribution shift in $\rho$ appears to fall between the two extreme timescale cases, which we expect.
\begin{figure}[!h]
    \centering
    \includegraphics[scale=0.12]{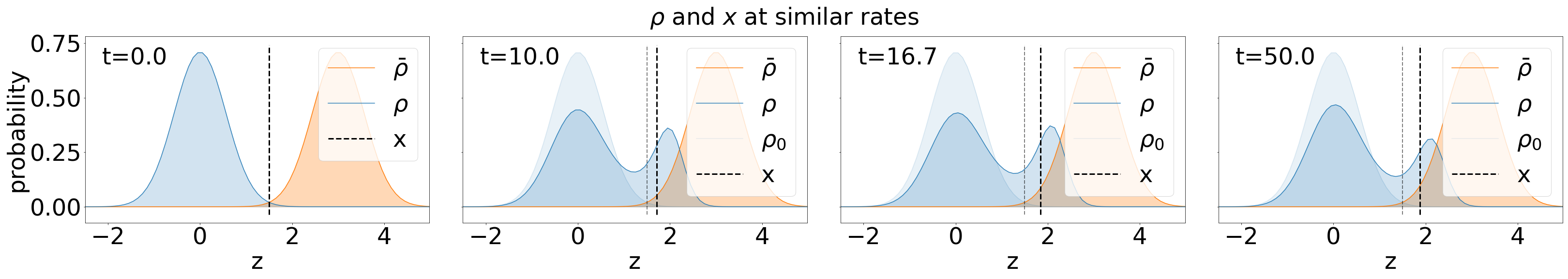}
    \caption{In this experiment the population and classifier have similar rates of change, and the distribution change for $\rho$ exhibits behaviors from both the fast $\rho$ and fast $x$ simulations; the right-hand mode does not peak as high as the fast $\rho$ case but does exceed its final height and return to the equilibrium.}
    \label{fig:intermediate_speeds}
\end{figure}
We highlight two important observations for the competitive case. One, a single-mode distribution becomes bimodel, which would not be captured using simplistic metrics such as the mean and variance. This split can be seen as polarization in the population, a phenomenon that a mean-based strategic classification model would not capture. Two, the timescale on which the classifier updates significantly impacts the intermediate behavior of the distribution. In our example, when $x$ updated slowly relative to the strategic population, the shifts in the population were greater than in the other two cases. This suggests that understanding the effects of timescale separation are important for minimizing volatility of the coupled dynamics.

\subsection{Na{\"i}ve Behavior}
In this example, we explore the results of the classifier adopting a non-gradient-flow strategy, where the classifier chooses an initially-suboptimal value for $x$ and does not move, allowing the strategic population to respond. All functions and parameters are the same as in the previous example. 
\begin{figure}[!h]
     \centering
     \begin{subfigure}[b]{0.45\textwidth}
        \includegraphics[width=\textwidth]{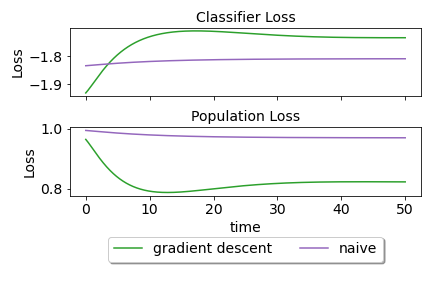}
        \caption{Both species minimize their respective losses; when the classifier uses a na{\"i}ve strategy, the final performance is better for the classifier and uniformly worse for the population.}
        \label{fig:strategic_loss}
     \end{subfigure}
     \hfill
     \begin{subfigure}[b]{0.45\textwidth}
        \includegraphics[width=\textwidth]{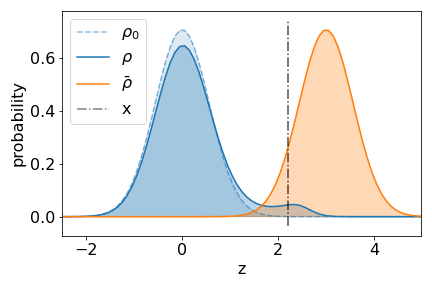}
        \caption{The classifier selects a suboptimal initial condition $x=2.2$, instead of $x=1.5$ which minimizes the initial loss, and then does not move in response to the population.}
        \label{fig:strategic_distribution}
     \end{subfigure}
        \caption{Although the classifier starts with a larger cost by taking the naive strategy, the final loss is better. This illustrates how our model can be used to compare robustness of different strategies against a strategic population.}
        \label{fig:naive_example}
\end{figure}
When comparing with the gradient descent strategy, we observe that while the initial loss for the classifier is worse for the naive strategy, the final cost is better. While this results is not surprising, because one can view this as a general-sum game where the best response to a fixed decision may be better than the equilibrium, it illustrates how our method provides a framework for evaluating how different training 
strategies perform in the long run against a strategic population.

\section{Future Directions, Limitations, and Broader Impact}
Our work presents a method for evaluating the robustness of an algorithm to a strategic population, and investigating a variety of robustness using our techniques opens a range of future research directions.
Our application suggests many questions relevant to the PDE literature, such as: (1) Does convergence still hold with the gradient replaced by an estimated gradient? (2) Can we prove convergence in between the two timescale extremes? (3) How do multiple dynamic populations respond to an algorithm, or multiple algorithms? In the realm of learning algorithms, our framework can be extended to other learning update strategies and presents a way to model how we can design these update strategies to induce desired behaviors in the population.

A challenge in our method is that numerically solving high-dimensional PDEs is computationally expensive and possibly unfeasible. Here we note that in many applications, agents in the population do not alter more than a few features due to the cost of manipulation. We are encouraged by the recent progress using deep learning to solve PDEs, which could be used in our application.

% broader impacts 
\textbf{Broader Impacts}
Modeling the full population distribution rather than simple metrics of the distribution is important because not all individuals are affected by the algorithm in the same way. For example, if there are tails of the distribution that have poor performance even if on average the model is good, we need to know how that group is advantaged or disadvantaged relative to the rest of the population. Additionally, understanding how people respond to algorithms offers an opportunity to incentivise people to move in a direction that increases social welfare.

\begin{ack}
LC is supported by an NDSEG fellowhip from the Air
Force Office of Scientific Research. FH is supported by
start-up funds at the California Institute of Technology. LR
is supported by ONR YIP N00014-20-1-2571 P00003 and
NSF Awards CAREER 1844729, and CPS 1931718. EM
acknowledges support from NSF Award 2240110. We are
grateful for helpful discussions with Jos\'e A. Carrillo.

\end{ack}

\medskip

{
\small

\printbibliography

@inproceedings{ray2022decision,
  title={Decision-dependent risk minimization in geometrically decaying dynamic environments},
  author={Ray, Mitas and Ratliff, Lillian J and Drusvyatskiy, Dmitriy and Fazel, Maryam},
  booktitle={Proceedings of the AAAI Conference on Artificial Intelligence},
  volume={36},
  number={7},
  pages={8081--8088},
  year={2022}
}

@inproceedings{dong2018strategic,
  title={Strategic classification from revealed preferences},
  author={Dong, Jinshuo and Roth, Aaron and Schutzman, Zachary and Waggoner, Bo and Wu, Zhiwei Steven},
  booktitle={Proceedings of the 2018 ACM Conference on Economics and Computation},
  pages={55--70},
  year={2018}
}

@inproceedings{miller2021outside,
  title={Outside the echo chamber: Optimizing the performative risk},
  author={Miller, John P and Perdomo, Juan C and Zrnic, Tijana},
  booktitle={International Conference on Machine Learning},
  pages={7710--7720},
  year={2021},
  organization={PMLR}
}

@inproceedings{narang2022learning,
  title={Learning in Stochastic Monotone Games with Decision-Dependent Data},
  author={Narang, Adhyyan and Faulkner, Evan and Drusvyatskiy, Dmitriy and Fazel, Maryam and Ratliff, Lillian},
  booktitle={International Conference on Artificial Intelligence and Statistics},
  pages={5891--5912},
  year={2022},
  organization={PMLR}
}

@article{mccann_convexity_1997,
	title = {A Convexity Principle for Interacting Gases},
	volume = {128},
	issn = {0001-8708},
	url2 =  {https://www.sciencedirect.com/science/article/pii/S0001870897916340},
	doi = {10.1006/aima.1997.1634},
	abstract = {A new set of inequalities is introduced, based on a novel but natural interpolation between Borel probability measures {onRd}. Using these estimates in lieu of convexity or rearrangement inequalities, the existence and uniqueness problems are solved for a family of attracting gas models. In these models, the gas interacts with itself through a force which increases with distance and is governed by an equation of {stateP}=P(Ï±) relating pressure to density.P(Ï±)/Ï±{\textgreater}(dâˆ’1)/dis assumed non-decreasing for ad-dimensional gas. By showing that the internal and potential energies for the system are convex functions of the interpolation parameter, an energy minimizing stateâ€”unique up to translationâ€”is proven to exist. The concavity established for Â¶ÏtÂ¶âˆ’p/dqas a function oftâˆˆ[0,1] generalizes the Brunnâ€“Minkowski inequality from sets to measures.},
	pages = {153--179},
	number = {1},
	journaltitle = {Advances in Mathematics},
	shortjournal = {Advances in Mathematics},
	author = {{McCann}, Robert J.},
	date = {1997-06-01},
	langid = {english},
}

@article{carrillo_kinetic_2003,
	title = {Kinetic equilibration rates for granular media and related equations: entropy dissipation and mass transportation estimates},
	volume = {19},
	issn = {0213-2230},
	url2 =  {https://projecteuclid.org/journals/revista-matematica-iberoamericana/volume-19/issue-3/Kinetic-equilibration-rates-for-granular-media-and-related-equations/rmi/1077293812.full},
	shorttitle = {Kinetic equilibration rates for granular media and related equations},
	abstract = {The long-time asymptotics of certain nonlinear, nonlocal, diffusive equations with a gradient flow structure are analyzed. In particular, a result of Benedetto, Caglioti, Carrillo and Pulvirenti [{BCCP}98] guaranteeing eventual relaxation to equilibrium velocities in a spatially homogeneous model of granular flow is extended and quantified by computing explicit relaxation rates. Our arguments rely on establishing generalizations of logarithmic Sobolev inequalities and mass transportation inequalities, via either the Bakry-Emery method or the abstract approach of Otto and Villani [{OV}00].},
	pages = {971--1018},
	number = {3},
	journaltitle = {Revista Matemática Iberoamericana},
	author = {Carrillo, José A. and {McCann}, Robert J. and Villani, Cédric},
	date = {2003-12},
	keywords = {35B40, 35K55, 35K65, 35Q72, Wasserstein distance, generalized log-Sobolev inequalities, inelastic collision models, rates of convergence},
}

@book{steele_cauchy-schwarz_2004,
	title = {The Cauchy-Schwarz Master Class: An Introduction to the Art of Mathematical Inequalities},
	isbn = {9780521546775},
	shorttitle = {The Cauchy-Schwarz Master Class},
	abstract = {This lively, problem-oriented text, first published in 2004, is designed to coach readers toward mastery of the most fundamental mathematical inequalities. With the Cauchy-Schwarz inequality as the initial guide, the reader is led through a sequence of fascinating problems whose solutions are presented as they might have been discovered - either by one of history's famous mathematicians or by the reader. The problems emphasize beauty and surprise, but along the way readers will find systematic coverage of the geometry of squares, convexity, the ladder of power means, majorization, Schur convexity, exponential sums, and the inequalities of HÃ¶lder, Hilbert, and Hardy. The text is accessible to anyone who knows calculus and who cares about solving problems. It is well suited to self-study, directed study, or as a supplement to courses in analysis, probability, and combinatorics.},
	pagetotal = {320},
	publisher = {Cambridge University Press},
	author = {Steele, J. Michael},
	date = {2004-04-26},
	langid = {english},
	keywords = {Mathematics / Calculus, Mathematics / Combinatorics, Mathematics / History \& Philosophy, Mathematics / Mathematical Analysis, Mathematics / Probability \& Statistics / General},
}

@article{carrillo_finite-volume_2015,
	title = {A Finite-Volume Method for Nonlinear Nonlocal Equations with a Gradient Flow Structure},
	volume = {17},
	issn = {1815-2406, 1991-7120},
	url2 =  {https://www.cambridge.org/core/journals/communications-in-computational-physics/article/abs/finitevolume-method-for-nonlinear-nonlocal-equations-with-a-gradient-flow-structure/018EF83B9419424E1A34DB9492288FA4},
	doi = {10.4208/cicp.160214.010814a},
	abstract = {We propose a positivity preserving entropy decreasing finite volume scheme for nonlinear nonlocal equations with a gradient flow structure. These properties allow for accurate computations of stationary states and long-time asymptotics demonstrated by suitably chosen test cases in which these features of the scheme are essential. The proposed scheme is able to cope with non-smooth stationary states, different time scales including metastability, as well as concentrations and self-similar behavior induced by singular nonlocal kernels. We use the scheme to explore properties of these equations beyond their present theoretical knowledge.},
	pages = {233--258},
	number = {1},
	journaltitle = {Communications in Computational Physics},
	author = {Carrillo, José A. and Chertock, Alina and Huang, Yanghong},
	date = {2015-01},
	langid = {english},
	keywords = {34B10, 35R09, 65M08, Finite-volume method, gradient flow, nonlinear nonlocal equations},
}

@article{francesco_measure_2013,
	title = {Measure solutions for non-local interaction {PDEs} with two species},
	volume = {26},
	issn = {0951-7715, 1361-6544},
	url2 =  {https://iopscience.iop.org/article/10.1088/0951-7715/26/10/2777},
	doi = {10.1088/0951-7715/26/10/2777},
	pages = {2777--2808},
	number = {10},
	journaltitle = {Nonlinearity},
	shortjournal = {Nonlinearity},
	author = {Francesco, Marco Di and Fagioli, Simone},
	date = {2013-10-01},
}

@inproceedings{perdomo_performative_2020,
	title = {Performative Prediction},
	url2 =  {https://proceedings.mlr.press/v119/perdomo20a.html},
	abstract = {When predictions support decisions they may influence the outcome they aim to predict. We call such predictions performative; the prediction influences the target. Performativity is a well-studied phenomenon in policy-making that has so far been neglected in supervised learning. When ignored, performativity surfaces as undesirable distribution shift, routinely addressed with retraining. We develop a risk minimization framework for performative prediction bringing together concepts from statistics, game theory, and causality. A conceptual novelty is an equilibrium notion we call performative stability. Performative stability implies that the predictions are calibrated not against past outcomes, but against the future outcomes that manifest from acting on the prediction. Our main results are necessary and sufficient conditions for the convergence of retraining to a performatively stable point of nearly minimal loss. In full generality, performative prediction strictly subsumes the setting known as strategic classification. We thus also give the first sufficient conditions for retraining to overcome strategic feedback effects.},
	eventtitle = {International Conference on Machine Learning},
	pages = {7599--7609},
	booktitle = {Proceedings of the 37th International Conference on Machine Learning},
	publisher = {{PMLR}},
	author = {Perdomo, Juan and Zrnic, Tijana and Mendler-D{\"u}nner, Celestine and Hardt, Moritz},
	date = {2020-11-21},
	langid = {english},
}

@misc{garcia_trillos_adversarial_2023,
	title = {On adversarial robustness and the use of Wasserstein ascent-descent dynamics to enforce it},
	url2 =  {http://arxiv.org/abs/2301.03662},
	doi = {10.48550/arXiv.2301.03662},
	abstract = {We propose iterative algorithms to solve adversarial problems in a variety of supervised learning settings of interest. Our algorithms, which can be interpreted as suitable ascent-descent dynamics in Wasserstein spaces, take the form of a system of interacting particles. These interacting particle dynamics are shown to converge toward appropriate mean-field limit equations in certain large number of particles regimes. In turn, we prove that, under certain regularity assumptions, these mean-field equations converge, in the large time limit, toward approximate Nash equilibria of the original adversarial learning problems. We present results for nonconvex-nonconcave settings, as well as for nonconvex-concave ones. Numerical experiments illustrate our results.},
	number = {{arXiv}:2301.03662},
	publisher = {{arXiv}},
	author = {Garcia Trillos, Camilo and Garcia Trillos, Nicolas},
	date = {2023-01-09},
	eprinttype = {arxiv},
	eprint = {2301.03662 [cs, math]},
	keywords = {Computer Science - Machine Learning, Mathematics - Analysis of {PDEs}, Mathematics - Optimization and Control, Mathematics - Probability},
}

@inproceedings{hardt_strategic_2016,
	location = {New York, {NY}, {USA}},
	title = {Strategic Classification},
	isbn = {9781450340571},
	url2 =  {https://doi.org/10.1145/2840728.2840730},
	doi = {10.1145/2840728.2840730},
	series = {{ITCS} '16},
	abstract = {Machine learning relies on the assumption that unseen test instances of a classification problem follow the same distribution as observed training data. However, this principle can break down when machine learning is used to make important decisions about the welfare (employment, education, health) of strategic individuals. Knowing information about the classifier, such individuals may manipulate their attributes in order to obtain a better classification outcome. As a result of this behavior -- often referred to as gaming -- the performance of the classifier may deteriorate sharply. Indeed, gaming is a well-known obstacle for using machine learning methods in practice; in financial policy-making, the problem is widely known as Goodhart's law. In this paper, we formalize the problem, and pursue algorithms for learning classifiers that are robust to gaming. We model classification as a sequential game between a player named "Jury" and a player named "Contestant." Jury designs a classifier, and Contestant receives an input to the classifier drawn from a distribution. Before being classified, Contestant may change his input based on Jury's classifier. However, Contestant incurs a cost for these changes according to a cost function. Jury's goal is to achieve high classification accuracy with respect to Contestant's original input and some underlying target classification function, assuming Contestant plays best response. Contestant's goal is to achieve a favorable classification outcome while taking into account the cost of achieving it. For a natural class of "separable" cost functions, and certain generalizations, we obtain computationally efficient learning algorithms which are near optimal, achieving a classification error that is arbitrarily close to the theoretical minimum. Surprisingly, our algorithms are efficient even on concept classes that are computationally hard to learn. For general cost functions, designing an approximately optimal strategy-proof classifier, for inverse-polynomial approximation, is {NP}-hard.},
	pages = {111--122},
	booktitle = {Proceedings of the 2016 {ACM} Conference on Innovations in Theoretical Computer Science},
	publisher = {Association for Computing Machinery},
	author = {Hardt, Moritz and Megiddo, Nimrod and Papadimitriou, Christos and Wootters, Mary},
	date = {2016-01-14},
	keywords = {classification, game theory, learning theory},
}

@article{Santambrogio,
  added-at = {2019-12-11T00:48:10.000+0100},
  author = {Santambrogio, Filippo},
  biburl2 =  {https://www.bibsonomy.org/bibtex/281ed66090d576f173cc088fb4596d431/kirk86},
  description = {OTAM-cvgmt.pdf},
  interhash = {d50122af28034ea04f9891cf91bd5af4},
  intrahash = {81ed66090d576f173cc088fb4596d431},
  keywords = {book optimal-transport},
  timestamp = {2019-12-11T00:48:58.000+0100},
  title = {Optimal Transport for Applied Mathematicians. Calculus of Variations, PDEs and Modeling},
  url2 =  {https://www.math.u-psud.fr/~filippo/OTAM-cvgmt.pdf},
  year = 2015
}

@article{gosh,
 ISSN = {00031305},
 url2 =  {http://www.jstor.org/stable/3087296},
 abstract = {A recurrent theme of interest in probability and statistics is to determine the best bounds for two probabilities, Pr(X ≥ r) and Pr(s < X - μ < t), when only the mean μ and the standard deviation σ of a random variable X are known. This article addresses the issue under two circumstances, when X is arbitrary and when X is nonnegative. The answers are provided in a unified manner using only Markov's theorem. The existing literature on related inequalities is reviewed. Some examples are given to illustrate the use of the inequalities.},
 author = {B. K. Ghosh},
 journal = {The American Statistician},
 number = {3},
 pages = {186--190},
 publisher = {[American Statistical Association, Taylor & Francis, Ltd.]},
 title = {Probability Inequalities Related to Markov's Theorem},
 volume = {56},
 year = {2002}
}

@article{posner_random_1975,
	title = {Random coding strategies for minimum entropy},
	volume = {21},
	issn = {1557-9654},
	doi = {10.1109/TIT.1975.1055416},
	abstract = {This paper proves that there exists a fixed random coding Strategy for block coding a memoryless information source to achieve the absolute epsilon entropy of the source. That is, the strategy can be chosen independent of the block length. The principal new tool is an easy result on the semicontinuity of the relative entropy functional of one probability distribution with respect to another. The theorem generalizes a result from rate-distortion theory to the "zero-infinity" case.},
	pages = {388--391},
	number = {4},
	journaltitle = {{IEEE} Transactions on Information Theory},
	author = {Posner, E.},
	date = {1975-07},
}

@book{rudin_functional_1991,
	location = {New York},
	edition = {2nd ed},
	title = {Functional analysis},
	isbn = {9780070542365},
	series = {International series in pure and applied mathematics},
	pagetotal = {424},
	publisher = {{McGraw}-Hill},
	author = {Rudin, Walter},
	date = {1991},
	keywords = {Functional analysis},
}

@incollection{aliprantis_correspondences_2006,
	location = {Berlin, Heidelberg},
	title = {Correspondences},
	isbn = {9783540295877},
	url2 =  {https://doi.org/10.1007/3-540-29587-9_17},
	pages = {555--590},
	booktitle = {Infinite Dimensional Analysis: A Hitchhiker's Guide},
	publisher = {Springer},
	editor = {Aliprantis, Charalambos D. and Border, Kim C.},
	date = {2006},
	langid = {english},
	doi = {10.1007/3-540-29587-9_17},
	keywords = {Open Neighborhood, Open Neighborhood Versus, Range Space, Topological Space, Topological Vector Space},
}

@article{debiec_incompressible_2020,
	title = {Incompressible Limit for a Two-Species Tumour Model with Coupling Through Brinkman's Law in One Dimension},
	volume = {169},
	issn = {1572-9036},
	url2 =  {https://doi.org/10.1007/s10440-020-00313-1},
	doi = {10.1007/s10440-020-00313-1},
	abstract = {We present a two-species model with applications in tumour modelling. The main novelty is the coupling of both species through the so-called Brinkman law which is typically used in the context of visco-elastic media, where the velocity field is linked to the total population pressure via an elliptic equation. The same model for only one species has been studied by Perthame and Vauchelet in the past. The first part of this paper is dedicated to establishing existence of solutions to the problem, while the second part deals with the incompressible limit as the stiffness of the pressure law tends to infinity. Here we present a novel approach in one spatial dimension that differs from the kinetic reformulation used in the aforementioned study and, instead, relies on uniform {BV}-estimates.},
	pages = {593--611},
	number = {1},
	journaltitle = {Acta Applicandae Mathematicae},
	shortjournal = {Acta Appl Math},
	author = {Debiec, Tomasz and Schmidtchen, Markus},
	date = {2020-10-01},
	langid = {english},
	keywords = {Mathematical biology, Systems of {PDEs}, Tumour growth},
}

@article{jungel_nonlocal_2022,
	title = {Nonlocal cross-diffusion systems for multi-species populations and networks},
	volume = {219},
	issn = {0362-546X},
	url2 =  {https://www.sciencedirect.com/science/article/pii/S0362546X22000153},
	doi = {10.1016/j.na.2022.112800},
	abstract = {Nonlocal cross-diffusion systems on the torus, arising in population dynamics and neuroscience, are analyzed. The global existence of weak solutions, the weak-strong uniqueness, and the localization limit are proved. The kernels are assumed to be in detailed balance. The proofs are based on entropy estimates coming from Shannon-type and Rao-type entropies, while the weakâ€“strong uniqueness result follows from the relative entropy method. The existence and uniqueness theorems hold for nondifferentiable, only integrable kernels. The associated local cross-diffusion system, derived in the localization limit, is also discussed.},
	pages = {112800},
	journaltitle = {Nonlinear Analysis},
	shortjournal = {Nonlinear Analysis},
	author = {Jungel, Ansgar and Portisch, Stefan and Zurek, Antoine},
	date = {2022-06-01},
	langid = {english},
	keywords = {Cross diffusion, Entropy method, Global existence of solutions, Localization limit, Neural network dynamics, Weak-strong uniqueness},
}

@article{carrillo_zoology_2018,
	title = {Zoology of a Nonlocal Cross-Diffusion Model for Two Species},
	volume = {78},
	issn = {0036-1399, 1095-712X},
	url2 =  {https://epubs.siam.org/doi/10.1137/17M1128782},
	doi = {10.1137/17M1128782},
	pages = {1078--1104},
	number = {2},
	journaltitle = {{SIAM} Journal on Applied Mathematics},
	shortjournal = {{SIAM} J. Appl. Math.},
	author = {Carrillo, Jos{\'e} A. and Huang, Yanghong and Schmidtchen, Markus},
	date = {2018-01},
	langid = {english},
}

@article{li_two-species_2022,
	title = {Two-species competition model with chemotaxis: well-posedness, stability and dynamics},
	volume = {35},
	issn = {0951-7715},
	url2 =  {https://iopscience.iop.org/article/10.1088/1361-6544/ac4a8d/meta},
	doi = {10.1088/1361-6544/ac4a8d},
	shorttitle = {Two-species competition model with chemotaxis},
	pages = {1329},
	number = {3},
	journaltitle = {Nonlinearity},
	shortjournal = {Nonlinearity},
	author = {Li, Guanlin and Yao, Yao},
	date = {2022-02-03},
	langid = {english},
}

@article{duong_coupled_2020,
	title = {Coupled {McKean}-Vlasov diffusions: wellposedness, propagation of chaos and invariant measures},
	volume = {92},
	issn = {1744-2508},
	url2 =  {https://doi.org/10.1080/17442508.2019.1677663},
	doi = {10.1080/17442508.2019.1677663},
	shorttitle = {Coupled {McKean}â-Vlasov diffusions},
	abstract = {In this paper, we study a two-species model in the form of a coupled system of nonlinear stochastic differential equations ({SDEs}) that arises from a variety of applications such as aggregation of biological cells and pedestrian movements. The evolution of each process is influenced by four different forces, namely an external force, a self-interacting force, a cross-interacting force and a stochastic noise where the two interactions depend on the laws of the two processes. We also consider a many-particle system and a (nonlinear) partial differential equation ({PDE}) system that associate to the model. We prove the wellposedness of the {SDEs}, the propagation of chaos of the particle system, and the existence and (non)-uniqueness of invariant measures of the {PDE} system.},
	pages = {900--943},
	number = {6},
	journaltitle = {Stochastics},
	author = {Duong, Manh Hong and Tugaut, Julian},
	date = {2020-08-17},
	keywords = {35K55, 60G10, 60H10, 60J60, Interacting particle systems, {McKean}â€“Vlasov dynamics, invariant measures, propagation of chaos},
}

@article{alsenafi_multispecies_2021,
	title = {A Multispecies Cross-Diffusion Model for Territorial Development},
	volume = {9},
	rights = {http://creativecommons.org/licenses/by/3.0/},
	issn = {2227-7390},
	url2 =  {https://www.mdpi.com/2227-7390/9/12/1428},
	doi = {10.3390/math9121428},
	abstract = {We develop an agent-based model on a lattice to investigate territorial development motivated by markings such as graffiti, generalizing a previously-published model to account for K groups instead of two groups. We then analyze this model and present two novel variations. Our model assumes that agents' movement is a biased random walk away from rival groups' markings. All interactions between agents are indirect, mediated through the markings. We numerically demonstrate that in a system of three groups, the groups segregate in certain parameter regimes. Starting from the discrete model, we formally derive the continuum system of 2K convection-diffusion equations for our model. These equations exhibit cross-diffusion due to the avoidance of the rival groups' markings. Both through numerical simulations and through a linear stability analysis of the continuum system, we find that many of the same properties hold for the K-group model as for the two-group model. We then introduce two novel variations of the agent-based model, one corresponding to some groups being more timid than others, and the other corresponding to some groups being more threatening than others. These variations present different territorial patterns than those found in the original model. We derive corresponding systems of convectionâ€“diffusion equations for each of these variations, finding both numerically and through linear stability analysis that each variation exhibits a phase transition.},
	pages = {1428},
	number = {12},
	journaltitle = {Mathematics},
	author = {Alsenafi, Abdulaziz and Barbaro, Alethea B. T.},
	date = {2021-01},
	langid = {english},
	keywords = {agent-based model, cross-diffusion, movement ecology, pattern formation, phase transition, segregation model},
}

@article{giunta_local_2022,
	title = {Local and Global Existence for Nonlocal Multispecies Advection-Diffusion Models},
	volume = {21},
	issn = {1536-0040},
	url2 =  {https://epubs.siam.org/doi/10.1137/21M1425992},
	doi = {10.1137/21M1425992},
	pages = {1686--1708},
	number = {3},
	journaltitle = {{SIAM} Journal on Applied Dynamical Systems},
	shortjournal = {{SIAM} J. Appl. Dyn. Syst.},
	author = {Giunta, Valeria and Hillen, Thomas and Lewis, Mark and Potts, Jonathan R.},
	date = {2022-09},
	langid = {english},
}

@article{mackey_two-species_2014,
	title = {Two-species particle aggregation and stability of co-dimension one solutions},
	volume = {19},
	pages = {1411--1436},
	number = {5},
	journaltitle = {Discrete and Continuous Dynamical Systems},
	author = {Mackey, Alan and Kolokolnikov, Theodore and Bertozzi, Andrea L.},
	date = {2014},
}

@article{otto_generalization_2000,
	title = {Generalization of an Inequality by Talagrand and Links with the Logarithmic Sobolev Inequality},
	volume = {173},
	issn = {0022-1236},
	url2 =  {https://www.sciencedirect.com/science/article/pii/S0022123699935577},
	doi = {10.1006/jfan.1999.3557},
	abstract = {We show that transport inequalities, similar to the one derived by M. Talagrand (1996, Geom. Funct. Anal.6, 587â€“600) for the Gaussian measure, are implied by logarithmic Sobolev inequalities. Conversely, Talagrand's inequality implies a logarithmic Sobolev inequality if the density of the measure is approximately log-concave, in a precise sense. All constants are independent of the dimension and optimal in certain cases. The proofs are based on partial differential equations and an interpolation inequality involving the Wasserstein distance, the entropy functional, and the Fisher information.},
	pages = {361--400},
	number = {2},
	journaltitle = {Journal of Functional Analysis},
	shortjournal = {Journal of Functional Analysis},
	author = {Otto, F. and Villani, C.},
	date = {2000-06-01},
	langid = {english},
}

@article{desvillettes_spatially_2000,
	title = {On the spatially homogeneous landau equation for hard potentials part i : existence, uniqueness and smoothness},
	volume = {25},
	issn = {0360-5302},
	url2 =  {https://doi.org/10.1080/03605300008821512},
	doi = {10.1080/03605300008821512},
	shorttitle = {On the spatially homogeneous landau equation for hard potentials part i},
	abstract = {We study the Cauchy problem for the homogeneous Landau equation of kinetic theory, in the case of hard potentials. We prove that for a large class of initial data, there exists a unique weak solution to this problem, which becomes immediately smooth and rapidly decaying at infinity.},
	pages = {179--259},
	number = {1},
	journaltitle = {Communications in Partial Differential Equations},
	author = {Desvillettes, Laurent and Villani, Cedric},
	date = {2000-01-01},
}

@book {villani-OTbook-2003,
    AUTHOR = {Villani, C\'{e}dric},
     TITLE = {Topics in optimal transportation},
    SERIES = {Graduate Studies in Mathematics},
    VOLUME = {58},
 PUBLISHER = {American Mathematical Society, Providence, RI},
      YEAR = {2003},
     PAGES = {xvi+370},
      ISBN = {0-8218-3312-X},
   MRCLASS = {90-02 (28D05 35B65 35J60 49N90 49Q20 90B20)},
  MRNUMBER = {1964483},
       DOI = {10.1090/gsm/058},
       url2 =  {https://doi.org/10.1090/gsm/058},
}

@thesis{bertsekas_control_1971,
	location = {Cambridge, {MA}, {USA}},
	title = {Control of Uncertain Systems with a Set-Membership Description of Uncertainty},
	institution = {{MIT}},
	type = {phdthesis},
	author = {Bertsekas, Dimitri P.},
	date = {1971},
}

@book{danskin_theory_1967,
	location = {Berlin, Heidelberg},
	title = {The Theory of Max-Min and its Application to Weapons Allocation Problems},
	volume = {5},
	isbn = {9783642460944 9783642460920},
	url2 =  {http://link.springer.com/10.1007/978-3-642-46092-0},
	series = {Ã–konometrie und Unternehmensforschung / Econometrics and Operations Research},
	publisher = {Springer},
	author = {Danskin, John M.},
	editorb = {Beckmann, M. and Henn, R. and Jaeger, A. and Krelle, W. and KÃ¼nzi, H. P. and Wenke, K. and Wolfe, Ph.},
	editorbtype = {redactor},
	date = {1967},
	doi = {10.1007/978-3-642-46092-0},
	keywords = {Maximum-Minimum-Prinzip, Rakete, Strategie, stability, strategy, value-at-risk},
}

@article{Benamou-Brenier,
	title = {A computational fluid mechanics solution to the Monge-Kantorovich mass transfer problem},
	volume = {84},
	issn = {0945-3245},
	url2 =  {https://doi.org/10.1007/s002110050002},
	doi = {10.1007/s002110050002},
	pages = {375--393},
	number = {3},
	journaltitle = {Numerische Mathematik},
	shortjournal = {Numer. Math.},
	author = {Benamou, Jean-David and Brenier, Yann},
	date = {2000-01-01},
	langid = {english},
	keywords = {Mathematics Subject Classification (1991):65M60},
}

@article{camacho_manipulation_2011,
	title = {Manipulation of Social Program Eligibility},
	volume = {3},
	issn = {1945-7731},
	url2 =  {https://www.jstor.org/stable/41238093},
	abstract = {We document how manipulation of a targeting system for social welfare programs evolves over time. First, there was strategic behavior of some local politicians in the timing of the household interviews around local elections. Then, there was corrupt behavior with the sudden emergence of a sharp discontinuity in the score density, exactly at the eligibility threshold, which coincided with the release of the score algorithm to local officials. The discontinuity at the threshold is larger where mayoral elections are more competitive. While cultural forces are surely relevant for corruption, our results also highlight the importance of information and incentives.},
	pages = {41--65},
	number = {2},
	journaltitle = {American Economic Journal: Economic Policy},
	author = {Camacho, Adriana and Conover, Emily},
	date = {2011},
}

@inproceedings{agarwal_minimax_2022,
	title = {Minimax Regret Optimization for Robust Machine Learning under Distribution Shift},
	url2 =  {https://proceedings.mlr.press/v178/agarwal22b.html},
	abstract = {In this paper, we consider learning scenarios where the learned model is evaluated under an unknown test distribution which potentially differs from the training distribution (i.e. distribution shift). The learner has access to a family of weight functions such that the test distribution is a reweighting of the training distribution under one of these functions, a setting typically studied under the name of Distributionally Robust Optimization ({DRO}).  We consider the problem of deriving regret bounds in the classical learning theory setting, and require that the resulting regret bounds hold uniformly for all potential test distributions. We show that the {DRO} formulation does not guarantee uniformly small regret under distribution shift. We instead propose an alternative method called Minimax Regret Optimization ({MRO}), and show that under suitable conditions, this method achieves uniformly low regret across all test distributions. We also adapt our technique to have strong guarantees when the test distributions are heterogeneous in their similarity to the training data.  Given the widespead optimization of worst case risks in current approaches to robust machine learning, we believe that {MRO} can be an attractive framework to address a broad range of distribution shift scenarios.},
	eventtitle = {Conference on Learning Theory},
	pages = {2704--2729},
	booktitle = {Proceedings of Thirty Fifth Conference on Learning Theory},
	publisher = {{PMLR}},
	author = {Agarwal, Alekh and Zhang, Tong},
	date = {2022-06-28},
	langid = {english},
}

@online{quinonero-candela_dataset_nodate,
	title = {Dataset Shift in Machine Learning},
	url2 =  {https://mitpress.mit.edu/9780262545877/dataset-shift-in-machine-learning/},
	abstract = {An overview of recent efforts in the machine learning community to deal with dataset and covariate shift, which occurs when test and training inputs and outp...},
	titleaddon = {{MIT} Press},
	author = {Quinonero-Candela, Joaquin and Sugiyama, Masashi and Schwaighofer, Anton and Lawrence, Neil D.},
	langid = {american},
}

@inproceedings{miller_effect_2020,
	title = {The Effect of Natural Distribution Shift on Question Answering Models},
	url2 =  {https://proceedings.mlr.press/v119/miller20a.html},
	abstract = {We build four new test sets for the Stanford Question Answering Dataset ({SQuAD}) and evaluate the ability of question-answering systems to generalize to new data. Our first test set is from the original Wikipedia domain and measures the extent to which existing systems overfit the original test set. Despite several years of heavy test set re-use, we find no evidence of adaptive overfitting. The remaining three test sets are constructed from New York Times articles, Reddit posts, and Amazon product reviews and measure robustness to natural distribution shifts. Across a broad range of models, we observe average performance drops of 3.8, 14.0, and 17.4 F1 points, respectively. In contrast, a strong human baseline matches or exceeds the performance of {SQuAD} models on the original domain and exhibits little to no drop in new domains. Taken together, our results confirm the surprising resilience of the holdout method and emphasize the need to move towards evaluation metrics that incorporate robustness to natural distribution shifts.},
	eventtitle = {International Conference on Machine Learning},
	pages = {6905--6916},
	booktitle = {Proceedings of the 37th International Conference on Machine Learning},
	publisher = {{PMLR}},
	author = {Miller, John and Krauth, Karl and Recht, Benjamin and Schmidt, Ludwig},
	date = {2020-11-21},
	langid = {english},
}

@article{delage_distributionally_2010,
	title = {Distributionally Robust Optimization Under Moment Uncertainty with Application to Data-Driven Problems},
	volume = {58},
	issn = {0030-364X, 1526-5463},
	url2 =  {https://pubsonline.informs.org/doi/10.1287/opre.1090.0741},
	doi = {10.1287/opre.1090.0741},
	abstract = {Stochastic programming can effectively describe many decision-making problems in uncertain environments. Unfortunately, such programs are often computationally demanding to solve. In addition, their solution can be misleading when there is ambiguity in the choice of a distribution for the random parameters. In this paper, we propose a model that describes uncertainty in both the distribution form (discrete, Gaussian, exponential, etc.) and moments (mean and covariance matrix). We demonstrate that for a wide range of cost functions the associated distributionally robust (or min-max) stochastic program can be solved efficiently. Furthermore, by deriving a new confidence region for the mean and the covariance matrix of a random vector, we provide probabilistic arguments for using our model in problems that rely heavily on historical data. These arguments are confirmed in a practical example of portfolio selection, where our framework leads to better-performing policies on the true distribution underlying the daily returns of financial assets.},
	pages = {595--612},
	number = {3},
	journaltitle = {Operations Research},
	shortjournal = {Operations Research},
	author = {Delage, Erick and Ye, Yinyu},
	date = {2010-06},
	langid = {english},
}

@article{lin_distributionally_2022,
	title = {Distributionally Robust Optimization: A review on theory and applications},
	volume = {12},
	pages = {159--212},
	number = {1},
	journaltitle = {Numerical Algebra, Control and Optimization},
	author = {Lin, Fengming and Fang, Xiaolei and Gao, Zheming},
	date = {2022},
}

@article{duchi_learning_2021,
	title = {Learning models with uniform performance via distributionally robust optimization},
	volume = {49},
	issn = {0090-5364, 2168-8966},
	url2 =  {https://projecteuclid.org/journals/annals-of-statistics/volume-49/issue-3/Learning-models-with-uniform-performance-via-distributionally-robust-optimization/10.1214/20-AOS2004.full},
	doi = {10.1214/20-AOS2004},
	abstract = {A common goal in statistics and machine learning is to learn models that can perform well against distributional shifts, such as latent heterogeneous subpopulations, unknown covariate shifts or unmodeled temporal effects. We develop and analyze a distributionally robust stochastic optimization ({DRO}) framework that learns a model providing good performance against perturbations to the data-generating distribution. We give a convex formulation for the problem, providing several convergence guarantees. We prove finite-sample minimax upper and lower bounds, showing that distributional robustness sometimes comes at a cost in convergence rates. We give limit theorems for the learned parameters, where we fully specify the limiting distribution so that confidence intervals can be computed. On real tasks including generalizing to unknown subpopulations, fine-grained recognition and providing good tail performance, the distributionally robust approach often exhibits improved performance.},
	pages = {1378--1406},
	number = {3},
	journaltitle = {The Annals of Statistics},
	author = {Duchi, John C. and Namkoong, Hongseok},
	date = {2021-06},
	keywords = {62C20, 62F12, 68Q32, Minimax optimality, Robust optimization, risk-averse learning},
}

@incollection{netessine_wasserstein_2019,
	title = {Wasserstein Distributionally Robust Optimization: Theory and Applications in Machine Learning},
	isbn = {9780990615330},
	url2 =  {http://pubsonline.informs.org/doi/10.1287/educ.2019.0198},
	shorttitle = {Wasserstein Distributionally Robust Optimization},
	pages = {130--166},
	booktitle = {Operations Research \& Management Science in the Age of Analytics},
	publisher = {{INFORMS}},
	author = {Kuhn, Daniel and Esfahani, Peyman Mohajerin and Nguyen, Viet Anh and Shafieezadeh-Abadeh, Soroosh},
	editor = {Netessine, Serguei and Shier, Douglas and Greenberg, Harvey J.},
	date = {2019-10},
	langid = {english},
	doi = {10.1287/educ.2019.0198},
}

@article{lei_near-optimal_2021,
	title = {Near-Optimal Linear Regression under Distribution Shift},
	url2 =  {https://www.semanticscholar.org/paper/Near-Optimal-Linear-Regression-under-Distribution-Lei-Hu/8d58212f38852aba3eccb8f3900299a495bba8e0},
	abstract = {Transfer learning is essential when sufficient data comes from the source domain, with scarce labeled data from the target domain. We develop estimators that achieve minimax linear risk for linear regression problems under distribution shift. Our algorithms cover different transfer learning settings including covariate shift and model shift. We also consider when data are generated from either linear or general nonlinear models. We show that linear minimax estimators are within an absolute constant of the minimax risk even among nonlinear estimators for various source/target distributions.},
	journaltitle = {{ArXiv}},
	author = {Lei, Qi and Hu, Wei and Lee, Jason D.},
	date = {2021-06-23},
}

@article{bickel_discriminative_2009,
	title = {Discriminative Learning Under Covariate Shift},
	volume = {10},
	issn = {1532-4435},
	abstract = {We address classification problems for which the training instances are governed by an input distribution that is allowed to differ arbitrarily from the test distribution---problems also referred to as classification under covariate shift. We derive a solution that is purely discriminative: neither training nor test distribution are modeled explicitly. The problem of learning under covariate shift can be written as an integrated optimization problem. Instantiating the general optimization problem leads to a kernel logistic regression and an exponential model classifier for covariate shift. The optimization problem is convex under certain conditions; our findings also clarify the relationship to the known kernel mean matching procedure. We report on experiments on problems of spam filtering, text classification, and landmine detection.},
	pages = {2137--2155},
	journaltitle = {The Journal of Machine Learning Research},
	shortjournal = {J. Mach. Learn. Res.},
	author = {Bickel, Steffen and Br{\"u}ckner, Michael and Scheffer, Tobias},
	date = {2009-12-01},
}

@article{wiles_fine-grained_2021,
	title = {A Fine-Grained Analysis on Distribution Shift},
	url2 =  {https://www.semanticscholar.org/paper/A-Fine-Grained-Analysis-on-Distribution-Shift-Wiles-Gowal/0e845ef0a3ae71bd32a6954fafe0702d0f0f033f},
	abstract = {Robustness to distribution shifts is critical for deploying machine learning models in the real world. Despite this necessity, there has been little work in defining the underlying mechanisms that cause these shifts and evaluating the robustness of algorithms across multiple, different distribution shifts. To this end, we introduce a framework that enables fine-grained analysis of various distribution shifts. We provide a holistic analysis of current state-of-the-art methods by evaluating 19 distinct methods grouped into five categories across both synthetic and real-world datasets. Overall, we train more than 85K models. Our experimental framework can be easily extended to include new methods, shifts, and datasets. We find, unlike previous work{\textasciitilde}{\textbackslash}citep\{Gulrajani20\}, that progress has been made over a standard {ERM} baseline; in particular, pretraining and augmentations (learned or heuristic) offer large gains in many cases. However, the best methods are not consistent over different datasets and shifts.},
	journaltitle = {{ArXiv}},
	author = {Wiles, Olivia and Gowal, Sven and Stimberg, Florian and Rebuffi, Sylvestre-Alvise and Ktena, Ira and Dvijotham, Krishnamurthy and Cemgil, A.},
	date = {2021-10-21},
}

@inproceedings{izzo_how_2021,
	title = {How to Learn when Data Reacts to Your Model: Performative Gradient Descent},
	url2 =  {https://proceedings.mlr.press/v139/izzo21a.html},
	shorttitle = {How to Learn when Data Reacts to Your Model},
	abstract = {Performative distribution shift captures the setting where the choice of which {ML} model is deployed changes the data distribution. For example, a bank which uses the number of open credit lines to determine a customer's risk of default on a loan may induce customers to open more credit lines in order to improve their chances of being approved. Because of the interactions between the model and data distribution, finding the optimal model parameters is challenging. Works in this area have focused on finding stable points, which can be far from optimal. Here we introduce {\textbackslash}emph\{performative gradient descent\} ({PerfGD}), an algorithm for computing performatively optimal points. Under regularity assumptions on the performative loss, {PerfGD} is the first algorithm which provably converges to an optimal point. {PerfGD} explicitly captures how changes in the model affects the data distribution and is simple to use. We support our findings with theory and experiments.},
	eventtitle = {International Conference on Machine Learning},
	pages = {4641--4650},
	booktitle = {Proceedings of the 38th International Conference on Machine Learning},
	publisher = {{PMLR}},
	author = {Izzo, Zachary and Ying, Lexing and Zou, James},
	date = {2021-07-01},
	langid = {english},
}

@inproceedings{fiez2020implicit,
  title={Implicit learning dynamics in stackelberg games: Equilibria characterization, convergence analysis, and empirical study},
  author={Fiez, Tanner and Chasnov, Benjamin and Ratliff, Lillian},
  booktitle={International Conference on Machine Learning},
  pages={3133--3144},
  year={2020},
  organization={PMLR}
}

@article{fiez2021global,
  title={Global convergence to local minmax equilibrium in classes of nonconvex zero-sum games},
  author={Fiez, Tanner and Ratliff, Lillian and Mazumdar, Eric and Faulkner, Evan and Narang, Adhyyan},
  journal={Advances in Neural Information Processing Systems},
  volume={34},
  pages={29049--29063},
  year={2021}
}

@inproceedings{fiez2021local,
  title={Local convergence analysis of gradient descent ascent with finite timescale separation},
  author={Fiez, Tanner and Ratliff, Lillian J},
  booktitle={Proceedings of the International Conference on Learning Representation},
  year={2021}
}

@book{borkar2009stochastic,
  title={Stochastic approximation: a dynamical systems viewpoint},
  author={Borkar, Vivek S},
  volume={48},
  year={2009},
  publisher={Springer}
}

@inproceedings{zrnic_who_2021,
	title = {Who Leads and Who Follows in Strategic Classification?},
	volume = {34},
	url2 =  {https://proceedings.neurips.cc/paper/2021/hash/812214fb8e7066bfa6e32c626c2c688b-Abstract.html},
	abstract = {As predictive models are deployed into the real world, they must increasingly contend with strategic behavior. A growing body of work on strategic classification treats this problem as a Stackelberg game: the decision-maker "leads" in the game by deploying a model, and the strategic agents "follow" by playing their best response to the deployed model. Importantly, in this framing, the burden of learning is placed solely on the decision-maker, while the agentsâ€™ best responses are implicitly treated as instantaneous. In this work, we argue that the order of play in strategic classification is fundamentally determined by the relative frequencies at which the decision-maker and the agents adapt to each otherâ€™s actions. In particular, by generalizing the standard model to allow both players to learn over time, we show that a decision-maker that makes updates faster than the agents can reverse the order of play, meaning that the agents lead and the decision-maker follows. We observe in standard learning settings that such a role reversal can be desirable for both the decision-maker and the strategic agents. Finally, we show that a decision-maker with the freedom to choose their update frequency can induce learning dynamics that converge to Stackelberg equilibria with either order of play.},
	pages = {15257--15269},
	booktitle = {Advances in Neural Information Processing Systems},
	publisher = {Curran Associates, Inc.},
	author = {Zrnic, Tijana and Mazumdar, Eric and Sastry, Shankar and Jordan, Michael},
	date = {2021},
}

@article{pathak_school_2013,
	title = {School Admissions Reform in Chicago and England: Comparing Mechanisms by Their Vulnerability to Manipulation},
	volume = {103},
	issn = {0002-8282},
	url2 =  {https://www.aeaweb.org/articles?id=10.1257/aer.103.1.80},
	doi = {10.1257/aer.103.1.80},
	shorttitle = {School Admissions Reform in Chicago and England},
	abstract = {In Fall 2009, Chicago authorities abandoned a school assignment
mechanism midstream, citing concerns about its vulnerability to
manipulation. Nonetheless, they asked thousands of applicants to
re-rank schools in a new mechanism that is also manipulable. This
paper introduces a method to compare mechanisms by their vulnerability to manipulation. Our methodology formalizes how the old
mechanism is at least as manipulable as any other plausible mechanism, including the new one. A number of similar transitions took place in England after the widely popular Boston mechanism was
ruled illegal in 2007. Our approach provides support for these and
other recent policy changes. ({JEL} C78, D82, H75, I21, I28)},
	pages = {80--106},
	number = {1},
	journaltitle = {American Economic Review},
	author = {Pathak, Parag A. and S{\"o}nmez, Tayfun},
	urldate2 = {2023-05-16},
	date = {2013-02},
	langid = {english},
	keywords = {Bargaining Theory, Education, Matching Theory, Asymmetric and Private Information, Mechanism Design, State and Local Government: Health, Public Pensions, Analysis of Education, Education: Government Policy, Welfare},
}

@article{rees-jones_suboptimal_2018,
	title = {Suboptimal behavior in strategy-proof mechanisms: Evidence from the residency match},
	volume = {108},
	issn = {0899-8256},
	url2 =  {https://www.sciencedirect.com/science/article/pii/S0899825617300751},
	doi = {10.1016/j.geb.2017.04.011},
	series = {Special Issue in Honor of Lloyd Shapley: Seven Topics in Game Theory},
	shorttitle = {Suboptimal behavior in strategy-proof mechanisms},
	abstract = {Strategy-proof mechanisms eliminate the possibility for gain from strategic misrepresentation of preferences. If market participants respond optimally, these mechanisms permit the observation of true preferences and avoid the implicit punishment of market participants who do not try to â€œgame the system.â€ Using new data from a flagship application of the matching literatureâ€”the medical residency matchâ€”I study if these potential benefits are fully realized. I present evidence that some students pursue futile attempts at strategic misrepresentation, and I examine the causes and correlates of this behavior. These results inform the assessment of the costs and benefits of strategy-proof mechanisms and demonstrate broad challenges in mechanism design.},
	pages = {317--330},
	journaltitle = {Games and Economic Behavior},
	shortjournal = {Games and Economic Behavior},
	author = {Rees-Jones, Alex},
	urldate2 = {2023-05-16},
	date = {2018-03-01},
	langid = {english},
	keywords = {Deferred acceptance algorithm, Matching, Suboptimal behavior},
}

@inproceedings{lang_social_2010,
	location = {Berlin, Heidelberg},
	title = {Social Manipulation of Online Recommender Systems},
	isbn = {9783642165672},
	doi = {10.1007/978-3-642-16567-2_10},
	series = {Lecture Notes in Computer Science},
	abstract = {Online recommender systems are a common target of attack. Existing research has focused on automated manipulation of recommender systems through the creation of shill accounts, and either do not consider attacks by coalitions of real users, downplay the impact of such attacks, or state that such attacks are difficult to impossible to detect. In this study, we examine a recommender system that is part of an online social network, show that users successfully induced other users to manipulate their recommendations, that these manipulations were effective, and that most such manipulations are detectable even when performed by ordinary, non-automated users.},
	pages = {125--139},
	booktitle = {Social Informatics},
	publisher = {Springer},
	author = {Lang, Juan and Spear, Matt and Wu, S. Felix},
	editor = {Bolc, Leonard and Makowski, Marek and Wierzbicki, Adam},
	date = {2010},
	langid = {english},
	keywords = {Internet Archive, Online Social Network, Popular Item, Recommender System, Social Distance},
}

@article{dellarocas_strategic_2006,
	title = {Strategic Manipulation of Internet Opinion Forums: Implications for Consumers and Firms},
	volume = {52},
	issn = {0025-1909, 1526-5501},
	url2 =  {https://pubsonline.informs.org/doi/10.1287/mnsc.1060.0567},
	doi = {10.1287/mnsc.1060.0567},
	shorttitle = {Strategic Manipulation of Internet Opinion Forums},
	abstract = {There is growing evidence that consumers are influenced by Internet-based opinion forums before making a variety of purchase decisions. Firms whose products are being discussed in such forums are therefore tempted to manipulate consumer perceptions by posting costly anonymous messages that praise their products. This paper offers a theoretical analysis of the impact of such behavior on firm profits and consumer surplus. There are three main results. First, if every firmâ€™s manipulation strategy is a monotonically increasing (decreasing) function of that firmâ€™s true quality, strategic manipulation of online forums increases (decreases) the information value of a forum to consumers. This result implies the existence of settings where online forum manipulation benefits consumers. Second, equilibria where strategies are monotonically increasing (decreasing) functions of a firmâ€™s true quality exist in settings where the firmâ€™s net payoff function, inclusive of the cost of manipulation, is supermodular (submodular) in the firmâ€™s quality and manipulation action. Third, in a broad class of settings, if the precision of honest consumer opinions that firms manipulate is sufficiently high, firms of all types, as well as society, would be strictly better off if manipulation of online forums was not possible. Nonetheless, firms are locked into a â€œrat raceâ€ and forced to spend resources on such profit-reducing activities; if they donâ€™t, consumer perceptions will be biased against them. The social cost of online manipulation can be reduced by developing â€œfilteringâ€ technologies that make it costlier for firms to manipulate. Interestingly, as the amount of user-contributed online content increases, it is firms, and not consumers, that have most to gain from the development of such technologies.},
	pages = {1577--1593},
	number = {10},
	journaltitle = {Management Science},
	shortjournal = {Management Science},
	author = {Dellarocas, Chrysanthos},
	urldate2 = {2023-05-17},
	date = {2006-10},
	langid = {english},
}

@article{carrillo_global--time_2011,
	title = {Global-in-time weak measure solutions and finite-time aggregation for nonlocal interaction equations},
	volume = {156},
	issn = {0012-7094},
	url2 =  {https://projecteuclid.org/journals/duke-mathematical-journal/volume-156/issue-2/Global-in-time-weak-measure-solutions-and-finite-time-aggregation/10.1215/00127094-2010-211.full},
	doi = {10.1215/00127094-2010-211},
	abstract = {In this paper we provide a well-posedness theory for weak measure solutions of the Cauchy problem for a family of nonlocal interaction equations. These equations are continuum models for interacting particle systems with attractive/repulsive pairwise interaction potentials. The main phenomenon of interest is that, even with smooth initial data, the solutions can concentrate mass in finite time. We develop an existence theory that enables one to go beyond the blow-up time in classical norms and allows for solutions to form atomic parts of the measure in finite time. The weak measure solutions are shown to be unique and exist globally in time. Moreover, in the case of sufficiently attractive potentials, we show the finite-time total collapse of the solution onto a single point for compactly supported initial measures. Our approach is based on the theory of gradient flows in the space of probability measures endowed with the Wasserstein metric. In addition to classical tools, we exploit the stability of the flow with respect to the transportation distance to greatly simplify many problems by reducing them to questions about particle approximations.},
	number = {2},
	journaltitle = {Duke Mathematical Journal},
	shortjournal = {Duke Math. J.},
	author = {Carrillo, J. A. and {DiFrancesco}, M. and Figalli, A. and Laurent, T. and Slepcev, D.},
	urldate2 = {2023-05-24},
	date = {2011-02-01},
}

@article{balague_confinement_2012,
	title = {Confinement for repulsive-attractive kernels},
	url2 =  {https://www.semanticscholar.org/paper/Confinement-for-repulsive-attractive-kernels-Balagu%27e-Carrillo/211b8c71330b956cd530ae9cc8aec03ff175b643},
	abstract = {We investigate the confinement properties of solutions of the 
aggregation equation with repulsive-attractive potentials. We show 
that solutions remain compactly supported in a large fixed ball 
depending on the initial data and the potential. The arguments 
apply to the functional setting of probability measures with 
mildly singular repulsive-attractive potentials and to the 
functional setting of smooth solutions with a potential being the 
sum of the Newtonian repulsion at the origin and a smooth suitably 
growing at infinity attractive potential.},
	journaltitle = {Discrete and Continuous Dynamical Systems-series B},
	author = {Balagu'e, D. and Carrillo, J. and Yao, Y.},
	urldate2 = {2023-05-24},
	date = {2012},
	langid = {english},
}

@article{carrillo_global_2014,
	title = {On global minimizers of repulsive-attractive power-law interaction energies},
	volume = {372},
	issn = {1364-503X, 1471-2962},
	url2 =  {https://royalsocietypublishing.org/doi/10.1098/rsta.2013.0399},
	doi = {10.1098/rsta.2013.0399},
	abstract = {We consider the minimization of the repulsiveâ€“attractive power-law interaction energies that occur in many biological and physical situations. We show the existence of global minimizers in the discrete setting and obtain bounds for their supports independently of the number of Dirac deltas in a certain range of exponents. These global discrete minimizers correspond to the stable spatial profiles of flock patterns in swarming models. Global minimizers of the continuum problem are obtained by compactness. We also illustrate our results through numerical simulations.},
	pages = {20130399},
	number = {2028},
	journaltitle = {Philosophical Transactions of the Royal Society A: Mathematical, Physical and Engineering Sciences},
	shortjournal = {Phil. Trans. R. Soc. A.},
	author = {Carrillo, Jos{\'e} Antonio and Chipot, Michel and Huang, Yanghong},
	urldate2 = {2023-05-24},
	date = {2014-11-13},
	langid = {english},
}

@article{bertozzi_blow-up_2009,
	title = {Blow-up in multidimensional aggregation equations with mildly singular interaction kernels*},
	volume = {22},
	issn = {0951-7715},
	url2 =  {https://dx.doi.org/10.1088/0951-7715/22/3/009},
	doi = {10.1088/0951-7715/22/3/009},
	abstract = {We consider the multidimensional aggregation equation ut âˆ’ âˆ‡Â· (uâˆ‡K * u) = 0 in which the radially symmetric attractive interaction kernel has a mild singularity at the origin (Lipschitz or better). In the case of bounded initial data, finite time singularity has been proved for kernels with a Lipschitz point at the origin (Bertozzi and Laurent 2007 Commun. Math. Sci. 274 717â€“35), whereas for C2 kernels there is no finite-time blow-up. We prove, under mild monotonicity assumptions on the kernel K, that the Osgood condition for well-posedness of the {ODE} characteristics determines global in time well-posedness of the {PDE} with compactly supported bounded nonnegative initial data. When the Osgood condition is violated, we present a new proof of finite time blow-up that extends previous results, requiring radially symmetric data, to general bounded, compactly supported nonnegative initial data without symmetry. We also present a new analysis of radially symmetric solutions under less strict monotonicity conditions. Finally, we conclude with a discussion of similarity solutions for the case K(x) = {\textbar}x{\textbar} and some open problems.},
	pages = {683},
	number = {3},
	journaltitle = {Nonlinearity},
	shortjournal = {Nonlinearity},
	author = {Bertozzi, Andrea L. and Carrillo, Jose A. and Laurent, Thomas},
	urldate2 = {2023-05-24},
	date = {2009-02},
	langid = {english},
}

@article{bertozzi_aggregation_2012,
	title = {Aggregation and spreading via the newtonian potential: the dynamics of patch solutions},
	volume = {22},
	issn = {0218-2025},
	url2 =  {https://www.worldscientific.com/doi/abs/10.1142/S0218202511400057},
	doi = {10.1142/S0218202511400057},
	shorttitle = {Aggregation and spreading via the newtonian potential},
	abstract = {This paper considers the multidimensional active scalar problem of motion of a function Ï(x, t) by a velocity field obtained by v = -âˆ‡N * Ï, where N is the Newtonian potential. We prove well-posedness of compactly supported Lâˆž âˆ© L1 solutions of possibly mixed sign. These solutions include an important class of solutions that are proportional to characteristic functions on a time-evolving domain. We call these aggregation patches. Whereas positive solutions collapse on themselves in finite time, negative solutions spread and converge toward a self-similar spreading circular patch solution as t â†’ âˆž. We give a convergence rate that we prove is sharp in 2D. In the case of positive collapsing solutions, we investigate numerically the geometry of patch solutions in 2D and in 3D (axisymmetric). We show that the time evolving domain on which the patch is supported typically collapses on a complex skeleton of codimension one.},
	pages = {1140005},
	issue = {supp01},
	journaltitle = {Mathematical Models and Methods in Applied Sciences},
	shortjournal = {Math. Models Methods Appl. Sci.},
	author = {Bertozzi, Andrea L. and Laurent, Thomas and Lager, Flavien},
	urldate2 = {2023-05-24},
	date = {2012-04},
	keywords = {Active scalar, Newtonian potential, aggregation, nonlocal transport},
}

@article{carrillo_contractions_2006,
	title = {Contractions in the 2-Wasserstein Length Space and Thermalization of Granular Media},
	volume = {179},
	issn = {1432-0673},
	url2 =  {https://doi.org/10.1007/s00205-005-0386-1},
	doi = {10.1007/s00205-005-0386-1},
	abstract = {An algebraic decay rate is derived which bounds the time required for velocities to equilibrate in a spatially homogeneous flow-through model representing the continuum limit of a gas of particles interacting through slightly inelastic collisions. This rate is obtained by reformulating the dynamical problem as the gradient flow of a convex energy on an infinite-dimensional manifold. An abstract theory is developed for gradient flows in length spaces, which shows how degenerate convexity (or even non-convexity) â€” if uniformly controlled â€” will quantify contractivity (limit expansivity) of the flow.},
	pages = {217--263},
	number = {2},
	journaltitle = {Archive for Rational Mechanics and Analysis},
	shortjournal = {Arch. Rational Mech. Anal.},
	author = {Carrillo, Jos{\'e} A. and {McCann}, Robert J. and Villani, Cedric},
	urldate2 = {2023-05-24},
	date = {2006-02-01},
	langid = {english},
	keywords = {Complex System, Decay Rate, Manifold, Neural Network, Nonlinear Dynamics},
}

@inproceedings{carrillo_partial_2023,
	title = {Partial mass concentration for fast-diffusions with non-local aggregation terms},
	url2 =  {https://www.semanticscholar.org/paper/Partial-mass-concentration-for-fast-diffusions-with-Carrillo-Fern%C3%A1ndez-Jim%C3%A9nez/5d1a0923f6bdaf24337c8161d6119175c406de5a},
	abstract = {We study well-posedness and long-time behaviour of aggregation-diffusion equations of the form \${\textbackslash}frac\{{\textbackslash}partial {\textbackslash}rho\}\{{\textbackslash}partial t\} = {\textbackslash}Delta {\textbackslash}rho{\textasciicircum}m + {\textbackslash}nabla {\textbackslash}cdot( {\textbackslash}rho ({\textbackslash}nabla V + {\textbackslash}nabla W {\textbackslash}ast {\textbackslash}rho))\$ in the fast-diffusion range, \$0\&lt;m\&lt;1\$, and \$V\$ and \$W\$ regular enough. We develop a well-posedness theory, first in the ball and then in \${\textbackslash}mathbb R{\textasciicircum}d\$, and characterise the long-time asymptotics in the space \$W{\textasciicircum}\{-1,1\}\$ for radial initial data. In the radial setting and for the mass equation, viscosity solutions are used to prove partial mass concentration asymptotically as \$t {\textbackslash}to {\textbackslash}infty\$, i.e. the limit as \$t {\textbackslash}to {\textbackslash}infty\$ is of the form \${\textbackslash}alpha {\textbackslash}delta\_0 + {\textbackslash}widehat {\textbackslash}rho {\textbackslash}, dx\$ with \${\textbackslash}alpha {\textbackslash}geq 0\$ and \${\textbackslash}widehat {\textbackslash}rho {\textbackslash}in L{\textasciicircum}1\$. Finally, we give instances of \$W {\textbackslash}ne 0\$ showing that partial mass concentration does happen in infinite time, i.e. \${\textbackslash}alpha\&gt;0\$.},
	author = {Carrillo, J. and Fernandez-Jimenez, Alejandro and G'omez-Castro, D.},
	urldate2 = {2023-05-24},
	date = {2023-04-10},
}

@article{alaoglu_weak_1940,
	title = {Weak Topologies of Normed Linear Spaces},
	volume = {41},
	issn = {0003486X},
	url2 =  {https://www.jstor.org/stable/1968829?origin=crossref},
	doi = {10.2307/1968829},
	abstract = {The concepts of weak convergence introduced by Banach [9, pp. 122, 133]1 are of exceeding interest, but in certain respects are rather restricted. Many of the results concerned with these concepts are in general valid only for separable spaces. It is natural, then, to turn to the general theory of limits of Moore and Smith [2] for a generalization. A unified theory of weak convergence is presented in Section 1, together with some of its consequences for weak convergence of functionals. Of these the principal result is Theorem 1:3, which states that the unit sphere of the adjoint of a Banach space is bicompact in the weak topology of functionals. This is a generalization of a result given by Banach [9, p. 123] which is valid only for separable spaces. Characterizations of spaces equivalent or isomorphic to adjoint spaces are given in Section 2, and are applied to a class of spaces considered by Dunford and Morse [13]. A result on universal spaces is given in Section 3. The remainder of the paper is devoted to the development of theories of integration and differentiation of abstract functions. Two theories of integration are outlined in Section 4, both of which can be obtained from that of Birkhoff [11] by replacing unconditional convergence of series by its analogues for weak convergence of functionals and elements. The second of these integrals can be specialized from that of Gelfand [15] by the addition of a uniformity condition. In Section 5 a theory of differentiation of functions with values in an adjoint space is developed. Instead of defining the derivative as the limit of the difference quotients of the function, the class of all ultimate limit points of the difference quotients in the sense of weak convergence of functionals is considered. For a function f of bounded absolute variation in the sense of Definition 5:1, it is shown that these sets are non-null almost everywhere, so that they define a multiple-valued function, the "derivative" of f. By using the first of the integrals defined in Section 4, it is shown that the fundamental theorem of the calculus holds for "derivatives" of absolute continuous functions. Gelfand's result on weak derivatives [15], and the results of Clarkson [12], Dunford and Morse [13], Gelfand [15], and Pettis [18] on strong derivatives are derived from this result. It is of interest to note that in the derivation of Clarkson's theorem, it is shown that uniformly convex spaces are regular.},
	pages = {252},
	number = {1},
	journaltitle = {The Annals of Mathematics},
	shortjournal = {The Annals of Mathematics},
	author = {Alaoglu, Leon},
	urldate2 = {2023-05-24},
	date = {1940-01},
}

@misc{heinze_nonlocal_2022,
	title = {Nonlocal cross-interaction systems on graphs: Nonquadratic Finslerian structure and nonlinear mobilities},
	url2 = {http://arxiv.org/abs/2107.11289},
	doi = {10.48550/arXiv.2107.11289},
	shorttitle = {Nonlocal cross-interaction systems on graphs},
	abstract = {We study the evolution of a system of two species with nonlinear mobility and nonlocal interactions on a graph whose vertices are given by an arbitrary, positive measure. To this end, we extend a recently introduced \$2\$-Wasserstein-type quasi-metric on generalized graphs, which is based on an upwind-interpolation, to the case of two-species systems, concave, nonlinear mobilities, and \$p{\textbackslash}ne 2\$. We provide a rigorous interpretation of the interaction system as a gradient flow in the Finslerian setting, arising from the new quasi-metric.},
	number = {{arXiv}:2107.11289},
	publisher = {{arXiv}},
	author = {Heinze, Georg and Pietschmann, Jan-Frederik and Schmidtchen, Markus},
	urldate2 = {2023-05-29},
	date = {2022-01-11},
	eprinttype = {arxiv},
	eprint = {2107.11289 [math]},
	keywords = {49J40, 49J45, 45G10, 45G15, 28A33, Mathematics - Analysis of {PDEs}, Mathematics - Dynamical Systems},
}

@article{heinze_nonlocal_2023,
	title = {Nonlocal cross-interaction systems on graphs: Energy landscape and dynamics},
	journaltitle = {Kinetic and Related Models ({KRM})},
	author = {Heinze, Georg and Pietschmann, Jan-Frederik and Schmidtchen, Markus},
	date = {2023},
}

@misc{wang_exponentially_2023,
	title = {An Exponentially Converging Particle Method for the Mixed Nash Equilibrium of Continuous Games},
	url2= {http://arxiv.org/abs/2211.01280},
	doi = {10.48550/arXiv.2211.01280},
	abstract = {We consider the problem of computing mixed Nash equilibria of two-player zero-sum games with continuous sets of pure strategies and with first-order access to the payoff function. This problem arises for example in game-theory-inspired machine learning applications, such as distributionally-robust learning. In those applications, the strategy sets are high-dimensional and thus methods based on discretisation cannot tractably return high-accuracy solutions. In this paper, we introduce and analyze a particle-based method that enjoys guaranteed local convergence for this problem. This method consists in parametrizing the mixed strategies as atomic measures and applying proximal point updates to both the atoms' weights and positions. It can be interpreted as a time-implicit discretization of the "interacting" Wasserstein-Fisher-Rao gradient flow. We prove that, under non-degeneracy assumptions, this method converges at an exponential rate to the exact mixed Nash equilibrium from any initialization satisfying a natural notion of closeness to optimality. We illustrate our results with numerical experiments and discuss applications to max-margin and distributionally-robust classification using two-layer neural networks, where our method has a natural interpretation as a simultaneous training of the network's weights and of the adversarial distribution.},
	number = {{arXiv}:2211.01280},
	publisher = {{arXiv}},
	author = {Wang, Guillaume and Chizat, Lenaic},
	urldate2= {2023-06-22},
	date = {2023-04-13},
	eprinttype = {arxiv},
	eprint = {2211.01280 [cs, math]},
	keywords = {Computer Science - Computer Science and Game Theory, Computer Science - Machine Learning, Mathematics - Optimization and Control},
}

@misc{lu_two-scale_2023,
	title = {Two-Scale Gradient Descent Ascent Dynamics Finds Mixed Nash Equilibria of Continuous Games: A Mean-Field Perspective},
	url2= {http://arxiv.org/abs/2212.08791},
	doi = {10.48550/arXiv.2212.08791},
	shorttitle = {Two-Scale Gradient Descent Ascent Dynamics Finds Mixed Nash Equilibria of Continuous Games},
	abstract = {Finding the mixed Nash equilibria ({MNE}) of a two-player zero sum continuous game is an important and challenging problem in machine learning. A canonical algorithm to finding the {MNE} is the noisy gradient descent ascent method which in the infinite particle limit gives rise to the \{{\textbackslash}em Mean-Field Gradient Descent Ascent\} ({GDA}) dynamics on the space of probability measures. In this paper, we first study the convergence of a two-scale Mean-Field {GDA} dynamics for finding the {MNE} of the entropy-regularized objective. More precisely we show that for each finite temperature (or regularization parameter), the two-scale Mean-Field {GDA} with a suitable \{{\textbackslash}em finite\} scale ratio converges exponentially to the unique {MNE} without assuming the convexity or concavity of the interaction potential. The key ingredient of our proof lies in the construction of new Lyapunov functions that dissipate exponentially along the Mean-Field {GDA}. We further study the simulated annealing of the Mean-Field {GDA} dynamics. We show that with a temperature schedule that decays logarithmically in time the annealed Mean-Field {GDA} converges to the {MNE} of the original unregularized objective.},
	number = {{arXiv}:2212.08791},
	publisher = {{arXiv}},
	author = {Lu, Yulong},
	urldate2= {2023-06-22},
	date = {2023-01-08},
	eprinttype = {arxiv},
	eprint = {2212.08791 [cs, math, stat]},
	keywords = {Computer Science - Machine Learning, Mathematics - Analysis of {PDEs}, Mathematics - Optimization and Control, Mathematics - Probability, Statistics - Machine Learning},
}

@misc{domingo-enrich_mean-field_2021,
	title = {A mean-field analysis of two-player zero-sum games},
	url2= {http://arxiv.org/abs/2002.06277},
	doi = {10.48550/arXiv.2002.06277},
	abstract = {Finding Nash equilibria in two-player zero-sum continuous games is a central problem in machine learning, e.g. for training both {GANs} and robust models. The existence of pure Nash equilibria requires strong conditions which are not typically met in practice. Mixed Nash equilibria exist in greater generality and may be found using mirror descent. Yet this approach does not scale to high dimensions. To address this limitation, we parametrize mixed strategies as mixtures of particles, whose positions and weights are updated using gradient descent-ascent. We study this dynamics as an interacting gradient flow over measure spaces endowed with the Wasserstein-Fisher-Rao metric. We establish global convergence to an approximate equilibrium for the related Langevin gradient-ascent dynamic. We prove a law of large numbers that relates particle dynamics to mean-field dynamics. Our method identifies mixed equilibria in high dimensions and is demonstrably effective for training mixtures of {GANs}.},
	number = {{arXiv}:2002.06277},
	publisher = {{arXiv}},
	author = {Domingo-Enrich, Carles and Jelassi, Samy and Mensch, Arthur and Rotskoff, Grant and Bruna, Joan},
	urldate2= {2023-06-22},
	date = {2021-05-06},
	eprinttype = {arxiv},
	eprint = {2002.06277 [cs, math, stat]},
	keywords = {Computer Science - Machine Learning, Mathematics - Optimization and Control, Mathematics - Probability, Statistics - Machine Learning},
}

@article{lieb_sharp_1983,
	title = {Sharp Constants in the Hardy-Littlewood-Sobolev and Related Inequalities},
	volume = {118},
	issn = {0003-486X},
	url2 = {https://www.jstor.org/stable/2007032},
	doi = {10.2307/2007032},
	abstract = {A maximizing function, f, is shown to exist for the {HLS} inequality on Rn: {\textbar} {\textbar}x{\textbar}-Î» * f{\textbar}q â‰¤ Np, Î», n{\textbar}f{\textbar}p, with N being the sharp constant and \$1/p + {\textbackslash}lambda/n = 1 + 1/q, 1 {\textless} p, q, n/ {\textbackslash}lambda {\textless} {\textbackslash}infty\$. When p = q' or p = 2 or q = 2, f and N are explicitly evaluated. A maximizing f is also shown to exist for other inequalities: (i) The Okikiolu, Glaser, Martin, Grosse, Thirring inequality: \$K\_\{n,p\}{\textbackslash}{\textbar}{\textbackslash}nabla f{\textbackslash}{\textbar}\_2 {\textbackslash}geq {\textbackslash}{\textbar} {\textbar}x{\textbar}{\textasciicircum}\{-b\}f{\textbackslash}{\textbar}\_p, n {\textbackslash}geq 3, 0 {\textbackslash}leq b {\textless} 1, p = 2n/(2b + n - 2)\$. (This was known before, but the proof here has certain simplifications.) (ii) The doubly weighted {HLS} inequality of Stein and Weis: {\textbar}âˆ« V (x, y)f(y) dy{\textbar}q â‰¤ PÎ±, Î², p, Î», n{\textbar}f{\textbar}p with \$V(x, y) = {\textbar}x{\textbar}{\textasciicircum}\{-{\textbackslash}beta\}{\textbar}x - y{\textbar}{\textasciicircum}\{-{\textbackslash}lambda\}{\textbar}y{\textbar}{\textasciicircum}\{-{\textbackslash}alpha\}, 0 {\textbackslash}leq {\textbackslash}alpha {\textless} n/p', 0 {\textbackslash}leq {\textbackslash}beta {\textless} n/q, 1/p + ({\textbackslash}lambda + {\textbackslash}alpha + {\textbackslash}beta)/n = 1 + 1/q\$. (iii) The weighted Young inequality: {\textbar} {\textbar}x{\textbar}Î³ f{\textbar}m p â‰¥ Qp, m, n{\textbar} f(m){\textbar}âˆž, where f(m)(x) is the m-fold convolution of f with itself, \$m {\textbackslash}geq 3, m/(- 1) {\textbackslash}leq p {\textless} m, {\textbackslash}gamma/n + 1/p = (m - 1)/m\$. When p = m/(m - 1) or p = 2, f and Q are explicitly evaluated.},
	pages = {349--374},
	number = {2},
	journaltitle = {Annals of Mathematics},
	author = {Lieb, Elliott H.},
	urldate2 = {2023-06-30},
	date = {1983},
}

@misc{doumic_multispecies_2023,
	title = {Multispecies cross-diffusions: from a nonlocal mean-field to a porous medium system without self-diffusion},
	url2 = {http://arxiv.org/abs/2306.01777},
	doi = {10.48550/arXiv.2306.01777},
	shorttitle = {Multispecies cross-diffusions},
	abstract = {Systems describing the long-range interaction between individuals have attracted a lot of attention in the last years, in particular in relation with living systems. These systems are quadratic, written under the form of transport equations with a nonlocal self-generated drift. We establish the localisation limit, that is the convergence of nonlocal to local systems, when the range of interaction tends to 0. These theoretical results are sustained by numerical simulations. The major new feature in our analysis is that we do not need diffusion to gain compactness, at odd with the existing literature. The central compactness result is provided by a full rank assumption on the interaction kernels. In turn, we prove existence of weak solutions for the resulting system, a cross-diffusion system of quadratic type.},
	number = {{arXiv}:2306.01777},
	publisher = {{arXiv}},
	author = {Doumic, Marie and Hecht, Sophie and Perthame, Benoit and Peurichard, Diane},
	urldate2 = {2023-07-01},
	date = {2023-06-12},
	eprinttype = {arxiv},
	eprint = {2306.01777 [math]},
	keywords = {Mathematics - Analysis of {PDEs}, Mathematics - General Mathematics},
}

@inproceedings{de_lellis_ordinary_2007,
	title = {Ordinary Differential Equations with Rough Coefficients and the Renormalization Theorem of Ambrosio},
	volume = {59},
	series = {972},
	pages = {175--204},
	booktitle = {S\'eminaire Bourbaki},
	author = {De Lellis, Camillo},
	date = {2007-03},
}

\newpage
\appendix

%%%%%%%%%%%%%%%%%%%%%%%%
\section{General structure and preliminaries}

In this section, we give more details on the models discussed in the main article, and introduce definitions and notation that are needed for the subsequent proofs.

\subsection{Structure of the dynamics}\label{sec:structure}

For the case of aligned objectives, the full coupled system of PDEs \eqref{eq:dynamics_aligned} can be written as 
\begin{subequations}
\begin{align}
    \partial_t \rho &= \alpha\Delta \rho + \div{\rho \nabla_z \left(\int f_1\d\mu - \alpha\log\rhot + W\ast\rho\right)}\,, \label{eq:dynamics-aligned-rho}\\
    \partial_t \mu &= \div{\mu \nabla_x \left(\int f_1\d\rho + \int f_2\d\rhob + \frac{\beta}{2}\|x-x_0\|^2\right)}\,.\label{eq:dynamics-aligned-mu}
\end{align} \label{eq:dynamics_aligned-full}
\end{subequations}
In other words, the population $\rho$ in \eqref{eq:dynamics-aligned-rho} is subject to an isotropic diffusive force with diffusion coefficient $\alpha>0$, a drift force due to the time-varying confining potential $\int f_1\d\mu(t)-\alpha\log\rhot$, and a self-interaction force via the interaction potential $W$. If we consider the measure $\mu$ to be given and fixed in time, this corresponds exactly to the type of parabolic equation studied in \cite{carrillo_kinetic_2003}. Here however the dynamics are more complex due to the coupling of the confining potential with the dynamics \eqref{eq:dynamics-aligned-mu} for $\mu(t)$ via the coupling potential $f_1$. Before presenting the analysis of this model, let us give a bit more intuition on the meaning and the structure of these dynamics. 

In the setting where $\mu$ represents a binary classifier, we can think of the distribution $\rhob$ as modelling all those individuals carrying the true label 1, say, and the distribution $\rho(t)$ as modelling all those individuals carrying a true label 0, say, where 0 and 1 denote the labels of two classes of interest. The term $\int f_1(z,x)\mu(t,\d x)$ represents a penalty for incorrectly classifying an individual at $z$ with true label 0 when using the classifier $\mu(t,x)$. In other words, 
$\int f_1(z,x)\mu(t,\d x) \in [0,\infty)$ is increasingly large the more $z$ digresses from the correct classification 0. Similarly, $\int f_1(z,x)\rho(t,\d z)\in [0,\infty)$ is increasingly large if the population $\rho$ shifts mass to locations in $z$ where the classification is incorrect. The terminology \emph{aligned objectives} refers to the fact that in \eqref{eq:dynamics_aligned-full} both the population and the classifier are trying to evolve in a way as to maximize correct classification.
Analogously, the term $\int f_2(z,x)\rhob(\d z)$ is large if $x$ would incorrectly classify the population $\rhob$ that carries the label 1. A natural extension of the model~\eqref{eq:dynamics_aligned-full} would be a setting where also the population carrying labels 1 evolves over time, which is simulated in Section~\ref{sec:multiple_dynamical_populations_simulations}. Most elements of the framework presented here would likely carry over the setting of three coupled PDEs: one for the evolution of $\rho(t)$, one for the evolution of $\rhob(t)$ and one for the classifier $\mu(t)$.

The term 
$$
\alpha \Delta \rho - \alpha\div{\rho\nabla \log\rhot } = \alpha \div{\rho \nabla \delta_\rho KL(\rho\,|\,\rhot)}
$$
forces the evolution of $\rho(t)$ to approach $\rhot$. In other words, it penalizes (in energy) deviations from a given reference measure $\rhot$. In the context of the application at hand, we take $\rhot$ to be the initial distribution $\rho(t=0)$. The solution $\rho(t)$ then evolves away from $\rhot$ over time due to the other forces that are present. Therefore, the term $KL(\rho\,|\,\rhot)$ in the energy both provides smoothing of the flow 
and a penalization for deviations away from the reference measure $\rhot$. 

The self-interaction term $W\ast\rho$ introduces non-locality into the dynamics, as the decision for any given individual to move in a certain direction is influenced by the behavior of all other individuals in the population. The choice of $W$ is application dependent. Very often, the interaction between two individuals only depends on the distance between them. This suggests a choice of $W$ as a radial function, i.e. $W(z)=\omega(|z|)$. A choice of $\omega:\R\to\R$ such that $\omega'(r)>0$ corresponds to an \emph{attractive} force between individuals, whereas $\omega'(r)<0$ corresponds to a \emph{repulsive} force. The statement $|z|\omega'(|z|) = z\cdot\nabla_z W(z) \geq -D$ in Assumption~\ref{assumption:W_convex} therefore corresponds to a requirement that the self-interaction force is not too repulsive. Neglecting all other forces in \eqref{eq:dynamics-aligned-rho}, we obtain the non-local interaction equation
$$
\partial_t\rho = \div{\rho\nabla W\ast \rho}
$$
which appears in many instances in mathematical biology, mathematical physics, and material science, and it is an equation that has been extensively studied over the past few decades, see for example \cite{carrillo_global--time_2011,balague_confinement_2012,carrillo_global_2014,bertozzi_blow-up_2009,bertozzi_aggregation_2012,carrillo_contractions_2006,carrillo_partial_2023} and references therein. 

Using the results from these works, our assumptions on the interaction potential $W$ can be relaxed in many ways, for example by allowing discontinuous derivatives at zero for $W$, or by allowing $W$ to be negative.

The dynamics \eqref{eq:dynamics-aligned-mu} for the algorithm $\mu$ is a non-autonomous transport equation, 
$$
\partial_t\mu = \div{\mu v}\,,
$$
where the time-dependence in the velocity field
$$
v(t,x):= \nabla_x \left(\int f_1(z,x)\d\rho(t,z) + \int f_2(z,x)\d\rhot(z) + \frac{\beta}{2}\|x-x_0\|^2\right)\,,
$$
comes through the evolving population $\rho(t)$. 
This structure allows to obtain an explicit solution for $\mu(t)$ in terms of the initial condition $\mu_0$ and the solution $\rho(t)$ to \eqref{eq:dynamics-aligned-rho} using the method of characteristics. 
\begin{proposition}\label{prop:mu-sol}
Assume that there exists a constant $c>0$ such that 
\begin{align}\label{eq:velocity-bound-ass}
    \left\|\int \nabla_x f_1(z,x) \d \rho(z) + \int \nabla_x f_2(z,x) \d \rhob(z) \right\| &\leq c(1+
    \|x\|) \quad \forall \rho\in\P_2(\R^d) \text{ and } \forall x\in \R^d\,.
\end{align}
Then the unique distributional solution $\mu(t)$ to \eqref{eq:dynamics-aligned-mu} is given by
    \begin{equation}\label{eq:mu-sol}
    \mu(t) = \Phi(t,0,\cdot)_\# \mu_0\,,
\end{equation}
where $\Phi(t,s,x)$ solves the characteristic equation
\begin{align}\label{eq:characteristics}
    \partial_s\Phi(s,t,x) +v(s,\Phi(s,t,x))=0\,,\qquad \Phi(t,t,x)=x\,.
\end{align}
\end{proposition}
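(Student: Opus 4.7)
The plan is to apply the method of characteristics in its classical form. Equation \eqref{eq:dynamics-aligned-mu} is a linear continuity equation for $\mu$ driven by the velocity field
\[
v(t,x)=\nabla_x\!\left(\int f_1(z,x)\d\rho(t,z)+\int f_2(z,x)\d\rhob(z)+\tfrac{\beta}{2}\|x-x_0\|^2\right),
\]
with $\rho(t)$ treated as a given time-dependent measure coming from \eqref{eq:dynamics-aligned-rho}. The overall strategy is: (i) show that $v$ is regular enough for a well-posed characteristic ODE, (ii) build $\mu(t)$ as the pushforward of $\mu_0$ along that flow, and (iii) invoke standard uniqueness for the linear continuity equation.

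First, I would establish the regularity of $v$. By Assumption~\ref{assump:f}, $f_1,f_2\in C^2$, so for each fixed $t$ the map $x\mapsto v(t,x)$ is $C^1$, hence locally Lipschitz, with Jacobian obtained by passing $\nabla_x^2$ inside the integrals (justified by dominated convergence using the local bounds from the $C^2$ hypothesis together with the uniform bound in Assumption~\ref{assumption:large_regularizer_rho} where needed). The hypothesis \eqref{eq:velocity-bound-ass} gives the crucial linear growth $\|v(t,x)\|\le \beta\|x-x_0\|+c(1+\|x\|)$ uniformly in time. Time regularity of $t\mapsto v(t,x)$ follows from the narrow continuity of $t\mapsto\rho(t)$ in $\P_2(\R^d)$, which is a consequence of the smooth-approximation scheme mentioned in the Cauchy-problem remark (the second moment of $\rho(t)$ is propagated by the energy bound, so $f_1(z,\cdot)$-integrals are continuous in $t$).

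Second, I would apply Cauchy–Lipschitz/Picard–Lindelöf to the characteristic ODE
\[
\partial_s\Phi(s,t,x)=-v(s,\Phi(s,t,x)),\qquad \Phi(t,t,x)=x.
\]
Local Lipschitz continuity in space and continuity in time give existence and uniqueness of a maximal solution, and the linear growth condition rules out finite-time blow-up via a Gr\"onwall argument on $\|\Phi(s,t,x)\|$, so $\Phi$ is defined globally and is a $C^1$-diffeomorphism in $x$ for each $(s,t)$. Then define $\mu(t):=\Phi(t,0,\cdot)_\#\mu_0$. To verify this is a distributional solution of \eqref{eq:dynamics-aligned-mu}, I would test against $\varphi\in C_c^\infty([0,T)\times\R^d)$ and compute
\[
\frac{d}{dt}\int\varphi(t,x)\d\mu(t,x)=\frac{d}{dt}\int\varphi(t,\Phi(t,0,y))\d\mu_0(y)=\int\bigl(\partial_t\varphi-\nabla\varphi\cdot v\bigr)\d\mu(t),
\]
by the chain rule along characteristics, which is the weak form of $\partial_t\mu=\div(\mu v)$ after integration by parts (note the sign convention: the gradient-flow equation $\partial_t\mu=\div(\mu v)$ corresponds to characteristics flowing with velocity $-v$, matching \eqref{eq:characteristics}).

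Third, uniqueness: since $v$ is locally Lipschitz in $x$ with linear growth and measurable (in fact continuous) in $t$, the classical theory for linear continuity equations (no DiPerna–Lions machinery required) gives that the pushforward \eqref{eq:mu-sol} is the unique distributional solution with initial datum $\mu_0$ among measures with finite second moment; one recovers this by checking that any two solutions agree when tested against $\varphi(t,x)$ chosen as the backward solution of the dual transport equation $\partial_t\varphi=v\cdot\nabla\varphi$, which is solvable thanks to the flow $\Phi(s,t,\cdot)$ we just constructed. The main obstacle in executing this plan is verifying the time regularity of $v$ rigorously, since $\rho(t)$ is itself a solution of a coupled nonlinear PDE; this is the reason the proposition is phrased with \eqref{eq:velocity-bound-ass} as a standing hypothesis and is consistent with the smooth-approximation framework invoked in the Cauchy-problem remark.
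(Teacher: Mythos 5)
Your proposal is correct and follows essentially the same route as the paper: regularity of $v$ from Assumption~\ref{assump:f}, linear growth from \eqref{eq:velocity-bound-ass}, Cauchy--Lipschitz for a global characteristic flow, and verification that the pushforward solves \eqref{eq:dynamics-aligned-mu} in the distributional sense (the paper states the last two steps more tersely, while you additionally spell out the duality argument for uniqueness and flag the time-regularity of $t\mapsto\rho(t)$ as the delicate point). The only cosmetic quibble is your invocation of Assumption~\ref{assumption:large_regularizer_rho}, which belongs to the competing-objectives case and is not needed here; the local bounds from $f_1,f_2\in C^2$ suffice.
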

\begin{proof}
Thanks to Assumption~\ref{assump:f}, we have that $v\in C^1(\R\times\R^d;\R^d)$, and by \eqref{eq:velocity-bound-ass}, we have
\begin{equation*}%\label{eq:velocity-bound}
    \|v(t,x)\|\le c(1+\|x\|) \quad \text{ for all } t\ge 0, x\in\R^d\,.
\end{equation*}
By classical Cauchy-Lipschitz theory for ODEs, this guarantees the existence of a unique global solution $\Phi(t,s,x)$ solving \eqref{eq:characteristics}. Then it can be checked directly that $\mu(t)$ as defined in \eqref{eq:mu-sol} is a distributional solution to \eqref{eq:dynamics-aligned-mu}.
\end{proof}

In the characteristic equation \eqref{eq:characteristics}, 
$\Phi(s,t,x)$ is a parametrization of all trajectories: if a particle was at location $x$ at time $t$, then it is at location $\Phi(s,t,x)$ at time $s$. Our assumptions on $f_1, f_2$ and $\rhob$ also ensure that $\Phi(s,t,\cdot):\R^d\to\R^d$ is a $C^1$-diffeomorphism for all $s,t\in\R$. For more details on transport equations, see for example \cite{de_lellis_ordinary_2007}. 

\begin{remark}
   Consider the special case where $\mu_0=\delta_{x_0}$ for some initial position $x_0\in \R^d$. Then by Proposition~\ref{prop:mu-sol}, the solution to \eqref{eq:dynamics-aligned-mu} is given by $\mu(t)=\delta_{x(t)}$, where $x(t):= \Phi(t,0,x_0)$ solves the ODE
\begin{equation*}
    \dot{x}(t) = -v(t,x(t))\,,\qquad x(0)=x_0\,,
\end{equation*}
which is precisely of type \eqref{eq:algorithm_dynamics}. 
\end{remark}

For the case of competing objectives, the two models we consider can be written as
\begin{align*}
    \partial_t \rho &= -\div{\rho \left[\nabla (f_1(z,\bx(\rho)) - \alpha \log(\rho/\rhot) - W \ast \rho
 \right]}\,, \\
   \bx(\rho) &:= \amin_{\bar{x}} \int f_1(z,\bar x) \d \rho(z) + \int f_2(\bar x,z') \d \rhob(z') + \frac{\beta}{2} \norm{\bar x-x_0}^2\,
\end{align*}
for \eqref{eq:dynamics_competitive_x_fast}, and
\begin{align*}
        \ddt x &= -\nabla_x \left( \int f_1(z,x) \,\br(x)(\d z) + \int f_2(x,z') \d \rhob(z') + \frac{\beta}{2} \norm{x-x_0}^2 \right)\,, \\
    &\br(x) \coloneqq \amax_{\hat{\rho}\in\P} \int f_1(z,x) \d \hat\rho(z) - \alpha KL(\hat\rho|\rhot) - \frac{1}{2}\int \hat\rho W \ast \hat\rho\,.
\end{align*}
for \eqref{eq:dynamics_competitive_rho_fast}.

% %%%%%%%%%%%

\subsection{Definitions and notation}\label{sec:displ-convexity}

Here, and in what follows, $\Id_d$ denotes the $d\times d$ identity matrix, and $\id$ denotes the identity map. The energy functionals we are considering are usually defined on the set of probability measures on $\R^d$, denoted by $\P(\R^d)$. If we consider the subset $\P_2(\R^d)$ of probability measures with bounded second moment, 
$$\P_2(\R^d):=\{\rho\in \P(\R^d)\,: \, \int_{\R^d} \|z\|^2\d\rho(z)<\infty\}\,,$$
then we can endow this space with the Wasserstein-2 metric.

\begin{definition}[Wasserstein-2 Metric]\label{def:wasserstein}
The Wasserstein-2 metric between two probability measures $\mu, \nu\in\P_2(\R^d)$ is given by
    \begin{align*}
        W_2(\mu, \nu)^2 = \inf_{\gamma\in\Gamma(\mu,\nu)} \int \norm{z-z'}_2^2 \d \gamma(z,z')
    \end{align*}
    where $\Gamma$ is the set of all joint probability distributions with marginals $\mu$ and $\nu$, i.e. $\mu (\d z) = \int \gamma(\d z,z')\d z'$ and $\nu(\d z') = \int \gamma(z,\d z') \d z$.
\end{definition}
The restriction to $\P_2(\R^d)$ ensures that $W_2$ is always finite. Then the space $(\P_2(\R^d), W_2)$ is indeed a metric space. We will make use of the fact that $W_2$ metrizes narrow convergence of probability measures. To make this statement precise, let us introduce two common notions of convergence for probability measures, which are a subset of the finite signed Radon measures $\M(\R^d)$.
\begin{definition}\label{def:cv}
Consider a sequence $(\mu_n)\in\M(\R^d)$ and a limit $\mu\in\M(\R^d)$.
\begin{itemize}
    \item \textbf{(Narrow topology)} The sequence $(\mu_n)$ converges \emph{narrowly} to $\mu$, denoted by $\mu_n \rightharpoonup \mu$, if for all continuous bounded functions $f:\R^d\to \R$,
    $$
    \int_{\R^d} f(z)\d\mu_n(z) \to \int_{\R^d} f(z)\d\mu(z)\,.
    $$
    \item \textbf{(Weak-$*$  topology)} The sequence $(\mu_n)$ converges \emph{weakly-$*$} to $\mu$, denoted by $\mu_n \weakstar \mu$, if for all continuous functions vanishing at infinity (i.e. $f:\R^d\to \R$ such that for all $\epsilon>0$ there exists a compact set $K_\epsilon\subset \R^d$ such that $|f(z)|<\epsilon$ on $\R^d\setminus K_\epsilon$),
    we have
    $$
    \int_{\R^d} f(z)\d\mu_n(z) \to \int_{\R^d} f(z)\d\mu(z)\,.
    $$
\end{itemize}
\end{definition}
Let us denote the set of continuous functions on $\R^d$ vanishing at infinity by $C_0(\R^d)$, and the set of continuous bounded functions by $C_b(\R^d)$. Note that narrow convergence immediately implies that $\mu_n(\R^d)\to \mu(\R^d)$ as the constant function is in $C_b(\R^d)$. This is not necessarily true for weak-$*$ convergence. We will later make use of the Banach-Alaoglu theorem \cite{alaoglu_weak_1940}, which gives weak-$*$ compactness of the unit ball in a dual space. Note that $\M(\R^d)$ is indeed the dual of $C_0(\R^d)$ endowed with the sup-norm, and $\P(\R^d)$ is the unit ball in $\M(\R^d)$ using the dual norm. Moreover, if we can ensure that mass does not escape to infinity, the two notions of convergence in Definition~\ref{def:cv} are in fact equivalent.
\begin{lemma}\label{lem:narrowvsweakstar}
    Consider a sequence $(\mu_n)\in\M(\R^d)$ and a measure $\mu\in\M(\R^d)$. Then $\mu_n\rightharpoonup \mu$ if and only if $\mu_n\weakstar\mu$ and $\mu_n(\R^d)\to \mu(\R^d)$.
\end{lemma}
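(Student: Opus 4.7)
My plan is to handle the two directions separately. The forward implication is immediate: taking $f\equiv 1\in C_b(\R^d)$ in the definition of narrow convergence gives $\mu_n(\R^d)\to \mu(\R^d)$, and since $C_0(\R^d)\subset C_b(\R^d)$, narrow convergence trivially implies weak-$*$ convergence. So the substantive content is the reverse implication, to which I turn next.

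Suppose $\mu_n\weakstar\mu$ and $\mu_n(\R^d)\to\mu(\R^d)$. Fix $f\in C_b(\R^d)$, let $\phi_R\in C_c(\R^d)$ be a cut-off with $0\le \phi_R\le 1$ and $\phi_R\equiv 1$ on the ball $B_R(0)$, and decompose
\begin{equation*}
\int f\,d\mu_n - \int f\,d\mu = \Big(\int f\phi_R\,d\mu_n - \int f\phi_R\,d\mu\Big) + \int f(1-\phi_R)\,d\mu_n - \int f(1-\phi_R)\,d\mu\,.
\end{equation*}
For each fixed $R$, the first bracket vanishes as $n\to\infty$ because $f\phi_R\in C_c(\R^d)\subset C_0(\R^d)$, so weak-$*$ convergence applies. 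To control the two tail terms, I bound (for $\mu_n$, analogously for $\mu$)
\begin{equation*}
\Big|\int f(1-\phi_R)\,d\mu_n\Big|\le \|f\|_\infty \Big(\mu_n(\R^d)-\int \phi_R\,d\mu_n\Big)\,,
\end{equation*}
which relies on positivity of the measures (which is exactly the setting where the lemma will be applied, namely probability measures). Passing $n\to\infty$ in this bound, the mass-convergence assumption handles the first factor and weak-$*$ convergence of $\mu_n$ tested against $\phi_R\in C_c(\R^d)$ handles the second, giving $\limsup_n\int (1-\phi_R)\,d\mu_n = \int (1-\phi_R)\,d\mu$. Finally, as $R\to\infty$, monotone convergence applied to the fixed limit measure $\mu$ forces $\int (1-\phi_R)\,d\mu\to 0$, which closes the argument.

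The main technical subtlety will be keeping the order of limits straight: for each $f$ and each $\varepsilon>0$, I first pick $R$ large enough so that the tail against $\mu$ is at most $\varepsilon$, and only afterwards choose $n$ large enough so that the $C_c$-pairings $\int f\phi_R\,d\mu_n$ and $\int \phi_R\,d\mu_n$ are within $\varepsilon$ of their limits. The hypothesis $\mu_n(\R^d)\to\mu(\R^d)$ is precisely what prevents tightness failure: without it, mass could escape to infinity, as in $\mu_n=\delta_n$, which converges weak-$*$ to $0$ but not narrowly. Beyond this careful double limit and the appeal to positivity in the tail bound, no further difficulty is expected.
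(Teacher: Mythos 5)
Your proof is correct, and it is in fact more than the paper provides: the paper simply asserts that the lemma ``follows directly from Definition~\ref{def:cv}'' and gives no argument. Your truncation scheme --- split $f$ into $f\phi_R + f(1-\phi_R)$, use weak-$*$ convergence on the compactly supported piece, and control the tail via the mass-convergence hypothesis --- is the standard and correct way to fill this in, and your handling of the order of limits ($R$ first, then $n$) is right.

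One point you raise in passing deserves emphasis: the positivity of the measures is not a cosmetic convenience but is genuinely needed, and the lemma as literally stated for arbitrary $\mu_n\in\M(\R^d)$ (finite \emph{signed} Radon measures) is false. Take $\mu_n=\delta_{2n}-\delta_{2n+1}$: then $\mu_n(\R^d)=0$ for all $n$, and $\mu_n\weakstar 0$ since $f(2n)-f(2n+1)\to 0$ for every $f\in C_0(\R^d)$, yet for a bounded continuous $f$ with $f(2k)=1$ and $f(2k+1)=0$ for all $k$ one has $\int f\,d\mu_n=1\not\to 0$, so $\mu_n\not\rightharpoonup 0$. The estimate $\bigl|\int f(1-\phi_R)\,d\mu_n\bigr|\le \|f\|_\infty\int(1-\phi_R)\,d\mu_n$ is exactly where nonnegativity enters, and for signed measures one would need total-variation convergence $|\mu_n|(\R^d)\to|\mu|(\R^d)$ instead. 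Since the paper only ever invokes the lemma for probability measures, your proof covers every use case, but you were right to flag the hypothesis explicitly rather than treat it as automatic.
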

This follows directly from Definition~\ref{def:cv}.
Here, the condition $\mu_n(\R^d)\to \mu(\R^d)$ is equivalent to tightness of $(\mu_n)$, and follows from Markov's inequality \cite{gosh} if we can establish uniform bounds on 
  the second moments, i.e. we want to show that there exists a constant $C>0$ independent of $n$ such that
  \begin{equation}\label{eq:tightness}
      \int \|z\|^2\d\mu_n(z) <C\qquad \forall n\in\mathrm{N}\,.
  \end{equation}
 \begin{definition}[Tightness of probability measures]
     A collection of measures $(\mu_n)\in\M(\R^d)$ is \emph{tight}
     if for all $\epsilon>0$ there exists a compact set $K_\epsilon\subset \R^d$ such that $|\mu_n|(\R^d\setminus K_\epsilon) <\epsilon$ for all $n\in\mathrm{N}$, where $|\mu|$ denotes the total variation of $\mu$.
 \end{definition} 
  
Another classical result is that the Wasserstein-2 metric metrizes narrow convergence and weak-$*$ convergence of probability measures, see for example \cite[Theorem 5.11]{Santambrogio} or \cite[Theorem 7.12]{villani-OTbook-2003}.
\begin{lemma}\label{lem:W2metrizes}
    Let $\mu_n, \mu\in \P_2(\R^d)$. Then $W_2(\mu_n,\mu)\to 0$ if and only if
    \begin{align*}
    \mu_n \rightharpoonup \mu \quad \text{ and } \quad \int_{\R^d} \|z\|^2\d\mu_n(z) \to \int_{\R^d} \|z\|^2\d\mu(z)\,.
\end{align*}
\end{lemma}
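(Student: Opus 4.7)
The plan is to prove both directions of the equivalence separately, relying on classical tools from optimal transport.

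For the forward direction, assume $W_2(\mu_n,\mu)\to 0$. I would fix a bounded $1$-Lipschitz function $f\in C_b(\R^d)$ and let $\gamma_n\in\Gamma(\mu_n,\mu)$ be an optimal coupling. Using Jensen's inequality,
\begin{equation*}
\left|\int f\,\d\mu_n - \int f\,\d\mu\right| \le \int |f(z)-f(z')|\,\d\gamma_n(z,z') \le \int \|z-z'\|\,\d\gamma_n(z,z') \le W_2(\mu_n,\mu) \to 0.
\end{equation*}
By the Portmanteau theorem, convergence against all bounded Lipschitz test functions is equivalent to narrow convergence, so $\mu_n\rightharpoonup\mu$. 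For the second moments, the same $\gamma_n$ together with Minkowski's inequality in $L^2(\gamma_n)$ gives
\begin{equation*}
\left|\Big(\int\|z\|^2\d\mu_n\Big)^{1/2} - \Big(\int\|z\|^2\d\mu\Big)^{1/2}\right| \le \Big(\int \|z-z'\|^2\d\gamma_n(z,z')\Big)^{1/2} = W_2(\mu_n,\mu) \to 0.
\end{equation*}

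For the converse, assume $\mu_n\rightharpoonup \mu$ and $\int\|z\|^2\d\mu_n\to \int\|z\|^2\d\mu$. Since $\R^d$ is Polish, Skorohod's representation theorem produces random variables $X_n\sim\mu_n$ and $X\sim\mu$ on a common probability space with $X_n\to X$ almost surely. The joint law of $(X_n, X)$ is admissible in the definition of $W_2$, so $W_2(\mu_n,\mu)^2 \le \E\|X_n-X\|^2$ and it suffices to show the right-hand side vanishes. The hypotheses give $\|X_n\|^2\to \|X\|^2$ almost surely and $\E\|X_n\|^2\to \E\|X\|^2$, so by Scheff\'e's lemma $\|X_n\|^2\to \|X\|^2$ in $L^1$; in particular, the family $\{\|X_n\|^2\}$ is uniformly integrable. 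Combined with the pointwise bound $\|X_n-X\|^2\le 2\|X_n\|^2+2\|X\|^2$, this transfers uniform integrability to $\{\|X_n-X\|^2\}$, and Vitali's convergence theorem then delivers $\E\|X_n-X\|^2\to 0$.

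The main obstacle is the unboundedness of the quadratic cost underlying $W_2$. Narrow convergence by itself is insensitive to mass escaping to infinity, which is exactly why the second-moment hypothesis is indispensable in the converse direction. The Skorohod-plus-Vitali route handles this cleanly by converting the problem to almost-sure convergence together with uniform integrability, both of which follow from the given hypotheses. An attempt to run the converse via weak subsequential limits of the optimal plans $\gamma_n$ is possible but considerably more delicate: any such limit $\gamma$ has marginals $\mu,\mu$ yet is not a priori concentrated on the diagonal, and forcing this requires additional work that the Skorohod argument neatly avoids.
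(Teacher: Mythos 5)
Your proof is correct. Note that the paper does not actually prove this lemma: it is stated as a classical fact with pointers to \cite[Theorem 5.11]{Santambrogio} and \cite[Theorem 7.12]{villani-OTbook-2003}, so there is no in-paper argument to compare against. Your forward direction (optimal couplings plus Jensen for narrow convergence, reverse triangle inequality in $L^2(\gamma_n)$ for the second moments) is the standard one. For the converse, the textbook proofs typically control $\int \|z-z'\|^2\,\d\gamma_n$ directly by truncating the quadratic cost and using that convergence of second moments rules out escape of ``second-moment mass'' to infinity; your route via Skorokhod representation, Scheff\'e, and Vitali is a genuinely different, more probabilistic packaging of the same uniform-integrability idea, and it cleanly sidesteps the issue you correctly identify (a weak limit of the optimal plans need not concentrate on the diagonal without extra work). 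One pedantic caveat: Skorokhod's representation theorem is a nontrivial input in its own right, so the argument is not more elementary than the truncation proof, just differently organized; both are perfectly rigorous on the Polish space $\R^d$.
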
   

\begin{remark}
    Note that $\mu_n \rightharpoonup \mu$ can be replaced by $\mu_n \weakstar \mu$ in the above statement thanks to the fact that the limit $\mu$ is a probability measure with mass 1, see Lemma~\ref{lem:narrowvsweakstar}.
\end{remark}

Next, we consider two measures $\mu,\nu\in\P(\R^d)$ that are \emph{atomless}, i.e $\mu(\{z\})=0$ for all $z\in \R^d$. By Brenier's theorem \cite{Benamou-Brenier} (also see \cite[Theorem 2.32]{villani-OTbook-2003}) there exists a unique measurable map $T:\R^d\to\R^d$ such that $T_\#\mu=\nu$, and $T=\nabla \psi$ for some convex function $\psi:\R^d\to \R$.
Here, the \emph{push-forward} operator $\nabla\psi_\#$ is defined as
    \begin{align*}
        \int_{\R^d} f(z) \d \nabla \psi_\# \rho_0(z) = \int_{\R^d} f(\nabla \psi (z) ) \d \rho_0(z)
    \end{align*}
for all Borel-measurable functions $f:\R^d\mapsto\R_+$. If $\rho_1= \nabla \psi_\#\rho_0$, we denote by $\rho_s = [(1-s)\id+s\nabla \psi]_{\#}\rho_0$ the \emph{discplacement interpolant} between $\rho_0$ and $\rho_1$.
We are now ready to introduce the notion of displacement convexity, which is the same as geodesic convexity in the geodesic space $(\P_2(\R^d),W_2)$. We will state the definition here for atomless measures, but it can be relaxed to any pair of measures in $\P_2$ using optimal transport plans instead of transport maps. In what follows, we will use $s$ to denote the interpolation parameter for geodesics, and $t$ to denote time related to solutions of \eqref{eq:dynamics_aligned}, \eqref{eq:dynamics_competitive_x_fast} and \eqref{eq:dynamics_competitive_rho_fast}.

\begin{definition}[Displacement Convexity]
    A functional $G:\P\mapsto\R$ is \emph{displacement convex} if for all $\rho_0,\rho_1$ that are atomless we have
    \begin{align*}
        G(\rho_s) \leq (1-s)G(\rho_0) + s G(\rho_1)\,,
    \end{align*}
    where $\rho_s = [(1-s)\id+s\nabla \psi]_{\#}\rho_0$ is the displacement interpolant between $\rho_0$ and $\rho_1$. 
    Further, $G:\P\mapsto\R$ is \emph{uniformly displacement convex with constant $\eta>0$} if 
    \begin{align*}
        G(\rho_s) \leq (1-s)G(\rho_0) + s G(\rho_1)-s(1-s) \frac{\eta}{2} W_2(\rho_0,\rho_1)^2\,,
    \end{align*}
    where $\rho_s = [(1-s)\id+s\nabla \psi]_{\#}\rho_0$ is the displacement interpolant between $\rho_0$ and $\rho_1$.
\end{definition}

\begin{remark}
    In other words, $G$ is displacement convex (concave) if the function $G(\rho_s)$ is convex (concave) with $\rho_s = [(1-s\id+s\nabla \psi]_{\#}\rho_0$ being the displacement interpolant between $\rho_0$ and $\rho_1$. Contrast this with the classical notion of convexity (concavity) for $G$, where we require that the function $G((1-s)\rho_0+s\rho_1)$ is convex (concave).
\end{remark}

In fact, if the energy $G$ is twice differentiable along geodesics, then the condition  $\ddss G(\gamma_s) \geq 0 $ along any geodesic $(\rho_s)_{s\in[0,1]}$ between $\rho_0$ and $\rho_1$ is sufficient to obtain displacement convexity. Similarly, when $\ddss G(\rho_s) \geq \eta W_2(\rho_0,\rho_1)^2$, then $G$ is uniformly displacement convex with constant $\eta>0$.
For more details, see \cite{mccann_convexity_1997} and \cite[Chapter 5.2]{villani-OTbook-2003}.

\subsection{Steady states}

The main goal in our theoretical analysis is to characterize the asymptotic behavior for the models \eqref{eq:dynamics_aligned}, \eqref{eq:dynamics_competitive_x_fast} and \eqref{eq:dynamics_competitive_rho_fast} as time goes to infinity.
The steady states of these equations are the natural candidates to be asymptotic profiles for the corresponding equations. Thanks to the gradient flow structure, we expect to be able to make a connection between ground states of the energy functionals, and the steady states of the corresponding gradient flow dynamics. More precisely, any minimizer or maximizer is in particular a critical point of the energy, and therefore satisfies that the first variation is constant on disconnected components of its support. If this ground state also has enough regularity (weak differentiability) to be a solution to the equation, it immediately follows that it is in fact a steady state.

To make this connection precise, we first introduce what exactly we mean by a steady state.

\begin{definition}[Steady states for \eqref{eq:dynamics_aligned}]\label{def:sstate1}
    Given $\rho_\infty\in L^1_+(\R^d)\cap L^\infty_{loc}(\R^d)$ with $\|\rho_\infty\|_1=1$ and $\mu_\infty\in\P_2(\R^d)$, then $(\rho_\infty,\mu_\infty)$ is a steady state for the system \eqref{eq:dynamics_aligned} if $\rho_\infty\in W^{1,2}_{loc}(\R^d)$, $\nabla W\ast \rho_\infty \in L^1_{loc}(\R^d)$, $\rho_\infty$ is absolutely continuous with respect to $\rhot$, and $(\rho_\infty,\mu_\infty)$ satisfy 
    \begin{subequations}\label{eq:ss-Ga}
    \begin{align}
   &\nabla_z \left(\int f_1(z,x)\d\mu_\infty(x) + \alpha\log\left(\frac{\rho_\infty(z)}{\rhot(z)}\right) + W\ast\rho_\infty(z)\right) = 0 \qquad \forall z\in\supp(\rhoinf)\,, \label{eq:ss_rho}\\
   &\nabla_x \left(\int f_1(z,x)\d\rho_\infty(z) + \int f_2(z,x)\d\rhot(z) + \frac{\beta}{2}\|x-x_0\|^2\right) = 0 \qquad \forall x\in\supp(\mu_\infty) \label{eq:ss_x}
\end{align}
    \end{subequations}

in the sense of distributions.
\end{definition}

Here, $L^1_+(\R^d):= \{\rho\in L^1(\R^d)\,:\, \rho\ge 0\}$.

\begin{definition}[Steady states for \eqref{eq:dynamics_competitive_x_fast}]\label{def:sstate2}
    Let $\rho_\infty\in L^1_+(\R^d)\cap L^\infty_{loc}(\R^d)$ with $\|\rho_\infty\|_1=1$. Then $\rho_\infty$ is a steady state for the system \eqref{eq:dynamics_competitive_x_fast} if $\rho_\infty\in W^{1,2}_{loc}(\R^d)$, $\nabla W\ast \rho_\infty \in L^1_{loc}(\R^d)$, $\rho_\infty$ is absolutely continuous with respect to $\rhot$, and $\rho_\infty$ satisfies 
\begin{align}\label{eq:ss-rho-xfast}
   &\nabla_z \left(f_1(z,b(\rho_\infty)) - \alpha\log\left(\frac{\rho_\infty(z)}{\rhot(z)}\right) - W\ast\rho_\infty(z) \right) = 0 \qquad \forall z\in\R^d\,,
\end{align}
in the sense of distributions, where $\bx(\rho_\infty) \coloneqq \amin_{x} G_c(\rho_\infty,x)$.
\end{definition}

\begin{definition}[Steady states for \eqref{eq:dynamics_competitive_rho_fast}]\label{def:sstate3}
    The vector $x_\infty\in\R^d$ is a steady state for the system \eqref{eq:dynamics_competitive_rho_fast} if it satisfies 
\begin{align*}
   &\nabla_x G_d(x_\infty)=0\,.
\end{align*}
\end{definition}

%%%%%%%%
In fact, with the above notions of steady state, we can obtain improved regularity for $\rho_\infty$.

\begin{lemma}\label{lem:steady_states_in_C}
Let Assumptions~\ref{assump:f}-\ref{assumption:W_convex} hold.
    Then the steady states $\rho_\infty$ for \eqref{eq:dynamics_aligned} and \eqref{eq:dynamics_competitive_x_fast} are continuous.   
    % If $\rho_\infty $ is a steady state of either \eqref{eq:dynamics_aligned} or of \eqref{eq:dynamics_competitive_x_fast}, then $\rho_\infty\in C(\R^d)$.
\end{lemma}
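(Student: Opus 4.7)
The plan is to exploit the Euler--Lagrange equations built into Definitions~\ref{def:sstate1} and \ref{def:sstate2} in order to write $\rho_\infty$ explicitly as the exponential of a continuous function, and then verify continuity factor by factor. Concretely, I would argue as follows.

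First, I would note that Assumption~\ref{assumption:convexity_of_ref_dist} forces $\log\rhot\in C^2(\R^d)$, so $\rhot>0$ everywhere and $\supp(\rhot)=\R^d$. Combined with the fact (already established in Theorem~\ref{thm:PDE_aligned}(a) and Theorem~\ref{thm:convergence_fast_mu}(a)) that $\rho_\infty$ has the same support as $\rhot$, the support of $\rho_\infty$ is all of $\R^d$, hence connected. I would then take the distributional gradient identity \eqref{eq:ss_rho} (resp.~\eqref{eq:ss-rho-xfast}), argue that $\rho_\infty\in W^{1,2}_{loc}$ together with the stated regularity of $f_1$, $\rhot$ and $W\ast\rho_\infty$ makes the bracketed expression in $W^{1,1}_{loc}$, and therefore deduce that this expression is a.e.\ equal to a constant $C\in\R$ on $\R^d$. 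Solving pointwise for $\rho_\infty$ gives, in the aligned case,
\begin{equation*}
    \rho_\infty(z) = \rhot(z)\,\exp\!\left(\tfrac{1}{\alpha}\bigl(C - \textstyle\int f_1(z,x)\,\d\mu_\infty(x) - (W\ast\rho_\infty)(z)\bigr)\right),
\end{equation*}
and an analogous formula in the competitive case (with $b(\rho_\infty)\in\R^d$ replacing the $\mu_\infty$-average and opposite signs).

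Next I would verify that every function appearing in the exponent is continuous in $z$. The factor $\rhot$ is continuous since $\log\rhot\in C^2$. For the aligned case, continuity of $z\mapsto\int f_1(z,x)\,\d\mu_\infty(x)$ follows from dominated convergence: Assumption~\ref{assump:f} provides $f_1\in C^2$, and the local bound needed to dominate for $z$ in a ball can be extracted from the second-order bounds plus the fact that $\mu_\infty\in\P_2(\R^d)$. In the competitive case, $f_1(z,b(\rho_\infty))$ is trivially $C^2$ in $z$ since $b(\rho_\infty)$ is a fixed vector. Continuity of the convolution $(W\ast\rho_\infty)(z)=\int W(z-y)\,\d\rho_\infty(y)$ follows similarly: Assumption~\ref{assumption:W_convex} together with $W\in C^2$ and $|\nabla W|\le D(1+|z|)$ gives at most quadratic growth for $W$, so $W(z_n-y)\to W(z-y)$ pointwise as $z_n\to z$, dominated by a function of the form $c(1+|y|^2)$ which is $\rho_\infty$-integrable since $\rho_\infty\in\P_2(\R^d)$ (a property inherited along the flow and at equilibrium from the finite energy assumption). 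Combining these, the entire exponent is continuous on $\R^d$, so $\rho_\infty$ is continuous.

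The main obstacle is the convolution step: one has to control the tail of $\rho_\infty$ well enough for $W\ast\rho_\infty$ to be defined pointwise and continuous, since $W$ is only assumed to have a quadratically growing gradient. The clean way to handle this is to first establish $\rho_\infty\in\P_2(\R^d)$ (which is inherent to the finite-energy framework in which the steady states were constructed, given that $KL(\rho|\rhot)$ finiteness together with Assumption~\ref{assumption:convexity_of_ref_dist} controls second moments through Talagrand's inequality applied to $\rhot$), after which the quadratic bound on $W$ plus dominated convergence closes the argument. A minor additional care is that the constant $C$ is only determined a.e., but once $\rho_\infty$ is shown to equal a continuous representative almost everywhere, we may simply redefine $\rho_\infty$ on a null set and conclude that the steady state density itself is continuous on $\R^d$.
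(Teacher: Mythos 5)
Your route is genuinely different from the paper's. The paper does not integrate the steady-state equation at all: it multiplies through by $\rho_\infty$, writes the equation as $\Delta\rho_\infty=\div{h}$ with $h:=\rho_\infty\nabla\bigl[f_1+\alpha\log\rhot-W\ast\rho_\infty\bigr]$, checks from the local regularity in Definitions~\ref{def:sstate1}--\ref{def:sstate2} that $h\in L^1_{loc}\cap L^\infty_{loc}\subset L^p_{loc}$, and then bootstraps via elliptic regularity ($\rho_\infty\in W^{1,p}_{loc}$) and Morrey's embedding to get $\rho_\infty\in C^{0,k}$. You instead solve the Euler--Lagrange relation for $\rho_\infty$ as $\rhot$ times an exponential and check continuity of the exponent. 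When your representation is available it buys more (an explicit formula, strict positivity, and in fact higher regularity of the density), and it is essentially how the paper itself treats the \emph{minimizers} in Corollaries~\ref{cor:ground-states-supp-Ga} and~\ref{cor:ground-states-supp-Gb}. The paper's elliptic-regularity argument buys robustness: it needs no global information about $\rho_\infty$ whatsoever.

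That robustness is exactly where your proof has a gap relative to the statement being proved. The lemma is about arbitrary steady states in the sense of Definitions~\ref{def:sstate1} and~\ref{def:sstate2}, for which one only knows $\rho_\infty\in L^1_+\cap L^\infty_{loc}\cap W^{1,2}_{loc}$ and the distributional identity $\nabla(\cdots)=0$. Two of your inputs are not available at that level of generality. First, passing from $\nabla\bigl(\alpha\log(\rho_\infty/\rhot)+\cdots\bigr)=0$ to ``the bracket is a.e.\ constant'' requires the bracket to be a genuine $W^{1,1}_{loc}$ function; that needs $\log\rho_\infty\in L^1_{loc}$ and $\nabla\rho_\infty/\rho_\infty\in L^1_{loc}$, neither of which follows from $\rho_\infty\in W^{1,2}_{loc}\cap L^\infty_{loc}$ near possible zeros of $\rho_\infty$. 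The exponential representation \eqref{eq:EL-Ga-rewritten} in the paper is derived from criticality of the energy functional (i.e.\ for ground states), not from the steady-state PDE. Second, you import $\supp(\rho_\infty)=\supp(\rhot)=\R^d$ and $\rho_\infty\in\P_2$ from Theorems~\ref{thm:PDE_aligned}(a) and~\ref{thm:convergence_fast_mu}(a); these are properties of the unique minimizer/maximizer, not of a general steady state (Definition~\ref{def:sstate2} does not even assert finite second moments). So your argument proves continuity of the \emph{ground states} --- which is all the theorems need --- but not the lemma as stated. To close the gap for general steady states you would either have to establish local positivity and $W^{1,1}_{loc}$-regularity of $\log\rho_\infty$ first (which is essentially as hard as the conclusion), or fall back on the divergence-form elliptic argument. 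The continuity checks on the exponent (dominated convergence for $\int f_1\,\d\mu_\infty$ and the quadratic-growth bound on $W$ against a second moment of $\rho_\infty$) are fine and mirror the estimates the paper uses in Corollary~\ref{cor:ground-states-supp-Ga}.
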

\begin{proof}
We present here the argument for equation \eqref{eq:dynamics_competitive_x_fast} only. The result for \eqref{eq:dynamics_aligned} follows in exactly the same way by replacing $f_1(z,b(\rhoinf))$ with $-\int f_1(z,x)\d\mu_\infty(x)$.

    Thanks to our assumptions, we have $f_1(\cdot,b(\rhoinf)) + \alpha \log \rhot(\cdot) \in C^1$, which implies that $\nabla (f_1(\cdot,b(\rhoinf)) + \alpha \log \rhot(\cdot)) \in L_{loc}^\infty$. By the definition of a steady state, $\rho_\infty \in L^1 \cap L_{loc}^\infty$ and thanks to Assumption~\ref{assumption:W_convex} we have $W\in C^2$, which implies that $\nabla W \ast \rhoinf \in L_{loc}^\infty$.
    Let 
    \begin{align*}
        h(z) \coloneqq  \rhoinf(z)\nabla \left[ f_1(z,\bx(\rhoinf))+ \alpha \log \rhot(z)-(W\ast\rhoinf)(z)\right]\,.
    \end{align*}
    Then by the aforementioned regularity, we obtain $h\in L_{loc}^1 \cap L_{loc}^\infty$.
    By interpolation, it follows that $h\in L_{loc}^p$ for all $1<p<\infty$. This implies that $\div{ h} \in W_{loc}^{-1,p}$. Since $\rhoinf$ is a weak $W_{loc}^{1,2}$-solution of \eqref{eq:ss-rho-xfast}, we have
    \begin{align*}
       \Delta \rhoinf= \div{h}\,,
    \end{align*}
and so by classic elliptic regularity theory we conclude $\rhoinf \in W_{loc}^{1,p}$. Finally, applying Morrey's inequality, we have $\rhoinf \in C^{0,k}$ where $k=\frac{p-d}{p}$ for any $d < p < \infty$. Therefore $\rhoinf \in C(\R^d)$ (after possibly being redefined on a set of measure zero).  
\end{proof}

%%%%%%%%%%%%%%%%%%%%%%%%%%%%%%
\section{Proof of Theorem~\ref{thm:PDE_aligned}}

For ease of notation, we write $G_a:\P(\R^d)\times \P(\R^d)\mapsto [0,\infty]$ as
\begin{equation*}
    G_a((\rho,\mu))= \alpha KL(\rho|\rhot) + \V(\rho,\mu) + \W(\rho)\,,
\end{equation*}
where we define 
\begin{align*}
    &\V(\rho,\mu) = \iint f_1(z,x) \d \rho(z) \d \mu(x) + \int V(x)\d\mu(x)\,,\\
    &\W(\rho) = \frac12\iint W(z_1-z_2) \d\rho(z_1) \d \rho(z_2)\,,
\end{align*}
with potential given by $V(x):= \int f_2(z,x)\d\rhob(z) + \frac{\beta}{2}\|x-x_0\|^2$.

In order to prove the existence of a unique ground state for $G_a$, a natural approach is to consider the corresponding Euler-Lagrange equations
\begin{subequations}\label{eq:EL-Ga}
\begin{align}
    \alpha \log\frac{\rho(z)}{\rhot(z)} + \int f_1(z,x)\d\mu(x) + 
    ( W\ast \rho)(z) &= c_1[\rho,\mu]\quad \text{ for all } z\in\supp(\rho)\,, \label{eq:EL-Ga-rho}\\
    \int f_1(z,x)\d\rho(z) + V(x) &= c_2[\rho,\mu] \quad \text{ for all } x\in\supp(\mu)\,, \label{eq:EL-Ga-mu}
\end{align}
\end{subequations}
where $c_1,c_2$ are constants that may differ on different connected components of $\supp(\rho)$ and $\supp(\mu)$. These equations are not easy to solve explicitly, and we are therefore using general non-constructive techniques from calculus of variations. We first show continuity and convexity properties for the functional $G_a$ (Lemma~\ref{lem:lsc} and Proposition~\ref{prop:dipl-convexity}), essential properties that will allow us to deduce existence and uniqueness of ground states using the direct method in the calculus of variations (Proposition~\ref{prop:existenceGa}). Using the Euler-Lagrange equation ~\ref{eq:EL-Ga}, we then prove properties on the support of the ground state (Corollary~\ref{cor:ground-states-supp-Ga}).
To obtain convergence results, we apply the HWI method: we first show a general 'interpolation' inequality between the energy, the energy dissipation and the metric (Proposition~\ref{prop:HWI}); this fundamental inequality will then imply a generalized logarithmic Sobolev inequality (Corollary~\ref{cor:logSob}) relating the energy to the energy dissipation, and a generalized Talagrand inequality (Corollary~\ref{cor:Talagrand}) that allows to translate convergence in energy into convergence in metric. Putting all these ingrediends together will then allow us to conclude for the statements in Theorem~\ref{thm:PDE_aligned}.

\begin{lemma}[Lower semi-continuity]\label{lem:lsc}
Let Assumptions~\ref{assump:f}-\ref{assumption:W_convex} hold.
    Then the functional $G_a:\P\times\P\to \R$ is lower semi-continuous with respect to the weak-$*$ topology.
\end{lemma}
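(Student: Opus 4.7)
The approach is to decompose
\[
G_a(\rho,\mu) = \alpha\, KL(\rho|\rhot) + \mathcal{V}(\rho,\mu) + \mathcal{W}(\rho)
\]
and establish weak-$*$ lower semi-continuity of each of the three non-negative summands separately, since finite sums of lsc functionals are lsc. Throughout, I would fix a sequence $(\rho_n,\mu_n)\weakstar(\rho,\mu)$ with limits in $\P(\R^d)\times\P(\R^d)$. Because the limits have unit mass, Lemma~\ref{lem:narrowvsweakstar} upgrades this convergence to narrow convergence of each marginal, and a standard tensor-product/Stone--Weierstrass argument then yields narrow convergence of the product measures $\rho_n\otimes\mu_n\rightharpoonup\rho\otimes\mu$ and $\rho_n\otimes\rho_n\rightharpoonup\rho\otimes\rho$ on $\R^d\times\R^d$.

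For the relative entropy, I would invoke the Donsker--Varadhan variational representation
\[
KL(\rho|\rhot) = \sup_{\phi\in C_b(\R^d)}\left\{\int\phi\, d\rho - \log\int e^\phi\, d\rhot\right\},
\]
which exhibits $KL(\cdot|\rhot)$ as a pointwise supremum of affine functionals that are continuous in the narrow topology; any pointwise supremum of continuous functionals is lsc.

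For the coupling and potential term
\[
\mathcal{V}(\rho,\mu) = \iint f_1(z,x)\, d\rho(z)\, d\mu(x) + \int V(x)\, d\mu(x),
\]
note that $f_1\geq 0$ is continuous by Assumption~\ref{assump:f} and $V(x) := \int f_2(z,x)\, d\rhob(z) + \tfrac{\beta}{2}\|x-x_0\|^2 \geq 0$ is continuous (the integral term by dominated convergence using the growth bounds from Assumption~\ref{assump:f}). Defining truncations $f_1^N := \min(f_1,N) \in C_b$ and $V^N := \min(V,N) \in C_b$, the functionals $(\rho,\mu) \mapsto \iint f_1^N\, d\rho\, d\mu$ and $\mu \mapsto \int V^N\, d\mu$ are continuous in the narrow topology on $\P\times\P$ and $\P$, respectively, thanks to the product-measure convergence noted above. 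Monotone convergence then identifies $\mathcal{V}$ as the pointwise supremum over $N$ of these continuous functionals, hence lsc. The identical argument applied to $\mathcal{W}(\rho) = \tfrac{1}{2}\iint W(z_1-z_2)\, d\rho(z_1)\, d\rho(z_2)$, using $W\geq 0$ continuous by Assumption~\ref{assumption:W_convex} and truncations $W^N := \min(W,N)\in C_b$ together with narrow convergence of $\rho_n\otimes\rho_n$, yields lsc of the self-interaction term.

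The one technical point requiring care is the upgrade from weak-$*$ to narrow convergence, and the resulting narrow convergence of product measures, both of which rest on the hypothesis that the limit $(\rho,\mu)$ actually lies in $\P(\R^d)\times\P(\R^d)$ so that no mass escapes to infinity. Without this, only weak-$*$ convergence to a possibly sub-probability limit would be guaranteed; lsc would still ultimately hold by non-negativity of the integrands (any lost mass can only decrease the limits of the truncated integrals), but the cleanest route is the one via narrow convergence and monotone approximation with $C_b$ tests.
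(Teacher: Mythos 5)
Your proof is correct and follows essentially the same route as the paper: the same three-way decomposition into $KL(\rho|\rhot)$, $\V$, and $\W$, with your Donsker--Varadhan supremum argument and your truncation-plus-monotone-approximation arguments being precisely the standard proofs underlying the results the paper cites (Posner for the relative entropy, and \cite[Propositions 7.1 and 7.2]{Santambrogio} for the potential and interaction terms). Your closing remark on upgrading weak-$*$ to narrow convergence via unit mass of the limit is the right technical caveat and matches the paper's Lemma~\ref{lem:narrowvsweakstar}.
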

\begin{proof}
    We split the energy $G_a$ into three parts: (i) $KL(\rho|\rhot)$, (ii) the interaction energy $\W$, and (iii) the potential energy $\V$. For (i), lower semi-continuity has been shown in \cite{posner_random_1975}.  For (ii), we can directly apply \cite[Proposition 7.2]{Santambrogio} using Assumption~\ref{assumption:W_convex}. For (iii), note that $V$ and $f_1$ are lower semi-continuous and bounded below thanks to Assumption~\ref{assump:f}, and so the result follows from \cite[Proposition 7.1]{Santambrogio}.
\end{proof}

\begin{proposition}[Uniform displacement convexity]\label{prop:dipl-convexity}
Let $\alpha,\beta > 0$. Fix $\gamma_0, \gamma_1\in\P_2\times\P_2$ and let Assumptions~\ref{assump:f}-\ref{assumption:W_convex} hold. Along any geodesic $(\gamma_s)_{s\in[0,1]}\in\P_2\times\P_2$ connecting $\gamma_0$ to $\gamma_1$, we have for all $s\in[0,1]$
    \begin{align}\label{eq:ddssG}
    \ddss G_a(\gamma_s) \geq \lambda_a \Wbar(\gamma_0,\gamma_1)^2 \,, \qquad 
    \lambda_a:= \lambda_1+\min(\lambda_2+\beta, \alpha\tilde\lambda)\,.
\end{align}
As a result, the functional $G_a:\P\times\P\to \R$ is uniformly displacement convex with constant $\lambda_a> 0$. 
\end{proposition}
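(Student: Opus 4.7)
The plan is to exploit the product structure of the geodesic in $\cWbar(\R^d)$ and split the energy into its four natural pieces, computing the second $s$-derivative of each along the geodesic.

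First I would recall that any Wasserstein-2 geodesic $(\gamma_s)=(\rho_s,\mu_s)$ in the product space $\P_2\times\P_2$ splits as a pair of geodesics in each factor: if $\nabla\phi_\rho$ and $\nabla\phi_\mu$ are the optimal Brenier maps with $(\nabla\phi_\rho)_\#\rho_0=\rho_1$ and $(\nabla\phi_\mu)_\#\mu_0=\mu_1$, then $\rho_s=[(1-s)\id+s\nabla\phi_\rho]_\#\rho_0$ and $\mu_s=[(1-s)\id+s\nabla\phi_\mu]_\#\mu_0$, and
$$
\Wbar(\gamma_0,\gamma_1)^2 = \int\|\nabla\phi_\rho(z)-z\|^2\d\rho_0(z) + \int\|\nabla\phi_\mu(x)-x\|^2\d\mu_0(x)\,.
$$
We may assume $G_a(\gamma_0),G_a(\gamma_1)<\infty$ (otherwise the inequality is trivial), so that $\rho_0,\rho_1$ are absolutely continuous and Brenier's theorem applies.

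Next I would bound the four pieces term by term. (i) For $\alpha KL(\rho_s\,|\,\rhot)$, McCann's classical displacement convexity result for internal-energy plus potential-energy functionals gives
$\frac{d^2}{ds^2}\alpha KL(\rho_s\,|\,\rhot)\ge \alpha\tilde\lambda\, W_2(\rho_0,\rho_1)^2$,
where the entropic part $\int\rho\log\rho$ contributes a nonnegative term and Assumption~\ref{assumption:convexity_of_ref_dist} contributes $\alpha\tilde\lambda$ through the log-concavity of $\rhot$. (ii) For $\W(\rho_s)$, the convexity and symmetry of $W$ from Assumption~\ref{assumption:W_convex} give the standard displacement convexity $\frac{d^2}{ds^2}\W(\rho_s)\ge 0$. (iii) For the coupling piece $\iint f_1\,\d\rho_s\,\d\mu_s$, I would pull back via the product push-forward to write it as $\iint f_1\bigl((1-s)z+s\nabla\phi_\rho(z),(1-s)x+s\nabla\phi_\mu(x)\bigr)\d\rho_0(z)\d\mu_0(x)$ and differentiate twice in $s$; since $\Hess f_1\succeq\lambda_1\Id_{2d}$ in the joint $(z,x)$-variable, this contributes at least $\lambda_1\bigl[W_2(\rho_0,\rho_1)^2+W_2(\mu_0,\mu_1)^2\bigr]=\lambda_1\Wbar(\gamma_0,\gamma_1)^2$. (iv) For $\int V\,\d\mu_s$, combining $\nabla^2_x f_2\succeq\lambda_2\Id_d$ with the quadratic regularizer gives $\nabla^2 V\succeq(\lambda_2+\beta)\Id_d$, so the same pull-back trick yields $\frac{d^2}{ds^2}\int V\d\mu_s\ge(\lambda_2+\beta)W_2(\mu_0,\mu_1)^2$.

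Summing the four lower bounds gives
$$
\ddss G_a(\gamma_s)\ge (\lambda_1+\alpha\tilde\lambda)W_2(\rho_0,\rho_1)^2 + (\lambda_1+\lambda_2+\beta)W_2(\mu_0,\mu_1)^2 \ge \lambda_a\,\Wbar(\gamma_0,\gamma_1)^2\,,
$$
which is \eqref{eq:ddssG}. Integrating twice in $s$ then yields the uniform displacement convexity estimate with constant $\lambda_a$. The only step that requires any care is item (iii): it is crucial that the Hessian bound in Assumption~\ref{assump:f} is imposed on the full joint variable $(z,x)$ rather than separately in $z$ and $x$, since this is exactly what produces the ``cross'' term $\lambda_1 W_2(\mu_0,\mu_1)^2$ from an expression originally involving only $\rho$-transport, and conversely; without this, one would not obtain the additive $\lambda_1$ in both components of the minimum defining $\lambda_a$. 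The only technical subtlety is justifying the differentiation under the integral sign, which follows from the $C^2$ regularity of $f_1,f_2,V$ together with the bounded-second-moment assumption on the endpoints.
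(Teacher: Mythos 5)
Your proposal is correct and follows essentially the same route as the paper: split $G_a$ into the KL, interaction, coupling, and potential pieces, push forward along the product Brenier geodesic, and use the joint Hessian bound $\Hess f_1\succeq\lambda_1\Id_{2d}$ to extract $\lambda_1\Wbar(\gamma_0,\gamma_1)^2$ from the coupling term before taking the minimum over the two remaining rates. The point you flag as crucial — that the convexity of $f_1$ in the joint variable $(z,x)$ is what yields the additive $\lambda_1$ in both components — is exactly the mechanism in the paper's argument.
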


\begin{proof}
Let $\gamma_0$ and $\gamma_1$ be two probability measures with bounded second moments. Denote by $\phi, \psi:\R^d\to\R$ the optimal Kantorovich potentials pushing $\rho_0$ onto $\rho_1$, and $\mu_0$ onto $\mu_1$, respectively:
\begin{align*}
    &\rho_1=\nabla \phi_\# \rho_0 \quad \text{ such that } \quad  W_2(\rho_0,\rho_1)^2 = \int_{\R^d} \|z-\nabla \phi(z)\|^2 \d \rho_0(z)\,,\\
    &\mu_1=\nabla \psi_\# \mu_0 \quad \text{ such that } \quad  W_2(\mu_0,\mu_1)^2 = \int_{\R^d} \|x-\nabla  \psi(x)\|^2 \d \mu_0(x)\,.
\end{align*}
The now classical results in \cite{Benamou-Brenier} guarantee that there always exists convex functions $\phi, \psi$ that satisfy the conditions above. Then the path $(\gamma_s)_{s\in[0,1]}=(\rho_s,\mu_s)_{s\in[0,1]}$ defined by
\begin{align*}
    \rho_s &= [(1-s)\id+ s \nabla_z \phi]_\#\rho_0 \,,\\
    \mu_s  &= [(1-s)\id+s\nabla_x \psi]_\#\mu_0
\end{align*}
is a $\Wbar$-geodesic from $\gamma_0$ to $\gamma_1$.

The first derivative of $\V$ along geodesics in the Wasserstein metric is given by
\begin{align*}
    \dds \V(\gamma_s) 
    = &\dds \left[\iint f_1((1-s)z +s\nabla\phi(z),(1-s)x +s\nabla\psi(x)) \,\d\rho_0(z)\d\mu_0(x)\right. \\
    &\hspace{0.7cm} \left.+ \int V((1-s)x +s\nabla\psi(x) ) \,\d\mu_0(x)
    \right]\\
    = &\iint \nabla_x f_1((1-s)z +s\nabla\phi(z),(1-s)x +s\nabla\psi(x))\cdot (\nabla\psi(x)-x) \,\d\rho_0(z)\d\mu_0(x) \\
    &\iint\nabla_z f_1((1-s)z +s\nabla\phi(z),(1-s)x +s\nabla\psi(x))\cdot (\nabla\phi(z)-z) \,
    \d\rho_0(z)\d\mu_0(x) \\
    & + \int \nabla_xV((1-s)x +s\nabla\psi(x) )\cdot (\nabla\psi(x)-x) \, \d\mu_0(x)\,,
\end{align*}
and taking another derivative we have
\begin{align*}
    \ddss \V(\gamma_s) =& -\iint \begin{bmatrix}(\nabla\psi(x)-x) \\ (\nabla\phi(z)-z) \end{bmatrix}^T \cdot D_s(z,x) \cdot \begin{bmatrix}(\nabla\psi(x)-x) \\ (\nabla\phi(z)-z) \end{bmatrix} \, \d\rho_0(z) \d\mu_0(x) \\
    &+
    \iint (\nabla\psi(x)-x)^T \cdot \nabla^2_{x}V((1-s)x +s\nabla\psi(x) ) \cdot(\nabla\psi(x)-x)\,\d\rho_0(z) \d\mu_0(x)\\
    &\geq \lambda_1 \Wbar(\gamma_0,\gamma_1)^2 + (\lambda_2+\beta) W_2(\mu_0,\mu_1)^2\,,
\end{align*}
where we denoted $D_s(z,x):= \text{Hess}(f_1)((1-s)z +s\nabla\phi(z),(1-s)x +s\nabla\psi(x))$, and the last inequality follows from Assumption~\ref{assump:f} and the optimality of the potentials $\phi$ and $\psi$.

Following \cite{carrillo_kinetic_2003,villani-OTbook-2003} and using Assumption~\ref{assumption:convexity_of_ref_dist}, the second derivatives of the diffusion term and the interaction term along geodesics are given by
\begin{align}\label{eq:classic_convexity}
    &\ddss KL(\rho_s|\rhot) 
    \geq \alpha\tilde\lambda \, W_2(\rho_0,\rho_1)^2\,,\qquad
    \ddss \W(\rho_s) \ge 0.
\end{align}
Putting the above estimates together, we obtain \eqref{eq:ddssG}.
\end{proof}
\begin{remark}
    Alternatively, one could assume strong convexity of $W$, which would improve the lower-bound on the second derivative along geodesics.
\end{remark}
\begin{proposition}(Ground state)\label{prop:existenceGa}
   Let Assumptions~\ref{assump:f}-\ref{assumption:W_convex} hold for $\alpha,\beta>0$. Then the functional $G_a:\P(\R^d)\times\P(\R^d)\to [0,\infty]$ admits a unique minimizer $\gamma_*=(\rho_*,\mu_*)$, and it satisfies $\rho_*\in\P_2(\R^d)\cap L^1(\R^d)$, $\mu_*\in\P_2(\R^d)$, and $\rho_*$ is absolutely continuous with respect to $\rhot$.
\end{proposition}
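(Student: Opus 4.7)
The plan is to apply the direct method in the calculus of variations, leveraging the lower semi-continuity (Lemma~\ref{lem:lsc}) and the uniform displacement convexity (Proposition~\ref{prop:dipl-convexity}) already established. Each of the four summands comprising $G_a$ is non-negative, by non-negativity of $KL$ and by Assumptions~\ref{assump:f} and~\ref{assumption:W_convex}, so the infimum $m := \inf G_a \ge 0$ is finite, and I take a minimizing sequence $\gamma_n = (\rho_n, \mu_n)$ with $G_a(\gamma_n) \to m$.

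The main obstacle is tightness: the uniform bound $G_a(\gamma_n) \le C$ must be converted into uniform second-moment bounds on $\rho_n$ and $\mu_n$. For the algorithm marginal, the quadratic contribution $\tfrac{\beta}{2}\int \|x - x_0\|^2 \d\mu_n(x) \le C$ immediately yields $\sup_n \int \|x\|^2 \d\mu_n < \infty$. For the population marginal, the strong log-concavity of $\rhot$ furnished by Assumption~\ref{assumption:convexity_of_ref_dist} triggers the classical Bakry--\'Emery Talagrand $T_2$ inequality $W_2(\rho_n, \rhot)^2 \le \tfrac{2}{\tilde\lambda} KL(\rho_n | \rhot)$; combined with $\alpha \, KL(\rho_n|\rhot) \le C$ and the fact that $\rhot \in \P_2$ (another consequence of strong log-concavity), the triangle inequality in $W_2$ yields $\sup_n \int \|z\|^2 \d\rho_n < \infty$. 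By Markov's inequality, both sequences are then tight.

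With tightness in hand, the Banach--Alaoglu theorem extracts weak-$*$ cluster points $\rho_*, \mu_*$, and Lemma~\ref{lem:narrowvsweakstar} upgrades the convergence to narrow convergence with probability-measure limits; Fatou's lemma passes the uniform second-moment bounds to the limit, so $\rho_*, \mu_* \in \P_2(\R^d)$. Weak-$*$ lower semi-continuity of $G_a$ (Lemma~\ref{lem:lsc}) then gives $G_a(\rho_*, \mu_*) \le \liminf_n G_a(\rho_n, \mu_n) = m$, showing that $\gamma_* = (\rho_*, \mu_*)$ is a minimizer. Finiteness of $KL(\rho_* | \rhot) < \infty$ forces $\rho_* \ll \rhot$, and since $\rhot \in L^1(\R^d)$ this yields $\rho_* \in L^1(\R^d)$ as well.

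For uniqueness, suppose toward contradiction that $\gamma_*$ and $\tilde\gamma_*$ are two distinct minimizers, and let $(\gamma_s)_{s \in [0,1]}$ be the $\Wbar$-geodesic joining them (which exists in the product Wasserstein space via optimal transport plans). Applying the uniform displacement convexity of Proposition~\ref{prop:dipl-convexity} at $s = 1/2$ gives $G_a(\gamma_{1/2}) \le m - \tfrac{\lambda_a}{8}\Wbar(\gamma_*, \tilde\gamma_*)^2 < m$, contradicting minimality. Hence the minimizer is unique. The only genuinely non-routine step is the tightness bound on $\rho_n$, where the strong log-concavity of $\rhot$ is essential to prevent mass from escaping to infinity under a mere $KL$ bound.
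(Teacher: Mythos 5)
Your proof is correct and follows the same overall skeleton as the paper's (direct method via Banach--Alaoglu and lower semi-continuity, then displacement convexity for uniqueness), but two of your key sub-steps are genuinely different and, in both cases, cleaner. For tightness of $(\rho_n)$, the paper bounds $-\log\rhot(z)\ge c_0+\tfrac{\tilde\lambda}{4}\|z\|^2$ pointwise and then absorbs $-\alpha\int\log\rhot\,\d\rho_n$ into $G_a(\gamma_n)$; this implicitly requires controlling the negative part of the entropy $\int\rho_n\log\rho_n$, a point the paper's displayed chain of inequalities glosses over. Your route instead invokes the Talagrand $T_2$ inequality $W_2(\rho_n,\rhot)^2\le \tfrac{2}{\tilde\lambda}KL(\rho_n\,|\,\rhot)$, valid under Assumption~\ref{assumption:convexity_of_ref_dist} by the Otto--Villani/Bakry--\'Emery theory, which converts the uniform $KL$ bound directly into a uniform second-moment bound without ever touching the entropy; the price is reliance on an external functional inequality rather than an elementary pointwise estimate, but the argument is airtight. (Your separate treatment of $\mu_n$ via non-negativity of all other terms in $G_a$ is also tidier than the paper's combined bound.) For uniqueness, the paper uses plain displacement convexity plus McCann's strict-convexity argument, whereas you exploit the \emph{uniform} displacement convexity quantitatively at $s=1/2$ to get $G_a(\gamma_{1/2})\le m-\tfrac{\lambda_a}{8}\Wbar(\gamma_*,\tilde\gamma_*)^2<m$ directly; this avoids the appeal to \cite{mccann_convexity_1997} entirely and is the sharper way to use Proposition~\ref{prop:dipl-convexity}. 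No gaps.
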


\begin{proof}
We show existence of a minimizer of $G_a$ using the direct method in the calculus of variations. Denote by $\gamma=(\rho,\mu)\in\P\times \P \subset \M \times \M$ a pair of probability measures as a point in the product space of Radon measures. Since $G_a\ge 0$ on $\P\times \P$ (see Assumption~\ref{assump:f}) and not identically $+\infty$ everywhere, there exists a minimizing sequence $(\gamma_n)\in \P\times\P$. Note that $(\gamma_n)$ is in the closed unit ball of the dual space of continuous functions vanishing at infinity  $(C_0(\R^d) \times C_0(\R^d))^*$ endowed with the dual norm $\norm{\gamma_n}_*=\sup \frac{|\int f\d \rho_n + \int g\d\mu_n|}{\norm{(f,g)}_\infty}$ over $f,g\in C_0(\R^d)$ with $\|(f,g)\|_\infty:=\|f\|_\infty + \|g\|_\infty \neq 0$.
  By the Banach-Alaoglu theorem \cite[Thm 3.15]{rudin_functional_1991}  there exists a limit $\gamma_*= (\rho_*,\mu_*)\in\M\times \M=(C_0\times C_0)^*$ and a convergent subsequence (not relabelled) such that $\gamma_n\weakstar \gamma_*$. In fact, since $KL(\rho_*\,|\,\rhot)<\infty$ it follows that $\rho_*$ is absolutely continuous with respect to $\rhot$, implying $\rho_*\in L^1(\R^d)$ thanks to Assumption~\ref{assumption:convexity_of_ref_dist}. Further, $\mu_*$ has bounded second moment, else we would have $\inf_{\gamma\in \P\times \P} G_a(\gamma)=\infty$ which yields a contradiction. It remains to show that $\int \d\rho_*=\int \d\mu_*=1$ to conclude that $\gamma_*\in\P\times\P$. To this aim, it is sufficient to show tightness of $(\rho_n)$ and $(\mu_n)$, preventing the escape of mass to infinity as we have $\int \d\rho_n=\int \d\mu_n=1$ for all $n\ge 1$. Tightness follows from Markov's inequality \cite{gosh} if we can establish uniform bounds on 
  the second moments, i.e. we want to show that there exists a constant $C>0$ independent of $n$ such that
  \begin{equation}\label{eq:tightness2}
      \int \|z\|^2\d\rho_n(z) + \int \|x\|^2\d\mu_n(x) <C\qquad \forall n\in\mathrm{N}\,.
  \end{equation}
    %%%%%%%%%%%%%%%%%%%%%% simplified argument
  %%%%%%%%%%%%%%%%%%%%%%%
To establish \eqref{eq:tightness2}, observe that thanks to Assumption~\ref{assumption:convexity_of_ref_dist}, 
there exists a constant $c_0\in\R$ (possibly negative) such that $-\log\rhot(z)\ge c_0 +\frac{\tilde \lambda}{4}\|z\|^2$ for all $z\in\R^d$.
Then
\begin{align*}
\frac{\alpha\tilde\lambda}{4} \int \|z\|^2\d\rho_n
&\le -\alpha c_0 -\alpha \int \log\rhot(z)\d\rho_n  
\end{align*}
Therefore, using $\int\d\rho_n=\int\d\mu_n=1$ and writing $\zeta:=\min\{\frac{\alpha\tilde\lambda}{4}, \frac{\beta}{2} \}>0$, we obtain the desired uniform upper bound on the second moments of the minimizing sequence,
  \begin{align*}
  \zeta\iint \left(\|z\|^2+\|x\|^2\right)\d\rho_n\d\mu_n
  &\le -\alpha c_0 -\alpha \int \log\rhot(z)\d\rho_n 
+ \beta \int \|x-x_0\|^2\d\mu_n + \beta \|x_0\|^2\\
&\le -\alpha c_0 + \beta \|x_0\|^2 + G_a(\gamma_n)  \\
&\le -\alpha c_0 + \beta \|x_0\|^2 + G_a(\gamma_1)   <\infty\,.
  \end{align*} 

This concludes the proof that the limit $\gamma_*$ satisfies indeed $\gamma_*\in\P\times\P$, and indeed $\rho_*\in\P_2(\R^d)$ as well. Finally, $\gamma_*$ is indeed a minimizer of $G_a$ thanks to weak-* lower-semicontinuity of $G_a$ following Lemma~\ref{lem:lsc}.

Next we show uniqueness using a contradiction argument. 
Suppose $\gamma_*=(\rho_*,\mu_*)$ and $\gamma_*'=(\rho_*',\mu_*')$ are minimizers of $G_a$. For $s\in[0,1]$, define 
$\gamma_s := ((1-s)\id+sT,(1-s)\id+sS)_\#\gamma_*$, where $T,S:\R^d\mapsto\R^d$ are the optimal transport maps such that $\rho_*' = T_\#\rho_*$ and $\mu_*' = S_\#\mu_*$.  By Proposition~\ref{prop:dipl-convexity} the energy $G_a$ is uniformly displacement convex, and so we have 
\begin{align*}
    G_a(\gamma_s)\leq (1-s)G_a(\gamma_*) + s G_a(\gamma_*') = G_a(\gamma_*).
\end{align*}
If $\gamma_*\neq \gamma_*'$ and $s\in (0,1)$, then strict inequality holds by applying similar arguments as in \cite[Proposition 1.2]{mccann_convexity_1997}.
 However, the strict inequality $ G_a(\gamma_s) < G_a(\gamma_*)$ for $\gamma_*\neq \gamma_*'$ is a contradiction to the minimality of $\gamma_*$. Hence, the minimizer is unique.
\end{proof}

\begin{remark}
    If $\lambda_1>0$, then the strict convexity of $f_1$ can be used to deduce uniqueness, and the assumptions on $-\log \rhot$  can be weakened from strict convexity to convexity. 
\end{remark}

\begin{corollary}\label{cor:ground-states-supp-Ga}
Let Assumptions~\ref{assump:f}-\ref{assumption:W_convex} hold. Any minimizer $\gamma_*=(\rho_*,\mu_*)$ of $G_a$ is a steady state for equation~\eqref{eq:dynamics_aligned} according to Definition~\ref{def:sstate1} and satisfies $\supp(\rho_*)=\supp(\rhot)$.
\end{corollary}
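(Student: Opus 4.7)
The plan is to derive the Euler–Lagrange system \eqref{eq:EL-Ga} from the minimality of $\gamma_*$ via mass-preserving perturbations, then use those equations both to identify $\supp(\rho_*)$ with $\supp(\rhot)$ and to upgrade the regularity of $\rho_*$ to what is required in Definition~\ref{def:sstate1}. The identities \eqref{eq:ss_rho}–\eqref{eq:ss_x} follow from \eqref{eq:EL-Ga} by taking gradients in the distributional sense.

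For the Euler–Lagrange step, I would test with convex combinations $\rho^\epsilon = (1-\epsilon)\rho_* + \epsilon \tilde\rho$ for $\tilde\rho\in\P_2(\R^d)\cap L^1(\R^d)$ with bounded relative entropy with respect to $\rhot$, and analogously $\mu^\epsilon = (1-\epsilon)\mu_*+\epsilon\tilde\mu$ for $\tilde\mu\in\P_2(\R^d)$. Computing $\tfrac{d}{d\epsilon}\big|_{\epsilon=0^+}G_a(\rho^\epsilon,\mu_*)\ge 0$ and using the convexity and growth bounds from Assumptions~\ref{assump:f}–\ref{assumption:W_convex} to justify differentiation under the integral (in particular, the term $\int(\tilde\rho-\rho_*)\log(\rho_*/\rhot)\,\d z$ is controlled because $KL(\rho_*|\rhot)<\infty$ by Proposition~\ref{prop:existenceGa} and the strong convexity of $-\log\rhot$), one obtains the standard variational inequalities
\[
\alpha\log\frac{\rho_*(z)}{\rhot(z)} + \int f_1(z,x)\,\d\mu_*(x) + (W\ast\rho_*)(z)\ge c_1\text{ on }\R^d, \ =c_1\text{ on }\supp(\rho_*),
\]
and similarly for $\mu_*$ with constant $c_2$, recovering \eqref{eq:EL-Ga}.

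The core step is the support equality $\supp(\rho_*)=\supp(\rhot)$. One inclusion, $\supp(\rho_*)\subseteq \supp(\rhot)$, is immediate from absolute continuity shown in Proposition~\ref{prop:existenceGa}; in any case Assumption~\ref{assumption:convexity_of_ref_dist} implies $\rhot>0$ everywhere so $\supp(\rhot)=\R^d$. For the reverse inclusion, suppose for contradiction there is a set $A\subset \supp(\rhot)$ of positive Lebesgue measure on which $\rho_*=0$. Take a smooth bump $\psi\ge 0$ supported in $A$ with $\int \psi =1$, and consider $\rho^\epsilon = (1-\epsilon)\rho_* + \epsilon\psi$. The key observation is that
\[
KL(\rho^\epsilon\,|\,\rhot) - KL(\rho_*\,|\,\rhot) = \epsilon\!\int_A \psi \log\!\frac{\epsilon\psi}{\rhot}\,\d z + (1-\epsilon)\log(1-\epsilon) + O(\epsilon),
\]
so that the KL contribution behaves like $\epsilon\log\epsilon\to 0^-$ superlinearly; meanwhile $\V(\rho^\epsilon,\mu_*)-\V(\rho_*,\mu_*)$ and $\W(\rho^\epsilon)-\W(\rho_*)$ are only $O(\epsilon)$ by Assumptions~\ref{assump:f} and \ref{assumption:W_convex}. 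For $\epsilon$ small enough the total change is strictly negative, contradicting minimality. This variational argument is the main subtle step; it must be executed carefully to ensure the $O(\epsilon)$ terms are controlled uniformly in the small parameter.

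Finally, with $\supp(\rho_*)=\R^d$ established, \eqref{eq:EL-Ga-rho} can be solved explicitly as
\[
\rho_*(z)=\rhot(z)\exp\!\left(\alpha^{-1}\Bigl[c_1 - \int f_1(z,x)\,\d\mu_*(x) - (W\ast\rho_*)(z)\Bigr]\right),
\]
which, together with $\log\rhot\in C^2$ (Assumption~\ref{assumption:convexity_of_ref_dist}), $f_1\in C^2$ (Assumption~\ref{assump:f}), $\mu_*\in\P_2$, and $\nabla W\in C^1$ with the growth bound of Assumption~\ref{assumption:W_convex}, yields $\rho_*\in L^\infty_{loc}\cap W^{1,2}_{loc}(\R^d)$ and $\nabla W\ast\rho_*\in L^1_{loc}(\R^d)$. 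Taking $\nabla_z$ of \eqref{eq:EL-Ga-rho} on $\supp(\rho_*)=\R^d$ and $\nabla_x$ of \eqref{eq:EL-Ga-mu} on $\supp(\mu_*)$ gives \eqref{eq:ss-Ga} in the distributional sense, so $\gamma_*$ meets all the conditions of Definition~\ref{def:sstate1}.
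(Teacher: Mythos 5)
Your proposal is correct, and for the regularity portion (deriving the Euler--Lagrange system, solving \eqref{eq:EL-Ga-rho} for $\rho_*$ as $\rhot$ times an exponential, and reading off $L^\infty_{loc}$, $W^{1,2}_{loc}$ and the distributional steady-state equations) it coincides with the paper's argument. Where you genuinely diverge is the support equality. The paper works directly from the rewritten Euler--Lagrange identity $\rho_*=c_1\rhot\exp[-\alpha^{-1}(\int f_1\,\d\mu_*+W\ast\rho_*)]$ and shows the exponential factor is finite (hence strictly positive) at every $z$, using continuity of $f_1$ and the quadratic upper bound on $W$ implied by Assumption~\ref{assumption:W_convex} together with $\rho_*\in\P_2$; this is quick but implicitly relies on extending the Euler--Lagrange relation (as a variational inequality) off $\supp(\rho_*)$. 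You instead run a direct perturbation argument: inserting a bump of mass $\epsilon$ on a positive-measure set where $\rho_*$ vanishes but $\rhot>0$ changes the entropy by $\epsilon\log\epsilon+O(\epsilon)$ while the potential and interaction terms change only by $O(\epsilon)$, contradicting minimality. This is the classical ``entropy forces full support'' argument; it is self-contained, avoids any appeal to the Euler--Lagrange conditions outside the support, and your observation that Assumption~\ref{assumption:convexity_of_ref_dist} forces $\supp(\rhot)=\R^d$ is consistent with the paper. The trade-off is that the paper's route simultaneously delivers the local $L^\infty$ bound on $\rho_*$ from the same formula, whereas your route establishes positivity first and still needs the exponential representation afterwards for regularity --- which you do supply. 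The only points to execute carefully in a full write-up are the ones you flag yourself: uniform control of the $O(\epsilon)$ terms (finiteness of $\int f_1(z,x)\,\d\mu_*(x)$ on the bump's support and of $\iint W(z-z')\psi(z)\,\d z\,\d\rho_*(z')$, both available from the assumptions and $\mu_*,\rho_*\in\P_2$), which the paper handles with essentially the same estimates.
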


  \begin{proof}
  By Proposition~\ref{prop:existenceGa}, we have $\rho_*, \mu_*\in \P_2$, as well as $\rho_*\in L^1_+$, $\|\rho_*\|_1=1$,  and that $\rho_*$ is absolutely continuous with respect to $\rhot$.  Since $W\in C^2(\R^d)$, it follows that $\nabla W \ast \rho_* \in L_{loc}^1$. In order to show that $\gamma_*$ is a steady state for equation~\eqref{eq:dynamics_aligned}, it remains to prove that $\rho_*\in W^{1,2}_{loc}\cap L^\infty_{loc}$. 
       As $\gamma^*$ is a minimizer, it is in particular a critical point, and therefore satisfies equations~\eqref{eq:EL-Ga}.
Rearranging, we obtain (for a possible different constant $c_1[\rho_*, \mu_*] \neq 0$) from \eqref{eq:EL-Ga-rho} that
\begin{align}\label{eq:EL-Ga-rewritten}
    \rho_*(z) = c_1[\rho_*,\mu_*] \rhot(z) \exp{\left[-\frac{1}{\alpha}\left( \int f_1(z,x)\,\mu_*(x) + W\ast\rho_*(z)\right)\right]}\qquad \text{ on } \supp(\rho_*)\,.
\end{align}
Then for any compact set $K\subset \R^d$,
    \begin{align*}
        \sup_{z\in K} \rho_*(z) \leq c_1[\rho_*,\mu_*] \sup_{z \in K} \rhot(z) \sup_{z \in K} \exp\left(-\frac{1}{\alpha}\left( \int f_1(z,x)\,\mu_*(x) \right)\right) \sup_{z \in K} \exp \left( -\frac{1}{\alpha} W \ast \rho_* \right).
    \end{align*}
    As $f_1\ge 0$ by Assumption~\ref{assump:f} and $W\ge 0$ by Assumption~\ref{assumption:W_convex}, the last two terms are finite. The first supremum is finite thanks to continuity of $\rhot$. Therefore $\rho_* \in L_{loc}^\infty$.
    To show that $\rho_*\in W_{loc}^{1,2}$, note that for any compact set $K\subset \R^d$, we have $\int_K |\rho_*(z)|^2 \d z < \infty $ as a consequence of $\rho_* \in L_{loc}^\infty$.
Moreover, defining $T[\gamma](z)\coloneqq  -\frac{1}{\alpha}\left( \int f_1(z,x)\,\mu(x) + W\ast\rho(z)\right)\le 0$, we have 
    \begin{align*}
        &\int_K |\nabla \rho_*|^2 \d z = c_1[\rho_*,\mu_*]^2 \int_K |\nabla \rhot  + \rhot \nabla T[\gamma_*]|^2\exp (2T[\gamma_*]) \d z \\
        &\quad\leq 2c_1[\rho_*,\mu_*]^2 \int_K |\nabla \rhot|^2 \exp (2T[\gamma_*]) \d z + 2c_1[\rho_*,\mu_*]^2 \int_K |\nabla T[\gamma_*]|^2 \rhot^2 \exp (2T[\gamma_*]) \d z\,,
    \end{align*}
    which is bounded noting that $\exp (2T[\gamma_*]) \le 1$ and that $T[\gamma_*](\cdot)$, $\nabla T[\gamma_*](\cdot)$ and $\nabla \rhot$ are in $L^\infty_{loc}$, where we used that $f_1, (\cdot,x), W(\cdot), \rhot(\cdot)\in C^1(\R^d)$ by Assumptions~\ref{assump:f}-\ref{assumption:W_convex}. We conclude that $\rho_*\in W^{1,2}_{loc}$, and indeed $(\rho_*,\mu_*)$ solves \eqref{eq:ss-Ga} in the sense of distributions as a consequence of \eqref{eq:EL-Ga}.

Next, we show that $\supp(\rho_*)=\supp(\rhot)$ using again the relation~\eqref{eq:EL-Ga-rewritten}.
Firstly, note that $\supp(\rho_*)
\subset \supp(\rhot)$ since $\rho_*$ is absolutely continuous with respect to $\rhot$. Secondly, we claim that $\exp{\left[-\frac{1}{\alpha}\left( \int f_1(z,x)\,\mu_*(x) + W\ast\rho_*(z)\right)\right]}>0$ for all $z\in \R^d$. In other words, we claim that $\int f_1(z,x)\,\mu_*(x)<\infty$ and $W\ast\rho_*(z) <\infty$ for all $z\in \R^d$. Indeed, for the first term, fix any $z\in\R^d$ and choose $R>0$ large enough such that $z\in B_R(0)$. Then, thanks to continuity of $f_1$ according to Assumption~\ref{assump:f}, we have 
$$\int f_1(z,x)\,\mu_*(x) \leq \sup_{z\in B_R(0)} \int f_1(z,x)\,\mu_*(x)<\infty\,.$$
For the second term, note that by Assumption~\ref{assumption:W_convex}, we have for any $z\in \R^d$ and $\epsilon>0$,
\begin{align*}
    W(z)&\le W(0) + \nabla W(z)\cdot z
     \le W(0) +\frac{1}{2\epsilon} \|\nabla W(z)\|^2   + \frac{\epsilon}{2} \|z\|^2\\
     &\le W(0) +\frac{D^2}{2\epsilon}  \left(1+\|z\|\right)^2  + \frac{\epsilon}{2} \|z\|^2
     \le W(0) +\frac{D^2}{\epsilon}  + \left(\frac{D^2}{\epsilon} +  \frac{\epsilon}{2} \right)\|z\|^2\\
     &=W(0) +\frac{D}{\sqrt{2}}  + \sqrt{2} D\|z\|^2\,,
\end{align*}
where the last equality follows by choosing the optimal $\epsilon= \sqrt{2} D$. We conclude that
\begin{align}
   W\ast\rho_*(z) &\leq  W(0) +\frac{D}{\sqrt{2}} +  \sqrt{2} D\int \|z-\tilde z\|^2\,\rho_*(\tilde z)\notag\\
   &\leq  W(0) +\frac{D}{\sqrt{2}} + 2 \sqrt{2} D \|z\|^2 + 2 \sqrt{2} D\int \|\tilde z\|^2\,\rho_*(\tilde z) \,,\label{eq:Wbound-rho*}
\end{align}
which is finite for any fixed $z\in\R^d$ thanks to the fact that $\rho_*\in\P_2(\R^d)$. Hence, $\supp(\rho_*)=\supp(\rhot)$.
  \end{proof}

\begin{remark}
    If we have in addition that $\rhot \in L^\infty(\R^d)$, then the minimizer $\rho_*$ of $G_a$ is in $L^\infty(\R^d)$ as well. This follows directly by bounding the right-hand side of \eqref{eq:EL-Ga-rewritten}.
\end{remark}

%%%%%%%%%%%%%%%%%%

The following inequality is referred to as HWI inequality and represents the key result to obtain convergence to equilibrium.

\begin{proposition}[HWI inequality]\label{prop:HWI}
Define the dissipation functional

\begin{align*}
    D_a(\gamma):= \iint |\nabla_{x,z}\delta_\gamma G_a(z,x)|^2\d\gamma(z,x)\,.
\end{align*}
Assume $\alpha,\beta>0$ and let $\lambda_a$ as defined in \eqref{eq:ddssG}. Let $\gamma_0,\gamma_1\in\P_2\times\P_2$ such that $G_a(\gamma_0), G_a(\gamma_1),D_a(\gamma_0)<\infty$. Then
    \begin{align}\label{eq:HWI_ineq}
        G_a(\gamma_0)-G_a(\gamma_1)\le \Wbar(\gamma_0,\gamma_1)\sqrt{D_a(\gamma_0)} - \frac{\lambda_a}{2} \,\Wbar(\gamma_0,\gamma_1)^2
    \end{align}
\end{proposition}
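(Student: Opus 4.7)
The plan is to adapt the classical HWI strategy of Otto--Villani to the product Wasserstein setting $\cWbar(\R^d)$ and to use the uniform displacement convexity already established in Proposition~\ref{prop:dipl-convexity}. Let $\phi,\psi:\R^d\to\R$ be Brenier potentials transporting $\rho_0$ onto $\rho_1$ and $\mu_0$ onto $\mu_1$ respectively, so that $\gamma_s=(\rho_s,\mu_s)$ with $\rho_s=[(1-s)\id+s\nabla\phi]_\#\rho_0$ and $\mu_s=[(1-s)\id+s\nabla\psi]_\#\mu_0$ is the $\Wbar$-geodesic from $\gamma_0$ to $\gamma_1$. The strategy has two ingredients: first, turn the second-order convexity inequality along $\gamma_s$ into a first-order statement connecting $G_a(\gamma_0), G_a(\gamma_1)$, and $g'(0^+)$ with $g(s):=G_a(\gamma_s)$; second, bound $-g'(0^+)$ in terms of $\sqrt{D_a(\gamma_0)}$ and $\Wbar(\gamma_0,\gamma_1)$ via Cauchy--Schwarz.

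For the first ingredient, Proposition~\ref{prop:dipl-convexity} gives $g''(s)\ge \lambda_a\,\Wbar(\gamma_0,\gamma_1)^2$ on $(0,1)$. Integrating twice yields
\begin{equation*}
    G_a(\gamma_1)-G_a(\gamma_0)\ge g'(0^+) + \frac{\lambda_a}{2}\Wbar(\gamma_0,\gamma_1)^2,
\end{equation*}
so it remains to show $-g'(0^+)\le \Wbar(\gamma_0,\gamma_1)\sqrt{D_a(\gamma_0)}$. For the second ingredient, combining the chain rule for first variations with the explicit displacement-interpolant structure (exactly the computation carried out in the proof of Proposition~\ref{prop:dipl-convexity}, but stopped after one derivative) identifies
\begin{equation*}
    g'(0^+)=\int \nabla_z \delta_\rho G_a[\gamma_0](z)\cdot(\nabla\phi(z)-z)\,\d\rho_0(z)+\int \nabla_x \delta_\mu G_a[\gamma_0](x)\cdot(\nabla\psi(x)-x)\,\d\mu_0(x).
\end{equation*}
Applying the Cauchy--Schwarz inequality in each integral separately, and using $W_2(\rho_0,\rho_1)^2=\int\|\nabla\phi-\id\|^2\d\rho_0$ and $W_2(\mu_0,\mu_1)^2=\int\|\nabla\psi-\id\|^2\d\mu_0$, followed by a final Cauchy--Schwarz in $\R^2$ against the vectors $(\|\nabla_z\delta_\rho G_a\|_{L^2(\rho_0)},\|\nabla_x\delta_\mu G_a\|_{L^2(\mu_0)})$ and $(W_2(\rho_0,\rho_1),W_2(\mu_0,\mu_1))$, produces
\begin{equation*}
    -g'(0^+)\le \sqrt{D_a(\gamma_0)}\,\Wbar(\gamma_0,\gamma_1).
\end{equation*}
Substituting into the convexity inequality above and rearranging yields \eqref{eq:HWI_ineq}.

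The main technical difficulty is the rigorous justification of the expression for $g'(0^+)$, in particular for the entropy piece $\alpha\,KL(\rho_s\,|\,\rhot)$, whose first variation involves $\log(\rho/\rhot)$ and is delicate near the boundary of the support of $\rho_0$. As in the original Otto--Villani proof, I would handle this by a regularization argument: approximate $\rho_0$ by smooth, compactly supported and bounded-below densities, for which the derivative calculation along displacement interpolants is classical, and then pass to the limit using lower semi-continuity of $G_a$ (Lemma~\ref{lem:lsc}) together with the finite-energy and finite-dissipation hypotheses $G_a(\gamma_0), D_a(\gamma_0)<\infty$; the integrability bounds on $\nabla W\ast \rho$ used in Corollary~\ref{cor:ground-states-supp-Ga} ensure that the interaction term behaves well under this approximation. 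The coupling terms $\iint f_1\d\rho\d\mu$ and $\int V\d\mu$ are regular enough by Assumptions~\ref{assump:f}--\ref{assumption:W_convex} that differentiation at $s=0$ along the geodesic is obtained directly via dominated convergence, so no further regularization is needed there.
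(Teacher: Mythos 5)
Your proposal is correct and follows essentially the same route as the paper: a second-order Taylor expansion along the $\Wbar$-geodesic using the uniform displacement convexity bound from Proposition~\ref{prop:dipl-convexity}, combined with a Cauchy--Schwarz estimate identifying $-g'(0^+)\le\sqrt{D_a(\gamma_0)}\,\Wbar(\gamma_0,\gamma_1)$ (the paper performs the Cauchy--Schwarz in one step on the product measure, while you split it into componentwise estimates followed by Cauchy--Schwarz in $\R^2$, which gives the same bound). The paper likewise reduces to smooth compactly supported densities and invokes an approximation argument for the general case, exactly as in your final paragraph.
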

\begin{proof}
    For simplicity, consider $\gamma_0,\gamma_1$ that have smooth Lebesgue densities of compact support. The general case can be recovered using approximation arguments. Let $(\gamma_s)_{s\in[0,1]}$ denote a $\Wbar$-geodesic between $\gamma_0,\gamma_1$. Following similar arguments as in \cite{carrillo_kinetic_2003} and \cite[Section 5]{otto_generalization_2000} and making use of the calculations in the proof of Proposition~\ref{prop:dipl-convexity}, we have
    \begin{align*}
    \left.\dds G_a(\gamma_s)\right|_{s=0} 
    &\ge \iint 
     \begin{bmatrix} \xi_1(z) \\ \xi_2(x)\end{bmatrix} \cdot
        \begin{bmatrix} (\nabla\phi(z)-z) \\ (\nabla\psi(x)-x)\end{bmatrix}\,\d\gamma_0(z,x)\,,
    \end{align*}
    where
    \begin{align*}
        \xi_1[\gamma_0](z)&:= \alpha \nabla_z\log\left(\frac{\rho_0(z)}{\rhot(z)}\right)  + \int\nabla_zf_1(z,x)\d\mu_0(x) + \int \nabla_zW(z-z')\d\rho_0(z')\,,\\
        \xi_2[\gamma_0](x)&:= \int\nabla_x f_1(z,x)\d\rho_0(z) + \nabla_x V(x)\,.
    \end{align*}
    Note that the dissipation functional can then be written as 
    $$
    D_a(\gamma_0)=\iint \left(|\xi_1(z)|^2 + |\xi_2(x)|^2\right)\d\gamma_0(z,x)\,.
    $$
    Using the double integral Cauchy-Schwarz inequality \cite{steele_cauchy-schwarz_2004}, we obtain
\begin{align*}
    \left.\dds G_a(\gamma_s)\right|_{s=0} 
    &\ge
 -\left(\sqrt{ \iint \norm{\begin{bmatrix}
        \xi_1 \\ \xi_2
    \end{bmatrix}}_2^2 \d \gamma_0}\right)\left(\sqrt{ \iint \norm{\begin{bmatrix}
        \nabla \phi(z)-z \\ \nabla\psi(x)-x 
    \end{bmatrix}}_2^2\d \gamma_0} \right)\\
    &= -\sqrt{D_a(\gamma_0)}\,\sqrt{\int \|\nabla\phi(z)-z\|^2 \d \rho_0 + \int \|\nabla\psi(x)-x\|^2 \d \mu_0  } \\
    &= -\sqrt{D_a(\gamma_0)} \,\Wbar(\gamma_0,\gamma_1)\,.
\end{align*}
Next, we compute a Taylor expansion of $G_a(\gamma_s)$ when considered as a function in $s$ and use the bound on $\ddss G_a$ from \eqref{eq:ddssG}:
\begin{align*}
    G_a(\gamma_1) &= G_a(\gamma_0) + \left.\dds G_a(\gamma_s)\right|_{s=0} + 
    \int_0^1 (1-t) \left.\left(\ddss G_a(\gamma_s)\right)\right|_{s=t}\,\d t\\
    &\geq G_a(\gamma_0) - \sqrt{D_a(\gamma_0)}\, \Wbar(\gamma_0,\gamma_1) + \frac{\lambda_a}{2} \Wbar(\gamma_0,\gamma_1)^2\,.
\end{align*}
\end{proof}

\begin{remark}
    The HWI inequality in Proposition~\ref{prop:HWI} immediately implies uniqueness of minimizers for $G_a$ in the set $\left\{\gamma\in\P\times\P\,:\, D_a(\gamma)<+\infty\right\}$. Indeed, if $\gamma_0$ is such that $D_a(\gamma_0)=0$, then for any other $\gamma_1$ in the above set we have $G_a(\gamma_0)\le G_a(\gamma_1)$ with equality if and only if $\Wbar(\gamma_0,\gamma_1)=0$.
\end{remark}

\begin{corollary}[Generalized Log-Sobolev inequality]\label{cor:logSob}
    Denote by $\gamma_*$ the unique minimizer of $G_a$. With $\lambda_a$ as defined in \eqref{eq:ddssG}, any product measure $\gamma\in\P_2\times\P_2$ such that $G(\gamma), D_a(\gamma)<\infty$ satisfies
    \begin{equation}\label{eq:logSob}
        D_a(\gamma)\ge 2\lambda_a \,G_a(\gamma|\gamma_*)\,.
    \end{equation}
\end{corollary}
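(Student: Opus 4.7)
The strategy is to deduce the generalized log-Sobolev inequality directly from the HWI inequality of Proposition~\ref{prop:HWI} by choosing the right reference point and then optimizing out the Wasserstein distance term via Young's inequality. This is the classical recipe going back to Otto and Villani for deriving log-Sobolev from displacement convexity plus HWI.

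Concretely, I would proceed as follows. First, observe that the minimizer $\gamma_*$ from Proposition~\ref{prop:existenceGa} satisfies $G_a(\gamma_*)<\infty$, since otherwise $G_a$ would be identically $+\infty$. Hence $\gamma_*$ is a valid choice for $\gamma_1$ in Proposition~\ref{prop:HWI}. Applying the HWI inequality with $\gamma_0=\gamma$ and $\gamma_1=\gamma_*$ gives
\begin{equation*}
G_a(\gamma\,|\,\gamma_*) \;=\; G_a(\gamma)-G_a(\gamma_*) \;\le\; \Wbar(\gamma,\gamma_*)\sqrt{D_a(\gamma)} \;-\; \frac{\lambda_a}{2}\,\Wbar(\gamma,\gamma_*)^2\,.
\end{equation*}

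Second, I would apply the weighted Young inequality $ab\le \frac{\lambda_a}{2} a^2 + \frac{1}{2\lambda_a} b^2$ to the product $\Wbar(\gamma,\gamma_*)\sqrt{D_a(\gamma)}$ with $a=\Wbar(\gamma,\gamma_*)$ and $b=\sqrt{D_a(\gamma)}$. This gives
\begin{equation*}
\Wbar(\gamma,\gamma_*)\sqrt{D_a(\gamma)} - \frac{\lambda_a}{2}\Wbar(\gamma,\gamma_*)^2 \;\le\; \frac{1}{2\lambda_a}\,D_a(\gamma)\,,
\end{equation*}
where the $\Wbar^2$ terms cancel exactly. Combining with the previous display yields $G_a(\gamma\,|\,\gamma_*)\le D_a(\gamma)/(2\lambda_a)$, which is equivalent to \eqref{eq:logSob}. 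Equivalently, one may simply note that the right-hand side of HWI is a concave quadratic in $\Wbar(\gamma,\gamma_*)$ maximized at $\Wbar = \sqrt{D_a(\gamma)}/\lambda_a$, with maximum value $D_a(\gamma)/(2\lambda_a)$.

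I do not anticipate any serious obstacle: all the heavy lifting has been done in Proposition~\ref{prop:HWI} (via uniform displacement convexity from Proposition~\ref{prop:dipl-convexity}) and Proposition~\ref{prop:existenceGa}. The only subtle point worth flagging is the finiteness of $G_a(\gamma_*)$ and the applicability of HWI, but both are immediate from the hypotheses and from the fact that $\gamma_*$ minimizes a functional that is not identically $+\infty$. Since $\lambda_a>0$ whenever $\alpha,\beta>0$ under Assumptions~\ref{assump:f}--\ref{assumption:W_convex}, the constant in the log-Sobolev inequality is strictly positive, which is exactly what will be needed in the Gronwall step for the convergence statement in Theorem~\ref{thm:PDE_aligned}.
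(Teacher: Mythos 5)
Your proposal is correct and follows essentially the same route as the paper: apply the HWI inequality with $\gamma_0=\gamma$ and $\gamma_1=\gamma_*$, then bound the right-hand side by its maximum over the Wasserstein distance, which is exactly the quadratic-maximization (equivalently, Young's inequality) step the paper uses to obtain $D_a(\gamma)/(2\lambda_a)$.
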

\begin{proof}
    This statement follows immediately from Proposition~\ref{prop:HWI}. Indeed, let $\gamma_1=\gamma_*$ and $\gamma_0=\gamma$ in \eqref{eq:HWI_ineq}. Then
    \begin{align*}
        G_a(\gamma\,|\,\gamma_*)&\le \Wbar(\gamma,\gamma_*)\sqrt{D_a(\gamma)} - \frac{\lambda_a}{2} \,\Wbar(\gamma,\gamma_*)^2\\
        &\le \max_{t\ge 0} \left(\sqrt{D_a(\gamma)} t - \frac{\lambda_a}{2} \,t^2\right) = \frac{D_a(\gamma)}{2\lambda_a}\,.
    \end{align*}
\end{proof}

\begin{corollary}[Talagrand inequality]\label{cor:Talagrand} 
Denote by $\gamma_*$ the unique minimizer of $G_a$. With $\lambda_a$ as defined in \eqref{eq:ddssG}, it holds
        $$\Wbar(\gamma,\gamma_*)^2 \leq \frac{2}{\lambda_a} G_a(\gamma\,|\,\gamma_*)$$
for any $\gamma\in\P_2\times\P_2$ such that $G_a(\gamma)<\infty$.
\end{corollary}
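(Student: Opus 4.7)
The natural strategy is to read off the Talagrand inequality directly from the HWI inequality of Proposition~\ref{prop:HWI}, exactly parallel to how the generalized log-Sobolev inequality was obtained in Corollary~\ref{cor:logSob}. Specifically, I would apply \eqref{eq:HWI_ineq} with the roles reversed: set $\gamma_0 = \gamma_*$ and $\gamma_1 = \gamma$. The key observation is that at the minimizer, the dissipation vanishes, $D_a(\gamma_*) = 0$, so the HWI inequality collapses to
\begin{equation*}
    G_a(\gamma_*) - G_a(\gamma) \;\le\; -\,\frac{\lambda_a}{2}\,\Wbar(\gamma_*,\gamma)^2\,,
\end{equation*}
and rearranging this gives exactly the desired bound $\Wbar(\gamma,\gamma_*)^2 \le \tfrac{2}{\lambda_a} G_a(\gamma\,|\,\gamma_*)$.

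The step that requires a little care is justifying $D_a(\gamma_*) = 0$ so that the HWI inequality is applicable in this direction. For this I would invoke Proposition~\ref{prop:existenceGa} and Corollary~\ref{cor:ground-states-supp-Ga}: the unique minimizer $\gamma_* = (\rho_*,\mu_*)$ satisfies the Euler-Lagrange equations \eqref{eq:EL-Ga}, which state that $\delta_\rho G_a[\rho_*,\mu_*]$ is constant on $\supp(\rho_*)$ and $\delta_\mu G_a[\rho_*,\mu_*]$ is constant on $\supp(\mu_*)$. Consequently $\nabla_z \delta_\rho G_a = 0$ $\rho_*$-a.e. and $\nabla_x \delta_\mu G_a = 0$ $\mu_*$-a.e., and integrating against $\gamma_*$ yields $D_a(\gamma_*) = 0$ as claimed. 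This also ensures that the hypothesis $D_a(\gamma_0) < \infty$ in Proposition~\ref{prop:HWI} is met, so HWI is legitimately available with the chosen base point.

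An alternative, essentially equivalent, route that avoids invoking HWI entirely is to use uniform displacement convexity from Proposition~\ref{prop:dipl-convexity} along the $\Wbar$-geodesic $(\gamma_s)$ from $\gamma_*$ to $\gamma$: the inequality
\begin{equation*}
   G_a(\gamma_s) \le (1-s) G_a(\gamma_*) + s\, G_a(\gamma) - s(1-s)\tfrac{\lambda_a}{2}\Wbar(\gamma_*,\gamma)^2
\end{equation*}
combined with $G_a(\gamma_s) \ge G_a(\gamma_*)$ (by minimality) yields, after dividing by $s$ and letting $s\to 0^+$, the bound $\tfrac{\lambda_a}{2}\Wbar(\gamma_*,\gamma)^2 \le G_a(\gamma\,|\,\gamma_*)$. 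I would mention this as a secondary route but present the HWI argument as the primary one, since it keeps the exposition uniform with the preceding Corollary~\ref{cor:logSob}.

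The main (mild) obstacle is simply the verification that $D_a(\gamma_*) = 0$ — technically one should note that the gradient of the first variation is defined $\gamma_*$-a.e. on the support, and that the constants appearing in \eqref{eq:EL-Ga} may differ on disconnected components of the support but still have zero gradient there. Once this is dispatched, the rest is a one-line rearrangement of the HWI estimate, so the proof itself is very short.
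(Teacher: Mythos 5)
Your primary argument is exactly the paper's proof: apply the HWI inequality of Proposition~\ref{prop:HWI} with $\gamma_0=\gamma_*$ and $\gamma_1=\gamma$, use $D_a(\gamma_*)=0$, and rearrange. The extra justification of $D_a(\gamma_*)=0$ via the Euler--Lagrange equations and the alternative displacement-convexity route are sound but not needed beyond what the paper states.
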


\begin{proof}
    This is also a direct consequence of Proposition~\ref{prop:HWI} by setting $\gamma_0=\gamma_*$ and $\gamma_1=\gamma$. Then $G_a(\gamma_*)<\infty$ and $D_a(\gamma_*)=0$, and the result follows.
\end{proof}

\begin{proof}[Proof of Theorem~\ref{thm:PDE_aligned}]
The entropy term $\int\rho\log\rho$ produces diffusion in $\rho$ for the corresponding PDE in \eqref{eq:dynamics_aligned}. As a consequence, solutions $\rho_t$ to \eqref{eq:dynamics_aligned} and minimizers $\rho^*$ for $G_a$ have to be $L^1$ functions. As there is no diffusion for the evolution of $\mu_t$, solutions may have a singular part. In fact, for initial condition $\mu_0=\delta_{x_0}$, the corresponding solution will be of the form $\mu_t=\delta_{x(t)}$, where $x(t)$ solves the ODE \eqref{eq:algorithm_dynamics} with initial condition $x_0$. This follows from the fact that the evolution for $\mu_t$ is a transport equation (also see Section~\ref{sec:structure} for more details). 
Results (a) and (b) are the statements in Proposition~\ref{prop:existenceGa}, Corollary~\ref{cor:ground-states-supp-Ga} and Corollary~\ref{cor:Talagrand}. To obtain (c), we differentiate the energy $G_a$ along solutions $\gamma_t$ to the equation \eqref{eq:dynamics_aligned}:
\begin{align*}
    \frac{\d}{\d t} G_a(\gamma_t)
    &= \int \delta_\rho G_a [\gamma_t](z)\partial_t\rho_t\d z
    + \int \delta_\mu G_a [\gamma_t](x)\partial_t\mu_t\d x\\
    &= -\int \left\|\nabla_z\delta_\rho G_a [\gamma_t](z)\right\|^2\d\rho_t(z)
    -\int \left\|\nabla_x\delta_\mu G_a [\gamma_t](x)\right\|^2\d\mu_t(x)\\
    &= -D_a(\gamma_t) 
    \le -2\lambda_a G_a(\gamma_t\,|\,\gamma_*)\,,
\end{align*}
where the last bound follows from Corollary~\ref{cor:logSob}. Applying Gronwall's inequality, we immediately obtain decay in energy,
\begin{equation*}
    G_a(\gamma_t\,|\,\gamma_*)\le e^{-2\lambda_a t} G_a(\gamma_0\,|\,\gamma_*)\,.
\end{equation*}
Finally, applying Talagrand's inequality (Corollary~\ref{cor:Talagrand}), the decay in energy implies decay in the product Wasserstein metric,
\begin{equation*}
    \Wbar(\gamma_t,\gamma_*) \le c e^{-\lambda_a t}
\end{equation*}
where $c>0$ is a constant only depending on $\gamma_0$, $\gamma_*$ and the parameter $\lambda_a$.
\end{proof}

%%%%%%%%%%%%%%%%%%%%%%%%%%%%%%%%%%%%%%%%%
\section{Proof of Theorem~\ref{thm:convergence_fast_mu}}\label{proof_thm_2}

In the case of competing objectives, we rewrite the energy $G_c(\rho,x):\P(\R^d) \times \R^d \mapsto [-\infty,\infty]$ as follows:
\begin{align*}
    G_c(\rho,x) &= \int f_1(z,x) \d \rho(z) + \int f_2(z,x) \d \rhob(z)  +\frac{\beta}{2} \norm{x-x_0}^2 -\Pt(\rho)\,,
\end{align*}
where
\begin{align*}
 \Pt(\rho)&:=\alpha KL(\rho|\rhot) + \frac{1}{2}\int \rho W \ast \rho\,.
\end{align*}
Note that for any fixed $\rho\in\P$, the energy $G_c(\rho,\cdot)$ is strictly convex in $x$, and therefore has a unique minimizer.
Define the best response by 
$$
\bx(\rho) \coloneqq \amin_{\bar{x}} G_c(\rho,\bar{x})
$$
and denote $G_b(\rho) \coloneqq G_c (\rho,\bx(\rho))$. 
We begin with auxiliary results computing the first variations of the best response $\bx$ and then the different terms in $G_b(\rho)$ using Definition~\ref{def:first_variation}.

\begin{lemma}[First variation of the best response]\label{lem:first_variation_b}
    The first variation of the best response of the classifier at $\rho$ (if it exists) is 
    \begin{align*}
        \delta_\rho \bx[\rho](z) = -Q(\rho)^{-1} \nabla_x f_1(z,\bx(\rho)) \quad \text{ for almost every } z\in \R^d\,,
    \end{align*}
    where $Q(\rho)\succeq (\beta+\lambda_1+\lambda_2)\Id_d$ is a symmetric matrix, constant in $z$ and $x$, defined as $$Q(\rho)\coloneqq \beta \Id_d+\int \nabla_x^2 f_1(z,\bx(\rho)) \d \rho(z)  +\int \nabla_x^2 f_2(z,\bx(\rho))\d\rhob(z)\,.$$
    In particular, we then have for any $\psi\in C_c^\infty(\R^d)$ with $\int\psi\,\d z=0$ that
    \begin{equation*}
        \lim_{\epsilon\to 0} \frac{1}{\epsilon} \left\| \bx[\rho+\epsilon\psi] - \bx[\rho]-\epsilon \int \delta_\rho \bx[\rho](z)\psi(z)\d z\right\| = 0\,.
    \end{equation*}
\end{lemma}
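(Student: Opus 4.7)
The plan is to exploit the first-order optimality condition characterizing $\bx(\rho)$ and differentiate it implicitly with respect to $\rho$. Since $G_c(\rho,\cdot)$ is strictly convex in $x$ for every fixed $\rho$ (thanks to Assumption~\ref{assump:f} and the regularizer $\frac{\beta}{2}\|x-x_0\|^2$), the minimizer $\bx(\rho)$ is uniquely characterized by the stationarity condition
\begin{equation*}
    F(\rho,\bx(\rho)):=\int \nabla_x f_1(z,\bx(\rho))\,\d\rho(z) + \int \nabla_x f_2(z,\bx(\rho))\,\d\rhob(z) + \beta(\bx(\rho)-x_0)=0\,.
\end{equation*}
The idea is to view this as an implicit equation $F(\rho,x)=0$ and apply the implicit function theorem: the $x$-derivative of $F$ at $(\rho,\bx(\rho))$ is precisely $Q(\rho)$, and the strong convexity estimates will guarantee invertibility.

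First, I would record the lower bound on $Q(\rho)$. From Assumption~\ref{assump:f}, $\mathrm{Hess}(f_1)\succeq \lambda_1\Id_{2d}$ implies $\nabla_x^2 f_1(z,x)\succeq \lambda_1\Id_d$ pointwise, and similarly $\nabla_x^2 f_2\succeq \lambda_2\Id_d$. Integrating against probability measures preserves the inequality, so
\begin{equation*}
    Q(\rho) \succeq \beta\Id_d + \lambda_1\Id_d + \lambda_2\Id_d = (\beta+\lambda_1+\lambda_2)\Id_d\,.
\end{equation*}
Symmetry of $Q(\rho)$ is immediate since each Hessian $\nabla_x^2 f_i$ is symmetric. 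In particular, $Q(\rho)$ is invertible with $\|Q(\rho)^{-1}\|\le(\beta+\lambda_1+\lambda_2)^{-1}$.

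Next, fix $\psi\in C_c^\infty(\R^d)$ with $\int \psi\,\d z=0$ and abbreviate $x_\epsilon:=\bx(\rho+\epsilon\psi)$, $x_0:=\bx(\rho)$. By assumed differentiability, $x_\epsilon\to x_0$ as $\epsilon\to 0$. Writing the optimality condition at the perturbed measure and subtracting that at $\rho$ yields
\begin{align*}
    0 &= \int \nabla_x f_1(z,x_\epsilon)\,\d(\rho+\epsilon\psi)(z) + \int \nabla_x f_2(z,x_\epsilon)\,\d\rhob(z) + \beta(x_\epsilon-x_0^{\mathrm{ref}})\\
    &\quad - \int \nabla_x f_1(z,x_0)\,\d\rho(z) - \int \nabla_x f_2(z,x_0)\,\d\rhob(z) - \beta(x_0-x_0^{\mathrm{ref}})\,.
\end{align*}
Performing a first-order Taylor expansion of $\nabla_x f_i(z,x_\epsilon)$ around $x_0$ and isolating the linear terms in $(x_\epsilon-x_0)$ and $\epsilon$, this rearranges to
\begin{equation*}
    Q(\rho)(x_\epsilon-x_0) = -\epsilon\int \nabla_x f_1(z,x_0)\psi(z)\,\d z + o(\epsilon) + o(\|x_\epsilon-x_0\|)\,.
\end{equation*}
Multiplying by $Q(\rho)^{-1}$ and using the uniform bound $\|Q(\rho)^{-1}\|\le(\beta+\lambda_1+\lambda_2)^{-1}$ shows that $\|x_\epsilon-x_0\|=O(\epsilon)$, which upgrades the remainder to pure $o(\epsilon)$ and yields
\begin{equation*}
    \lim_{\epsilon\to 0}\frac{x_\epsilon-x_0}{\epsilon} = -Q(\rho)^{-1}\int \nabla_x f_1(z,x_0)\psi(z)\,\d z\,.
\end{equation*}
Comparing with Definition~\ref{def:first_variation} identifies $\delta_\rho \bx[\rho](z)=-Q(\rho)^{-1}\nabla_x f_1(z,\bx(\rho))$, and the stated limit formulation follows directly.

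The main obstacle is controlling the remainder terms in the Taylor expansion of $\nabla_x f_i(z,x_\epsilon)$ uniformly in $z$ while integrating against $\rho$ and $\rhob$; this requires a mild bound on $\nabla_x^2 f_i$ to apply dominated convergence (readily available from Assumption~\ref{assumption:large_regularizer_rho} for $f_1$ and from a local bound on $\nabla_x^2 f_2$ near $\bx(\rho)$ through Assumption~\ref{assump:f}). Once that is in place, the argument is a standard implicit-function-theorem computation packaged in the language of first variations.
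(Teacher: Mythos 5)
Your proof is correct and follows essentially the same route as the paper: implicit differentiation of the first-order optimality condition $\nabla_x G_c(\rho+\epsilon\psi,\bx(\rho+\epsilon\psi))=0$, identification of the $x$-Hessian of $G_c$ with $Q(\rho)$, and inversion via the lower bound $(\beta+\lambda_1+\lambda_2)\Id_d$ from Assumption~\ref{assump:f}; your Taylor-expansion packaging even makes the remainder control needed for the stated limit more explicit than the paper's formal computation. One small slip: in your closing remark the uniform bound you invoke from Assumption~\ref{assumption:large_regularizer_rho} concerns $\nabla_z^2 f_1$, not $\nabla_x^2 f_1$, so the domination argument for the $x$-Hessian remainder would have to come from continuity of $\nabla_x^2 f_i$ on a compact neighborhood of $\bx(\rho)$ instead.
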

\begin{proof}
Let $\psi\in C_c^\infty(\R^d)$ with $\int\psi\,\d z=0$ and fix $\epsilon>0$. Any minimizer of $G_c(\rho+\epsilon\psi,x)$ for fixed $\rho$ must satisfy 
\begin{align*}
    \nabla_x G_c(\rho+\epsilon \psi,b(\rho+\epsilon \psi)) = 0\,.
\end{align*} 
Differentiating in $\epsilon$, we obtain
\begin{align}\label{eq:depsGc}
   \int \delta_\rho \nabla_x G_c[\rho+\epsilon\psi,b(\rho+\epsilon\psi)]\psi(z)\,\d z 
   +  \nabla_x^2 G_c(\rho+\epsilon\psi,b(\rho+\epsilon\psi)) \int \delta_\rho b[\rho+\epsilon\psi](z) \psi(z)\,\d z= 0\,.
\end{align} 
Next, we explicitly compute all terms involved in \eqref{eq:depsGc}.
Computing the derivatives yields
\begin{align*}
     \nabla_x G_c(\rho,x) &= \int \nabla_x f_1(z,x) \d \rho(z)  +\int \nabla_x f_2(z,x) \d \rhob(z) + \beta( x- x_0) \\
    \delta_\rho \nabla_x G_c [\rho,x] (z) &= \nabla_x f_1(z,x) \\
    \nabla_x^2 G_c(\rho,x) &=\int \nabla_x^2 f_1(z,x) \d \rho(z)  + \int \nabla_x^2 f_2(z,x) \d \rhob(z) + \beta \Id_d.
\end{align*}
Note that $\nabla_x^2 G_c$ is invertible by Assumption~\ref{assump:f}, which states that $f_1$ and $f_2$ have positive-definite Hessians. Inverting this term and substituting these expressions into \eqref{eq:depsGc} for $\epsilon=0$ gives 
\begin{align*}
    \int \delta_\rho b[\rho](z)\psi(z)\,\d z&= -\left[\beta \Id_d+\int \nabla_x^2 f_1(z,\bx(\rho)) \d \rho(z)  + \int \nabla_x^2 f_2(z,\bx(\rho)) \d \rhob(z) \right]^{-1}  \int \nabla_x f_1(z,\bx(\rho))\psi(z)\,\d z \\
    &=-\int Q(\rho)^{-1}  \nabla_x f_1(z,\bx(\rho))\psi(z)\,\d z \,.
\end{align*}
Finally, the lower bound on $Q(\rho)$ follows thanks to Assumption~\ref{assump:f}.
\end{proof}

\begin{remark}
    If we include the additional assumption that $f_i \in C^3(\R^d \times \R^d;[0,\infty))$ for $i=1,2$, then the Hessian of $b[\rho]$ is well-defined. More precisely, the Hessian is given by
    \begin{align*}
        \frac{\d^2}{\d \epsilon^2} b[\rho + \epsilon \psi]|_{\epsilon=0} = Q(\rho)^{-1} \left( \frac{\d}{\d \epsilon} Q(\rho+\epsilon \psi)|_{\epsilon=0} + \int \nabla_x^2 f_1(z,b[\rho])\psi(z) \d z \right) Q(\rho)^{-1} u[\rho,\psi]
    \end{align*}
    where $u[\rho,\psi] = \int \nabla_x f_1(z,b[\rho]) \psi(z) \d z$ and \begin{align*}
        \frac{\d}{\d \epsilon}Q_{ij}(\rho+\epsilon \psi)|_{\epsilon=0} &= \int \pxi \partial_{x_j} f_1(z,b[\rho]) \psi(z) \d z - \int \pxi \partial_{x_j} \nabla_x f_1(z,b[\rho]) \psi(z) \rho(z) \d z\, Q(\rho)^{-1} u[\rho,\psi] \\
        &\quad - \int \pxi \partial_{x_j} \nabla_x f_2(z,b[\rho]) \psi(z) \rhob(z) \d z\, Q(\rho)^{-1} u[\rho,\psi].
    \end{align*}
    Therefore, we can Taylor expand $\bx[\rho]$ up to second order and control the remainder term of order $\epsilon^2$.
\end{remark}

\begin{lemma}[First variation of $G_b$]\label{lem:first_variations}
The first variation of $G_b$ is given by
\begin{equation*}
    \delta_\rho G_b[\rho](z) = h_1(z)+h_2(z)+\beta h_3(z) - \delta_\rho \Pt[\rho](z)\,,
\end{equation*}
where
    \begin{align*}
        &h_1(z):=\frac{\delta}{\delta \rho} \left(\int f_1(\tilde z,\bx(\rho)) \d \rho(\tilde z)\right)(z)
        = \< \int \nabla_x f_1(\tilde z,\bx(\rho))\d \rho(\tilde z) , \frac{\delta b}{\delta \rho}[\rho](z)> + f_1(z,\bx(\rho))\,,\\
        &h_2(z):=\frac{\delta}{\delta \rho} \left(\int f_2(\tilde z,\bx(\rho)) \d \rhob(\tilde z)\right)(z)
        =\<\int \nabla_x f_2(\tilde z,\bx(\rho)) \d \rhob(\tilde z), \frac{\delta b}{\delta \rho}[\rho](z)>\,,\\
        &h_3(z):=\frac{1}{2}\frac{\delta}{\delta \rho} \|\bx(\rho)-x_0\|^2
        =\<\bx(\rho)-x_0,\frac{\delta b}{\delta \rho}[\rho](z)>\,,
    \end{align*}
    and 
    \begin{align*}
        \delta_\rho \Pt[\rho](z) = \alpha\log (\rho(z)/\rhot(z)) + (W \ast \rho)(z)\,.
    \end{align*}
\end{lemma}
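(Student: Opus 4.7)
The plan is to compute $\delta_\rho G_b[\rho]$ directly from Definition~\ref{def:first_variation} by decomposing
\begin{equation*}
G_b(\rho) = \underbrace{\int f_1(\tilde z,\bx(\rho))\d\rho(\tilde z)}_{=:A_1(\rho)} + \underbrace{\int f_2(\tilde z,\bx(\rho))\d\rhob(\tilde z)}_{=:A_2(\rho)} + \underbrace{\frac{\beta}{2}\|\bx(\rho)-x_0\|^2}_{=:A_3(\rho)} - \Pt(\rho)
\end{equation*}
and computing the first variation of each summand separately. For a test function $\psi\in C_c^\infty(\R^d)$ with $\int\psi\,\d z=0$, I will substitute $\rho+\epsilon\psi$, expand to first order in $\epsilon$, divide by $\epsilon$, and take $\epsilon\to 0$. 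The critical input is the previous lemma, which guarantees that $\bx[\rho+\epsilon\psi]=\bx[\rho]+\epsilon\,\Delta b+o(\epsilon)$ where $\Delta b:=\int \delta_\rho \bx[\rho](z)\psi(z)\d z$.

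For the three terms involving $\bx$, the chain rule yields the proposed formula directly. Specifically, for $A_1$ there are two contributions coming from the perturbation of $\rho$: an explicit one from the outer measure, giving $\int f_1(z,\bx(\rho))\psi(z)\d z$, and an implicit one from $\bx$, giving $\langle\int\nabla_x f_1(\tilde z,\bx(\rho))\d\rho(\tilde z),\Delta b\rangle$ via a $C^2$ Taylor expansion of $f_1$ in its second argument together with $\|\Delta b\|=O(1)$. For $A_2$ only the implicit $\bx$-contribution appears since $\rhob$ is fixed, yielding $\langle\int\nabla_x f_2(\tilde z,\bx(\rho))\d\rhob(\tilde z),\Delta b\rangle$. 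For $A_3$ expanding the squared norm gives $\beta\langle \bx(\rho)-x_0,\Delta b\rangle$. In each case I can swap $\langle \cdot,\Delta b\rangle = \langle\cdot,\int \delta_\rho \bx[\rho](z)\psi(z)\d z\rangle$ with the integral over $z$ by linearity of the inner product, so that the resulting expression can be read off as $\int h_i(z)\psi(z)\d z$ with $h_i$ as claimed.

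The $\Pt$ part is standard: the first variation of $\alpha KL(\rho\,|\,\rhot)$ is $\alpha\log(\rho/\rhot)$ (the additive constant $+\alpha$ drops out against $\int\psi=0$), and the first variation of $\tfrac12\int\rho\,W\ast\rho$ is $W\ast\rho$ thanks to the symmetry $W(-z)=W(z)$ in Assumption~\ref{assumption:W_convex}, which ensures the two cross terms in the expansion of $(\rho+\epsilon\psi)\,W\ast(\rho+\epsilon\psi)$ agree. Summing these with the signs in $G_b=A_1+A_2+A_3-\Pt$ produces exactly the stated formula.

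The main technical subtlety is justifying that the remainder in the composition $A_i(\rho+\epsilon\psi)-A_i(\rho)$ is genuinely $o(\epsilon)$ and not merely $o(1)$. This requires combining three first-order expansions at once: the Taylor expansion of $f_1(\tilde z,\cdot)$ and $f_2(\tilde z,\cdot)$ around $\bx(\rho)$ (valid since $f_1,f_2\in C^2$ by Assumption~\ref{assump:f}), the first-order expansion of the squared norm, and the Fréchet-type bound on $\bx$ stated in the previous lemma. Because the test function $\psi$ has compact support, all remainder integrals are finite, and uniform bounds on $\nabla_x^2 f_1$ and $\nabla_x^2 f_2$ on the compact set $\{\bx(\rho)+t\Delta b:t\in[0,\epsilon]\}$ for small $\epsilon$ yield the desired $o(\epsilon)$ control. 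Once this is in place the identification of $\delta_\rho G_b[\rho]$ as the stated sum is immediate.
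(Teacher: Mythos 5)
Your proposal is correct and follows essentially the same route as the paper's proof: decompose $G_b$ into the three $\bx$-dependent terms plus $-\Pt$, expand each along $\rho+\epsilon\psi$ using the first-order expansion of $\bx$ from Lemma~\ref{lem:first_variation_b} together with the Taylor expansion of $f_i(\tilde z,\cdot)$, and identify the linear-in-$\psi$ terms. Your additional care about the $o(\epsilon)$ remainder and the symmetry of $W$ only makes explicit what the paper leaves as "direct computation."
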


\begin{proof}
We begin with general expressions for Taylor expansions of $\bx:\P(\R^d)\to\R^d$ and $f_i(z,\bx(\cdot)):\P(\R^d)\to\R$ for $i=1, 2$ around $\rho$. 
Let $\psi \in \mathcal{T}$ with $\mathcal{T}=\{\psi:\int \psi(z) \d z =0\}$. Then
\begin{align}\label{eq:Taylor1}
    b(\rho+\epsilon \psi) = \bx(\rho) + \epsilon \int \frac{\delta b}{\delta \rho}[\rho](z')\psi(z') \d z' + O(\epsilon^2)
\end{align}
and 
\begin{align}\label{eq:Taylor2}
    f_i(z,b(\rho+\epsilon \psi)) = f_i(z,\bx(\rho)) + \epsilon \<\nabla_x f_i(z,\bx(\rho)),\int \frac{\delta b}{\delta \rho}[\rho](z')\psi(z') \d z'> + O(\epsilon^2)\,.
\end{align}
We compute explicitly each of the first variations:
\begin{enumerate}
    \item[(i)] Using \eqref{eq:Taylor2}, we have
    \begin{align*}
    \int \psi(z) h_1(z) \d z &= \lim_{\epsilon \rightarrow 0} \frac{1}{\epsilon} \bigg[ \int f_1(z, b(\rho + \epsilon \psi))(\rho(z) + \epsilon \psi(z))\d z-\int f_1(z,\bx(\rho))\rho(z) \d z \bigg] \\
    &=  \< \int \nabla_x f_1 (z,\bx(\rho)) \d \rho(z) ,\int \frac{\delta \bx(\rho)}{\delta \rho}[\rho](z')\psi(z')\d z' > + \int f_1(z,\bx(\rho)) \psi(z) \d z \\
    &= \int\< \int \nabla_x f_1 (z,\bx(\rho)) \d \rho(z) , \frac{\delta \bx(\rho)}{\delta \rho}[\rho](z') > \psi(z')\d z' + \int f_1(z,\bx(\rho)) \psi(z) \d z \\
    \Rightarrow
    h_1(z)&= \< \int \nabla_x f_1(\tilde z,\bx(\rho))\d \rho(\tilde z) , \frac{\delta b}{\delta \rho}[\rho](z)> + f_1(z,\bx(\rho))\,.
\end{align*}
\item[(ii)] Similarly, using again \eqref{eq:Taylor2},
\begin{align*}
     \int \psi(z) h_2(z) \d z 
     &= \lim_{\epsilon \rightarrow 0} \frac{1}{\epsilon} \bigg[ \int f_2(z, b(\rho + \epsilon \psi))\d\rhob(z)-\int f_2(z,\bx(\rho))\rhob(z) \d z \bigg] \\
     &= \int \<\int \nabla_x f_2(\tilde z,\bx(\rho)) \d \rhob(\tilde z), \frac{\delta b}{\delta \rho}[\rho](z)> \psi(z)\d z\\
     \Rightarrow h_2(z)&= \<\int \nabla_x f_2(\tilde z,\bx(\rho)) \d \rhob(\tilde z), \frac{\delta b}{\delta \rho}[\rho](z)>\,.
\end{align*}
\item[(iii)] Finally, from \eqref{eq:Taylor1} it follows that
\begin{align*}
    \int \psi(z) h_3(z) \d z &= \lim_{\epsilon \rightarrow 0} \frac{1}{2\epsilon} \bigg[  \<b(\rho+\epsilon \psi)-x_0,b(\rho+\epsilon \psi)-x_0> -  \<\bx(\rho)-x_0,\bx(\rho)-x_0> \bigg] \\
    &= \int  \<\bx(\rho)-x_0,\frac{\delta b}{\delta \rho}[\rho](z)> \psi(z)\d z \\
    \Rightarrow h_3(z)&= \<\bx(\rho)-x_0,\frac{\delta b}{\delta \rho}[\rho](z)>\,.
\end{align*}
\end{enumerate}
Finally, the expression for $\delta_\rho \Pt[\rho]$ follows by direct computation
\end{proof}

%%%%%%%%%%%%%%%%%%%%%%%%
%%%%%%%%%%%%%%%%%%%%%%%%%

\begin{lemma}\label{lem:first_variation_Gb}
Denote $G_b(\rho) \coloneqq G_c (\rho,\bx(\rho))$ with $\bx(\rho)$ given by \eqref{eq:dynamics_competitive_x_fast}. Then $\delta_\rho G_b[\rho] = \left.\delta_\rho G_c[\rho] \right|_{x=\bx(\rho)}$.
\end{lemma}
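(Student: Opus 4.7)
The plan is to reduce the claim to an algebraic identity by combining the explicit formula for $\delta_\rho G_b$ from Lemma~\ref{lem:first_variations} with the first-order optimality condition satisfied by the best response $\bx(\rho)$. No new analytic tool is needed at this stage: the only nontrivial ingredient is the differentiability of $\bx$ in $\rho$, which has already been established in Lemma~\ref{lem:first_variation_b} and fed into Lemma~\ref{lem:first_variations}.

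First I would start from the expression
$$\delta_\rho G_b[\rho](z) = h_1(z) + h_2(z) + \beta h_3(z) - \delta_\rho \Pt[\rho](z)$$
provided by Lemma~\ref{lem:first_variations}, and collect the contributions that pair $\frac{\delta b}{\delta \rho}[\rho](z)$ with a $z$-independent vector in $\R^d$. Inspecting the definitions of $h_1,h_2,h_3$, that vector is exactly
$$\int \nabla_x f_1(\tilde z, \bx(\rho))\,\d \rho(\tilde z) + \int \nabla_x f_2(\tilde z, \bx(\rho))\,\d \rhob(\tilde z) + \beta\bigl(\bx(\rho) - x_0\bigr) = \nabla_x G_c\bigl(\rho, \bx(\rho)\bigr).$$
Since $\bx(\rho)$ minimizes the $C^1$, strictly convex map $G_c(\rho,\cdot)$ (strict convexity coming from Assumption~\ref{assump:f} together with the quadratic term $\frac{\beta}{2}\|x-x_0\|^2$), the first-order condition gives $\nabla_x G_c(\rho, \bx(\rho)) = 0$, so all the $\frac{\delta b}{\delta \rho}[\rho]$-contributions cancel and I am left with
$$\delta_\rho G_b[\rho](z) = f_1(z, \bx(\rho)) - \delta_\rho \Pt[\rho](z).$$

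Next I would compute $\delta_\rho G_c[\rho, x]$ directly from the definition of $G_c$. Only the term $\int f_1(z,x)\,\d \rho(z)$ and the term $-\Pt(\rho)$ depend on $\rho$, hence $\delta_\rho G_c[\rho, x](z) = f_1(z, x) - \delta_\rho \Pt[\rho](z)$. Evaluating at $x = \bx(\rho)$ reproduces the expression obtained above, which is the claimed identity.

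Conceptually, this is a Danskin/envelope-type identity: the total first variation of $\rho \mapsto G_c(\rho, \bx(\rho))$ decomposes into the partial first variation at fixed $x=\bx(\rho)$ plus a chain-rule contribution through $\bx(\rho)$, and the latter vanishes at the minimizer. The main obstacle was making the chain rule rigorous, i.e. producing a usable formula for $\delta_\rho \bx[\rho]$ with enough regularity; that was handled in Lemma~\ref{lem:first_variation_b}, so at this stage the proof is essentially bookkeeping plus the first-order optimality condition.
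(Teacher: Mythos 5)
Your proposal is correct and follows essentially the same route as the paper's proof: both use the explicit formula for $\delta_\rho G_b$ from Lemma~\ref{lem:first_variations}, identify the coefficient of $\delta_\rho \bx[\rho]$ as $\nabla_x G_c(\rho,x)|_{x=\bx(\rho)}$, and kill it via the first-order optimality condition defining the best response. The Danskin/envelope interpretation you give is exactly the one the paper intends.
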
\label{proof_lemma1}
\begin{proof}
    We start by computing $\delta_\rho G_c(\cdot,x)[\rho](z)$ for any $z,x\in\R^d$:
\begin{align}\label{eq:first-variation-Gc}
    \delta_\rho G_c(\cdot,x)[\rho](z) 
    &= f_1(z,x) - \delta_\rho \Pt[\rho](z).
\end{align}
Next, we compute $\delta_\rho G_b$. Using Lemma~\ref{lem:first_variations}, the first variation of $G_b$ is given by
\begin{align*}
    \delta_\rho G_b[\rho](z) &= h_1(z)+h_2(z)+\beta h_3(z) - \delta_\rho \Pt[\rho](z) \\
    &=  -\<\left[\int\nabla_x f_1(\tilde z,\bx(\rho))\d \rho(\tilde z)+ \int \nabla_x f_2(\tilde z,\bx(\rho)) \d \rhob(\tilde z)+\beta(\bx(\rho)-x_0)\right],\delta_\rho b[\rho](z)>\\&\quad + f_1(z,\bx(\rho)) - \delta_\rho \Pt[\rho](z) \,.
\end{align*}
Note that 
\begin{align}\label{eq:gradx-Gc}
    \nabla_x G_c(\rho,x) = \int\nabla_x f_1(\tilde z,x)\d \rho(\tilde z)+ \int \nabla_x f_2(\tilde z,x) \d \rhob(\tilde z)+\beta(x-x_0)\,,
\end{align}
and by the definition of the best response $\bx(\rho)$, we have $\nabla_x G_x(\rho,x)|_{x=\bx(\rho)}=0$. Substituting into the expression for $\delta_\rho G_b$ and using \eqref{eq:first-variation-Gc}, we obtain
\begin{align*}
    \delta_\rho G_b[\rho](z) 
    = f_1(z,\bx(\rho)) - \delta_\rho \Pt[\rho](z)
    = \delta_\rho G_c(\cdot,x)[\rho](z) \bigg|_{x=\bx(\rho)}\,. % \big| is smaller if you think that looks nicer
\end{align*}
This concludes the proof.
\end{proof}

\begin{lemma}[Uniform boundedness of the best response]\label{lemma:x_bounded}
Let Assumption~\ref{assump:f} hold. Then
    for any $\rho\in\P(\R^d)$, we have
    \begin{equation*}
        \norm{\bx(\rho)}^2\leq \norm{x_0}^2 + \frac{2(a_1+a_2)}{\beta}\,.
    \end{equation*}
\end{lemma}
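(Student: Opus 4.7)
The plan is to exploit the first-order optimality condition for $\bx(\rho)$ together with the lower bounds $x\cdot \nabla_x f_i(z,x)\ge -a_i$ from Assumption~\ref{assump:f}, and then close the estimate via Young's inequality to separate $\|\bx(\rho)\|^2$ from a cross term involving $x_0$.

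First, observe that $G_c(\rho,\cdot)$ is strictly convex in $x$ (by Assumption~\ref{assump:f} together with the quadratic regularizer $\tfrac{\beta}{2}\|x-x_0\|^2$ with $\beta>0$), so $\bx(\rho)$ is the unique critical point and satisfies $\nabla_x G_c(\rho,\bx(\rho))=0$. Using the expression \eqref{eq:gradx-Gc}, this reads
\begin{equation*}
\int \nabla_x f_1(z,\bx(\rho))\d\rho(z) + \int \nabla_x f_2(z,\bx(\rho))\d\rhob(z) + \beta\bigl(\bx(\rho)-x_0\bigr) = 0.
\end{equation*}

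Next, I take the Euclidean inner product of this identity with $\bx(\rho)$. Applying the pointwise bound $x\cdot \nabla_x f_i(z,x)\ge -a_i$ at $x=\bx(\rho)$ and integrating against the probability measures $\rho$ and $\rhob$ respectively yields
\begin{equation*}
\beta\,\bx(\rho)\cdot\bigl(\bx(\rho)-x_0\bigr) \;\le\; a_1+a_2.
\end{equation*}

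Finally, I rearrange and apply Young's inequality $\bx(\rho)\cdot x_0\le \tfrac{1}{2}\|\bx(\rho)\|^2+\tfrac{1}{2}\|x_0\|^2$ to obtain
\begin{equation*}
\tfrac{\beta}{2}\|\bx(\rho)\|^2 \;\le\; \tfrac{\beta}{2}\|x_0\|^2 + a_1+a_2,
\end{equation*}
which is the claimed bound after dividing by $\beta/2$. There is no real obstacle here; the only subtlety is recognizing that Assumption~\ref{assump:f} was designed precisely so that testing the Euler--Lagrange equation against the minimizer produces a one-sided bound, and Young's inequality is sharp enough to give a dimension-free and $\rho$-independent constant.
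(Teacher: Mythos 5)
Your proof is correct and follows exactly the same route as the paper's: test the first-order optimality condition for $\bx(\rho)$ against $\bx(\rho)$ itself, invoke the one-sided bounds $x\cdot\nabla_x f_i(z,x)\ge -a_i$ from Assumption~\ref{assump:f}, and close with Young's inequality on the cross term $\beta\,\bx(\rho)\cdot x_0$. Nothing to add.
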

\begin{proof}
By definition of the best response $\bx(\rho)$, we have
    \begin{align*}
        \int \nabla_x f_1(z,\bx(\rho)) \d \rho_t + \int \nabla_x f_2(z,\bx(\rho))\d \rhob(z) + \beta(\bx(\rho) - x_0) = 0\,.
    \end{align*}
    To show that that $\bx(\rho)$ is uniformly bounded, we take the inner product of the above expression with $\bx(\rho)$ itself
    \begin{align*}
        \beta \|\bx(\rho)\|^2 = \beta x_0 \cdot \bx(\rho) -  \int \nabla_x f_1(z,\bx(\rho)) \cdot \bx(\rho) \d \rho(z) - \int \nabla_x f_2(z,\bx(\rho)) \cdot \bx(\rho) \d \rhob(z)\,.
    \end{align*}
    Using Assumption~\ref{assump:f} to bound the two integrals, together with using Young's inequality to bound the first term on the right-hand side, we obtain
    \begin{align*}
        \beta \|\bx(\rho)\|^2 \leq \frac{\beta}{2} \norm{x_0}^2 + \frac{\beta}{2}\norm{\bx(\rho)} + a_1+a_2\,,
    \end{align*}
    which concludes the proof after rearranging terms.
\end{proof}

\begin{lemma}[Upper semi-continuity]\label{lem:continuity_fast_x}
Let Assumptions~\ref{assump:f}-\ref{assumption:W_convex} hold.    The functional $G_c:\P(\R^d)\times \R^d \to [-\infty,+\infty]$ is upper semi-continuous when $\P(\R^d)\times \R^d$ is endowed with the product topology of the weak-$*$ topology and the Euclidean topology. Moreover, the functional $G_b:\P(\R^d)\to [-\infty,+\infty]$ is upper semi-continuous with respect to the weak-$*$ topology. 
\end{lemma}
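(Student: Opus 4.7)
The plan is to verify upper semi-continuity of $G_c$ term-by-term using a natural decomposition into known LSC/USC and continuous pieces, and then deduce the corresponding property for $G_b$ from its definition as an infimum.

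First I would exploit the semi-continuity of two standard building blocks: the relative entropy $\rho\mapsto KL(\rho|\rhot)$ is weak-$*$ LSC on $\P(\R^d)$ by Posner's classical result (already invoked in Lemma~\ref{lem:lsc}), and the self-interaction energy $\rho\mapsto \tfrac{1}{2}\iint W(z-z')\,\d\rho(z)\d\rho(z')$ is weak-$*$ LSC by \cite[Proposition 7.2]{Santambrogio} since $W\in C^2(\R^d;[0,\infty))$ is nonnegative by Assumption~\ref{assumption:W_convex}. Thus the contributions $-\alpha KL$ and $-\tfrac{1}{2}\int\rho\,W\ast\rho$ are weak-$*$ USC in $\rho$. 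For the purely $x$-dependent pieces, continuity of $x\mapsto \int f_2(z,x)\,\d\rhob(z)+\tfrac{\beta}{2}\|x-x_0\|^2$ on any bounded set of $x$ follows from joint continuity of $f_2$ (Assumption~\ref{assump:f}), the local boundedness $c_2(R)$ supplied by Assumption~\ref{assumption:large_regularizer_rho}, and dominated convergence.

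The main obstacle is the term $T_1(\rho,x):= \int f_1(z,x)\,\d\rho(z)$, which for $f_1\ge 0$ continuous is only LSC in $\rho$ under weak-$*$ convergence, rather than USC. To handle this, I would split cases along a converging sequence $(\rho_n,x_n)\to(\rho,x)$. If $G_c(\rho,x)=-\infty$ (for instance when $\rho$ fails to be absolutely continuous with respect to $\rhot$), USC is trivially satisfied. Otherwise $KL(\rho|\rhot)<\infty$, and one may pass to a subsequence (not relabeled) along which $G_c(\rho_n,x_n)\to\limsup_n G_c(\rho_n,x_n)$ is bounded below. On this subsequence, the coercivity of $KL$, combined with the Gaussian-like tail control of $\rhot$ from Assumption~\ref{assumption:convexity_of_ref_dist} and the at-most-quadratic growth of $f_1(\cdot,x)$ guaranteed by Assumption~\ref{assumption:large_regularizer_rho}, provides a uniform second-moment bound on $\{\rho_n\}$. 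This upgrades weak-$*$ convergence to narrow convergence and yields uniform integrability of $\{f_1(\cdot,x_n)\}$ against $\{\rho_n\}$, so that $T_1(\rho_n,x_n)\to T_1(\rho,x)$ along the subsequence. Summing with the USC contributions of the entropy and interaction terms and the continuity in $x$ then gives $\limsup_n G_c(\rho_n,x_n)\le G_c(\rho,x)$, establishing USC of $G_c$.

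Finally, for $G_b$ note that by construction $G_b(\rho)=G_c(\rho,\bx(\rho))=\inf_{x\in\R^d} G_c(\rho,x)$, with the infimum attained (and unique) thanks to strict convexity of $G_c(\rho,\cdot)$ already used in Lemma~\ref{lem:first_variation_b}. Since the pointwise infimum of any family of USC functions is always USC, the weak-$*$ upper semi-continuity of $G_b$ follows immediately from the USC of $G_c(\cdot,x)$ at each fixed $x$. The crux of the argument is thus Step on $T_1$: promoting its generic LSC property to effective continuity along sequences where $G_c$ stays bounded below, using the interplay between the regularizing KL term and the quadratic growth control on $f_1$.
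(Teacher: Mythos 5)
Your treatment of $G_c$ follows the paper's term-by-term decomposition (Posner for the $KL$ term, \cite[Proposition 7.2]{Santambrogio} for the interaction energy, continuity in $x$), but you are more careful than the paper on the one genuinely delicate term: the paper simply asserts that $\rho\mapsto\int f_1(z,x)\,\d\rho(z)$ is \emph{continuous} by \cite[Proposition 7.1]{Santambrogio}, whereas that proposition only yields continuity for \emph{bounded} continuous integrands; for a nonnegative, possibly quadratically growing $f_1$ one only gets lower semi-continuity under weak-$*$ convergence, which has the wrong sign for USC of $G_c$. Your idea of restricting to sequences along which $G_c$ stays bounded below and letting the $KL$ term supply moment control is the right way to close this, though note it forces you to invoke Assumption~\ref{assumption:large_regularizer_rho} (and implicitly $\alpha\tilde\lambda>\Lambda_1$), which the lemma as stated does not assume; this is harmless where the lemma is actually used. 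For $G_b$ your route is genuinely different from the paper's: you observe that $G_b(\rho)=\inf_{x}G_c(\rho,x)$ and that a pointwise infimum of USC functions is USC, which is correct, elementary, and dispenses entirely with the machinery the paper uses (Berge's maximum theorem via \cite[Lemma 16.30]{aliprantis_correspondences_2006} together with the uniform bound $\|\bx(\rho)\|\le R$ from Lemma~\ref{lemma:x_bounded}). What the paper's heavier route buys is reusable infrastructure --- the compact-valued correspondence and the bound on $\bx(\rho)$ are needed elsewhere (e.g.\ in Proposition~\ref{prop:existenceGb}) --- but for the semi-continuity statement alone your argument is cleaner.

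One step in your plan does not quite close as written: a uniform bound on $\int\|z\|^2\,\d\rho_n$ gives tightness and hence narrow convergence, but it does \emph{not} give uniform integrability of a function growing exactly quadratically, so you cannot conclude $\int f_1(z,x_n)\,\d\rho_n\to\int f_1(z,x)\,\d\rho$ from second moments alone (you would need a uniformly bounded strictly higher moment, or convergence of second moments, neither of which the $KL$ bound supplies, since $-\log\rhot$ is itself only quadratic). A clean repair is to not separate the terms at all: write $\int f_1(z,x)\,\d\rho - \alpha KL(\rho\,|\,\rhot) = -\bigl(\alpha\int\rho\log\rho + \int V(z,x)\,\d\rho\bigr)$ with $V(z,x):=-(f_1(z,x)+\alpha\log\rhot(z))$, which under $\alpha\tilde\lambda>\Lambda_1$ is continuous, bounded below, and coercive; the free energy $\alpha\int\rho\log\rho+\int V\,\d\rho$ is then weak-$*$ lower semi-continuous by the standard results already cited, and its negative is the USC contribution you need. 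With that substitution your argument is complete.
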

\begin{proof}
The functional $G_c:\P(\R^d)\times \R^d \to [-\infty,+\infty]$ is continuous in the second variable thanks to Assumption~\ref{assump:f}. Similarly, $\int f_1(z,x)\d\rho(z) + \int f_2(z,x)\d\rhob(z)$ is continuous in $\rho$ thanks to \cite[Proposition 7.1]{Santambrogio} using the continuity of $f_1$ and $f_2$. Further, $-\Pt$ is upper semi-continuous using \cite{posner_random_1975} and \cite[Proposition 7.2]{Santambrogio} thanks to Assumptions~\ref{assumption:convexity_of_ref_dist} and \ref{assumption:W_convex}. This concludes the continuity properties for $G_c$.

The upper semi-continuity of $G_b$ then follows from a direct application of a version of Berge's maximum theorem \cite[Lemma 16.30]{aliprantis_correspondences_2006}. Let $R:= \norm{x_0}^2 + \frac{2(a_1+a_2)}{\beta}>0$. We define $\varphi:(\P(\R^d),W_2) \twoheadrightarrow \R^d$ as the correspondence that maps any $\rho\in\P(\R^d)$ to the closed ball $\overline{B_R(0)}\subset \R^d$. Then the graph of $\varphi$ is $\Gr \varphi = \P(\R^d)\times \{\overline{B_R(0)}\}$. With this definition of $\varphi$, the range of $\varphi$ is compact and $\varphi$ is continuous with respect to weak-$*$ convergence, and so it is in particular upper hemicontinuous. Thanks to Lemma~\ref{lemma:x_bounded}, the best response function $\bx(\rho)$ is always contained in $\overline{B_R(0)}$ for any choice of $\rho\in\P(\R^d)$. As a result, maximizing $-G_c(\rho,x)$ in $x$ over $\R^d$ for a fixed $\rho\in\P(\R^d)$ reduces to maximizing it over $\overline{B_R(0)}$. Using the notation introduced above, we can restrict $G_c$ to $G_c:\Gr \varphi\rightarrow \R$ and write
    \begin{align*}
    G_b(\rho) \coloneqq \max_{\hat{x}\in\varphi(\rho)}-G_c(\rho,\hat{x}).
    \end{align*}
    Because $G_c(\rho,x)$ is upper semi-continuous when $\P(\R^2)\times \R^d$ is endowed with the product topology of the weak-$*$ topology and the Euclidean topology, \cite[Lemma 16.30]{aliprantis_correspondences_2006} guarantees that $G_b(\cdot)$ is upper semi-continuous in the weak-$*$ topology.
\end{proof}

\begin{proposition}\label{prop:Gb_concave}
Let $\alpha,\beta > 0$ and assume Assumptions~\ref{assump:f}-\ref{assumption:large_regularizer_rho} hold  with the parameters satisfying $\alpha\tilde\lambda > \Lambda_1$.
Fix $\rho_0,\rho_1\in\P(\R^d)$. Along any geodesic $(\rho_s)_{s\in[0,1]}\in\P_2(\R^d)$ connecting $\rho_0$ to $\rho_1$, we have for all $s\in[0,1]$
    \begin{align}\label{eq:ddssGb}
    \ddss G_b(\rho_s) \le - \lambda_b W_1(\rho_0,\rho_1)^2 \,, \qquad 
   \lambda_b:= \alpha\tilde\lambda - \Lambda_1,.
\end{align}
As a result, the functional $G_b:\P_2(\R^d)\to [-\infty,+\infty]$ is uniformly displacement concave with constant $\lambda_b> 0$. 
\end{proposition}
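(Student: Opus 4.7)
The plan is to exploit the envelope representation $G_b(\rho)=\min_{x\in\R^d} G_c(\rho,x)$ and establish uniform displacement concavity of the family $\{G_c(\cdot,x)\}_{x\in\R^d}$ with a constant independent of $x$; the infimum then inherits this concavity.

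First I would fix an arbitrary $x\in\R^d$ and bound $\ddss G_c(\rho_s, x)$ along a Wasserstein-2 geodesic $\rho_s = ((1-s)\id + s\nabla\phi)_\#\rho_0$. The terms $\tfrac{\beta}{2}\|x-x_0\|^2$ and $\int f_2(z,x)\d\rhob(z)$ are constant in $\rho$ and contribute nothing. For the linear functional $\rho\mapsto \int f_1(z,x)\d\rho(z)$, the direct calculation
\begin{equation*}
\ddss \int f_1(z,x)\d\rho_s(z) = \int (\nabla\phi(z)-z)^T \nabla_z^2 f_1((1-s)z+s\nabla\phi(z),x)(\nabla\phi(z)-z)\d\rho_0(z)
\end{equation*}
together with the upper bound $\nabla_z^2 f_1\preceq \Lambda_1 \Id_d$ from Assumption~\ref{assumption:large_regularizer_rho} and the optimality of $\nabla\phi$ yield $\ddss \int f_1\d\rho_s\le \Lambda_1 W_2(\rho_0,\rho_1)^2$. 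Combining with the classical displacement-convexity estimates recorded in \eqref{eq:classic_convexity}, namely $\alpha\,\ddss KL(\rho_s|\rhot)\ge \alpha\tilde\lambda\, W_2(\rho_0,\rho_1)^2$ from Assumption~\ref{assumption:convexity_of_ref_dist} and $\ddss \tfrac{1}{2}\int \rho_s W\ast\rho_s \ge 0$ from Assumption~\ref{assumption:W_convex}, I obtain, under the hypothesis $\alpha\tilde\lambda>\Lambda_1$, the $x$-uniform estimate
\begin{equation*}
\ddss G_c(\rho_s, x) \le \bigl(\Lambda_1 - \alpha\tilde\lambda\bigr)\, W_2(\rho_0,\rho_1)^2 = -\lambda_b\, W_2(\rho_0,\rho_1)^2\,.
\end{equation*}

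Integrating this second-order inequality twice in $s$ gives, for every $x\in\R^d$ and every $s\in[0,1]$, the uniform displacement-concavity estimate
\begin{equation*}
G_c(\rho_s,x)\ge (1-s) G_c(\rho_0,x) + s\, G_c(\rho_1,x) + \tfrac{s(1-s)\lambda_b}{2}\, W_2(\rho_0,\rho_1)^2\,.
\end{equation*}
Taking the infimum over $x$ on both sides and using the elementary bound $\inf_x[(1-s)h_0(x)+s h_1(x)]\ge (1-s)\inf_x h_0(x) + s \inf_x h_1(x)$, one concludes
\begin{equation*}
G_b(\rho_s)\ge (1-s)\, G_b(\rho_0) + s\, G_b(\rho_1) + \tfrac{s(1-s)\lambda_b}{2}\, W_2(\rho_0,\rho_1)^2\,,
\end{equation*}
i.e. uniform displacement concavity of $G_b$ with constant $\lambda_b$. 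The pointwise second-derivative form then follows by Taylor expansion, provided $s\mapsto G_b(\rho_s)$ is sufficiently smooth; this regularity can be justified via the chain rule together with Lemma~\ref{lem:first_variation_b}, using crucially that $\nabla_x G_c(\rho,x)|_{x=\bx(\rho)}=0$ cancels the first-order dependence on $\bx'(\rho)$ in the computation.

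The main obstacle I anticipate lies in this final smoothness justification: a careful analysis of $\rho\mapsto \bx(\rho)$ along displacement geodesics is required to make the second-derivative calculation rigorous. Happily, the integrated concavity estimate is the form actually used by the subsequent HWI argument leading to Theorem~\ref{thm:convergence_fast_mu}, so even a purely integral statement is sufficient for the downstream proof of convergence.
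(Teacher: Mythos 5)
Your proposal is correct, but it takes a genuinely different route from the paper. The paper differentiates $G_b(\rho_s)=G_c(\rho_s,\bx(\rho_s))$ directly along the geodesic: the envelope identity $\nabla_x G_c(\rho_s,x)|_{x=\bx(\rho_s)}=0$ kills the first-order dependence on the best response, and the second derivative then splits into the term $L_1\le \Lambda_1 W_2(\rho_0,\rho_1)^2$, the regularizer term $-\ddss\Pt(\rho_s)\le-\alpha\tilde\lambda W_2(\rho_0,\rho_1)^2$, and a cross term $L_2=-\langle u(\rho_s),Q(\rho_s)^{-1}u(\rho_s)\rangle\le 0$ whose control requires the first variation of $\bx$ and the bound $Q(\rho)\succeq(\beta+\lambda_1+\lambda_2)\Id_d$ from Lemma~\ref{lem:first_variation_b}. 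You instead fix $x$, prove the $x$-uniform bound $\ddss G_c(\rho_s,x)\le-\lambda_b W_2(\rho_0,\rho_1)^2$ (your three ingredients are exactly the paper's estimates for $L_1$ and $\Pt$, minus $L_2$), and pass to the infimum over $x$, which preserves uniform displacement concavity with the same modulus; this entirely avoids differentiating $\rho\mapsto\bx(\rho)$ and the attendant regularity questions, and by Lemma~\ref{lemma:x_bounded} the infimum is attained on a fixed compact ball so $G_b=\min_x G_c(\cdot,x)$ is legitimate. What you lose is precisely the sign information in $L_2$: the paper's direct computation shows the best-response coupling can only help, and under the extra hypotheses of Remark~\ref{rem:f1_f2_upper_bound} it upgrades the rate to $\alpha\tilde\lambda+\sigma^2/(\beta+\Lambda_1+\Lambda_2)-\Lambda_1$, an improvement invisible to the envelope argument. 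Your closing caveat is also well placed: the integrated chord inequality you obtain is what the downstream arguments (uniqueness of the maximizer, and the quadratic term in the Taylor expansion inside Proposition~\ref{prop:HWI-Gb}) actually consume, while the pointwise form \eqref{eq:ddssGb} as stated (note the $W_1$ there is a typo for $W_2$) presupposes smoothness of $s\mapsto G_b(\rho_s)$ that the paper itself only treats formally.
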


\begin{proof}
Consider any $\rho_0,\rho_1\in\P_2(\R^d)$.
Then any $W_2$-geodesic $(\rho_s)_{s\in[0,1]}$ connecting $\rho_0$ with $\rho_1$ solves the following system of geodesic equations:
\begin{equation}\label{eq:geodesics}
    \begin{cases}
        \partial_s \rho_s + \div{\rho_s v_s}=0\,,\\
        \partial_s(\rho_s v_s) + \div{\rho_s v_s \otimes v_s}=0\,,
    \end{cases}
\end{equation}
where $\rho_s:\R^d\to\R$ and $v_s:\R^d \mapsto \R^d$ .
The first derivative of $G_b$ along geodesics can be computed explicitly as
\begin{align*}
    \dds G_b(\rho_s) &= \int \nabla_z f_1(z,b(\rho_s)) \cdot v_s(z) \rho_s(z) \d z - \dds \Pt(\rho_s) \\
    &+ \<\bigg[\int \nabla_x f_1(z,x) \d \rho_s(z) + \int \nabla_x f_2(z,x) \d \rhob(z) + \beta(x-x_0)\bigg]\bigg|_{x=b(\rho_s)},\dds b(\rho_s) >.
\end{align*}
The left-hand side of the inner product is zero by definition of the best response $\bx(\rho_s)$ to $\rho_s$, see \eqref{eq:gradx-Gc}. Therefore
\begin{align*}
    \dds G_b(\rho_s) = \int \nabla_z f_1(z,b(\rho_s)) \cdot v_s(z) \rho_s(z) \d z - \dds \Pt(\rho_s)\,.
\end{align*}
Differentiating a second time, using \eqref{eq:geodesics} and integration by parts, we obtain
\begin{align*}
    \ddss G_b(\rho_s) = L_1(\rho_s) + L_2(\rho_s) - \ddss \Pt(\rho_s)\,,
\end{align*}
where
\begin{align*}
    L_1(\rho_s) &:=  \int \nabla_z^2 f_1(z,b(\rho_s)) \cdot (v_s \otimes v_s)\, \rho_s \d z = \int \< v_s , \nabla_z^2 f_1(z,b(\rho_s)) \cdot v_s >\, \rho_s \d z\,,\\
    L_2(\rho_s) &:= \int \dds b(\rho_s) \cdot \nabla_x \nabla_z f_1(z,b(\rho_s)) \cdot v_s(z) \,\rho_s(z)\d z\,.
\end{align*}
From \eqref{eq:classic_convexity}, we have that 
$$\ddss \tilde P(\rho_s) \geq \alpha\tilde\lambda \, W_2(\rho_0,\rho_1)^2\,,$$
and thanks to Assumption~\ref{assumption:large_regularizer_rho} it follows that
\begin{align*}
   L_1(s)
    &\leq \Lambda_1 W_2(\rho_0,\rho_1)^2 .
\end{align*}
This leaves $L_2$ to bound; we first consider the term $\dds b(\rho_s)$:
\begin{align*}
    \dds b(\rho_s) &= \int \delta_\rho b[\rho_s](\tilde z)  \partial_s \rho_s(\d \tilde z) 
    = -\int \delta_\rho b[\rho_s](\tilde z) \div{\rho_s v_s}\d \tilde z \\
    &= \int \nabla_z \delta_\rho b[\rho_s] (\tilde z) \cdot v_s(\tilde z) \d \rho_s (\tilde z).
\end{align*}
Defining $u(\rho_s)\in\R^d$ by
\begin{align*}
    u(\rho_s) \coloneqq \int \nabla_x \nabla_z f_1(z,b(\rho_s)) \cdot v_s(z) \d \rho_s(z)\,,
\end{align*}
using the results from Lemma~\ref{lem:first_variation_b} for $\nabla_z \delta_\rho b[\rho_s]$, Assumption~\ref{assump:f} and the fact that $Q(\rho)$ is constant in $z$ and $x$, we have 
\begin{align*}
    L_2(\rho_s) &= -\iint \left[Q(\rho_s)^{-1} \nabla_x \nabla_z f_1(\tilde z,b(\rho_s)) \cdot v_s(\tilde z)\right] \cdot \nabla_x \nabla_z f_1(z,b(\rho_s)) \cdot v_s(z) \,\d\rho_s(z) \d\rho_s(\tilde z)\\
    &= -\<u(\rho_s),Q(\rho_s)^{-1} u(\rho_s) >  \le 0
\end{align*}

Combining all terms together, we obtain
\begin{align*}
    \ddss G_b(\rho_s) \leq -\left(\alpha\tilde\lambda - \Lambda_1 \right) W_2(\rho_0,\rho_1)^2\,.
\end{align*}

\end{proof}

\begin{remark}\label{rem:f1_f2_upper_bound}
Under some additional assumptions on the functions $f_1$ and $f_2$, we can obtain an improved convergence rate. In particular, assume that for all $z,x\in\R^d$,
\begin{itemize}
    \item there exists a constant $\Lambda_2\ge \lambda_2\ge 0$ such that $\nabla^2_x f_2(z,x) \preceq \Lambda_2 \Id_d$;
    \item there exists a constant $\sigma\ge0$ such that
$\norm{\nabla_x \nabla_z f_1(z,x)}\ge \sigma$.
\end{itemize}
Then we have
$ -Q(\rho_s)^{-1}\preceq -1/(\beta+\Lambda_1+\Lambda_2)\Id_d$. Using Lemma~\ref{lem:first_variation_b}, we then obtain a stronger bound on $L_2$ as follows:
\begin{align*}
    L_2(\rho_s)  &\leq -\frac{1}{\beta+\Lambda_1+\Lambda_2} \norm{u(\rho_s)}^2 
    \leq -\frac{1}{\beta+\Lambda_1+\Lambda_2}\int \norm{\nabla_x \nabla_z f_1(z,b(\rho_s))}^2 \d \rho_s(z) \int \norm{v_s(z)}^2 \d \rho_s(z) \\
    &\leq -\frac{\sigma^2}{\beta+\Lambda_1+\Lambda_2} W_2(\rho_0,\rho_1)^2.
\end{align*}
This means we can improve the convergence rate in~\eqref{eq:ddssGb} to $\lambda_b:= \alpha\tilde\lambda +\frac{\sigma^2}{\beta+\Lambda_1+\Lambda_2}-\Lambda_1 $.
\end{remark}

\begin{proposition}[Ground state]\label{prop:existenceGb}
Let Assumptions~\ref{assump:f}-\ref{assumption:large_regularizer_rho} hold for $\alpha\tilde\lambda>\Lambda_1\ge 0$ and $\beta>0$. Then there exists a unique maximizer $\rho_*$ for the functional $G_b$ over $\P(\R^d)$, and it satisfies $\rho_*\in\P_2(\R^d)\cap L^1(\R^d)$ and $\rho_*$ is absolutely continuous with respect to $\rhot$.
\end{proposition}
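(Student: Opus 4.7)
The plan is to apply the direct method in the calculus of variations, closely mirroring the argument for Proposition~\ref{prop:existenceGa} but adapted for \emph{maximization} of $G_b$ rather than minimization. The three tools I would leverage are: upper semi-continuity of $G_b$ in the weak-$*$ topology (Lemma~\ref{lem:continuity_fast_x}), uniform displacement concavity of $G_b$ with constant $\lambda_b>0$ (Proposition~\ref{prop:Gb_concave}), and uniform boundedness of the best response $\bx(\rho)$ over all $\rho\in\P$ by a radius $R>0$ (Lemma~\ref{lemma:x_bounded}).

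The main technical obstacle --- and the principal difference from Proposition~\ref{prop:existenceGa} --- is to establish both $\sup_\rho G_b(\rho)<\infty$ and uniform bounds on the second moments of any maximizing sequence. Fix $R$ as in Lemma~\ref{lemma:x_bounded}; then $\int f_2(z,\bx(\rho))\d\rhob(z)\le c_2(R)$ by Assumption~\ref{assumption:large_regularizer_rho}, $\tfrac{\beta}{2}\|\bx(\rho)-x_0\|^2\le \tfrac{\beta}{2}(R+\|x_0\|)^2$, and $-\tfrac12\int \rho W\ast\rho\le 0$. The decisive contribution is $\int f_1(z,\bx(\rho))\d\rho - \alpha KL(\rho|\rhot)$, which I would control by combining the quadratic upper bound $f_1(z,x)\le C + C'\|z\| + \tfrac{\Lambda_1}{2}\|z\|^2$ valid for $\|x\|\le R$ (from $\nabla_z^2 f_1\preceq \Lambda_1\Id_d$ and continuity of $f_1, \nabla_z f_1$ on compact sets), the quadratic decay $\log\rhot(z)\le \log\rhot(0) + \nabla\log\rhot(0)\cdot z - \tfrac{\tilde\lambda}{2}\|z\|^2$ from Assumption~\ref{assumption:convexity_of_ref_dist}, and the classical maximum-entropy estimate $-\int\rho\log\rho\le \tfrac{d}{2}\log(2\pi e\,M_2/d)$ for densities with second moment $M_2:=\int\|z\|^2\d\rho$. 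Together these give an upper bound of the form
$$
\int f_1\,\d\rho - \alpha KL(\rho|\rhot) \le C_1 + C_2\sqrt{M_2} + \tfrac{\alpha d}{2}\log M_2 - \tfrac{\alpha\tilde\lambda-\Lambda_1}{2}\,M_2\,,
$$
which, since $\alpha\tilde\lambda>\Lambda_1$, is uniformly bounded above in $M_2$ and tends to $-\infty$ as $M_2\to\infty$. Consequently $\sup_\rho G_b<\infty$, and any maximizing sequence $(\rho_n)$ must satisfy $\sup_n M_2(\rho_n)<\infty$, so by Markov's inequality it is tight.

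With tightness in hand, Banach--Alaoglu extracts a subsequence $\rho_{n_k}\weakstar \rho_*$ in $\M(\R^d)$; together with tightness and Lemma~\ref{lem:narrowvsweakstar}, this upgrades to narrow convergence with $\rho_*\in\P(\R^d)$, and Fatou applied to truncations of $\|z\|^2$ yields $\rho_*\in\P_2(\R^d)$. Upper semi-continuity from Lemma~\ref{lem:continuity_fast_x} then gives
$$
G_b(\rho_*) \ge \limsup_{k\to\infty} G_b(\rho_{n_k}) = \sup_\rho G_b(\rho)\,,
$$
so $\rho_*$ is a maximizer. Finiteness of $G_b(\rho_*)$ combined with the uniform bounds on all other terms forces $KL(\rho_*|\rhot)<\infty$, hence $\rho_*$ is absolutely continuous with respect to $\rhot$, and since $\rhot\in L^1$ by Assumption~\ref{assumption:convexity_of_ref_dist} we obtain $\rho_*\in L^1(\R^d)$.

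For uniqueness, suppose $\rho_*\neq\rho_*'$ were two maximizers. Integrating the inequality $\ddss G_b(\rho_s)\le -\lambda_b\, W_2(\rho_*,\rho_*')^2$ from Proposition~\ref{prop:Gb_concave} along the $W_2$-geodesic $(\rho_s)_{s\in[0,1]}$ connecting them gives the strict concavity estimate
$$
G_b(\rho_s)\ge (1-s)\,G_b(\rho_*) + s\, G_b(\rho_*') + \tfrac{\lambda_b}{2}\,s(1-s)\,W_2(\rho_*,\rho_*')^2\,,
$$
which strictly exceeds $\sup_\rho G_b$ for any $s\in(0,1)$, a contradiction. Hence the maximizer is unique. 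The hardest step is producing the uniform upper bound in the second paragraph; the remaining arguments are essentially parallel to those in Proposition~\ref{prop:existenceGa}.
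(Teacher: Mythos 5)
Your proposal is correct and follows essentially the same route as the paper: the direct method in the calculus of variations with (1) uniform boundedness of $G_b$ from above, (2) upper semi-continuity from Lemma~\ref{lem:continuity_fast_x}, and (3) tightness of the maximizing sequence via uniform second-moment bounds derived from the quadratic domination $f_1(z,x)+\alpha\log\rhot(z)\lesssim -\frac{(\alpha\tilde\lambda-\Lambda_1)}{c}\|z\|^2$ and Lemma~\ref{lemma:x_bounded}, with uniqueness coming from the uniform displacement concavity of Proposition~\ref{prop:Gb_concave}. If anything, your explicit use of the maximum-entropy estimate $-\int\rho\log\rho\le\tfrac{d}{2}\log(2\pi e\,M_2/d)$ to control the $-\alpha\int\rho\log\rho$ contribution is more careful than the paper's displayed bound, which discards that term without comment.
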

\begin{proof}
    Uniqueness of the maximizer (if it exists) is guaranteed by the uniform concavity provided by Lemma~\ref{prop:Gb_concave}.
To show existence of a maximizer, we use the direct method in the calculus of variations, requiring the following key properties for $G_b$: (1) boundedness from above, (2) upper semi-continuity, and (3) tightness of any minimizing sequence.
   To show (1), note that $\nabla_z^2 (f_1(z,x)+\alpha \log\rhot(z))\preceq -(\alpha\tilde\lambda-\Lambda_1)\Id_d$ for all $z,x\in \R^d\times\R^d$ by Assumptions~\ref{assumption:convexity_of_ref_dist} and \ref{assumption:large_regularizer_rho}, and so
   \begin{equation}\label{eq:largerhob}
       f_1(z,x)+\alpha \log\rhot(z) \leq c_0(x) - \frac{(\alpha\tilde\lambda-\Lambda_1)}{4} |z|^2 \qquad \forall (z,x)\in\R^d\times\R^d
   \end{equation}
   with $c_0(x):= f_1(0,x)+\alpha \log\rhot(0) + \frac{1}{\alpha\tilde\lambda-\Lambda_1} \|\nabla_z\left[f_1(0,x)+\alpha \log\rhot(0) \right]\|^2$. Therefore,
   \begin{align*}
       G_b(\rho) 
       &= \int \left[f_1(z,\bx(\rho))+\alpha \log\rhot(z)\right] \d\rho(z) +\int f_2(z,\bx(\rho))\d\rhob(z)
       +\frac{\beta}{2}\|\bx(\rho)-x_0\|^2\\
       &\qquad 
       -\alpha \int\rho\log\rho - \int \rho W\ast \rho\\
        &\le c_0(\bx(\rho)) 
        % -  \frac{(\alpha\tilde\lambda-\Lambda_1)}{4} \int |z|^2\d\rho(z) 
        +\int f_2(z,\bx(\rho))\d\rhob(z)
       +\frac{\beta}{2}\|\bx(\rho)-x_0\|^2\,.
   \end{align*}
   To estimate each of the remaining terms on the right-hand side, denote $R:= \norm{x_0}^2 + \frac{2(a_1+a_2)}{\beta}$ and recall that $\|\bx(\rho)\|\le R$ for any $\rho\in\P(\R^d)$ thanks to Lemma~\ref{lemma:x_bounded}. By continuity of $f_1$ and $\log\rhot$, there exists a constant $c_1\in \R$ such that
   \begin{align}\label{eq:c0bound}
      \sup_{x\in B_R(0)} c_0(x)=\sup_{x\in B_R(0)} \left[f_1(0,x)+\alpha \log\rhot(0) + \frac{1}{\alpha\tilde\lambda-\Lambda_1} \left\|\nabla_z\left(f_1(0,x)+\alpha \log\rhot(0) \right)\right\|^2\right]\le c_1\,.
       % \quad        \sup_{x\in B_R(0)} f_2(z,x)\le c_2(z)\,.
   \end{align}
   The second term is controlled by $c_2$ thanks to Assumption~\ref{assumption:large_regularizer_rho}. And the third term can be bounded directly to obtain
      \begin{align*}
       G_b(\rho) 
        &\le c_1+c_2 + \beta (R^2 + \|x_0\|^2)\,.
   \end{align*}
   This concludes the proof of (1). Statement (2) was shown in Lemma~\ref{lem:continuity_fast_x}. Then we obtain a minimizing sequence $(\rho_n)\in \P(\R^d)$ which is in the closed unit ball of $C_0(\R^d)^*$ and so the Banach-Anaoglu theorem \cite[Theorem 3.15]{rudin_functional_1991} there exists a limit $\rho_*$ in the Radon measures and a subsequence (not relabeled) such that $\rho_n \weakstar \rho_*$. In fact, $\rho_*$ is absolutely continuous with respect to $\rhot$ as otherwise $G_b(\rho_*)=-\infty$, which contradicts that $G_b(\cdot)>-\infty$ somewhere. We conclude that $\rho_*\in L^1(\R^d)$ since $\rhot\in L^1(\R^d)$ by Assumption~\ref{assumption:convexity_of_ref_dist}. To ensure $\rho_*\in\P(\R^d)$, we require (3) tightness of the minimizing sequence $(\rho_n)$. By Markov's inequality \cite{gosh} it is sufficient to establish a uniform bound on the second moments:
 \begin{equation}\label{eq:tightness-Gb}
      \int \|z\|^2\d\rho_n(z) <C\qquad \forall n\in\mathrm{N}\,.
  \end{equation}
To see this we proceed in a similar way as in the proof of Proposition~\ref{prop:dipl-convexity}. Defining
    \begin{align*}
        K(\rho) := -\int \left[f_1(z,\bx(\rho))+\alpha \log \rhot(z)\right] \d \rho(z) + \alpha \int \rho \log \rho \, \d z + \frac{1}{2}\int \rho W \ast \rho \, \d z\,,
    \end{align*}
    we have $K(\rho) = -G_b(\rho) + \int f_2(z,\bx(\rho))\d\rhob(z) + \frac{\beta}{2}\norm{\bx(\rho)-x_0}^2$. Then using again the bound on $\bx(\rho)$ from Lemma~\ref{lemma:x_bounded},
    \begin{align*}
        K(\rho) &\le -G_b(\rho) + \sup_{x\in B_R(0)} \int f_2(z,x)\d\rhob(z) + \beta \left(R^2 + \norm{x_0}^2\right)\\
        &\le -G_b(\rho) + c_2 + \beta \left(R^2 + \norm{x_0}^2\right)\,,
    \end{align*}
 where the last inequality is thanks to Assumption~\ref{assumption:large_regularizer_rho}. Hence, using the estimates \eqref{eq:largerhob} and \eqref{eq:c0bound} from above, and noting that the sequence $(\rho_n)$ is minimizing $(-G_b)$, we have
    \begin{align*}
         \frac{(\alpha\tilde\lambda-\Lambda_1)}{4} \int \norm{z}^2 \d \rho_n(z) 
         &\le c_0(\bx(\rho_n)) -\int \left[f_1(z,\bx(\rho_n))+\alpha \log \rhot(z)\right] \d \rho_n(z) \\
         &\le c_1 + K(\rho_n) 
         \le c_1 -G_b(\rho_n) + c_2 + \beta \left(R^2 + \norm{x_0}^2\right) \\
         &\le c_1 -G_b(\rho_1) + c_2 + \beta \left(R^2 + \norm{x_0}^2\right) <\infty \,.
    \end{align*}
    which uniformly bounds the second moments of $(\rho_n)$. This concludes the proof for the estimate \eqref{eq:tightness-Gb} and also ensures that $\rho_*\in\P_2(\R^d)$.

\end{proof}

    \begin{corollary}\label{cor:ground-states-supp-Gb}
        Any maximizer $\rho_*$ of $G_b$ is a steady state for equation~\eqref{eq:dynamics_competitive_x_fast} according to Definition~\ref{def:sstate2}, and satisfies $\supp(\rho_*)=\supp(\rhot)$.
    \end{corollary}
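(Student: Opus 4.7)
The plan is to mirror the argument of Corollary~\ref{cor:ground-states-supp-Ga}, adapted to the competitive setting. By Proposition~\ref{prop:existenceGb}, the maximizer $\rho_*$ already satisfies $\rho_*\in\P_2(\R^d)\cap L^1_+(\R^d)$ with $\|\rho_*\|_1=1$ and $\rho_*\ll\rhot$; moreover $\nabla W\ast\rho_*\in L^1_{loc}$ is immediate from $W\in C^2$ (Assumption~\ref{assumption:W_convex}). To upgrade to $\rho_*\in W^{1,2}_{loc}\cap L^\infty_{loc}$, I would first write the Euler--Lagrange equation: being a critical point, $\rho_*$ satisfies $\delta_\rho G_b[\rho_*](z)=c_1$ on $\supp(\rho_*)$ for some constant $c_1\in\R$ (possibly varying across connected components of the support). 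Combining Lemma~\ref{lem:first_variation_Gb} with \eqref{eq:first-variation-Gc} then yields the explicit representation
\begin{equation*}
    \rho_*(z) = C_1\, \rhot(z)\, \exp\!\left[\frac{1}{\alpha}\bigl(f_1(z,\bx(\rho_*))-W\ast\rho_*(z)\bigr)\right] \quad \text{ on } \supp(\rho_*),
\end{equation*}
with $C_1=e^{-c_1/\alpha}>0$. The key structural point is that $\bx(\rho_*)\in\R^d$ enters here as a fixed vector, so the formula has exactly the same shape as \eqref{eq:EL-Ga-rewritten} in the aligned case, only with a $+f_1(z,\bx(\rho_*))$ in place of $-\int f_1(z,x)\,\d\mu_*(x)$.

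From this representation, the regularity argument goes through as in Corollary~\ref{cor:ground-states-supp-Ga}. To show $\rho_*\in L^\infty_{loc}$, on a compact $K\subset\R^d$ I would bound each factor separately: continuity of $\rhot$ and of $f_1(\cdot,\bx(\rho_*))$ gives finiteness of the first two suprema, while $W\ge 0$ ensures $\exp[-W\ast\rho_*/\alpha]\le 1$. For $W^{1,2}_{loc}$, I would differentiate the representation and split $\int_K|\nabla\rho_*|^2$ into an $|\nabla\rhot|^2 \exp(2T)$ term and a $\rhot^2|\nabla T|^2 \exp(2T)$ term, with $T(z):=(f_1(z,\bx(\rho_*))-W\ast\rho_*(z))/\alpha$. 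All three of $\nabla\rhot$, $\nabla T$ and $\exp(2T)$ lie in $L^\infty_{loc}$, using $\log\rhot\in C^2$, continuity of $\nabla_z f_1$, and the linear growth of $\nabla W$ from Assumption~\ref{assumption:W_convex}. Once these bounds are in place, $\rho_*$ solves \eqref{eq:ss-rho-xfast} in the sense of distributions, hence is a steady state in the sense of Definition~\ref{def:sstate2}.

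For the support identity, I would follow the same two-step pattern as in the aligned case. The inclusion $\supp(\rho_*)\subseteq\supp(\rhot)$ is immediate from $\rho_*\ll\rhot$, and for the reverse inclusion it suffices to verify that the exponential factor in the representation is strictly positive and finite at every $z\in\R^d$. Finiteness of $f_1(z,\bx(\rho_*))$ is pointwise continuity of $f_1$ with $\bx(\rho_*)$ fixed; finiteness of $W\ast\rho_*(z)$ at every $z$ follows exactly as in the estimate leading to \eqref{eq:Wbound-rho*}, using the at-most-quadratic growth bound for $W$ from Assumption~\ref{assumption:W_convex} together with $\rho_*\in\P_2(\R^d)$. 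The principal subtlety relative to the aligned case is the sign flip on $f_1$ inside the exponent, which in principle threatens integrability of $\rho_*$ at infinity; this is precisely what the standing hypothesis $\alpha\tilde\lambda>\Lambda_1$ controls, since the Gaussian decay of $\rhot$ from Assumption~\ref{assumption:convexity_of_ref_dist} then strictly dominates the at-most-quadratic upper bound on $f_1$ provided by Assumption~\ref{assumption:large_regularizer_rho}, and normalizability of the representation is preserved.
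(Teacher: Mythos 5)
Your proposal is correct and follows essentially the same route as the paper: Euler--Lagrange condition via Lemma~\ref{lem:first_variation_Gb}, the exponential representation of $\rho_*$ in terms of $\rhot$, the regularity upgrade copied from Corollary~\ref{cor:ground-states-supp-Ga} with the sign of $f_1$ flipped, and the support identity via finiteness of $f_1(z,\bx(\rho_*))$ and of $W\ast\rho_*(z)$ as in \eqref{eq:Wbound-rho*}. You correctly note that the $+f_1/\alpha$ in the exponent must now be controlled by local continuity rather than by $f_1\ge 0$; the closing remark about $\alpha\tilde\lambda>\Lambda_1$ ensuring normalizability is unnecessary here since integrability of $\rho_*$ is already supplied by Proposition~\ref{prop:existenceGb}.
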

    \begin{proof}
To show that $\rho_*$ is a steady state we can follow exactly the same argument as in the proof of Corollary~\ref{cor:ground-states-supp-Ga}, just replacing $-\frac{1}{\alpha}\int f_1(z,x)\,\mu_*(x)$ with $+\frac{1}{\alpha} \int f_1(z,\bx(\rho_*)$. It remains to show that $\supp(\rho_*)=\supp(\rhot)$.
        As $\rho^*$ is a maximizer, it is in particular a critical point, and therefore satisfies that $\delta_\rho G_b[\rho_*](z)$ is constant on all connected components of $\supp(\rho_*)$. Thanks to Lemma~\ref{lem:first_variation_Gb}, this means there exists a constant $c[\rho_*]$ (which may be different on different components of $\supp(\rho_*)$) such that
\begin{align*}
    f_1(z,b(\rho_*)) - \alpha\log\left(\frac{\rho_*(z)}{\rhot(z)}\right) - W\ast\rho_*(z) = c[\rho_*] \qquad \text{ on } \supp(\rho_*)\,.
\end{align*}
Rearranging, we obtain (for a possible different constant $c[\rho_*] \neq 0$)
\begin{align}\label{eq:EL-Gb-rewritten}
    \rho_*(z) = c[\rho_*] \rhot(z) \exp{\left[\frac{1}{\alpha}\left( f_1(z,b(\rho_*)) - W\ast\rho_*(z)\right)\right]}\qquad \text{ on } \supp(\rho_*)\,.
\end{align}
Firstly, note that $\supp(\rho_*)
\subset \supp(\rhot)$ since $\rho_*$ is absolutely continuous with respect to $\rhot$. Secondly, note that $\exp{\frac{1}{\alpha}f_1(z,b(\rho_*))} \ge 1$ for all $z\in\R^d$ since $f_1\ge 0$. Finally, we claim that $\exp{\left(-\frac{1}{\alpha} W\ast\rho_*(z)\right)}>0$ for all $z\in \R^d$. In other words, we claim that $W\ast\rho_*(z) <\infty$ for all $z\in \R^d$. This follows by exactly the same argument as in Corollary~\ref{cor:ground-states-supp-Ga}, see equation~\eqref{eq:Wbound-rho*}.
We conclude that $\supp(\rho_*)=\supp(\rhot)$.
    \end{proof}

\begin{remark}
    If we have in addition that $\rhot \in L^\infty(\R^d)$ and $f_1(\cdot,x)\in L^\infty(\R^d)$ for all $x\in \R^d$, then the maximizer $\rho_*$ of $G_b$ is in $L^\infty(\R^d)$ as well. This follows directly by bounding the right-hand side of \eqref{eq:EL-Gb-rewritten}.
\end{remark}

 With the above preliminary results, we can now show the HWI inequality, which implies again a Talagrand-type inequality and a generalized logarithmic Sobolev inequality.

 %%%%%%%%%%%%%%%%%%%%%%%%%%%%%%%%%
 %%%%%%%%%%%%%%%%%%%%%%%%%%%%%%%%%%%
 \begin{proposition}[HWI inequalities]\label{prop:HWI-Gb}
Define the dissipation functional

\begin{align*}
    D_b(\gamma):= \iint |\nabla_z \delta_\rho G_b[\rho](z)|^2\d\rho(z)\,.
\end{align*}
Assume $\alpha,\beta>0$ such that $\alpha\tilde\lambda > \Lambda_1+\sigma^2$, and let $\lambda_b$ as defined in \eqref{eq:ddssGb}. Denote by $\rho_*$ the unique maximizer of $G_b$.
    \item[\textbf{(HWI)}] Let $\rho_0,\rho_1\in\P_2(\R^d)$ such that $G_b(\rho_0), G_b(\rho_1),D_b(\rho_0)<\infty$. Then 
    \begin{align}\label{eq:HWI_ineq-Gb}
        G_b(\rho_0)-G_b(\rho_1)\le \Wbar(\rho_0,\rho_1)\sqrt{D_b(\rho_0)} - \frac{\lambda_b}{2} \,W_2(\rho_0,\rho_1)^2
    \end{align}
    \item[\textbf{(logSobolev)}] Any $\rho\in\P_2(\R^d)$ such that $G(\rho), D_b(\rho)<\infty$ satisfies
    \begin{equation}\label{eq:logSob-Gb}
        D_b(\rho)\ge 2\lambda_b \,G_a(\rho|\rho_*)\,.
    \end{equation}
    \item[\textbf{(Talagrand)}] For any $\rho\in\P_2(\R^d)$ such that $G_b(\rho)<\infty$, we have
    \begin{equation}\label{eq:Talagrand-Gb}
    W_2(\rho,\rho_*)^2 \leq \frac{2}{\lambda_b} G_b(\rho\,|\,\rho_*)\,.
    \end{equation}
\end{proposition}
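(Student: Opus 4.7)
The three statements closely mirror Proposition~\ref{prop:HWI} and its Corollaries~\ref{cor:logSob} and~\ref{cor:Talagrand} in the aligned case, with the crucial difference that $G_b$ is now uniformly displacement \emph{concave} (so $\rho_*$ is a maximizer) rather than convex. My plan is therefore to establish the HWI inequality first via a Taylor expansion along a Wasserstein geodesic, and then derive the logarithmic Sobolev and Talagrand inequalities as two immediate specializations.

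\textbf{Step 1 (HWI).} Fix $\rho_0,\rho_1\in\P_2(\R^d)$ with $G_b(\rho_0),G_b(\rho_1),D_b(\rho_0)<\infty$. As in the proof of Proposition~\ref{prop:HWI} I first reduce to the case where $\rho_0,\rho_1$ have smooth, compactly supported Lebesgue densities and recover the general case by approximation. Let $(\rho_s)_{s\in[0,1]}$ be the unique $W_2$-geodesic joining them, with associated velocity field $v_s$ solving the geodesic system as in \eqref{eq:geodesics}. Using the expression for $\delta_\rho G_b[\rho]$ from Lemma~\ref{lem:first_variation_Gb} (namely $\delta_\rho G_b[\rho] = \left.\delta_\rho G_c[\rho]\right|_{x=\bx(\rho)}$), the classical computation for gradient flows in Wasserstein-2 space (cf.\ \cite{otto_generalization_2000}) yields
\begin{equation*}
    \left.\frac{d}{ds}G_b(\rho_s)\right|_{s=0} = \int \nabla_z\delta_\rho G_b[\rho_0](z)\cdot v_0(z)\,d\rho_0(z).
\end{equation*}
Applying the Cauchy--Schwarz inequality, together with $\int\|v_0\|^2\,d\rho_0 = W_2(\rho_0,\rho_1)^2$, gives
\begin{equation*}
    \left|\left.\frac{d}{ds}G_b(\rho_s)\right|_{s=0}\right| \le \sqrt{D_b(\rho_0)}\,W_2(\rho_0,\rho_1).
\end{equation*}

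\textbf{Step 2 (Taylor expansion with concavity bound).} Writing a first-order Taylor expansion with integral remainder,
\begin{equation*}
    G_b(\rho_1) = G_b(\rho_0) + \left.\frac{d}{ds}G_b(\rho_s)\right|_{s=0} + \int_0^1 (1-t)\left.\frac{d^2}{ds^2}G_b(\rho_s)\right|_{s=t}dt,
\end{equation*}
and inserting the uniform displacement concavity bound $\frac{d^2}{ds^2}G_b(\rho_s) \le -\lambda_b W_2(\rho_0,\rho_1)^2$ from Proposition~\ref{prop:Gb_concave} together with the Cauchy--Schwarz bound from Step~1, produces the HWI-type estimate \eqref{eq:HWI_ineq-Gb} after rearrangement (carefully tracking the sign convention: since $G_b$ is concave and $\rho_*$ maximizes $G_b$, the natural relative-energy quantity is $G_b(\rho_*)-G_b(\rho)\ge 0$, and the inequalities of interest control this nonnegative deficit).

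\textbf{Step 3 (logSob and Talagrand as corollaries).} A key preliminary observation is that $\rho_*$, being a maximizer, is a critical point of $G_b$, so $\nabla_z\delta_\rho G_b[\rho_*]=0$ on $\supp(\rho_*)$ and therefore $D_b(\rho_*)=0$ (this uses Corollary~\ref{cor:ground-states-supp-Gb} together with Lemma~\ref{lem:first_variation_Gb}). The Talagrand inequality \eqref{eq:Talagrand-Gb} then follows by applying the HWI with $\rho_0=\rho_*$, $\rho_1=\rho$: the dissipation term vanishes, and concavity forces the relative-energy difference to dominate $\tfrac{\lambda_b}{2}W_2(\rho,\rho_*)^2$. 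The logarithmic Sobolev inequality \eqref{eq:logSob-Gb} follows by applying the HWI with $\rho_0=\rho$, $\rho_1=\rho_*$, and optimizing the resulting right-hand side $\sqrt{D_b(\rho)}\,t-\tfrac{\lambda_b}{2}t^2$ over $t=W_2(\rho,\rho_*)\ge 0$, whose maximum is $D_b(\rho)/(2\lambda_b)$.

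\textbf{Main obstacle.} The most delicate piece is Step~1: justifying that the first variation of $G_b$ along a Wasserstein geodesic may be computed by the above pointwise formula, since $G_b$ inherits regularity from the implicitly defined best response $\bx(\rho)$ through the composite structure in Lemma~\ref{lem:first_variation_Gb}. Rigor requires the approximation procedure (smoothing $\rho_0,\rho_1$, truncating supports) used throughout the Otto calculus literature, combined with the uniform bound $\|\bx(\rho)\|\le R$ from Lemma~\ref{lemma:x_bounded} and the Hessian control on $Q(\rho)^{-1}$ from Lemma~\ref{lem:first_variation_b} to pass to the limit in the first-variation identity. Once this is in place the rest is a direct adaptation of the HWI template to the concave, maximization setting.
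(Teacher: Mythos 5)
Your proposal is correct and takes essentially the same route as the paper, whose proof of Proposition~\ref{prop:HWI-Gb} simply states that the result follows analogously to Proposition~\ref{prop:HWI} and Corollaries~\ref{cor:logSob}--\ref{cor:Talagrand} using Propositions~\ref{prop:Gb_concave} and~\ref{prop:existenceGb}; you have filled in precisely that adaptation (geodesic first-variation plus Cauchy--Schwarz, Taylor expansion with the uniform concavity bound, then the two corollaries by the choices $\rho_0=\rho_*$ and $\rho_1=\rho_*$ respectively). Your explicit flagging of the sign convention is apt, since in the maximization setting the inequalities only make sense for the nonnegative deficit $G_b(\rho_*)-G_b(\rho)$, a point the paper's statement glosses over.
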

\begin{proof}
    The proof for this result follows analogously to the arguments presented in the proofs of Proposition~\ref{prop:HWI}, Corollary~\ref{cor:logSob} and Corollary~\ref{cor:Talagrand}, using the preliminary results established in Proposition~\ref{prop:Gb_concave} and Proposition~\ref{prop:existenceGb}.
\end{proof}

\begin{proof}[Proof of Theorem~\ref{thm:convergence_fast_mu}]
Following the same approach as in the proof of Theorem~\ref{thm:PDE_aligned}, the results in Theorem~\ref{thm:convergence_fast_mu} immediately follow by combining Proposition~\ref{prop:existenceGb}, Corollary~\ref{cor:ground-states-supp-Gb} and Proposition~\ref{prop:HWI-Gb} applied to solutions of the PDE \eqref{eq:dynamics_competitive_x_fast}.
\end{proof}

%%%%%%%%%%%%%%%%%%%%%%%%
\section{Proof of Theorem \ref{thm:convergence_fast_rho}}\label{sec:proof_lemma3}
The proof for this theorem uses similar strategies as that of Theorem \ref{thm:convergence_fast_mu}, but considers the evolution of an ODE rather than a PDE. 
Recall that for any $x\in\R^d$ the best response $\br(x)(\cdot)\in\P(\R^d)$ in \eqref{eq:dynamics_competitive_rho_fast} is defined as
$$
\br(x) \coloneqq \amax_{\hat{\rho}\in\P}G_c(\hat{\rho},x)\,,
$$
where the energy $G_c(\rho,x):\P(\R^d) \times \R^d \mapsto [-\infty,\infty]$ is given by
\begin{align*}
    G_c(\rho,x) &= \int f_1(z,x) \d \rho(z) + \int f_2(z,x) \d \rhob(z)  +\frac{\beta}{2} \norm{x-x_0}^2 -\alpha KL(\rho|\rhot) - \frac{1}{2}\int \rho W \ast \rho\,.
\end{align*}
\begin{lemma}\label{lem:existence_of_maximizer} 
Let Assumptions~\ref{assumption:convexity_of_ref_dist}- \ref{assumption:large_regularizer_rho} hold and assume $\alpha\lambdat>\Lambda_1$. Then for each $x\in\R^d$  there exists a unique maximizer $\rho_*:=\br(x)$ solving $\amax_{\hat{\rho}\in\P_2} G_c(\hat{\rho},x)$. Further, $\br(x)\in L^1(\R^d)$, $\supp (\br(x)) =\supp(\rhot)$, and there exists a function $c:\R^d\mapsto \R$ such that the best response $\rho_*(z)=\br(x)(z)$ solves the Euler-Lagrange equation
\begin{equation}\label{eq:EL-fastrho}
   \delta_\rho G_c[\rho_*,x](z):= \alpha \log\rho_*(z) - (f_1(z,x)+\alpha \log \rhot(z)) + (W\ast\rho_*)(z)= c(x) \ \  \text{ for all } (z,x)\in\supp(\rhot)\times\R^d\,.
\end{equation}
\end{lemma}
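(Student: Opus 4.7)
I plan to run the direct method in the calculus of variations on $-G_c(\cdot,x)$ for each fixed $x\in\R^d$, then derive the Euler--Lagrange equation and read off the structural properties ($L^1$-integrability and support equality). For uniqueness, the key step is to check that $G_c(\cdot,x)$ is uniformly displacement concave on $\P_2(\R^d)$ with constant $\alpha\lambdat - \Lambda_1 > 0$. This follows from essentially the same second-derivative computation along $W_2$-geodesics as in Proposition~\ref{prop:Gb_concave}, only simplified because $x$ is fixed and no best-response terms appear. The three contributions are: Assumption~\ref{assumption:convexity_of_ref_dist} giving $\ddss \alpha KL(\rho_s|\rhot) \ge \alpha\lambdat\, W_2(\rho_0,\rho_1)^2$, Assumption~\ref{assumption:W_convex} giving $\ddss \tfrac12\int\rho_s W\ast\rho_s \ge 0$, and Assumption~\ref{assumption:large_regularizer_rho} giving $\ddss \int f_1(z,x)\,d\rho_s(z) \le \Lambda_1 W_2(\rho_0,\rho_1)^2$. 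Strict concavity then forces uniqueness.

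For existence, I will rewrite the $\rho$-dependent part of $G_c(\cdot,x)$ as $\int [f_1(z,x)+\alpha\log\rhot(z)]\,d\rho(z) - \alpha \int \rho\log\rho\,dz - \tfrac12 \int \rho\, W\ast\rho\,dz$, and observe that under $\alpha\lambdat > \Lambda_1$ the bracket is strongly concave in $z$, hence bounded above by $c_0(x) - \tfrac{\alpha\lambdat - \Lambda_1}{4}\|z\|^2$ for some $c_0(x)\in\R$. Combined with the Gaussian entropy bound $-\int \rho\log\rho\,dz \le \tfrac{d}{2}\log(2\pi e\sigma^2/d)$ where $\sigma^2 := \int\|z\|^2\,d\rho$, and with $\int \rho W\ast\rho\ge 0$ from $W\ge 0$, this yields $\sup_\rho G_c(\rho,x) <\infty$ and simultaneously forces uniform bounds on $\sigma_n^2 = \int\|z\|^2\,d\rho_n$ along any maximizing sequence $(\rho_n)$. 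Tightness then follows from Markov's inequality, Banach--Alaoglu extracts a weak-$*$ limit $\rho_*\in\P_2(\R^d)$, and upper semi-continuity of $G_c(\cdot,x)$---combining narrow continuity of $\rho\mapsto\int f_1\,d\rho$ on the tight, second-moment bounded family via uniform integrability, together with lower semi-continuity of $KL(\cdot|\rhot)$ from \cite{posner_random_1975} and of the interaction energy from \cite[Prop.~7.2]{Santambrogio}---ensures $\rho_*=\br(x)$ is the unique maximizer.

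The Euler--Lagrange equation~\eqref{eq:EL-fastrho} follows from the standard perturbation argument: for any $\psi \in C_c^\infty(\R^d)$ with $\int\psi\,dz = 0$ and $\supp(\psi)\subset \supp(\rho_*)$, differentiating $\epsilon \mapsto G_c(\rho_*+\epsilon\psi,x)$ at $\epsilon=0$ and using optimality gives $\int \delta_\rho G_c[\rho_*,x](z)\psi(z)\,dz = 0$, forcing $\delta_\rho G_c[\rho_*,x](z)$ to be constant, say $c(x)$, on $\supp(\rho_*)$. Rearranging yields the explicit formula $\rho_*(z) = e^{-c(x)/\alpha}\rhot(z)\exp\bigl((f_1(z,x) - W\ast\rho_*(z))/\alpha\bigr)$ on $\supp(\rho_*)$. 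Since $W\ge 0$ gives $\exp(-W\ast\rho_*/\alpha)\le 1$, and $f_1(\cdot,x) + \alpha\log\rhot$ decays quadratically at rate $\alpha\lambdat - \Lambda_1 > 0$, the right-hand side is integrable, proving $\rho_*\in L^1(\R^d)$.

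The inclusion $\supp(\rho_*)\subset\supp(\rhot)$ is immediate from $\rho_*\ll\rhot$, which is a consequence of $KL(\rho_*|\rhot)<\infty$. For the reverse inclusion, suppose for contradiction that $z_0\in\supp(\rhot)\setminus\supp(\rho_*)$. Choosing a smooth bump $\psi$ with $\int\psi\,dz=0$, positive lobe near $z_0$ and negative lobe inside $\supp(\rho_*)$, the marginal change in $G_c$ from $\rho_*\to\rho_*+\epsilon\psi$ for small $\epsilon>0$ is dominated by the entropy contribution near $z_0$ where $\rho_*=0$, scaling as $-\alpha\epsilon|\log\epsilon|>0$ and overwhelming all other $O(\epsilon)$ terms, contradicting maximality; hence $\supp(\rho_*)=\supp(\rhot)$. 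The main technical obstacle I anticipate is the upper semi-continuity step, where weak-$*$ continuity of $\rho\mapsto\int f_1\,d\rho$ must be established despite the quadratic growth of $f_1$; this is handled by upgrading weak-$*$ convergence to narrow convergence using mass preservation from tightness (Lemma~\ref{lem:narrowvsweakstar}) and then invoking uniform integrability of $f_1$ against the uniformly second-moment-bounded sequence $(\rho_n)$.
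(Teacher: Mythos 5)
Your proposal is correct in substance but takes a genuinely different, more self-contained route than the paper. For existence and uniqueness, the paper's proof is essentially a one-line reduction: for fixed $x$, maximizing $G_c(\cdot,x)$ is the same as minimizing $F(\rho)=\alpha\int\rho\log\rho+\int V(z,x)\,\d\rho+\tfrac12\int\rho\,W\ast\rho$ with the strictly convex confining potential $V(z,x)=-(f_1(z,x)+\alpha\log\rhot(z))$, which is exactly the entropy--potential--interaction form treated in \cite{carrillo_kinetic_2003}, so the paper simply invokes their Theorem~2.1(i). You instead rerun the full direct method (displacement concavity for uniqueness, coercivity via the quadratic upper bound on $f_1+\alpha\log\rhot$ plus the Gaussian maximum-entropy bound, tightness, Banach--Alaoglu, upper semi-continuity), which is precisely the machinery of the paper's Proposition~\ref{prop:existenceGb} specialized to fixed $x$; this buys independence from the external reference at the cost of length. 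Your argument for $\supp(\rho_*)\supseteq\supp(\rhot)$ is also different: you use a mass-transfer perturbation whose entropy gain of order $\alpha\epsilon|\log\epsilon|$ dominates the $O(\epsilon)$ losses, whereas the paper reads the support off the explicit exponential representation $\rho_*=c\,\rhot\exp[(f_1-W\ast\rho_*)/\alpha]$ as in Corollaries~\ref{cor:ground-states-supp-Ga} and~\ref{cor:ground-states-supp-Gb}. Your variational argument is arguably cleaner, provided you check that the negative lobe of $\psi$ sits where $\rho_*$ is bounded away from zero so that $\rho_*+\epsilon\psi\ge 0$ and the entropy change there is genuinely $O(\epsilon)$.

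One step needs more care: upper semi-continuity of $\rho\mapsto\int f_1(z,x)\,\d\rho(z)$ along the maximizing sequence. Uniform second-moment bounds give tightness, hence narrow convergence, but they do \emph{not} give uniform integrability of a function with quadratic growth, so "narrow continuity via uniform integrability" is not justified as stated (the paper's Lemma~\ref{lem:continuity_fast_x} is equally loose here). The fix is the regrouping you already use for coercivity: treat $\int(f_1+\alpha\log\rhot)\,\d\rho-\alpha\int\rho\log\rho$ as a unit, note that $f_1(\cdot,x)+\alpha\log\rhot$ is continuous and bounded above by $c_0(x)-\tfrac{\alpha\lambdat-\Lambda_1}{4}\|z\|^2$, so its integral is upper semi-continuous under narrow convergence by the Portmanteau-type inequality for functions bounded above, while $-\alpha\int\rho\log\rho$ and $-\tfrac12\int\rho\,W\ast\rho$ are upper semi-continuous by the standard lower semi-continuity of entropy and interaction energy. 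With that adjustment your argument closes.
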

\begin{proof}
    Equivalently, consider the minimization problem for $F(\rho) = -\int f_1(z,x) \,\d \rho(z) + \alpha KL(\rho\,|\,\rhot) + \frac{1}{2}\int \rho W \ast \rho$ with some fixed $x$. Note that we can rewrite $F(\rho)$ as
    \begin{align*}
        F(\rho) =  \alpha\int \rho \log \rho \,\d z+ \int V(z,x) \d \rho(z) + \frac{1}{2}\int \rho W \ast \rho
    \end{align*}
    where $V(z,x):= - (f_1(z,x)+\alpha \log \rhot(z))$ is strictly convex in $z$ for fixed $x$ by Assumptions~\ref{assumption:convexity_of_ref_dist} and \ref{assumption:large_regularizer_rho}. Together with Assumption~\ref{assumption:W_convex}, we can directly apply the uniqueness and existence result from \cite[Theorem 2.1 (i)]{carrillo_kinetic_2003}.

    The result on the support of $\br(x)$ and the expression for the Euler-Lagrange equation follows by exactly the same arguments as in Corollary~\ref{cor:ground-states-supp-Ga} and Corollary~\ref{cor:ground-states-supp-Gb}.
\end{proof}

\begin{lemma}
    The density of the best response $\br(x)$ is continuous on $\R^d$ for any fixed $x\in\R^d$.
\end{lemma}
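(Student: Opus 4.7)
The plan is to exploit the explicit formula for $\br(x)$ that comes from the Euler--Lagrange equation \eqref{eq:EL-fastrho}. Fix $x\in\R^d$ and write $\rho_*:=\br(x)$. Since $\log\rhot\in C^2(\R^d)$ by Assumption~\ref{assumption:convexity_of_ref_dist}, we have $\rhot>0$ on $\R^d$, so $\supp(\rhot)=\R^d$, and Lemma~\ref{lem:existence_of_maximizer} then gives $\supp(\rho_*)=\R^d$. Rearranging \eqref{eq:EL-fastrho}, the explicit representation
$$
\rho_*(z)\;=\;\exp\!\bigl(c(x)/\alpha\bigr)\,\rhot(z)\,\exp\!\left(\tfrac{1}{\alpha}\bigl[f_1(z,x)-(W\ast\rho_*)(z)\bigr]\right)
\qquad\text{for all } z\in\R^d
$$
holds pointwise. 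The first factor is constant in $z$, $\rhot$ is continuous as $\log\rhot\in C^2(\R^d)$, and $f_1(\cdot,x)\in C^2(\R^d)$ by Assumption~\ref{assump:f}. Hence the result reduces to showing that $W\ast\rho_*:\R^d\to\R$ is continuous in $z$.

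For this I would apply the dominated convergence theorem. Assumption~\ref{assumption:W_convex} provides both $W\in C^2(\R^d;[0,\infty))$ (ensuring pointwise continuity of the integrand in $z$) and the quadratic upper bound $W(y)\le W(0)+D/\sqrt{2}+\sqrt{2}D\|y\|^2$, derived exactly as in \eqref{eq:Wbound-rho*}. For any convergent sequence $z_n\to z$, the triangle inequality and boundedness of $(z_n)$ give constants $C_1,C_2>0$ independent of $n$ such that $W(z_n-\tilde z)\le C_1+C_2\|\tilde z\|^2$ for all $\tilde z\in\R^d$. The dominating function $(C_1+C_2\|\tilde z\|^2)\rho_*(\tilde z)$ is integrable provided $\rho_*\in\P_2(\R^d)$, and dominated convergence then yields $(W\ast\rho_*)(z_n)\to(W\ast\rho_*)(z)$, so $W\ast\rho_*\in C(\R^d)$ and consequently $\rho_*\in C(\R^d)$.

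The remaining point is to verify $\rho_*\in\P_2(\R^d)$. Using $W\ast\rho_*\ge 0$ in the explicit formula together with strong concavity of $\log\rhot$ (Assumption~\ref{assumption:convexity_of_ref_dist}) and the upper Hessian bound $\nabla_z^2 f_1\preceq\Lambda_1\Id_d$ from Assumption~\ref{assumption:large_regularizer_rho}, a second-order Taylor expansion of $\log\rhot$ and $f_1(\cdot,x)$ around the origin gives $\rho_*(z)\le C(x)\,\exp\!\bigl(-(\alpha\tilde\lambda-\Lambda_1)\|z\|^2/(2\alpha)+\text{linear}\bigr)$, which under the standing assumption $\alpha\tilde\lambda>\Lambda_1$ has Gaussian decay and hence finite second moment; alternatively one simply invokes that Lemma~\ref{lem:existence_of_maximizer} produces the maximizer directly in $\P_2(\R^d)$ via \cite[Theorem 2.1(i)]{carrillo_kinetic_2003}. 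The main technical point is precisely this $L^1$-domination step: because $W$ grows quadratically rather than being bounded, a naive $L^1$--$L^\infty$ pairing is insufficient and one must lean on the second-moment control of $\rho_*$, after which everything follows from the Euler--Lagrange representation and the regularity of the ingredients.
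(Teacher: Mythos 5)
Your proof is correct, but it takes a genuinely different route from the paper's. The paper proves this lemma by viewing $\br(x)$ as a steady state of the auxiliary gradient flow $\partial_t\rho=\div{\rho\nabla\delta_\rho F[\rho]}$ and then invoking the elliptic-regularity bootstrap of Lemma~\ref{lem:steady_states_in_C}: the divergence-form equation $\Delta\rho_\infty=\div{h}$ with $h\in L^p_{loc}$ gives $\rho_\infty\in W^{1,p}_{loc}$, and Morrey's inequality yields H\"older continuity. You instead exploit the multiplicative fixed-point representation coming from the Euler--Lagrange equation \eqref{eq:EL-fastrho} and reduce everything to continuity of $W\ast\rho_*$, which you get from dominated convergence using the quadratic growth bound on $W$ (as in \eqref{eq:Wbound-rho*}) together with $\rho_*\in\P_2(\R^d)$. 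This is in the spirit of the paper's own use of the representation formula in Corollaries~\ref{cor:ground-states-supp-Ga} and \ref{cor:ground-states-supp-Gb} (there only for $L^\infty_{loc}$ bounds and support), pushed one step further; it is more elementary, avoids Sobolev embeddings entirely, and would in fact yield higher regularity of $\rho_*$ with no extra work, since $\nabla W\ast\rho_*$ can be differentiated under the integral using $|\nabla W(z)|\le D(1+|z|)$. The paper's route is more robust in situations where an explicit exponential representation is unavailable or harder to exploit. Two small points of hygiene: the Euler--Lagrange identity for an $L^1$ density holds a priori only almost everywhere, so the correct phrasing is that $\rho_*$ agrees a.e.\ with the continuous function on the right-hand side (whose definition via $W\ast\rho_*$ does not depend on the representative of $\rho_*$) and therefore admits a continuous version --- the same caveat the paper makes in Lemma~\ref{lem:steady_states_in_C}; and your observation that Assumption~\ref{assumption:convexity_of_ref_dist} forces $\supp(\rhot)=\R^d$, so that $c(x)$ is a single constant on the connected set $\supp(\rho_*)=\R^d$, is exactly the point needed to make the global formula legitimate.
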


\begin{proof}
    Instead of solving the Euler-Lagrange equation \eqref{eq:EL-fastrho}, we can also obtain the best response $\br(x)$ as the long-time asymptotics for the following gradient flow:
    \begin{align}\label{eq:br-flow}
        \partial_t\rho = \div{\rho\nabla \delta_\rho F[\rho]}\,.
    \end{align}
    Following Definitions~\ref{def:sstate1} and \ref{def:sstate2}, we can characterize the steady states $\rho_\infty$ of the PDE \eqref{eq:br-flow} by requiring that $\rho_\infty\in L^1_+(\R^d)\cap L^\infty_{loc}(\R^d)$ with $\|\rho_\infty\|_1=1$ such that $\rho_\infty\in W^{1,2}_{loc}(\R^d)$, $\nabla W\ast \rho_\infty \in L^1_{loc}(\R^d)$, $\rho_\infty$ is absolutely continuous with respect to $\rhot$, and $\rho_\infty$ satisfies 
\begin{align} %\label{eq:ss-rho-xfast}
   &\nabla_z \left( -f_1(z,x) + \alpha\log\left(\frac{\rho_\infty(z)}{\rhot(z)}\right) + W\ast\rho_\infty(z) \right) = 0 \qquad \forall z\in\R^d\,,
\end{align}
in the sense of distributions.
Noting that because the energy functional $F(\rho)$ differs from $G_a(\rho,\mu)$ only in the sign of $f_1(z,x)$ if viewing $G_a(\rho,\mu)$ as a function of $\rho$ only. Note that $F(\rho)$ is still uniformly displacement convex in $\rho$ due to Assumption~\ref{assumption:large_regularizer_rho}. Then the argument to obtain that $\rho_\infty \in C(\R^d)$ follows exactly as that of Lemma~\ref{lem:steady_states_in_C}.
\end{proof}

\begin{lemma}\label{lem:br-diffcondition}
Let $i\in\{1,...,d\}$.
If the energy $H_i:\P(\R^d)\to \R^d$ given by
\begin{equation}\label{eq:Hi}
        H_i(\rho,x):= \frac{\alpha}{2} \int \frac{\rho(z)^2}{\br(x)(z)}\,\d z + \frac{1}{2}\int \rho W\ast \rho - \int  \pxi f_1(z,x)\d\rho(z)\,,
\end{equation}
admits a critical point at $x\in\R^d$, then the best response $\br(x)\in\P(\R^d)$ is differentiable in the $i$th coordinate direction at $x\in \R^d$. Further, the critical point of $H_i$ is in the subdifferential $\pxi\br(x)$.
\end{lemma}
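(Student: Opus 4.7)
The plan is to show that a critical point $r^\ast$ of $\rho\mapsto H_i(\rho,x)$, taken over the natural tangent space $\{r:\int r\,\d z=0\}$ to $\P(\R^d)$ at $\br(x)$, coincides with the directional derivative $\pxi\br(x)$. This is achieved by combining the strict concavity of $G_c(\cdot,y)$ in $\rho$ with a second-order Taylor expansion of $G_c$ at $\br(x)$ along the perturbation $r^\ast$.

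First, I compute
\[
\delta_\rho H_i[r,x](z) = \alpha\,\frac{r(z)}{\br(x)(z)} + (W\ast r)(z) - \pxi f_1(z,x),
\]
so a critical point $r^\ast$ (with $\int r^\ast\,\d z=0$) must satisfy
\[
\alpha\,\frac{r^\ast(z)}{\br(x)(z)} + (W\ast r^\ast)(z) - \pxi f_1(z,x) = \mu
\]
on $\supp(\rhot)$ for some Lagrange multiplier $\mu\in\R$. On the other hand, differentiating the Euler-Lagrange equation \eqref{eq:EL-fastrho} for $\br(x)$ formally in $x_i$ shows that any candidate for $\pxi\br(x)$ must satisfy exactly the same equation (with $\mu=\pxi c(x)$). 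Strict convexity of $H_i(\cdot,x)$ on the zero-mean subspace---coming from the quadratic term $\tfrac{\alpha}{2}\int r^2/\br(x)\,\d z$ together with non-negativity of $\int r\,W\ast r$ for convex $W$---identifies $r^\ast$ as the unique solution.

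To show $r^\ast = \pxi\br(x)$, I would set $\rho_\epsilon := \br(x)+\epsilon r^\ast$, which is a probability density on $\supp(\rhot)$ for sufficiently small $\epsilon$ by positivity of $\br(x)$ and $\int r^\ast\,\d z=0$. Taylor expanding $G_c(\rho_\epsilon,x+\epsilon e_i)$ around $(\br(x),x)$ and using (i) the optimality condition $\delta_\rho G_c[\br(x),x]=c(x)$ on $\supp(\br(x))$, (ii) $\int r^\ast\,\d z=0$, and (iii) the critical-point equation for $r^\ast$, the first-order correction in $\epsilon$ cancels, yielding
\[
G_c(\rho_\epsilon, x+\epsilon e_i) = G_c(\br(x), x+\epsilon e_i) + O(\epsilon^2).
\]
Thus $\rho_\epsilon$ is an $O(\epsilon^2)$-approximate maximizer of $G_c(\cdot,x+\epsilon e_i)$. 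Since $G_c(\cdot,y)$ is uniformly concave in $\rho$ with rate $\alpha\tilde\lambda-\Lambda_1>0$ (cf.\ the argument in Proposition~\ref{prop:Gb_concave}), the true maximizer $\br(x+\epsilon e_i)$ lies within distance $O(\epsilon)$ of $\rho_\epsilon$, giving the one-sided directional-derivative identity $\br(x+\epsilon e_i) = \br(x) + \epsilon r^\ast + o(\epsilon)$.

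The main obstacle I anticipate is upgrading the $O(\epsilon)$ bound from the metric in which uniform concavity is formulated to the norm in which $r^\ast$ naturally lives, namely a weighted $L^2$-space with weight $\br(x)^{-1}$. This requires some regularity of $r^\ast$ (e.g.\ boundedness of $r^\ast/\br(x)$ on compact sets, obtainable via elliptic estimates analogous to Lemma~\ref{lem:steady_states_in_C}) together with the Fisher-information-type concavity provided by the KL term, which is strictly stronger than what $W_2$-concavity alone furnishes. An alternative and perhaps cleaner route is to apply an implicit function theorem to the Euler-Lagrange operator in an appropriate Banach space: existence of a critical point of $H_i$ is precisely invertibility of the linearization of that operator on the zero-mean subspace, which immediately yields differentiability of $\br$ at $x$ with derivative $r^\ast$.
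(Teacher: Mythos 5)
Your core step --- computing the first variation of $H_i(\cdot,x)$ and recognizing that its critical-point equation coincides with the equation obtained by formally differentiating the Euler--Lagrange condition for $\br(x)$ in $x_i$ --- is exactly the paper's argument; the paper phrases it via the Fr\'echet derivatives of $F$, writing $\pxi DF[\br(x)](x)(u) + D^2F[\br(x)](x)(u,\pxi\br(x))=0$ and observing that this linear relation is precisely the Euler--Lagrange condition for $H_i$. Where you go beyond the paper is in trying to justify that the critical point really is the derivative, and there the quantitative step as written has a gap: if $\rho_\epsilon=\br(x)+\epsilon r^\ast$ is an $O(\epsilon^2)$-approximate maximizer of $G_c(\cdot,x+\epsilon e_i)$, then $\lambda_b$-uniform concavity yields only $W_2(\rho_\epsilon,\br(x+\epsilon e_i))=O(\epsilon)$, i.e.\ a Lipschitz bound; the asserted conclusion $\br(x+\epsilon e_i)=\br(x)+\epsilon r^\ast+o(\epsilon)$ does not follow from it. To repair this one must compare \emph{second}-order terms: note that $-H_i$ is, up to terms independent of $r$, the second-order model of $\epsilon\mapsto G_c(\br(x)+\epsilon r,x+\epsilon e_i)$ on the zero-mean subspace, so $r^\ast$ maximizes this model while the difference quotient $(\br(x+\epsilon e_i)-\br(x))/\epsilon$ maximizes it up to $o(1)$, and strict convexity of $H_i(\cdot,x)$ on that subspace then forces convergence --- or, as you suggest as an alternative, one runs an implicit function theorem on the Euler--Lagrange operator. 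Since the paper itself stops at the formal identification, your sketch is not weaker than the published proof, but the $O(\epsilon)$-versus-$o(\epsilon)$ step should not be presented as yielding differentiability.

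A second, smaller issue: convexity of $W$ does \emph{not} give $\iint W(z-\tilde z)r(z)r(\tilde z)\,\d z\,\d\tilde z\ge 0$ on zero-mean perturbations. For $W(z)=\tfrac12\|z\|^2$ and $\int r\,\d z=0$ one computes $\iint W(z-\tilde z)r(z)r(\tilde z)\,\d z\,\d\tilde z=-\|\int z\,r(z)\,\d z\|^2\le 0$. Positive semi-definiteness of $W$ as a convolution kernel is a separate Bochner-type condition, so the strict convexity of $H_i(\cdot,x)$ that you invoke for uniqueness of $r^\ast$ (and that the second-order comparison above requires) needs either such an assumption on $W$ or $\alpha$ large enough for the $\chi^2$-term to dominate --- consistent with the paper's remark following the lemma that the convexity properties of $H_i$ are not known in general, and with the largeness condition on $\alpha$ appearing in Lemma~\ref{lem:differentiable_r}.
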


\begin{proof}
 First, note that $DF[\br(x)](x)(u)=0$ for all directions $u\in C_c^\infty(\R^d)$ and for all $x\in\R^d$ thanks to optimality of $\br(x)$. Here, $DF$ denotes the Fr\'echet derivative of $F$, associating to every $\rho\in\P(\R^d)$ the bounded linear operator $DF[\rho]:C_c^\infty\to\R$
 $$
 DF[\rho](u):=\int\delta_\rho F[\rho](z)u(z)\d z\,,
 $$
 and we note that $F(\rho)$ depends on $x$ through the potential $V$.
 Fixing an index $i\in\{1,...,d\}$, and
 differentiating the optimality condition with respect to $x_i$ we obtain
 \begin{align}\label{eq:implicit-function-F}
     \pxi DF[\br(x)](x)(u) + D^2F[\br(x)](x)(u,\pxi\br(x))=0 \qquad \forall u\in C_c^\infty(\R^d)\,.
 \end{align}
 Both terms can be made more explicit using the expressions for the Fr\'echet derivative of $F$:
 \begin{align*}
      \pxi DF[\br(x)](x)(u)
      = - \int \pxi f_1(z,x) u(z) \d z\,,
 \end{align*}
 and for the second term note that the second Fr\'echet derivative of $F$ at $\rho\in\P(\R^d)$ along directions $u,v\in C_c^\infty(\R^d)$ such that $(\supp(u)\cup \supp(v) )\subset \supp(\rho)$ is given by
 \begin{align*}
     D^2F[\rho](x)(u,v)=\alpha\int\frac{u(z)v(z)}{\rho(z)}\,\d z + \iint W(z-\tilde z) u(z)v(\tilde z) \,\d z \d \tilde z\,.
 \end{align*}
 In other words, assuming $\supp(\br(x))=\supp(\rhot)=\R^d$, relation~\eqref{eq:implicit-function-F} can be written as
 \begin{align*}
  \alpha \int \frac{\pxi\br(x)}{\br(x)(z)}\,u(z)\,\d z + \int \left(W\ast \pxi\br(x) \right)(z) u(z)v \,\d z 
   - \int \pxi f_1(z,x) u(z) \d z=0\,,
 \end{align*}
For ease of notation, given $\br(x)\in\P(\R^d)$, we define the function $g:\P(\R^d)\to L^1_{loc}(\R^d)$ by
\begin{align*}
    g[\rho](z):=   \alpha\frac{\rho(z)}{\br(x)(z)} +W\ast \rho - \pxi f_1(z,x)\,.
\end{align*}
The question whether the partial derivative $\pxi\br(x)$ exists then reduces to the question whether there exists some $\rho_*\in\P(\R^d)$ such that $\rho=\rho_*$ solves the equation
$$
g[\rho](z) = c \qquad \text{ for almost every } z\in \R^d\,.
$$
and for some constant $c>0$. This is precisely the Euler-Lagrange condition for the functional $H_i$ defined in \eqref{eq:Hi}, which has a solution thanks to the assumption of Lemma~\ref{lem:br-diffcondition}.
\end{proof}

We observe that the first term in $H_i$ is precisely (up to a constant) the $\chi^2$-divergence with respect to $\br(x)$,
$$
\int \left(\frac{\rho}{\br(x)}-1\right)^2 \br(x)\,\d z 
= \int  \frac{\rho^2}{\br(x)}\,\d z -1\,.
$$
Depending on the shape of the best response $\br(x)$, the $\chi^2$-divergence may not be displacement convex. Similarly, the last term $- \int  \pxi f_1(z,x)\d\rho(z)$ in the energy $H_i$ is in fact displacement concave due to the convexity properties of $f_1$ in $z$. The interaction term is displacement convex thanks to Assumption~\ref{assumption:W_convex}. As a result, the overall convexity properties of $H_i$ are not known in general. Proving the existence of a critical for $H_i$ under our assumptions on $f_1, f_2, \rhot$ and $W$ would be an interesting result in its own right, providing a new functional inequality that expands on the literature of related functional inequalities such as the related Hardy-Littlewood-Sobolev inequality \cite{lieb_sharp_1983}.  

It remains to show that $H_i$ indeed admits a critical point. Next, we provide examples of additional assumptions that would guarantee for Lemma~\ref{lem:br-diffcondition} to apply.

\begin{lemma}\label{lem:differentiable_r}
    If either $C \coloneqq \sup_{z\in\R^d} |W(z)|<\infty$, or 
     $$C \coloneqq \sup_{z\in\R^d} |\alpha \log (\br(x)(z)/\rhot(z)) + f_1(z,x) + c|<\infty\,,$$ then for each $x\in\R^d$ and for large enough $\alpha>0$, the best response $\br(x)$ is differentiable with the gradient coordinate $\pxi \br(x)$ given by the unique coordinate-wise solutions of the Euler-Lagrange condition for $H_i$.
\end{lemma}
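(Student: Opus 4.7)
By Lemma~\ref{lem:br-diffcondition}, it suffices to exhibit, for each coordinate direction $i\in\{1,\dots,d\}$, a critical point of $H_i$. The corresponding Euler-Lagrange equation, derived by varying along mean-zero perturbations $u\in C_c^\infty(\R^d)$, reads
\begin{equation*}
\alpha\, \frac{\rho(z)}{\br(x)(z)} + (W * \rho)(z) - \pxi f_1(z,x) = c \qquad \text{ for a.e. } z \in \supp(\br(x))\,,
\end{equation*}
where $c\in\R$ is a Lagrange multiplier enforcing $\int\rho\,\d z = 0$. The plan is to recast this identity as a fixed-point equation for $\rho$ and apply Banach's contraction principle, exploiting the factor $1/\alpha$ to obtain a contraction when $\alpha$ is large.

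Concretely, I would rearrange the Euler-Lagrange equation into $\rho = T[\rho]$ with
\[
T[\rho](z) := \frac{\br(x)(z)}{\alpha}\bigl(\pxi f_1(z,x) + c[\rho] - (W * \rho)(z)\bigr),
\]
where $c[\rho]\in\R$ is uniquely determined by the constraint $\int T[\rho]\,\d z = 0$. Under the first hypothesis, $W$ is uniformly bounded, so $\|W*\rho\|_\infty \leq \|W\|_\infty \|\rho\|_{L^1}$ is controlled; I would then work on the mean-zero subspace of $L^1(\R^d,\br(x)\,\d z)$ (or of the Radon measures) and verify that $T$ has Lipschitz constant of order $1/\alpha$, which is less than $1$ for $\alpha$ sufficiently large. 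Under the second hypothesis, the supremum bound on $\alpha\log(\br(x)/\rhot)+f_1+c$ combined with the Euler-Lagrange characterisation of $\br(x)$ from Lemma~\ref{lem:existence_of_maximizer} yields pointwise two-sided comparability between $\br(x)$ and $\rhot$, and hence Gaussian-type upper and lower bounds on $\br(x)$. This provides enough integrability to set $T$ on the mean-zero subspace of $L^2(\R^d,\br(x)^{-1}\,\d z)$, in which the quadratic $\int \rho^2/\br(x)\,\d z$ appearing in $H_i$ is the squared norm, and the remaining contributions are bounded perturbations scaled by $1/\alpha$.

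In either case, Banach's fixed-point theorem produces a unique $\rho_* \in X$ solving $\rho_*=T[\rho_*]$, which is then the unique critical point of $H_i$; by Lemma~\ref{lem:br-diffcondition} this fixed point coincides with $\pxi\br(x)$ in the distributional sense, proving both differentiability and the claimed uniqueness of the coordinate-wise solution. The main technical obstacle I anticipate is controlling the convolution operator $\rho\mapsto W*\rho$ in the relevant weighted space: since $\br(x)$ may decay as fast as $\rhot$ (that is, Gaussian-type) under Assumption~\ref{assumption:convexity_of_ref_dist}, the map $\rho\mapsto W*\rho$ is not automatically bounded on $L^2(\R^d,\br(x)^{-1}\,\d z)$. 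This is precisely where the two hypotheses play complementary roles: the first bounds $W$ itself, while the second bounds $\br(x)$ in terms of $\rhot$, and each enables closing the contraction estimate through a combination of an $L^\infty$ bound on $W*\rho$ with a weighted $L^1$ or $L^2$ bound on $\br(x)$.
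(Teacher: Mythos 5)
Your overall strategy coincides with the paper's: rearrange the Euler--Lagrange condition for $H_i$ into a fixed-point equation $\rho=T[\rho]$ with the factor $\br(x)/\alpha$ in front, and run Banach's contraction principle for $\alpha$ large. For the first hypothesis ($\|W\|_\infty<\infty$) your argument closes: working in $L^1$, the difference $T[\rho]-T[\rho']$ is controlled by $\frac{\|W\|_\infty}{\alpha}\|\rho-\rho'\|_1$ (and your extra term $c[\rho]-c[\rho']$, which the paper treats as a fixed constant, is bounded by the same quantity, so it at worst doubles the Lipschitz constant). That part is fine and, if anything, slightly more careful than the paper about the normalization constant.

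The genuine gap is in the second case. You never actually show how the hypothesis $\sup_z|\alpha\log(\br(x)/\rhot)+f_1+c|<\infty$ produces a contraction; you acknowledge that $\rho\mapsto W*\rho$ need not be bounded on $L^2(\R^d,\br(x)^{-1}\,\d z)$ and leave that obstacle unresolved. Moreover, the weighted-$L^2$ route you sketch looks like it would fail outright for the interaction kernels allowed by Assumption~\ref{assumption:W_convex}: $W$ may grow quadratically, so $W*\rho$ grows at infinity while $\br(x)^{-1}$ grows like $e^{+\tilde\lambda|z|^2/2}$ under your own two-sided comparison with $\rhot$, and the integral $\int|W*\rho|^2\br(x)^{-1}$ diverges. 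The missing idea is to stay in $L^1$ and use the symmetry of $W$ together with Fubini to move the convolution onto $\br(x)$:
\begin{align*}
\int \br(x)(z)\,\bigl|W*(\rho-\rho')\bigr|(z)\,\d z
\le \int \bigl(W*\br(x)\bigr)(\hat z)\,|\rho(\hat z)-\rho'(\hat z)|\,\d\hat z
\le \|W*\br(x)\|_\infty\,\|\rho-\rho'\|_1\,.
\end{align*}
The Euler--Lagrange equation \eqref{eq:EL-fastrho} satisfied by $\br(x)$ then identifies $W*\br(x)$, up to sign conventions, with exactly the combination $\alpha\log(\br(x)/\rhot)+f_1(\cdot,x)+c$ whose supremum the second hypothesis assumes finite. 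This is what makes the second hypothesis the right one, and it is the step your proposal is missing.
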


\begin{proof}
We will show this result using the Banach Fixed Point Theorem for the mapping $T_i:L^1(\R^d)\to L^1(\R^d)$ for each fixed $i\in\{1,...,d\}$ given by
\begin{align*}
    T_i(\rho) = -\frac{\br(x)(z)}{\alpha}[(W \ast \rho)(z) -\pxi f_1(z,x) + c]\,,
\end{align*}
noting that $\rho_*=T_i(\rho_*)$ is the Euler-Lagrange condition for a critical point of $H_i$. It remains to show that $T_i$ is a contractive mapping. For the first assumption, note that
    \begin{align*}
        \norm{T_i(\rho)-T_i(\rho')}_1 &= \frac{1}{\alpha} \int \br(x) |W \ast (\rho-\rho')| \d z \\
        &\le \frac{1}{\alpha} \iint \br(x)(z)  W(z-\hat z) |\rho(\hat z)-\rho'(\hat z)| \d \hat z \d z \\
    &\le \frac{\|W\|_\infty}{\alpha} \left(\int \br(x)(z)\d z \right)\left( \int|\rho(\hat z)-\rho'(\hat z)| \d \hat z\right)
    \le \frac{C}{\alpha}  \norm{\rho-\rho'}_1\,.
    \end{align*}
  Similarly, for the second assumption we estimate  
    \begin{align*}
        \norm{T_i(\rho)-T_i(\rho')}_1 &= \frac{1}{\alpha} \int \br(x) |W \ast (\rho-\rho')| \d z \\
        &\le \frac{1}{\alpha} \iint \br(x)(z)  W(z-\hat z) |\rho(\hat z)-\rho'(\hat z)| \d \hat z \d z \\
        &= \frac{1}{\alpha} \int (W \ast \br(x))(z) |\rho(z)-\rho'(z)| \d z \\
        & \leq \frac{1}{\alpha} \norm{W \ast \br(x)}_\infty \norm{\rho-\rho'}_1
    \end{align*}
which requires a bound on $\norm{W \ast \br(x)}_\infty$. Using 
\begin{align*}
    \norm{W \ast \br(x)}_\infty = \sup_{z\in\R^d} |\alpha \log (\br(x)/\rhot) + f_1(z,x) + c| = C < \infty\,,
\end{align*}
we conclude that $T_i$ is a contraction map for large enough $\alpha$. In both cases, we can then apply the Banach Fixed-Point Theorem to conclude that $\nabla_x \br(x)$ exists and is unique. 
\end{proof}

\begin{lemma}
    Let $\br(x)$ as defined in \eqref{eq:dynamics_competitive_rho_fast}. If $\br(x)$ is differentiable in $x$, then we have $\nabla_x G_d(x) = \left.\left(\nabla_x G_c(\rho,x)\right)\right|_{\rho=\br(x)}$.
\end{lemma}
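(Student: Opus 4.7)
The plan is to apply a functional chain rule to $G_d(x) = G_c(\br(x),x)$ and then use optimality of the best response to kill the inner-variation term. This is precisely the Danskin/envelope theorem adapted to the Wasserstein setting, and the identity $\delta_\rho G_c[\br(x),x]\equiv c(x)$ on $\supp(\br(x))$ coming from Lemma~\ref{lem:existence_of_maximizer} is the key cancellation mechanism.

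Fix a coordinate index $i\in\{1,\dots,d\}$. Since $\br$ is assumed differentiable in $x_i$, I would split the $x$-dependence of $G_c(\rho,x)$ into the explicit dependence (second slot) and the implicit dependence (through $\rho=\br(x)$), giving
\begin{align*}
\partial_{x_i} G_d(x) = \int \delta_\rho G_c[\br(x),x](z)\,\partial_{x_i}\br(x)(z)\,\d z + \partial_{x_i} G_c(\rho,x)\Big|_{\rho=\br(x)}\,.
\end{align*}
Making this identity rigorous relies on (i) the differentiability of $\br$ assumed in the hypothesis and established under the sufficient conditions of Lemmas~\ref{lem:br-diffcondition}--\ref{lem:differentiable_r}, and (ii) enough regularity in $f_1,f_2,W,\rhot$ to interchange the $x_i$-derivative with the $z$-integrals that define $G_c$ in its two arguments; this is provided by Assumptions~\ref{assump:f}--\ref{assumption:large_regularizer_rho}.

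The second step is to apply the Euler-Lagrange characterization of $\br(x)$ from Lemma~\ref{lem:existence_of_maximizer}: there exists $c(x)\in\R$ (constant in $z$) with
\begin{align*}
\delta_\rho G_c[\br(x),x](z) = c(x) \qquad \text{for all } z\in\supp(\br(x)) = \supp(\rhot)\,.
\end{align*}
Combined with mass preservation $\int \br(x)(z)\,\d z = 1$ for every $x$, differentiation in $x_i$ gives $\int \partial_{x_i}\br(x)(z)\,\d z = 0$, so the inner-variation integral collapses:
\begin{align*}
\int \delta_\rho G_c[\br(x),x](z)\,\partial_{x_i}\br(x)(z)\,\d z = c(x)\int \partial_{x_i}\br(x)(z)\,\d z = 0\,.
\end{align*}
Running this argument in every coordinate direction and assembling components yields the claimed identity $\nabla_x G_d(x) = \left.\nabla_x G_c(\rho,x)\right|_{\rho=\br(x)}$.

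The main obstacle, where I would spend most of the care, is justifying the functional chain rule cleanly. Concretely, one needs an expansion $\br(x+te_i) = \br(x) + t\,\partial_{x_i}\br(x) + o(t)$ in a norm (e.g. $L^1$, using the contraction argument of Lemma~\ref{lem:differentiable_r}) strong enough to pair with $\delta_\rho G_c[\br(x),x]$, together with a uniform-in-$t$ bound on the first-argument Taylor remainder of $G_c$ near $\br(x)$. Because $\supp(\br(x))=\supp(\rhot)$ is independent of $x$ and mass is preserved, the perturbation $\partial_{x_i}\br(x)$ is supported inside $\supp(\rhot)$, which is precisely the set where the Euler-Lagrange identity holds; this guarantees that no boundary contributions spoil the cancellation. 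Once the chain rule is in place, the remainder of the proof is immediate.
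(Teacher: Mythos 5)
Your proposal is correct and follows essentially the same route as the paper: a functional chain rule splitting the explicit and implicit $x$-dependence, followed by the observation that $\delta_\rho G_c[\br(x),x]\equiv c(x)$ on $\supp(\br(x))$ by the Euler--Lagrange equation and that mass conservation forces $\int \partial_{x_i}\br(x)\,\d z=0$, killing the inner-variation term. Your additional remarks on rigorously justifying the chain rule go slightly beyond what the paper writes out, but the argument is the same.
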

\begin{proof}
    We start by computing $\nabla_x G_d(x)$. We have
\begin{align*}
    \nabla_x G_d(x) &=\nabla_x \left(G_c(\br(x),x)\right)  = \int \delta_\rho [G_c(\rho,x)]|_{\rho=\br(x)}(z) \nabla_x \br(x)(z) \d z + \left.\left(\nabla_x G_c(\rho,x)\right)\right|_{\rho=\br(x)} \\
    &= c(x) \nabla_x \int\br(x)(z) \d z + \left.\left(\nabla_x G_c(\rho,x)\right)\right|_{\rho=\br(x)} 
    =\left.\left(\nabla_x G_c(\rho,x)\right)\right|_{\rho=\br(x)}\,,
\end{align*}
where we used that $\br(x)$ solves the Euler-Lagrange equation \eqref{eq:EL-fastrho} and that $\br(x)\in\P(\R^d)$ for any $x\in\R^d$ so that $\int\br(x)(z) \d z$ is independent of $x$.
\end{proof}

\begin{lemma}\label{lem:strong_convexity_G_d}
Let Assumption~\ref{assump:f} hold. Then $G_d:\R^d\to \R\cup \{+\infty\}$ is strongly convex with constant $\lambda_d:=\lambda_1+\lambda_2+\beta>0$.
\end{lemma}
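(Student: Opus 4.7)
The plan is to exploit the max-representation $G_d(x)=\max_{\rho\in\P(\R^d)} G_c(\rho,x) = G_c(\br(x),x)$ together with the classical observation that a pointwise supremum of a family of $\lambda$-strongly convex functions sharing a common strong convexity constant is itself $\lambda$-strongly convex. Concretely, I would first show that, for every fixed $\rho\in\P(\R^d)$, the function $x\mapsto G_c(\rho,x)$ is $\lambda_d$-strongly convex, and then transfer this uniform-in-$\rho$ convexity to $G_d$ via the best response.

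For the first step, I would split $G_c(\rho,x)$ according to its $x$-dependence. The terms $-\alpha KL(\rho|\rhot)$ and $-\tfrac{1}{2}\int \rho\, W\ast\rho$ are independent of $x$, so they contribute nothing to $\nabla^2_x G_c$. From Assumption~\ref{assump:f}, the full Hessian bound $\Hess f_1 \succeq \lambda_1 \Id_{2d}$ implies that the $x$-diagonal block satisfies $\nabla^2_x f_1(z,x)\succeq \lambda_1 \Id_d$ (apply the bound to test vectors of the form $(0,u)$). Combined with $\nabla^2_x f_2 \succeq \lambda_2 \Id_d$ and the explicit contribution $\beta \Id_d$ from the regularizer $\tfrac{\beta}{2}\|x-x_0\|^2$, one obtains $\nabla^2_x G_c(\rho,x)\succeq (\lambda_1+\lambda_2+\beta)\Id_d = \lambda_d \Id_d$ pointwise in $(z,x)$. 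Integrating in $z$ against $\rho$ preserves the inequality, hence $x\mapsto G_c(\rho,x)$ is $\lambda_d$-strongly convex for every $\rho$.

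For the second step, fix arbitrary $x_0,x_1\in\R^d$ and $t\in[0,1]$, and set $x_t=(1-t)x_0+tx_1$. Applying the $\lambda_d$-strong convexity of $G_c(\br(x_t),\cdot)$ yields
\begin{equation*}
G_c(\br(x_t),x_t) \le (1-t)\,G_c(\br(x_t),x_0) + t\,G_c(\br(x_t),x_1) - \frac{t(1-t)\lambda_d}{2}\|x_0-x_1\|^2.
\end{equation*}
By the very definition of the best response, $G_c(\br(x_t),x_i)\le G_c(\br(x_i),x_i) = G_d(x_i)$ for $i=0,1$. Substituting these inequalities into the right-hand side and using $G_d(x_t)=G_c(\br(x_t),x_t)$ on the left-hand side gives
\begin{equation*}
G_d(x_t) \le (1-t)\,G_d(x_0) + t\,G_d(x_1) - \frac{t(1-t)\lambda_d}{2}\|x_0-x_1\|^2,
\end{equation*}
which is precisely $\lambda_d$-strong convexity of $G_d$.

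There is no real obstacle here; the argument is short and avoids having to differentiate $\br(x)$ in $x$ or invoke any envelope/Danskin-type smoothness results. The only point requiring minor care is the passage from the full Hessian bound on $f_1$ in $(z,x)$ to the $x$-block bound, which is just the standard principal-submatrix monotonicity for positive semidefinite ordering. Note also that $G_d(x)$ is always finite (hence $<+\infty$) under Assumptions~\ref{assump:f}--\ref{assumption:large_regularizer_rho} thanks to Lemma~\ref{lem:existence_of_maximizer}, so the strong convexity inequality is meaningful everywhere on $\R^d$.
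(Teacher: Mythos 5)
Your proposal is correct and follows essentially the same route as the paper: both establish that $x\mapsto G_c(\rho,x)$ is $\lambda_d$-strongly convex uniformly in $\rho$ and then transfer this to $G_d$ via the inequality $G_c(\br(x'),x)\le G_c(\br(x),x)=G_d(x)$; the only cosmetic difference is that you use the secant (interpolation) characterization of strong convexity whereas the paper uses the first-order gradient inequality. Your explicit justification of the $x$-block Hessian bound for $f_1$ via principal-submatrix monotonicity is a detail the paper leaves implicit.
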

\begin{proof}
     The energy
    $G_c(\rho,x)$ is strongly convex in $x$ due to our assumptions on $f_1$, $f_2$, and the regularizing term $\norm{x-x_0}_2^2$. This means that for any $\rho\in\P$,
\begin{align*}
    G_c(\rho,x) \geq G_c(\rho,x') + \nabla_x G_c(\rho,x')^\top (x-x') + \frac{\lambda_d}{2} \norm{x-x'}_2^2\,.
\end{align*}
Selecting $\rho=\br(x')$, we have
\begin{align*}
    G_c(\br(x'),x) \geq G_c(\br(x'),x') + \nabla_x G_c(\br(x'),x')^\top (x-x') + \frac{\lambda_d}{2} \norm{x-x'}_2^2\,.
\end{align*}
Since $G_c(\br(x'),x) \leq G_c(\br(x),x)$ by definition of $\br(x)$, we obtain the required convexity condition:
\begin{align*}
    G_d(x)=G_c(\br(x),x) \geq G_c(\br(x'),x') + \nabla_x G_c(\br(x'),x')^\top (x-x') +\frac{\lambda_d}{2}\norm{x-x'}_2^2\,.
\end{align*}
\end{proof}
\begin{proof}[Proof of Theorem \ref{thm:convergence_fast_rho}]
For any reference measure $\rho_0\in\P$, we have
\begin{align*}
    G_d(x) \ge G_c(\rho_0,x) \ge
    -\alpha KL(\rho_0\,|\,\rhot) - \frac{1}{2}\int\rho_0 W\ast\rho_0 + \frac{\beta}{2}\|x-x_0\|^2
\end{align*}
and therefore, $G_d$ is coercive. Together with the strong convexity provided by Lemma~\ref{lem:strong_convexity_G_d}, we obtain the existence of a unique minimizer $x_\infty\in \R^d$. Convergence in norm now immediately follows also using Lemma~\ref{lem:strong_convexity_G_d}: for solutions $x(t)$ to \eqref{eq:dynamics_competitive_rho_fast}, we have
\begin{align*}
    \frac{1}{2}\frac{\d}{\d t} \|x(t)-x_\infty\|^2 
    = -\left(G_d(x(t))-G_d(x_\infty)\right)\cdot (x(t)-x_\infty)
    \le -\lambda_d \|x(t)-x_\infty\|^2\,.
\end{align*}
A similar result holds for convergence in entropy using the Poly{\'a}k-{\L}ojasiewicz convexity inequality
\begin{align*}
    \frac{1}{2}\norm{\nabla G_d(x)}_2^2 \geq \lambda_d (G_d(x)-G_d(x_\infty))\,,
\end{align*}
which is itself a direct consequence of strong convexity provided in Lemma~\ref{lem:strong_convexity_G_d}. Then
\begin{align*}
    \frac{\d}{\d t} \left(G_d(x(t))-G_d(x_\infty)\right)
    = \nabla_x G_d(x(t))\cdot \dot{x}(t)
    = -\|\nabla_x G_d(x(t))\|^2
    \leq -2\lambda_d \left(G_d(x(t))-G_d(x_\infty)\right)\,,
\end{align*}
and so the result in Theorem~\ref{thm:convergence_fast_rho} follows.
\end{proof}

\section{Additional Simulation Results}
We simulate a number of additional scenarios to illustrate extensions beyond the setting with provable guarantees and in the settings for which we have results but no numerical implementations in the main paper. First, we simulate the aligned objectives setting in one dimension, corresponding to \eqref{eq:dynamics_aligned}. Then we consider two settings which are not covered in our theory: (1) the previously-fixed distribution $\rhob$ is also time varying, and (2) the algorithm does not have access to the full distributions of $\rho$ and $\rhob$ and instead samples from them to update. Lastly, we illustrate a classifier with the population attributes in two dimensions, which requires a different finite-volume implementation \cite[Section 2.2]{carrillo_finite-volume_2015} than the one dimension version of the PDE due to flux in two dimensions.
\subsection{Aligned Objectives}\label{sec:aligned_obj_simulation}
Here we show numerical simulation results for the aligned objectives case, where the population and distribution have the same cost function. In this setting, the dynamics are of the form
\begin{align*}
    \partial_t \rho &= \div{\rho \nabla_z \delta_\rho G_a[\rho,\mu]} \\
    &= \div{\rho \nabla_z \left( \int f_1(z,x) \d \mu(x) + \alpha \log(\rho/\rhot) + W \ast \rho \right)} \\
    \ddt x &= -\nabla_x  \left(\int f_1(z,x) \d \rho(z) + \int f_2(z,x) \d \rhob(z) + \frac{\beta}{2} \norm{x-x_0}^2  \right)
\end{align*}
where $f_1$ and $f_2$ are as defined in section~\ref{sec:competitive_objectives},  and $W=\frac{1}{20}(1+z)^{-1}$, a consensus kernel. Note that $W$ does not satisfy Assumption~\ref{assumption:W_convex}, but we still observe convergence in the simulation. This is expected; in other works such as \cite{carrillo_kinetic_2003}, the assumptions on $W$ are relaxed and convergence results proven given sufficient convexity of other terms. The regularizer $\rhot$ is set to $\rho_0$, which models a penalty for the effort required of individuals to alter their attributes. The coefficient weights are $\alpha=0.1$ and $\beta=1$, with discretization parameters $dz=0.1$, $dt=0.01$.

\begin{figure}[!h]
    \centering
    \includegraphics[scale=0.4]{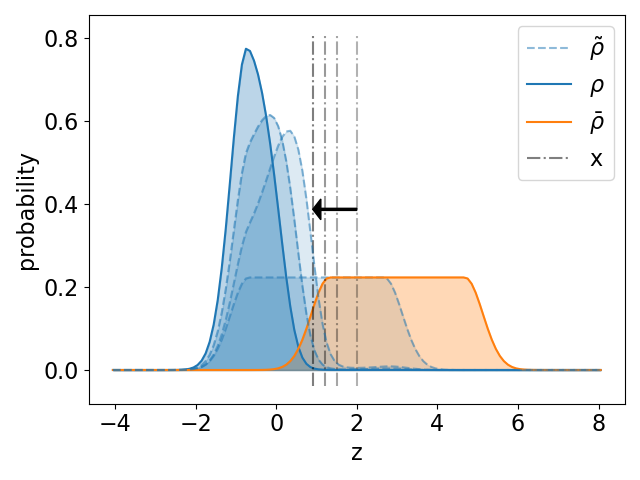}
    \caption{The dynamics include a consensus kernel, which draws neighbors in $z$-space closer together. We see that the population moves to make the classifier performer better, as the two distributions become more easily separable by the linear classifier.}
    \label{fig:aligned}
\end{figure}
In Figure~\ref{fig:aligned}, we observe the strategic distribution separating itself from the stationary distribution, improving the performance of the classifier and also improving the performance of the population itself. The strategic distribution and classifier appear to be stationary by time $t=40$.

\subsection{Multiple Dynamical Populations}\label{sec:multiple_dynamical_populations_simulations}
We also want to understand the dynamics when both populations are strategic and respond to the classifier. In this example, we numerically simulate this and in future work we hope to prove additional results regarding convergence. This corresponds to modeling the previously-fixed distribution $\rhob$ as time-dependent; let this distribution be $\tau\in\P_2$. We consider the case where $\rho$ is competitive with $x$ and $\tau$ is aligned with $x$, with dynamics given by
\begin{align*}
    \partial_t \rho &= -\div{\rho \nabla_z \left( f_1(z,x) - \alpha\log (\rho/\rhot) - W\ast\rho  \right)} \\
    \partial_t \tau &= \div{\tau \nabla_z \left(f_2(z,x) + \alpha\log(\tau/\taut) + W\ast\tau \right)} \\
    \ddt x &= -\nabla_x \left( \int f_1(z,x) \d \rho(z) + \int f_2(z,x) \d \tau(z) + \frac{\beta}{2}\norm{x-x_0}^2 \right).
\end{align*}
We use $W=0$ and $f_1$, $f_2$ as in section~\ref{sec:competitive_objectives} and the same discretization parameters as in Section~\ref{sec:aligned_obj_simulation}.
\begin{figure}[!ht]
    \centering
    \includegraphics[scale=0.35]{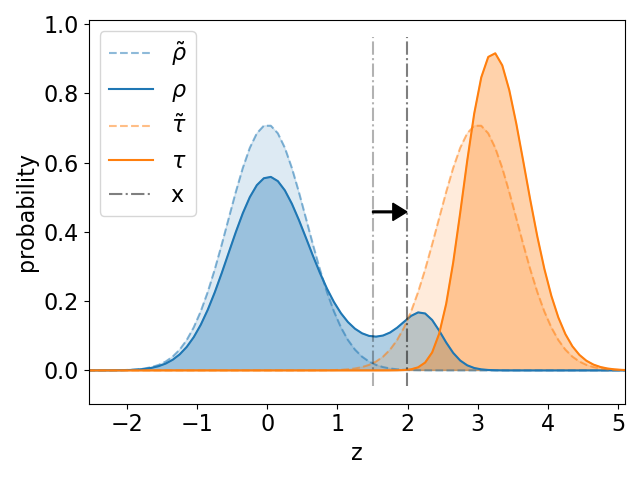}
    \caption{The population $\rho$ aims to be classified with the $\tau$ population, while the classifier moves to delineate between the two. We observe that $\tau$ adjusts to improve the performance of the classifier while $\rho$ competes against it. The distributions are plotted at time $t=0$, corresponding to $\rhot$ and $\taut$, and time $t=20$, corresponding to $\rho$ and $\tau$.}
    \label{fig:two_moving_populations}
\end{figure}
In Figure~\ref{fig:two_moving_populations}, we observe that the $\tau$ population moves to the right, assisting the classifier in maintaining accurate scoring. In contrast, $\rho$ also moves to the right, rendering the right tail to be classified incorrectly, which is desirable for individuals in the $\rho$ population but not desirable for the classifier. While we leave analyzing the long-term behavior mathematically for future work, the distributions and classifier appear to converge by time $t=20$.
\subsection{Sampled Gradients}
In real-world applications of classifiers, the algorithm may not know the exact distribution of the population, relying on sampling to estimate it. In this section we explore the effects of the classifier updating based on an approximated gradient, which is computed by sampling the true underlying distributions $\rho$ and $\rhob$. We use the same parameters for the population dynamics as in section~\ref{sec:competitive_objectives}, and for the classifier we use the approximate gradient
\begin{align*}
    \nabla_x L(z,x_t)  \approx \frac{1}{n} \sum_{i=1}^n  \left( \nabla_x f_1(z_i,x_t) + \nabla_x f_2(\bar{z}_i,x_t) \right) + \beta (x_t-x_0), \quad z_i \sim \rho_t,\quad \bar{z_i}\sim \rhob_t\,.
\end{align*}
First, we simulate the dynamics with the classifier and the strategic population updating at the same rate, using $\alpha=0.05$, $\beta=1$, and the same consensus kernel as used previously, with the same discretization parameters as in \ref{sec:aligned_obj_simulation}.
\begin{figure}[!h]
    \centering
    \includegraphics[scale=0.12]{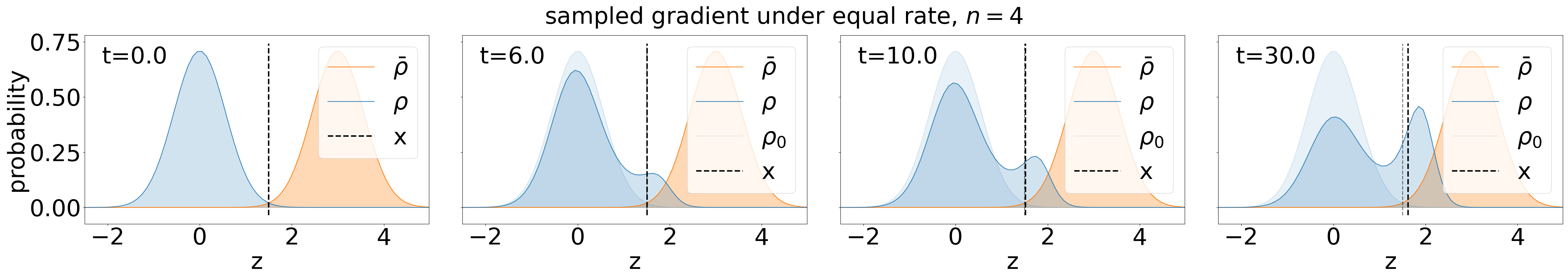}
    \includegraphics[scale=0.12]{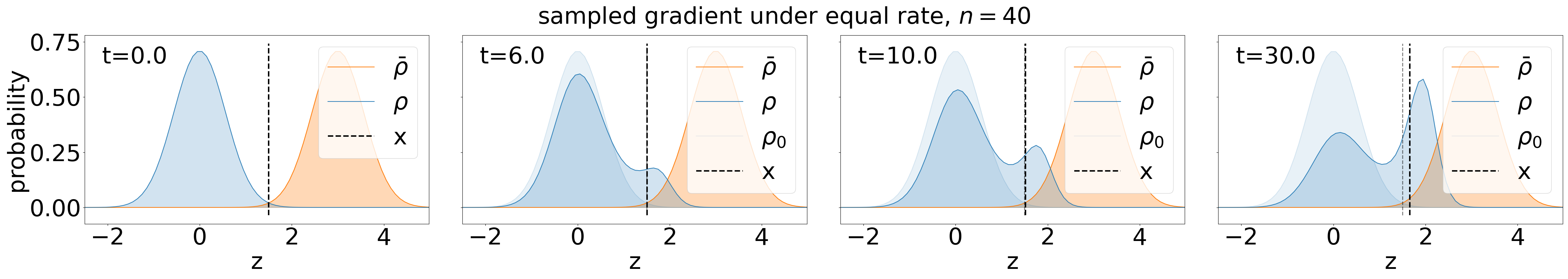}
    \caption{When the classifier is updating at the same rate as the population, we do not see a significant change in the evolution of both species, suggesting that as long as the gradient estimate for the classifier is correct on average, the estimate itself does not need to be particularly accurate.}
    \label{fig:sample_gradients}
\end{figure}
In Figure~\ref{fig:sample_gradients}, we observe no visual difference between the two results with $n=4$ versus $n=40$ samples, which suggests that not many samples are needed to estimate the gradient.

Next, we consider the setting where the classifier is best-responding to the strategic population.

\begin{figure}[!h]
    \centering
    \includegraphics[scale=0.12]{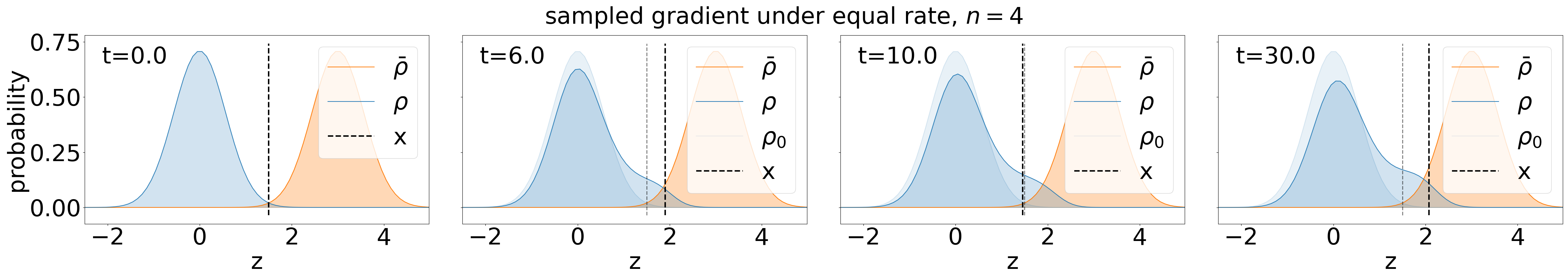}
    \includegraphics[scale=0.12]{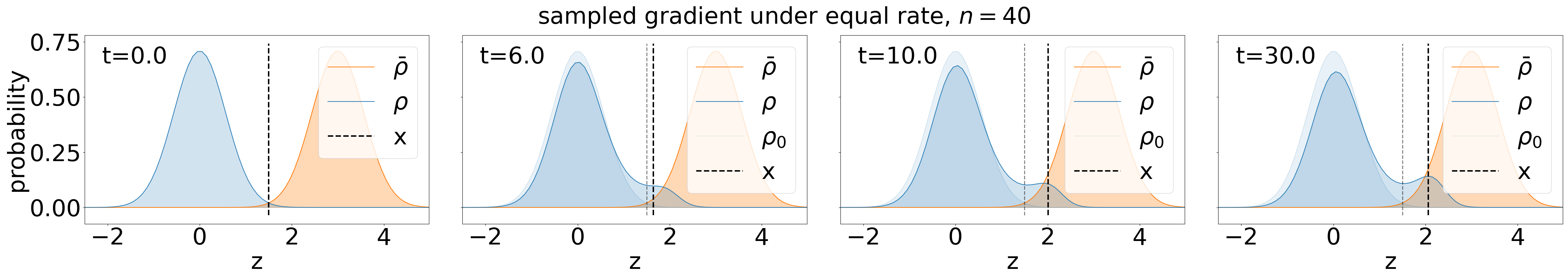}
    \caption{When the classifier is best-responding to the population, we observe that using $n=4$ samples leads to different behavior for both the classifier and the population, compared with a more accurate estimate using $n=40$ samples. }
    \label{fig:sample_gradients_br}
\end{figure}
Unlike the first setting, we observe in Figure~\ref{fig:sample_gradients_br} a noticeable difference between the evolution of $\rho_t$ with $n=4$ versus $n=40$ samples. This is not surprising because optimizing with a very poor estimate of the cost function at each time step would cause $x_t$ to vary wildly, and this method fails to take advantage of correct "average" behavior that gradient descent provides.

\subsection{Two-dimensional Distributions}
In practice, individuals may alter more that one of their attributes in response to an algorithm, for example, both cancelling a credit card and also reporting a different income in an effort to change a credit score. We model this case with $z\in \R^2$ and $x\in\R^2$, and simulate the results for the setting where the classifier and the population are evolving at the same rate. While this setting is not covered in our theory, it interpolates between the two timescale extremes.

We consider the following classifier:
\begin{equation}\label{eq:classifier_case_1_2d}
\begin{aligned}
    f_1(z,x) &= \frac{1}{2}\left(1-\frac{1}{1+\exp{x^\top z}} \right)\\
    f_2(z,x) &= \frac{1}{2} \left( \frac{1}{1+\exp{x^\top z}} \right)
\end{aligned}
\end{equation}

with $W=0$. Again, the reference distribution $\rhot$ corresponds to the initial shape of the distribution, instituting a penalty for deviating from the initial distribution. We use $\alpha=0.5$ and $\beta=1$ for the penalty weights, run for $t=4$ with $dt=0.005$ and $dx=dy=0.2$ for the discretization.
In this case, the strategic population is competing with the classifier, with dynamics given by
\begin{align*}
    \partial_t \rho &= -\div{\rho \nabla_z \left( f_1(z,x)- \alpha \log(\rho/\rhot) \right)} \\
    \ddt x &= -\nabla_x  \left(\int f_1(z,x) \d \rho(z) + \int f_2(z,x) \d \rhob(z) + \frac{\beta}{2} \norm{x-x_0}^2  \right)
\end{align*}

\begin{figure}[!h]
     \centering
         \includegraphics[scale=0.5]{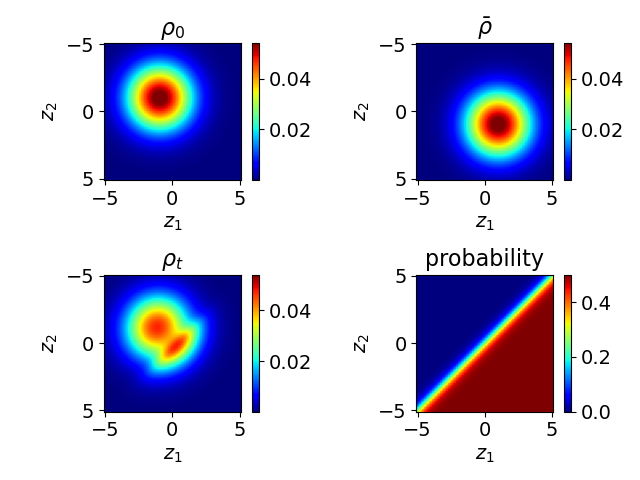}
         \caption{We use \eqref{eq:classifier_case_1_2d} for the classifier functions, using a Gaussian initial condition and regularizer for $\rho$. We see the distribution moving toward the region with higher probability of misclassification.}
         \label{fig:classifier_1_2d}
\end{figure}
In Figure~\ref{fig:classifier_1_2d}, we observe the strategic population increasing mass toward the region of higher probability of being labeled "1" while the true underlying label is zero, with the probability plotted at time $t=4$. This illustrates similar behavior to the one-dimensional case, including the distribution splitting into two modes, which is another example of polarization induced by the classifier. Note that while in this example, $x\in\R^2$ and we use a linear classifier; we could have $x \in \R^d$ with $d>2$ and different functions for $f_1$ and $f_2$ which yield a nonlinear classifier; our theory in the timescale-separated case holds as long as the convexity and smoothness assumptions on $f_1$ and $f_2$ are satisfied.

\end{document}